\documentclass[12pt]{article}
\usepackage{smile_1}
\usepackage{mathtools}
\usepackage{booktabs}
\usepackage[english]{babel} 
\usepackage[protrusion=true,expansion=true]{microtype} 
\usepackage{amsmath,amsfonts,amsthm}
\usepackage{ amssymb }
\usepackage{enumitem}
\usepackage{graphicx}
\usepackage{array}
\newcommand{\Halmos}{ $\square$}
\usepackage[toc,page]{appendix}
\usepackage{booktabs} 
\usepackage{natbib}
\usepackage{microtype} 
\usepackage[font = small,labelfont=bf,textfont=it]{caption} 
\usepackage{footnote}
\usepackage{multirow}
\usepackage{algorithm}
\usepackage{mathrsfs}
\usepackage[noend]{algorithmic}

\newcommand{\FIGURE}[3]{
  \begin{center}
    #1
    \caption{#2}
    \par\bigskip
    \begin{minipage}{0.8\textwidth} 
      \small #3 
    \end{minipage}
  \end{center}
}

\long\def\TABLE#1#2#3{
  \begin{center}
    \small 
    \captionof{table}{#1} 
    \medskip
    #2 
    \par\smallskip
    \begin{minipage}{0.95\textwidth}
      \footnotesize #3 
    \end{minipage}
  \end{center}
}

\newcommand{\up}{\rule{0pt}{2.5ex}}   
\newcommand{\down}{\rule[-1.2ex]{0pt}{0pt}} 

\usepackage{caption} 
\usepackage[CJKbookmarks=true,
            bookmarksnumbered=true,
			bookmarksopen=true,
			colorlinks=true,
			citecolor=blue,
			linkcolor=blue,
			anchorcolor=red,
			urlcolor=blue]{hyperref}

\usepackage{rotating}
\usepackage{fancyvrb}
\usepackage[dvipsnames]{xcolor}

 \makeatletter
\def\mathcolor#1#{\@mathcolor{#1}}
\def\@mathcolor#1#2#3{%
  \protect\leavevmode
  \begingroup
    \color#1{#2}#3%
  \endgroup
}
\makeatother

\usepackage[margin=1in,hmarginratio=1:1,top=20mm,columnsep=20pt]{geometry} 
\usepackage{microtype} 
\usepackage[font = small,labelfont=bf,textfont=it]{caption} 
\usepackage{footnote}
\usepackage{multirow}

\usepackage{color}

\title{Attack-Resistant Uniform Fairness for Linear and Smooth Contextual Bandits}

\author{%
  Qingwen Zhang\\
    Department of Industrial Engineering and Decision Analytics, \\National University of Singapore\\
   Wenjia Wang\footnotemark[1]\\\
    Department of Industrial Engineering and Decision Analytics, \\National University of Singapore\\
}
\footnotetext[1]{Corresponding author: wenjiawang@nus.edu.sg}
\date{}

\begin{document}
\maketitle

\begin{abstract}
Modern systems, such as digital platforms and service systems, increasingly rely on contextual bandits for online decision-making; however, their deployment can inadvertently create unfair exposure among arms, undermining long-term platform sustainability and supplier trust. This paper studies the contextual bandit problem under a uniform $(1-\delta)$-fairness constraint, and addresses its unique vulnerabilities to strategic manipulation. The fairness constraint ensures that preferential treatment is strictly justified by an arm’s actual reward across all contexts and time horizons, using uniformity to prevent statistical loopholes. We develop novel algorithms that achieve (nearly) minimax-optimal regret for both linear and smooth reward functions, while maintaining strong $\left(1-\tilde{O}(1/T)\right)$-fairness guarantees, and further characterize the theoretically inherent yet asymptotically marginal ``price of fairness''. 
However, we reveal that such merit-based fairness becomes uniquely susceptible to signal manipulation. We show that an adversary with a minimal $\tilde{O}(1)$ budget can not only degrade overall performance as in traditional attacks, but also selectively induce insidious fairness-specific failures while leaving conspicuous regret measures largely unaffected. To counter this, we design robust variants incorporating corruption-adaptive exploration and error-compensated thresholding. Our approach yields the first minimax-optimal regret bounds under $C$-budgeted attack while preserving $\left(1-\tilde{O}(1/T)\right)$-fairness. Numerical experiments and a real-world case demonstrate that our algorithms sustain both fairness and efficiency.
\end{abstract}
\noindent%
{\it Keywords}: Online decision-making; contextual bandit; fairness; adversarial corruption
\vfill
\newpage
\section{Introduction}

Digital platforms, service systems, and data-driven marketplaces increasingly rely on sequential decision algorithms to determine how resources, opportunities, and exposure are allocated among competing entities \citep{singh2024future}. For example, in online advertising, platforms decide which advertiser to display \citep{choi2020online}; in short-video services, recommendation systems determine which creators receive user attention \citep{violot2024shorts}; in on-demand labor platforms, matching algorithms choose which workers are offered which jobs \citep{benjaafar2022labor}. In these settings, sequential decision-making algorithms act as the central planners matching supply (items, content, or treatments) with demand (users or patients), which can be modeled by the contextual bandit framework \citep{LattimoreSzepesvari2020}. In this framework, the operational objective of these systems is typically to maximize a cumulative reward, such as total user engagement or revenue, by learning the value of different actions over time.

Despite their commercial success of digital platforms, the fundamental mechanics of contextual bandit algorithms bring a potential risk of generating systemic inequities. 
For instance, standard methods such as Upper Confidence Bound (UCB) \citep{auer2002using,auer2002finite} prioritize the acquisition of ``informational value'' to mitigate future uncertainty. In this way, an arm’s exposure is dependent on its potential to reduce the learner’s regret rather than its intrinsic quality (merit). Consequently, stochastic noise or limited observations in the early stages of learning can lead to a ``poverty trap'': high-quality alternatives may suffer from persistently low exposure simply because they were not explored during a critical learning window. These imbalances are not intentional design but arise from a fundamental structural tension: the drive for statistical efficiency often directly clashes with the requirement for fair allocation.

This concern aligns with the concept of item-level fairness in the machine learning literature, which encourages that ``similar items be treated similarly'' \citep{dwork2012fairness}.  
Within the broader scope of platform equity, research often distinguishes between demand-side demographics (user-level fairness) and supply-side meritocracy (item-level fairness). Focusing on the latter, we argue that item-level fairness is paramount for sustainable operations. In our context, each arm corresponds to a creator, seller, worker, or service option whose visibility or opportunity access must reflect underlying merit rather than the incidental path of the learning process. Distortions in exposure are not solely an ethical concern; they represent a significant threat to the long-term health of the ecosystem. Unlike user-side interventions that manage consumer experience, failing to ensure item-level fairness directly erodes the supply-side foundation. Persistent under-exposure of deserving suppliers can stifle content diversity, erode supplier trust, and create barriers for new participants, ultimately undermining the platform's sustainability.

The pursuit of item-level fairness is significantly complicated by the heterogeneous nature of real-world interactions. Earlier research mainly explored fairness within the Multi-Armed Bandit (MAB) framework and assume static reward distributions \citep{joseph2016fairness,liu2017calibrated,patil2021achieving}. However, in modern personalized services, an item’s quality is not a stationary parameter but a function of stochastic, time-varying contexts. This shift from MAB to contextual settings necessitates a more rigorous fairness criterion. Fairness can no longer be evaluated merely through aggregate pull counts or average exposure. Instead, it must hold uniformly at a granular level—across every specific context segment. A ``fair-on-average'' approach may mask systematic biases against specific context space, thereby violating the principle of equitable treatment in diverse operational environments.

Beyond the statistical complexities of ensuring uniform equity, a second, and relatively unexplored, challenge arises from the strategic responsiveness of supply-side agents. In many platform environments, participants—such as advertisers, sellers, or content creators—possess strong incentives to manipulate the feedback signals (e.g., click-through rates, ratings, or engagement metrics) that drive algorithmic decisions \citep{garcelon2020adversarial}. This exemplifies Goodhart’s Law \citep{strathern1997improving}: metrics institutionalized for resource allocation inevitably become targets for strategic distortion. We argue that incorporating fairness constraints creates an unintended structural vulnerability. Because fair algorithms must justify their allocations based on merit, they are governed by feedback signal integrity. Thus, even minimal corruption budgets allow strategic manipulation of signals, undermining the fairness mechanism. Moreover, such attacks on fairness are often more insidious as they may not measurably impact overall platform profit, leading them to be overlooked by metrics focused primarily on efficiency. For platform managers, this presents a governance dilemma: if not designed for robustness, the very tools intended to protect fairness may ironically become the primary conduits for its subversion.

Despite its operational significance, existing research remains inadequate in addressing these complexities. First, most studies on both fairness \citep{patil2021achieving,liu2017calibrated,joseph2016fairness} and adversarial robustness \citep{bogunovic2021stochastic,zuo2024near,he2022nearly,lee2021achieving} are limited to stationary MAB or simple linear settings, failing to address the complex, context-dependent nature of modern personalized services. Second, the link between fairness and robustness remains largely unexplored. Current robust models \citep{kang2023robust,ye2023corruption} focus solely on total reward, overlooking the structural vulnerability where fairness constraints force algorithms to trust manipulated signals. Consequently, they cannot prevent strategic participants from compromising the integrity of the fairness mechanism.

\subsection{Main Contributions}

Motivated by these challenges, this paper proposes a rigorous theoretical framework that simultaneously ensures uniform item-level fairness and adversarial robustness in contextual bandits. Our framework includes both parametric (linear) and non-parametric (H\"older class with smooth parameter $\beta>1$) reward structures. Specifically, our contributions are as follows:

\textbf{Uniform Fairness with (Near-)Optimal Regret.}
We introduce $(1-\delta)$-uniform fairness, a stringent criterion requiring that an arm is prioritized only if its expected reward is truly superior. Unlike traditional ``on-average'' metrics, our definition mandates this constraint to hold simultaneously across all contexts and time horizons with probability at least $1-\delta$. We develop new algorithms for both linear and smooth contextual bandits that achieve $\left(1 - \tilde{O}(1/T)\right)$-uniform fairness. Our algorithms maintain minimax-optimal regret for the linear case and the smooth case (up to logarithmic factors), by comparing the regrets with our established lower bounds. This implies that our framework achieves the best possible trade-off between distributive equity and statistical efficiency. Notably, we demonstrate in the linear setting that \textit{fairness is not a free lunch}, addressing the fundamental question of the theoretical ``price of fairness'' in online learning, while the price is low in both linear and smooth settings, in the sense that the resulting lower bounds only increase with logarithmic terms. We move beyond standard lower-bound techniques by developing a novel analysis that identifies a persistent ``Confusion Zone'' inherent to fair algorithms and prove that the minimax lower bound under uniform fairness is strictly larger than in unconstrained settings. This result characterizes the fundamentally unavoidable cost of ensuring distributive equity in sequential decision-making. 

\textbf{Identification of Fundamental Vulnerability in Fair Algorithms.} 
We uncover a disturbing paradox: the mechanisms designed to ensure merit-based fairness create unique strategic vulnerabilities. Specifically, we identify two dangerous ways an attacker with a negligible $\tilde{O}(1)$ budget can strike: covert attacks that ruin fairness without being noticed, and destructive attacks that cause a complete system collapse. In the first mode, an attacker can trick the algorithm into favoring inferior items while leaving total profit (regret) deceptively almost unaffected. This ``invisible'' erosion of trust poses a profound threat to long-term platform health. In the second mode, the attacker forces the algorithm into a failed state where it both entrenches persistent unfairness and suffers huge linear losses ($\Omega(T)$ regret). Essentially, the system gets stuck and keeps making bad decisions forever. While these two failure modes may correspond to different adversarial motivations, such as insider versus outsider threats, both scenarios underscore the urgent need for defensive measures in fair algorithm design.

\textbf{Robust Fair Algorithms Under Adversarial Corruption.} 
To address this critical vulnerability, we develop a theoretical framework for safeguarding merit-based fairness against strategic reward manipulation. Specifically, we propose the first set of corruption-resistant algorithms, which integrate novel mechanisms: a corruption-adaptive exploration/epoch strategy and an error-compensated thresholding rule. Unlike standard robust methods that often introduce systematic exposure unfairness by down-weighting uncertain data, our approach explicitly and safely dilutes the influence of corrupted observations through meticulously designed parameters. We formally prove that these algorithms preserve a $(1-\tilde{O}(1/T))$-uniform fairness guarantee. In the linear setting, corruption with budget $C$ induces an additive term of order $O(C)$ in the regret bound. In the more complex non-parametric regime, we reveal a fundamentally different, multiplicative coupling between the $T$ and the adversarial budget $C$. In particular, when $C = O(T^{\frac{\beta}{2\beta+d}})$, our algorithm maintains a nearly optimal regret rate. This result is complemented by the first minimax lower bound for this corrupted, fairness-constrained setting, which confirms that the coupling between $C$ and $T$ is unavoidable in the non-parametric case. Together, these contributions establish a theoretical foundation and provide practical algorithmic solutions for deploying fair and attack-resistant decision systems in digital platforms, service systems, and data-driven marketplaces.

In summary, our framework provides the first unified approach to achieving uniform fairness, learning efficiency, and adversarial robustness in both
parametric and non-parametric contextual bandits, which advances the technical foundations of fairness in sequential decision-making. From a governance perspective, absent our proposed safeguards, feedback distortion may grant strategic actors unfair exposure while driving high-quality participants to exit. We believe that our designed adversarially robust fairness is not merely a technical desideratum but a strategic imperative for sustaining long-term market efficiency and supplier trust in modern digital marketplaces.

\subsection{Related Works}

Our work sits at the intersection of three rapidly evolving literature streams: contextual bandits, fair decision-making, and adversarial robustness.

\textbf{Linear and Smooth Contextual Bandits.} The literature on contextual bandits is extensive, with linear models serving as the foundational framework for much of the theoretical development \citep{LattimoreSzepesvari2020}. Established linear algorithms \citep{goldenshluger2009woodroofe, goldenshluger2013linear,bastani2020online} typically achieve minimax-optimal regret through forced-sampling exploration. Moreover, \cite{bastani2021mostly} demonstrates that under certain covariate diversity conditions, a purely-greedy strategy can be rate-optimal. For more complex reward landscapes, recent studies have transitioned to non-parametric classes, typically H\"older spaces \citep{rigollet2010nonparametric, slivkins2014contextual}. Specifically, \cite{perchet2013multi} developed the Adaptive Binning Strategy for Exploitation (ABSE), achieving minimax-optimal regret for $\beta \in (0, 1]$. Extending this to higher smoothness ($\beta \geq 1$), \cite{hu2022smooth} proposed a smooth bandit algorithm that adaptively achieves minimax-optimal regret across all smoothness settings. A critical insight from \cite{hu2022smooth} is the emergence of ``inestimable regions'' due to sample correlation in online learning, where the learner may lack sufficient data to form reliable error estimates and is thus forced to make decisions like prematurely excluding certain arms to maintain regret optimality. Crucially, while these well-established approaches optimize for statistical efficiency, they often exacerbate fairness violations. Mechanisms such as forced sampling, greedy exploitation, or making decisions in ``inestimable regions'' fundamentally violate fairness principles by design. Our work diverges from this literature by introducing the first unified framework that simultaneously guarantees uniform fairness across contexts and near-minimax regret optimality for both linear and smooth reward structures. 

\textbf{Fairness in Algorithmic Decision-Making.}
In the realm of fairness-aware bandit research, fairness definitions diverge primarily into user-level and item-level perspectives. User-level fairness focuses on mitigating reward disparities among protective groups (e.g., genders), as seen in Fair-LinUCB \citep{huang2022achieving}. In contrast,
item-level fairness emphasizes treating each arm as an individual entity, where exposure should align with its merit. Our work adopts this philosophical foundation. The principle of ``similar individuals be treated similarly'' \citep{dwork2012fairness} forms the conceptual bedrock of this line of study. For MAB settings, a rich literature explores  meritocratic fairness criteria. For instance, \cite{liu2017calibrated} proposed smooth fairness (ensuring arms with similar reward distributions are selected with similar probabilities) and calibrated fairness (mandating selection probability proportional to the likelihood of being optimal) for Bernoulli rewards. Other studies enforce fairness through hard constraints, such as minimum selection thresholds \citep{patil2021achieving} or rate-constrained allocations \citep{claure2020multi,chen2020fair}. Among these, the definition by \cite{joseph2016fairness}, which requires that better arms be selected with no less probability than worse ones, extends to contextual bandits. However, its fairness guarantee is sequence-dependent, conditioned on the specific realized context sequence. This formulation is incompatible with the stochastic context model we adopt, and its algorithmic implications in contextual settings remain unexplored, lacking corresponding optimality guarantees. Our work departs from prior literature in two key directions. First, we introduce a stricter notion of $(1-\delta)$-fairness (Definition \ref{def:fairness}) that requires justification of arm preferences by true reward gaps for every context and every round, eliminating distribution-dependent loopholes. Second, we design algorithms that achieve this strong fairness without compromising minimax regret optimality, even in the nonparametric regime, where reconciling fairness with near-optimal regret has remained an open challenge, since the online learning process is significantly more complex.

This discussion on algorithmic fairness connects to a broader literature on fairness in operations management. Beyond bandit models, fairness considerations have been incorporated into diverse operational problems, such as online allocation, dynamic rationing, and dynamic pricing \citep{balseiro2021regularized,manshadi2023fair,cohen2022price,cohen2025dynamic,chen2025utility}. Complementing this design-oriented literature, \citet{chen2019fairness,kallus2022assessing} address the distinct challenge of evaluating fairness when protected class data is missing, as in lending and healthcare audits.

\textbf{Robustness and Adversarial Attacks.}
The vulnerability of bandit algorithms to malicious manipulation has been extensively studied. Substantial research demonstrates the feasibility of designing adaptive adversarial strategies to effectively attack standard algorithms \citep{garcelon2020adversarial, jun2018adversarial, liu2019data, zuo2024near}, primarily focusing on maximizing cumulative regret. However, these works largely overlook the impact of such attacks on fairness. In this work, we extend this line of research by revealing a more severe consequence: we prove that even a small corruption budget of $\tilde{O}(1)$ can be strategically leveraged to induce ``persistent unfairness''. Notably, we show that whether the regret becomes significantly worse is a strategic choice for the attacker, directly tied to their underlying motivation.

Numerous robust algorithms have been proposed to maintain low regret under $C$-total corruption \citep{lykouris2018stochastic, gupta2019better, bogunovic2021stochastic, zimmert2019optimal}. Specifically, \cite{kang2023robust} established optimal regret for Lipschitz continuous arms under strong adversaries. The minimax lower bound for the linear contextual bandit was closed by \cite{he2022nearly} via an uncertainty-weighted regression technique. Further extending these results, \cite{ye2023corruption} utilized Eluder dimension to characterize the regret for general non-linear function classes. More recently, \cite{liu2024corruption} provided a characterization of minimax regret under both strong and weak corruptions. However, these works remain largely confined to either linear structures or abstract complexity measures like Eluder dimension, leaving a significant gap in robust strategies for H\"older-smooth contextual bandits. More importantly, the problem of ensuring the simultaneous preservation of regret optimality and fairness under adversarial attack has not been addressed.

\subsection{Notation and Organization}

Throughout this paper, we adopt the following notation. Let $a \vee b := \max (a,b)$ and $a \wedge b := \min (a,b)$ for two real numbers $a$, $b$. Denote $\lceil a\rceil$ as the smallest integer greater than or equal to $a$ and $\lfloor a\rfloor$ as the largest integer smaller than or equal to $a$. For a positive integer $n$, denote $[n] = \{1,...,n\}$. We write $b_n\lesssim a_n$ if $a_n\geq Cb_n$ for some constant $C>0$. We use $c,c_1,c_2,\ldots$ to denote generic positive constants, of which value can change from line to line. Let $v_d=\pi^{d / 2} / \Gamma(d / 2+1)$ be the volume of a unit ball in $\RR^d$,
and $\mathbb{I}(\cdot)$ be the indicator function.

The remainder of this work is organized as follows. Section \ref{sec:banditwithdeffair} provides problem settings and introduce the uniform fairness constraint. Section \ref{sec:fairalg} presents our fair algorithms and optimality analysis for both linear and smooth fair contextual bandits. Section \ref{sec:effectiveattack} exposes their vulnerabilities to strategic manipulation. Section \ref{sec:fairrobustalg} introduces our robust fair algorithms and their performance guarantees under attack. Section \ref{sec_numerical} provides numerical validation. Section \ref{sec_conclusion} concludes with limitations and future directions for operations management.

\section{Preliminaries}\label{sec:banditwithdeffair}

Contextual bandit problems\footnote{We study the classic contextual bandits with a fixed action set \citep{goldenshluger2013linear,bastani2020online}, diverging from bandits with changing action spaces \citep{he2022nearly,lykouris2018stochastic} (which are also called ``contextual bandits''). Crucially, we assume static actions ($\mathcal{A}_t \equiv \mathcal{A}$), while the latter demands adaptation to stochastic arm sets ($\mathcal{A}_t \sim \mathcal{D}$). Here, context $\bx_t$ is an external state signal (e.g., user profiles) that modulates rewards for fixed arms, which is distinct from ``context'' interpreted as action descriptors for generalizing across varying arms. Consequently, we optimize context-driven policies $\pi(\bx_t)$, contrasting with dynamic-arm generalization objectives, which reduce to stochastic bandits under linear rewards \citep{hanna2023contexts}.
} are commonly encountered in operations management. Consider an operational example in short-video platforms: sequential video recommendation to maximize long-term user engagement. In each round, the system observes user context $\bx_t$ (e.g., profile, history, session signals), selects a video from a candidate pool to recommend, and receives feedback (e.g., watch time, like or dislike). This creates an \textit{exploration}-\textit{exploitation} trade-off: the algorithm must balance choosing videos with known high engagement against trying less-exposed ones to gather information for better future decisions.

We now formalize the standard $K(\geq 2)$-armed contextual bandit framework. 
For each round $t = 1, \ldots, T$ with time horizon $T$, the environment generates players decoded by a $d$-dimensional covariate vector $\bx_t\in \RR^d$, commonly referred to as \textit{context}. 
A decision-making agent observes $\bx_t$ and utilizes historical information to pull an arm $\pi_t \in \cK = \{1,\ldots,K\}$ according to the policy $\pi$, subsequently receiving a random reward associated with the chosen arm, denoted as $y_t \in \RR$. In this paper, we focus on a heterogeneous setting in which each arm $k\in \cK$ is associated with an unknown reward function. We denote the conditional expected reward function given the context $\bx$ for arm $k$ as $f_k^*(\bx)$. Then the observed reward $y_t(k)$ upon selecting arm $k$ is given by
\begin{equation}\label{eq:yi}
    y_t(k) = f_k^*(\bx_t) + \varepsilon_{k,t} \quad \mbox{if }\pi_t = k,
\end{equation}
where $\varepsilon_{k,t}$ are independent and identically distributed noise, 
and are also independent of the context sequence. 

The \emph{oracle} decision-maker possesses complete knowledge of the reward functions and consistently selects the arm that yields the highest reward based on the observed context, specifically defined as  $\pi^*_t = \argmax_{k} f^*_k(\bx_t)$. In practical scenarios, the information available to the decision-maker for formulating the policy $\pi$ is restricted to previously collected data, which are corrupted by noise. Therefore, a decision $\pi_t$ at time $t$ is a random variable informed by the $\sigma$-field $\cF^+_{t-1}$ generated by previous policy and observations $\cF_{t-1}=\sigma(\bx_1, y_{1},\pi_1,\ldots, \bx_{t-1}, y_{t-1},\pi_{t-1})$, together with current context $\bx_t$. Following the classic definition, we say a policy $\pi$ is \textit{admissible} if for all $t\in[T]$, $\pi_t$ is conditionally independent of $(\bx_1, y_1(1), ...,y_1(K), \ldots, \bx_{t-1}, y_{t-1}(1),\ldots,y_{t-1}(K))$ given $\cF_{t-1}$. The performance of an admissible policy $\pi$ is quantified by comparing it to the \emph{oracle policy} $\pi^*$ with the expected cumulative regret defined as
\[ R_T(\pi) := \mathbb{E} \biggl[\sum_{t=1}^T  \left(\max_k f^*_k (\bx_t)  -  f^*_{\pi_t}(\bx_t)\right)\biggr], \]
where the expectation is taken over the joint distribution of the covariates, random rewards and potentially exogenous randomness. While $ R_T(\pi) $ measures the performance of a specific policy, the \emph{minimax} regret characterizes the fundamental difficulty of the learning problem itself. It is defined as the infimum over all admissible policies of the supremum over all problem instances in the class of the expected regret.

The conventional goal of the decision-maker is to design a policy $\pi$ with the aim of minimizing the expected cumulative regret over the time horizon $T$. However, regret minimization alone does not prevent unfair behavior during learning.
To meaningfully ensure fairness, such a criterion must be global:
it should hold simultaneously across all rounds and all arms. Isolated fair behavior, where the algorithm retains the freedom to violate fairness arbitrarily on any subset of interactions,  is insufficient. Moreover, a fairness guarantee that merely bounds the probability of unfairness remains inadequate, as it allows violations to be strategically concentrated on a set of contexts having small probability measure. This creates a statistical loophole: an algorithm can satisfy a high-probability fairness requirement in the overall sense, while remaining systematically unfair over a subset of the context space $\cX$. 
Although such a strategy may be statistically valid, it is ethically indefensible, as it deliberately sacrifices equity toward minority or niche groups in order to maximize aggregate or majority utility.

To address these limitations and formalize a rigorous notion of equity, we introduce a strict fairness principle defined as $(1-\delta)$-fairness. While our approach is philosophically aligned with merit-based fairness notions in machine learning \citep{dwork2012fairness,joseph2016fairness,liu2017calibrated,biega2018equity}, it is specifically designed for the stochastic contextual bandit setting. Distinctively, our definition mandates a uniform guarantee: the fairness condition must hold simultaneously over all contexts, all arms, and all rounds, with high probability. This context-agnostic requirement eliminates systematic bias on subsets of the context space, even those with zero measure under the context distribution.

\begin{definition}[$(1-\delta)$-fairness]\label{def:fairness}  An algorithm $\mathcal{A}$ is $(1-\delta)$-fair if, with probability at least $1-\delta$, for all rounds $t\in [T]$, all contexts $\bx\in\cX$ and all pairs of arms $i,j\in\mathcal{K}$,
\begin{align}\label{eq:defoffairness}
    p(\pi_t = i \mid \bx_t=\bx, \cF_{t-1})> p(\pi_t = j \mid \bx_t=\bx, \cF_{t-1})\text{  only if  } f_i^*(\bx)>f_j^*(\bx).
\end{align}
\end{definition}

Intuitively, $(1-\delta)$-fairness enforces a strict preference rule: the algorithm prefers arm $i$ over arm $j$ for a given user context $\bx$ only if $i$ is truly superior in terms of expected reward. The parameter $\delta$ represents the maximum allowable probability of an unfair event occurring.

Our fairness principle is motivated by practical concerns over algorithmic biases that cause unfair treatment of items, such as the systematic under-exposure of new or high-quality content from emerging creators due to initial statistical uncertainty or feedback loops \citep{abdollahpouri2019unfairness}. These concerns are also reflected in regulatory efforts that address the societal impact of inequitable exposure in digital ecosystems, such as the China’s Anti-Monopoly Guidelines for the Platform Economy (2021), EU’s Digital Services Act (2022), and the U.S. Algorithmic Accountability Act (2022). By requiring exposure to be allocated based on true merit, our $(1-\delta)$-fairness constraint establishes a concrete mathematical criterion to avoid such unfair outcomes.

\section{Fair $K$-armed Contextual Bandit Algorithms}\label{sec:fairalg}

The pursuit of a uniform fairness guarantee necessitates navigating a three-way trade-off between exploration, exploitation, and fairness. We develop our algorithmic solutions to balance this trade-off for two primary settings: linear and smooth contextual bandits. Section \ref{sec:fairalg} proceeds as follows. We begin with the linear contextual bandit in Section \ref{sec:fairalglinear}, which offers an interpretable and tractable benchmark. We then generalize our approach to the smooth, nonparametric case in Section \ref{sec:fairalgsmooth}, which accommodates the complex, nonlinear reward structures common in practice. For each setting, we detail our algorithmic design and establish theoretical guarantees for both regret and fairness. By spanning these two regimes, we show that our uniform fairness framework is broadly applicable across diverse operational environments.
\subsection{Linear Contextual Bandit Problem}\label{sec:fairalglinear}

We start with the bandit problem characterized by a reward function that is linear in covariates (contexts). In the linear settings, each arm $k\in \cK$ is associated with an unknown parameter $\beta_k\in\RR^d$. Let $\bz_t=(1,\bx_t^\mathrm{T})^\mathrm{T}$, then the reward function for arm $k$ is
\begin{align}\label{eq:linearreward}
    f_k^*(\bx_t)=\bz_t^\mathrm{T}\beta_k.
\end{align}
Let $\mathcal{J}$ be a subset of $[T]$ representing the indices of observations used for estimation. A standard approach to estimate the parameter vector $\beta_k$ for the linear regression model is the ordinary least squares (OLS) estimator, denoted as $\hat{\beta}_k(\mathcal{J})$. Given observations including context $Z=\{\bz_s,s\in\mathcal{J}\}\in \RR^{|\mathcal{J}|\times d}$ and response vector $Y=\{y_s,s\in\mathcal{J}\}\in\RR^{|\mathcal{J}|}$, the OLS estimator is defined as
\begin{align}\label{eq:betahat}
   \hat{\beta}_k(\mathcal{J})=\argmin_{\beta\in\RR^d} \frac{1}{|\mathcal{J}|}\|Y-Z\beta\|_2^2=(Z^\mathrm{T}Z)^{-1}Z^\mathrm{T}Y.
\end{align}

Before presenting our main algorithm and theoretical results in this subsection, we impose the following assumptions. 

\begin{assumption}\label{assum_parabound}
The context $\{\bx_t:t= 1,2,\ldots\}$ are i.i.d random variables having density with respect to Lebesgue measure, drawn from a fixed distribution $\PP_X$ with support $\cX$. There exist positive constants $r\geq 1$ and $b$ such that $\cX \subseteq  [-r,r]^{d-1}$ and $\|\beta_k\|_2\leq b$ for all $k\in\mathcal{K}$.
\end{assumption}

\begin{assumption}[Margin Condition]\label{assum_margin1}
    There exists a constant $L$ such that for all pairs $i,j\in\mathcal{K}$ with $i\neq j$, it holds that for all $\rho>0$,
    \begin{align*}
        \PP(0<|\bz_t^\mathrm{T}(\beta_i-\beta_j)|\leq \rho)\leq L\rho.
    \end{align*}
\end{assumption}

\begin{assumption} \label{assum_excite}
There exist positive constants $h$ and $\tilde{p}$ such that for all arms $k\in\cK$, $\PP\left[\bz_t\in Q_k\right]\geq \tilde{p}$, where  $Q_i=\{\bz: \bz^\mathrm{T}\beta_i>\max_{j\neq i}\bz^\mathrm{T}\beta_j+h\}$. Also, 
    there exists a positive constant $\lambda^*$ such that $\min_{i\in \cK}\lambda_{\min}\{\mathbb{E}(\bz_t\bz_t^{\mathrm{T}}|\bz_t\in Q_i)\}\geq\lambda^*$.
\end{assumption}

Note that these assumptions are standard in linear contextual bandit problems \citep{goldenshluger2013linear,bastani2020online}, hence our algorithm achieves fairness without additional restrictions. Assumption \ref{assum_parabound} implies that the contexts are i.i.d. drawn from a fixed distribution with compact support, and that the unknown parameters are bounded, which is common for theoretical analysis. Assumption \ref{assum_margin1}, often referred to as the margin condition, is also classical in bandit problems. It characterizes that the probability of two arms being too close in reward is small. Assumption \ref{assum_excite} is a common condition that ensures sufficient exploration occurs, and that the covariance matrix of the contexts in those regions is well-conditioned.

\subsubsection{Linear Fair Algorithm}

The classic exploration-exploitation trade-off requires balancing information acquisition against reward maximization. To maintain this balance while satisfying fairness constraints, we implement the $\epsilon$-chaining mechanism, which replaces absolute comparisons with a relative, pairwise evaluation criterion. The formal mathematical definition is given in Definition \ref{def:chain}. We treat arms as indistinguishable if their estimated rewards lie within $\epsilon$ of each other, where $\epsilon$ is adaptively chosen to reflect the current statistical uncertainty in the estimates. Arms satisfying this condition are considered $\epsilon$-linked. By constructing the transitive closure over these pairwise relations, $\epsilon$-chaining groups arms into chains of statistically indistinguishable alternatives, thereby transforming the decision problem from identifying a single best arm to establishing a robust preference hierarchy based on statistically significant merit gaps. 

\begin{definition}[$\epsilon$-chaining]\label{def:chain}
For a set $\mathcal{A}\subseteq \mathbb{R}$,  if $u,v\in\mathcal{A}$ satisfy $|u-v|\leq \epsilon$, then we call $u$ and $v$ are \textbf{$\bepsilon$-linked} in $\mathcal{A}$. Moreover, if $u$ and $v$ are in the same component of the transitive closure of the $\epsilon$-linked relation, then $u$ and $v$ are called \textbf{$\bepsilon$-chained} in $\mathcal{A}$. 
\end{definition}

\begin{algorithm}[ht]
\caption{Fair OLS Contextual Bandit Algorithm}\label{alg:fairols}
\begin{algorithmic}[1]
\STATE \textbf{Input parameters}: $C_a$, $C_b$, $h$, $T$.
\STATE Initialize $|\cT_0|=C_a\log T$, and for all $k\in\cK$, $\mathcal{I}_{k, 0} = \mathcal{I}_{k,t}=\emptyset$.
\FOR{$t\in \cT_0$}
\STATE Randomly pull arm $\pi_t \in \cK$ with equal probability, and receive reward $y_t$.
\STATE Update index set $\mathcal{I}_{\pi_t, 0} \leftarrow\mathcal{I}_{\pi_t, 0}\cup\{t\}$ and $\mathcal{I}_{\pi_t,t} \leftarrow\mathcal{I}_{\pi_t,t-1}\cup\{t\}$. 
\ENDFOR
\STATE Compute the initial estimation $\hat{\beta}_{k,0}=\hat{\beta}(\mathcal{I}_{k, 0})$ as in \eqref{eq:betahat}.
\FOR{$t\in \{|\cT_0|+1,...,T\}$}
\STATE Observe covariate vector $\bx_t$.
\STATE Compute $\hat{\cK}_{\bx_t}=\left\{k\in \cK: \bz_t^{\mathrm{T}}\hat{\beta}_{k,0} \text{ and }\max\limits_{l \in \cK}\bz_t^{\mathrm{T}}\hat{\beta}_{l,0}\text{ are $h/2$-chained in }\{\bz_t^{\mathrm{T}}\hat{\beta}_{k,0}:k\in\cK\}\right\}.$
\IF{$\hat{\cK}_{\bx_t}=\{k\}$}
\STATE pull arm $\pi_t=k$.
\ELSE
\STATE Let $\epsilon_t=C_b\sqrt{\frac{\log T}{t}}$. Compute the all-sample estimation $\hat{\beta}_{k,t-1}=\hat{\beta}(\mathcal{I}_{k,t-1})$ as in \eqref{eq:betahat}.
\STATE Compute the candidate arm set with chaining relation:
\begin{equation}
    \cK_c(\bx_t)=\left\{k\in \hat{\cK}_{\bx_t}: \bz_t^{\mathrm{T}}\hat{\beta}_{k,t-1} \text{ and }\max\limits_{l \in \hat{\cK}_{\bx_t}}\bz_t^{\mathrm{T}}\hat{\beta}_{l,t-1}\text{ are $\epsilon_t$-chained in }\{\bz_t^{\mathrm{T}}\hat{\beta}_{k,t-1}:k\in\hat{\cK}_{\bx_t}\}\right\}.
\end{equation} 
\STATE Randomly pull arm $\pi_t \in \cK_c(\bx_t)$ with equal probability, and receive reward $y_t$.
\ENDIF
\STATE Update index set $\mathcal{I}_{\pi_t,t} \leftarrow\mathcal{I}_{\pi_t,t-1}\cup\{t\}$; $\mathcal{I}_{k,t} \leftarrow\mathcal{I}_{k,t-1}$ for $k\in\mathcal{K}\setminus\{\pi_t\}$. 

\ENDFOR
\end{algorithmic}
\end{algorithm}

We present our fair OLS contextual bandit algorithm in Algorithm \ref{alg:fairols}. Our algorithm integrates fairness into both exploration and exploitation by building upon the established two-estimator framework commonly used in OLS bandit algorithms \citep{bastani2020online, goldenshluger2013linear}. The essence of their framework is its use of two complementary estimators: one trained exclusively on i.i.d. random samples from a pure exploration phase, and another that incorporates all available data to achieve improved convergence rates as more observations are gathered. The key innovation of our algorithm lies in its integration of fairness considerations directly into the dual-estimator framework during both exploration and exploitation phases. A short $O(\log T)$ random exploration phase guarantees every arm receives initial exposure, avoiding early unfairness due to lack of information, while having negligible impact on total regret. Following this, the algorithm transitions into a fairness-aware exploitation stage. Instead of greedily selecting the arm with the highest estimate, our approach operates on carefully constructed sets of $\epsilon$-chained arms. Using the initial estimator, we prescreen arms to form a subset with $h/2$-chained estimated rewards, meaning they are statistically indistinguishable at a coarse level. Since the initial estimator relies on limited data, we further refine the candidate set. This is accomplished by switching to the all-sample estimator, which incorporates all available data, and applying an adaptive threshold $\epsilon_t$ that tightens over time in accordance with the decreasing estimation error. This dynamic threshold ensures that the chaining condition becomes progressively more selective as more information is acquired. The final exploitation step involves randomly selecting an arm from the resulting $\epsilon_t$-chained set. Because these arms are statistically indistinguishable, picking randomly gives everyone in the group a fair and equal chance. This makes the ranking justified: we only treat arms differently if one is clearly better than the other.

\subsubsection{Fairness Guarantee and Regret Analysis}

Our approach effectively ensures equitable treatment across arms and incurs only a minimal asymptotic regret overhead. We rigorously prove that the proposed algorithm simultaneously achieves provable fairness and near-optimal regret, as shown in the following two theorems.

\begin{theorem}\label{thm:linearfairness} Suppose
Assumptions \ref{assum_parabound}-\ref{assum_excite} hold and  $T>2d+2\sqrt{2K}$. When $C_a>\frac{20K^2}{\tilde{p}D_2}\vee \frac{8K^2}{\tilde{p}^2}\vee \frac{640K}{h^2D_1}$ with $D_1=\frac{\lambda^{*2}\tilde{p}^2}{32d^2r^4\sigma^2K^2}$, $D_2=\min \left(\frac{1}{2}, \frac{\lambda^{*}}{8 r^2}\right)$, and $C_b>\sqrt{\frac{10}{D_4\tilde{p}}}\vee \frac{h}{2}\sqrt{2C_a+1}$ with $D_4=\frac{\lambda^{*2} \tilde{p}^2}{512d^2r^4\sigma^2}$, we have that, the policy defined by Algorithm \ref{alg:fairols} satisfies $(1-\frac{1}{T})$-fairness.
\end{theorem}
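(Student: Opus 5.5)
The plan is to reduce fairness to a single high-probability ``good event'' on which all estimators are uniformly accurate, and then argue deterministically that the chaining rules never separate arms in the wrong order. Concretely, let $\mathcal{E}_0$ be the event that after the exploration phase $\cT_0$, $\sup_{\bx\in\cX}|\bz^{\mathrm{T}}(\hat\beta_{k,0}-\beta_k)|\le h/4$ for all $k$. For each $t>|\cT_0|$, let $\mathcal{E}_t$ be the event that $\sup_{\bx\in\cX}|\bz^{\mathrm{T}}(\hat\beta_{k,t-1}-\beta_k)|\le \epsilon_t/2$ for every arm $k$ that can be pulled at $\bx$. Since $\|\bz\|_2\le \sqrt{1+(d-1)r^2}$, both events reduce to $\ell_2$ bounds on $\hat\beta-\beta$. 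These follow from a matrix concentration (lower bound on $\lambda_{\min}$ of the Gram matrix) together with a martingale/sub-Gaussian bound on $Z^{\mathrm{T}}\varepsilon$, which is where $D_1,D_2,D_4$ enter. The target is $\PP(\mathcal{E}_0^c)\le 1/(2T)$ and $\sum_t\PP(\mathcal{E}_t^c)\le 1/(2T)$, and the constraints $C_a>\cdots$ and $C_b>\cdots$ are exactly what makes the exponents exceed $\log T$ by enough.

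\textbf{Deterministic step on the good event.} On the good event, the pulling distribution at time $t$ and context $\bx$ is uniform over a set $S$ (either $\{k\}$ or $\cK_c(\bx)$), so arms in $S$ get equal probability and arms outside get zero. Fairness therefore reduces to showing: if $i\in S$ and $j\notin S$, then $f_i^*(\bx)>f_j^*(\bx)$. There are two ways $j$ can be excluded.

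\emph{Case (a): $j$ is excluded at the prescreen.} Then $\bz^{\mathrm{T}}\hat\beta_{j,0}$ is not $h/2$-chained to the top. By the structure of chaining on the real line, there is a gap larger than $h/2$ in the sorted estimated values, lying between every member of $\hat\cK_{\bx}$ and $j$. So $\bz^{\mathrm{T}}\hat\beta_{i,0}-\bz^{\mathrm{T}}\hat\beta_{j,0}>h/2$, and since each estimation error is at most $h/4$, we get $f_i^*>f_j^*$.

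\emph{Case (b): $j\in\hat\cK_{\bx}$ but $j$ is not $\epsilon_t$-chained.} The same gap argument gives $\bz^{\mathrm{T}}(\hat\beta_{i,t-1}-\hat\beta_{j,t-1})>\epsilon_t$, and with errors at most $\epsilon_t/2$ this yields $f_i^*(\bx)>f_j^*(\bx)$. The argument needs strict inequality, so we use $\le$ in the error bounds and $>$ in the gap.

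The chaining/gap lemma itself is elementary. If values are sorted and $u$ is in the top component while $v$ is not, then some consecutive difference between them exceeds $\epsilon$, so $u-v>\epsilon$.

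\textbf{Main obstacle.} The hard part is the uniform estimation bound for the all-sample estimator $\hat\beta_{k,t-1}$, because its samples are adaptively collected and the Gram matrix need not grow linearly for every arm. I would first argue, on $\mathcal{E}_0$, that whenever $\bz_s\in Q_k$, the prescreen returns $\hat\cK=\{k\}$, so arm $k$ is pulled. This holds because the margin $h$ exceeds twice the error $h/4$ plus the threshold, which forces the other arms to be more than $h/2$ below. Consequently $\sum_{s<t}\bz_s\bz_s^{\mathrm{T}}\mathbb{I}(\bz_s\in Q_k)$ is a sum of i.i.d. matrices, and by matrix Chernoff combined with Assumption \ref{assum_excite} its minimum eigenvalue is at least $\lambda^*\tilde p t/4$ with probability $1-e^{-c t}$. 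Adding the remaining PSD terms preserves this lower bound. A self-normalized or martingale concentration for $\sum\bz_s\varepsilon_s$ then gives $\|\hat\beta_{k,t-1}-\beta_k\|_2\lesssim\sqrt{\log T/(D_4\tilde p t)}$ with failure probability $T^{-3}$.

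This step needs $t$ large relative to $|\cT_0|$ for the matrix bound to be meaningful. The condition $C_b>\frac h2\sqrt{2C_a+1}$ is what handles small $t$: for $t\le(2C_a+1)\log T$, we have $\epsilon_t\ge h/2$, so any arm excluded in step (b) is already separated by an amount the initial estimator's accuracy can certify. I would verify this reduction carefully. A final union bound over $t\le T$ and $k\le K$ then yields total failure probability at most $1/T$.
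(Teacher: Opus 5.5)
Your architecture matches the paper's. You use an $h/4$ event for the exploration estimator (the paper's $\mathcal{B}$) and an $\epsilon_t/2$ event for the all-sample estimator ($\mathcal{W}_t$). On these events you run a deterministic chaining argument, and you use forced pulls on $Q_k$ under $\mathcal{B}$ to make arm $k$'s Gram matrix grow linearly. Your case (a) is more careful than the paper's Case 2. That case asserts $\bz^{\mathrm{T}}\hat\beta_{g,t-1}>\bz^{\mathrm{T}}\hat\beta_{h,t-1}+\epsilon_t$ for every arm outside $\cK_c(\bx)$, so it silently skips arms removed by the prescreen when $|\hat{\cK}_{\bx}|\ge 2$. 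Your gap lemma on the initial estimates covers those arms.

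\textbf{The gap: the small-$t$ reduction.} For $|\cT_0|<t\le(2C_a+1)\log T$ you claim an arm excluded in step (b) is separated ``by an amount the initial estimator's accuracy can certify.'' This does not hold.
\begin{itemize}
\item Step (b) compares the all-sample values $\bz^{\mathrm{T}}\hat\beta_{k,t-1}$. A separation $>\epsilon_t$ in those values gives $f_i^*(\bx)>f_j^*(\bx)$ only if $|\bz^{\mathrm{T}}(\hat\beta_{k,t-1}-\beta_k)|\le\epsilon_t/2$ for $k=i,j$.
\item The bound on $\hat\beta_{k,0}$ says nothing about $\hat\beta_{k,t-1}$. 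By then $\hat\beta_{k,t-1}$ includes up to roughly $(C_a+1)\log T$ adaptively selected extra samples.
\item Your $Q_k$ argument supplies only about $\tilde p(t-|\cT_0|)$ forced samples, which can be $O(1)$ at the start of the window.
\end{itemize}
What you need in this window is an accuracy bound of order $h/4$ for the \emph{all-sample} estimator. The paper obtains it from $\mathcal{I}_{k,0}\subseteq\mathcal{I}_{k,t}$ (eq.~\eqref{eq:betakhatiktmiusbetak} in the proof of Proposition~\ref{prop:allsampletailbound}). A clean version of that step runs as follows:
\begin{itemize}
\item Arm $k$'s unnormalized Gram matrix dominates the exploration one, whose smallest eigenvalue is $\gtrsim\lambda^*\tilde p|\cT_0|/K^2$ with high probability.
\item A martingale bound on $\sum_{s\in\mathcal{I}_{k,t-1}}\bz_s\varepsilon_s$, over at most $t\lesssim|\cT_0|$ terms, then gives error $\lesssim\sqrt{\log T/|\cT_0|}\asymp C_a^{-1/2}$.
\item This is at most $h/4$ for $C_a$ large, which is the role of the lower bound on $C_a$.
\end{itemize}
Only then does $C_b>\frac h2\sqrt{2C_a+1}$, i.e.\ $\epsilon_t/2>h/4$, close the case. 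Equivalently, put the exploration Gram matrix among your ``remaining PSD terms''. You then get $\lambda_{\min}\gtrsim t$ for every $t>|\cT_0|$ at once, and no separate small-$t$ reduction is needed.

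\textbf{Two smaller points.} First, $\mathcal{E}_t$ must cover all arms in $\hat{\cK}_{\bx}$, not only those ``that can be pulled at $\bx$''. The arm $j$ in case (b) is never pulled at $\bx$, yet its estimate must be accurate. Second, a per-arm, per-round failure probability of $T^{-3}$ is too weak under the hypothesis $T>2d+2\sqrt{2K}$. You need $O(KT^{-4})$ per round, as the paper obtains, so that the union over rounds gives $8KT^{-3}<1/T$.
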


\begin{theorem}\label{thm:minimaxoptimallinear}
Assume that the conditions in Theorem \ref{thm:linearfairness} are satisfied. Then we have the cumulative regret triggered by Algorithm \ref{alg:fairols} grows at most poly-logarithmically, i.e., 
    $R_T=O(\log^2 T).$
\end{theorem}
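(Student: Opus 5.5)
We follow the standard two-estimator analysis of \citet{goldenshluger2013linear,bastani2020online}, working on the high-probability event $\mathcal{E}$ already constructed in the proof of Theorem \ref{thm:linearfairness}. On $\mathcal{E}$, three facts hold for all $t$ and all $\bx$.
\begin{itemize}
\item[(i)] The initial estimators satisfy $\max_k|\bz^\mathrm{T}(\hat\beta_{k,0}-\beta_k)|\le h/4$.
\item[(ii)] The all-sample estimators satisfy $\max_{k\in\hat\cK_{\bx}}|\bz^\mathrm{T}(\hat\beta_{k,t-1}-\beta_k)|\le \epsilon_t/2$ uniformly in $t>|\cT_0|$.
\item[(iii)] Every arm $k$ has collected at least $c\,\tilde p\, t$ samples in $Q_k$ by time $t$.
\end{itemize}
Fact (iii) holds because, by (i), $Q_k$ forces $\hat\cK_{\bx_t}=\{k\}$. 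It is what drives (ii) via the matrix concentration underlying the constant $D_4$. Since $\PP(\mathcal{E}^c)\le 1/T$ and per-round regret is bounded by $2br\sqrt d$, the complement contributes $O(1)$ to $R_T$. The exploration phase $\cT_0$ contributes $O(C_a\log T)$. So it suffices to bound the expected regret on $\mathcal{E}$ for $t>|\cT_0|$.

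Fix $t$ on $\mathcal{E}$ and let $\Delta_t=\max_k f_k^*(\bx_t)-f^*_{\pi_t}(\bx_t)$. There are two cases.
\begin{itemize}
\item[(a)] If $\hat\cK_{\bx_t}=\{k\}$, then by (i) every arm outside the $h/2$-chain containing the estimated maximizer is truly worse. Since this chain is a singleton, $k$ is the true best arm and $\Delta_t=0$. More generally, (i) guarantees the true best arm $k^*$ always lies in $\hat\cK_{\bx_t}$.
\item[(b)] Otherwise, by (ii), $k^*\in\cK_c(\bx_t)$. Any arm in $\cK_c(\bx_t)$ is $\epsilon_t$-chained to the estimated maximum through at most $K-1$ links, so its estimate is within $(K-1)\epsilon_t$ of the maximum. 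Its true reward is therefore within $K\epsilon_t$ of $f^*_{k^*}(\bx_t)$, giving $\Delta_t\le K\epsilon_t$.
\end{itemize}
Moreover, a positive $\Delta_t$ requires some pair $i\neq j$ with $0<|\bz_t^\mathrm{T}(\beta_i-\beta_j)|\le K\epsilon_t$. By Assumption \ref{assum_margin1} and a union bound over pairs, this event has probability at most $K^2 L K\epsilon_t$. It is independent of $\cF_{t-1}$ because $\bx_t$ is fresh.

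Combining, $\mathbb{E}[\Delta_t\mathbb{I}(\mathcal{E})]\le K\epsilon_t\cdot K^3L\epsilon_t = K^4LC_b^2\log T/t$. Summing over $t\le T$ gives $O(\log T\cdot\log T)=O(\log^2T)$. Adding the $O(\log T)$ exploration cost and the $O(1)$ failure cost yields $R_T=O(\log^2T)$.

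The main obstacle is (ii): the uniform-in-$\bz$ and uniform-in-$t$ deviation bound for the all-sample OLS estimator, whose samples are adaptively collected. I would handle it as in \citet{bastani2020online}. First, lower-bound $\lambda_{\min}$ of the Gram matrix for $\mathcal{I}_{k,t-1}$ by the contribution from rounds with $\bz_s\in Q_k$ only; these are selected deterministically given (i), so they behave like i.i.d. draws conditional on $Q_k$, and matrix Chernoff gives $\lambda_{\min}\gtrsim \lambda^*\tilde p t$. Second, bound the noise term with a self-normalized martingale inequality. A union bound over $t\le T$ then costs only a $\log T$ factor, which is absorbed into $C_b$.
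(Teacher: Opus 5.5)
Your proposal is correct and follows essentially the same route as the paper: charge $O(\log T)$ to the exploration phase and $O(1)$ to the failure events, then on the good event use the chaining argument to bound the per-round gap by $K\epsilon_t$ and the margin condition to get an expected per-round regret of order $\log T/t$, which sums to $O(\log^2 T)$. Your fact (ii) is exactly Proposition~\ref{prop:allsampletailbound} of the paper, and your explicit use of (i) to show that the true best arm lies in $\hat{\cK}_{\bx_t}$ makes precise a step the paper's proof uses only implicitly.
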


As demonstrated in Theorem \ref{thm:linearfairness} and Theorem \ref{thm:minimaxoptimallinear}, Algorithm \ref{alg:fairols} satisfies the $(1-\frac{1}{T})$-fairness and attains $O(\log^2 T)$ regret bound. It is important to acknowledge that this level of fairness comes with a fundamental cost: the conservative exploration and arm elimination mechanisms, while essential for uniform fairness, inevitably slow down learning compared to purely regret-minimizing algorithms. We further establish a lower bound in Section \ref{subsubsec_linearfairlb}, proving that this fairness cost is both minimal and unavoidable for achieving the same uniform fairness constraint. Thus, our algorithm matches this minimax lower bound, confirming its optimality among all algorithms guaranteeing the same high-probability fairness.

\subsubsection{Price of Fairness and Optimality Analysis}\label{subsubsec_linearfairlb}

It has been established in Theorem 2 of \cite{goldenshluger2013linear} that any admissible policy will incur a cumulative regret of at least $\Omega(\log T)$ for a problem class defined by Assumptions \ref{assum_parabound}-\ref{assum_excite}, which formulates the minimax information-theoretic lower bound absent fairness constraints. Compared to this benchmark, our algorithm's $O(\log^2 T)$ upper bound is rate-optimal up to a logarithmic factor. This logarithmic gap, however, warrants a deeper investigation, particularly in the context of linear rewards, where the baseline regret is already a slow-growing $O(\log T)$. A natural and critical question arises: is the $\log T$ factor in our upper bound a flaw of our algorithm's design, or is it an \textit{inherent} cost imposed by the fairness constraint itself? 

To formalize this, we investigate the \textit{price of fairness}: the minimal additional regret required by any algorithm guaranteeing $(1 - \delta)$-uniform fairness with $\delta = \Theta(1/T)$. Our novel lower bound (Theorem \ref{thm:price}) shows this price is $\Omega(\log^2 T)$, which is strictly tighter than the classical $\Omega(\log T)$ minimax lower bound. Thus, the performance cost for ensuring uniform fairness is fundamentally unavoidable. However, this necessary price is merely an additional $O(\log T)$ factor, preserving the polylogarithmic nature of the regret.

\begin{theorem}\label{thm:price}

Let $\Pi$ denote all admissible policies that admits $(1-\delta)$-fairness with $\delta=\Theta(1/T)$. For the problem class $\mathcal{P}$ satisfying Assumptions \ref{assum_parabound}-\ref{assum_excite}, we have
\begin{align*}
     \inf_{\pi\in\Pi}\sup_{\mathbb{P} \in \mathcal{P}}R_T(\pi) =\Omega \left( \log^2 T \right).
\end{align*} 
\end{theorem}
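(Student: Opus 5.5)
Our plan is to build a hard instance in which fairness forces the learner to randomize uniformly between two arms on a ``Confusion Zone'' of contexts whose measure shrinks only like $\sqrt{\log T/n}$ after $n$ informative samples. Integrating the resulting regret over time will then give $\sum_t \sqrt{\log T/t}\cdot\sqrt{\log T/t}\approx \log T\cdot\log T$.

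\textbf{Instance.} We take $K=2$ and $d=2$, so $\bz=(1,x)$, with $x$ uniform on $[-1,1]$. We set $\beta_1=(0,1)$ and $\beta_2=(0,-1)$, plus a perturbed family $\beta_2^{(\theta)}=(\theta,-1)$, so that $f_1^*-f_2^*=2x-\theta$. Gaussian noise keeps the KL divergences explicit. We also add small regions near $x=\pm1$ where each arm is better by a margin $h$. This makes Assumptions \ref{assum_parabound}--\ref{assum_excite}, including the margin condition with constant $L$, hold uniformly over $|\theta|\le \theta_0$.

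\textbf{Key step 1: fairness forces equal play on a confusion zone.} Fix a round $t$ and let $N_t$ be the number of samples gathered up to $t$. These samples carry information about $\theta$ only through the intercept of arm 2, and at most $N_t$ of them do so. Consider the two hypotheses $\theta=0$ and $\theta=\theta'$. The KL divergence between the induced laws of the history $\cF_{t-1}$ is at most $c N_t\theta'^2$. We choose $\theta'\asymp\sqrt{\log T/N_t}$ with a small constant, so that this KL is at most $\tfrac12\log T$. A change-of-measure argument (Bretagnolle--Huber or the high-probability version of Pinsker) then shows the following: any $\cF_{t-1}$-measurable event has probability ratio at most $T^{1/2}$ times a constant between the two hypotheses.

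Now suppose that under $\theta=0$ the policy strictly prefers arm 1 at some context $x\in(0,\theta'/2)$ with probability at least $1/2$. Under $\theta'$ we have $f_2^*(x)>f_1^*(x)$ there, so the same preference is a fairness violation. Transporting the event gives a violation probability of order $T^{-1/2}\gg\delta=\Theta(1/T)$, which is a contradiction. The symmetric argument with $-\theta'$ applies to $x\in(-\theta'/2,0)$.

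Hence, under $\theta=0$, with probability at least $1/2$, the policy plays both arms with equal probability on the whole interval $|x|\lesssim\theta'$. Uniformity over contexts in Definition~\ref{def:fairness} is exactly what allows a single event to cover the entire interval. The per-round regret is then at least
\[\tfrac12\int_{|x|\lesssim\theta'}|2x|\,dx\asymp\theta'^2\asymp \log T/N_t.\]

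\textbf{Key step 2: summing over rounds.} We have $N_t\le t$. So the regret is at least
\[\sum_{t}c\log T/t\asymp\log^2 T,\]
summing from $t\ge \log T$ (where $\theta'$ remains below $\theta_0$) up to $T$.

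The main obstacle is step 1. The divergence bound must be high-probability rather than in expectation, because fairness is a probability-$(1-\delta)$ statement. We plan to use the log-likelihood-ratio tail: on an event of probability at least $1/2$, the ratio $dP_\theta/dP_{\theta'}$ restricted to $\cF_{t-1}$ is at most $e^{c N_t\theta'^2+O(\sqrt{N_t}\theta')}$. We also must check that the adaptive sample count $N_t$, which is random, is handled by conditioning or by bounding it by $t$. A further care point is that the confusion zone must be placed where both perturbed instances still satisfy Assumption \ref{assum_excite} with the same constants.
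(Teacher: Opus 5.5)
Your argument is correct in substance, but it takes a different route from the paper.

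\textbf{The paper's route.} The paper argues in a Bayesian way, following Joseph et al. It takes $f_1=x$, $f_2=\theta$ with a uniform prior on $\theta$. It then uses the posterior-resampling identity (Lemma~\ref{lem:lemmaec2.1}) and Markov's inequality to show that, with constant probability over $\mathcal{F}_t$, the policy either $\delta'$-distinguishes $x$ in the posterior sense or plays both arms with probability $1/2$. Finally, explicit truncated-Gaussian posterior computations show that distinguishing forces $|x-\theta_0|\ge \frac14\sqrt{\log(2/\delta')/t}$. That last step needs a lower bound $s\ge t/16$ on the number of arm-2 pulls, which the paper extracts from fairness on $\{x<-1/2\}$ plus Azuma.

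\textbf{Your route.} You run a frequentist, round-by-round two-point change of measure against a deterministic alternative $\theta'_t=c\sqrt{\log T/t}$. This needs only the trivial bound $N_{2,t}\le t$ in the divergence decomposition $\mathrm{KL}=\mathbb{E}_0[N_{2,t-1}]\,(\theta'_t)^2/2\le c^2\log T/2$.

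\textbf{What each buys.} Your route is more elementary: there is no posterior machinery and no exploration lower bound. It bounds the regret on the single instance $\theta=0$ rather than on a prior average. It also makes transparent that the extra logarithm is exactly $\log(1/\delta)\asymp\log T$. The paper's route instead locates the confusion zone around a prior-typical $\theta_0$ for all rounds at once. The extra regions near $x=\pm1$ in your instance are unnecessary, since $2x-\theta$ already gives Assumption~\ref{assum_excite}; they are also not expressible in a linear model.

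\textbf{Step needing tightening: the change-of-measure inequality.} Bretagnolle--Huber is the wrong tool here. It lower bounds $P_{\theta'}(A)+P_0(A^c)$, which is vacuous when $P_0(A^c)$ is of constant size. A KL bound alone also does not bound the probability ratio of arbitrary events. You can proceed in either of two ways:
\begin{itemize}
\item Use the data-processing inequality $\mathrm{kl}(P_0(A),P_{\theta'}(A))\le\mathrm{KL}$ together with $\mathrm{kl}(p,q)\ge p\log(1/q)-\log 2$. This gives $P_0(A)\le(\mathrm{KL}+\log 2)/\log(1/\delta)\le c^2/2+o(1)$. With your constants ($\mathrm{KL}\le\frac12\log T$ and $P_0(A)\ge 1/2$), this route only yields $P_{\theta'}(A)\gtrsim T^{-1}$, which does not contradict $\delta=\Theta(1/T)$. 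So $c$ must be taken smaller.
\item Use your log-likelihood-ratio route, with the good event $E$ required to satisfy $P_0(E)\ge 3/4$ rather than $1/2$, so that $P_0(A\cap E)\ge 1/4$. Chebyshev on the martingale $\theta'\sum_{s:\pi_s=2}\varepsilon_s$ then gives $P_{\theta'}(A)\ge\frac14 T^{-1/2-o(1)}$, as you want.
\end{itemize}

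\textbf{Step needing tightening: the random sample count.} The alternative must not depend on the random $N_t$. Bounding $N_{2,t}$ by $t$, as above, is the right choice. Choosing $\theta'$ as a function of $N_t$, or conditioning on $N_t$, does not give a valid two-point comparison.

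\textbf{Minor simplification.} You can work pointwise in $x$ with events $A_x$ and integrate via Fubini. This avoids measurability issues with the ``exists $x$'' event, so uniformity over contexts enters only through fairness holding at every $x$.
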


Unlike standard linear contextual bandit lower bounds, our proof establishes a ``Fairness-Induced Confusion Zone''. We rigorously show that for any algorithm to remain $(1-\delta)$-fair, it must remain indifferent to arms within this zone, where information of arm optimality has not yet reached the discriminability threshold required. The technical crux lies in proving that while this zone shrinks over time, its cumulative impact on regret, when integrated across the non-trivial intersection of fairness-aware filtration and the martingale-based concentration events, necessarily introduces a second-order logarithmic penalty. This reveals that uniform fairness is not merely a constraint on action, but a structural restriction on information acquisition.

The $\Omega(\log^2 T)$ lower bound established in Theorem \ref{thm:price} matches the $O(\log^2 T)$ upper bound achieved by our algorithm (Theorem \ref{thm:linearfairness}), which guarantees $(1-\frac{1}{T})$-fairness. This result conclusively demonstrates that the logarithmic gap is not a technical shortcoming but a fundamental price of fairness. Consequently, our algorithm is asymptotically strictly optimal within the class of fair policies, achieving the minimal possible regret and optimally balancing the trade-off between fairness and efficiency.

\subsection{Smooth Contextual Bandit Problem}\label{sec:fairalgsmooth}

The linear model's tractability comes at the cost of limited expressive power, motivating a shift to non-parametric classes. Our investigation builds upon the smooth bandit framework of \cite{hu2022smooth}, which studies H\"older smooth functions. This class is a standard and expressive model in nonparametric statistics, encompassing functions from Lipschitz continuous to infinitely differentiable, governed by the smoothness parameter $\beta$. The expected reward functions are constrained to a H\"older class of functions. Let $\mathfrak{b}(\beta)$ be the largest integer strictly smaller than $\beta$. A function $\eta:\cX\rightarrow [0,1]$ is said to belong to the $(\beta, L, \cX)-$H\"older space if $\eta$ is $\mathfrak{b}(\beta)$-times continuously differentiable and for any $\bx,\bx^{\prime}\in\cX$,  $$ \left|\eta\left(\bx^{\prime}\right)-\sum_{|r| \leq \mathfrak{b}(\beta)} \frac{\left(\bx^{\prime}-\bx\right)^r}{r!} D^r \eta(\bx)\right| \leq L\left\|\bx^{\prime}-\bx\right\|^\beta.$$ For a comprehensive review of the H\"older class, one can refer to \cite{gilbarg1977elliptic,evans2022partial}. Local polynomial regression is a standard non-parametric method for estimating unknown functions in the  H\"older space. Given observations $S=\left\{\left(\bx_t, y_t\right)\right\}_{t=1}^n$, the bandwidth $h>0$, and integer $l\geq 0$, the local weight is computed by $$
    \hat{\vartheta}_{\bx} \in \arg \min _{\vartheta} \sum_{t: \bx_t \in \mathcal{B}(\bx, h)}\left(y_t-\theta\left(\bx_t ; \bx, \vartheta, l\right)\right)^2,$$
where $\theta(\bu ; \bx, \vartheta, l)=\sum_{|r| \leq l} \vartheta_r(S)(\bu-\bx)^r$ is a degree-$l$ polynomial model. The local polynomial estimator for function $\eta(\bx)$ is defined by $\hat{\eta}(\bx ; S, h, l)=\theta(\bx ; \bx, \hat{\vartheta}_{\bx}, l)$. We denote the estimator for expected reward function $f_k^*(\bx)$ as $\hat{f}^{LP}_k(\bx ; S, h, l)$. Theorem 3.2 of \cite{tsybakov2007fast} establishes an offline convergence rate of $O_p(n^{-\frac{\beta}{2\beta+d}})$, by properly specifying the bandwidth $h$ and the smoothness degree $l$ that adapt to the H\"older smoothness parameter $\beta$ under the i.i.d. random design.

Before introducing the fairness framework, we first formalize the smooth bandit problem by introducing several basic assumptions. The problem class defined by Assumptions \ref{assump:iiddensity} to \ref{assum:margin} falls within the standard smooth bandit framework introduced by \cite{hu2022smooth}.

\begin{assumption}\label{assump:iiddensity}
The context $\{\bx_t:t= 1,2,\ldots\}$ are i.i.d random variables, drawn from a fixed distribution $\mathsf{P}_X$ with a compact support $\cX$ and density function $p(\bx)$. In addition, $p_{\min}\leq p(\bx) \leq p_{\max}$ for all $\bx\in\cX$ and some positive constants $p_{\min}$ and $p_{\max}$.
\end{assumption}

\begin{assumption}\label{assump:betaholder}
    For all $k\in\mathcal{K}$, $f_k^*$ is $(\beta, L, \cX)$-H\"older and $\left(1, L_1, \cX\right)$-H\"older.
\end{assumption}
\begin{assumption}\label{assump:c0r0regular}
 A Lebesgue-measurable set $S$ is \textit{weakly $(c_0,r)$-regular} at point $\bx\in S$ if
    \begin{align*}
        \operatorname{Leb}[S \cap \mathcal{B}(\bx, r)] \geq c_0 \operatorname{Leb}[\mathcal{B}(\bx, r)].
    \end{align*} Define the optimal region of arm $k$ as
    $\mathcal{R}_k=\left\{\boldsymbol{x} \in \cX \mid f_k^*(\boldsymbol{x})=\max _{j \in \mathcal{K}} f_j^*(\boldsymbol{x})\right\}$. Then $\mathcal{R}_k$ is a non-empty set satisfying, for all $0\leq r\leq r_0$, $\mathcal{R}_k$ is weakly $(c_0,r)$-regular at all $\bx\in \mathcal{R}_k$. In this context, $\mathcal{R}_k$ is referred to as a \textit{$(c_0,r_0)$-regular} set.
\end{assumption}

\begin{remark}
    Assumptions \ref{assump:iiddensity} and \ref{assump:c0r0regular} together imply that for all $k\in\mathcal{K}$, there exists a positive constant $p^*$ such that $\PP(\bx_t\in\mathcal{R}_k)\geq p^*$. 
\end{remark}

\begin{assumption}[Margin Condition]\label{assum:margin}
    There exist positive constants $C_0$ and $\alpha\leq 1$ such that for any $\delta \geq 0$ and $i\neq j$,  $\PP_{X\sim \mathsf{P}_X}(0<|f_i^*(X)-f_j^*(X)|\leq \delta)\leq C_0\delta^\alpha$.
\end{assumption}

Assumption \ref{assump:iiddensity} ensures the contexts are i.i.d. and well-bounded. Assumption \ref{assump:betaholder} guarantees the smoothness of the reward functions. Assumption \ref{assump:c0r0regular} enforces a geometric regularity on the optimal regions. Finally, the margin condition (Assumption \ref{assum:margin}) controls the probability of contexts where the arms are close.
Collectively, these assumptions describes a well-defined problem class for analyzing the regret of smooth bandit algorithms.

\textbf{Challenges and ``inestimable regions''.} In online learning, generalizing to non-parametric fair framework highlights a core challenge: a globally consistent ranking of arms is demanded, which in turn requires reliable performance estimates with controlled error for all (arm, context) pairs. Since non-parametric estimation is inherently local and depending on nearby samples, the adaptive, arm-specific sample distribution induced by online decision-making becomes critical. As recognized in \cite{hu2022smooth}, the adaptive collection of samples can lead to sparse coverage, creating \textit{inestimable regions} in the context space where estimation is unreliable. While a regret-minimizing algorithm like smooth bandit algorithm \citep{hu2022smooth} can safely discard arms from such regions (treating them as suboptimal), a fairness-seeking algorithm cannot justify elimination without knowing an arm’s relative performance to inferior alternatives. An exception is the two-armed case ($K=2$), where eliminating the one suboptimal arm naturally enforces merit-based allocation between the only two alternatives. As a result, when $K=2$, the smooth bandit algorithm actually achieves ($1-\delta$)-fairness with $\delta = \tilde{O}(1/T)$ as shown in Proposition \ref{thm:smoothbanditfair}.

\begin{proposition}\label{thm:smoothbanditfair}
    When $K=2$, under Assumptions \ref{assump:iiddensity}-\ref{assum:margin}, for sufficiently large $T$, the policy defined by Algorithm 1 in \cite{hu2022smooth} satisfies ($1-\delta$)-fairness with $\delta = \tilde{O}(1/T)$. 
\end{proposition}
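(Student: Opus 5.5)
Since $K=2$, the fairness condition \eqref{eq:defoffairness} only asks for one thing: whenever the policy strictly prefers one arm at context $\bx$, that arm must be the better one. The smooth bandit algorithm of \cite{hu2022smooth} (Algorithm 1 there) has a simple decision rule in this case. At each round it either keeps both arms active and samples them with equal probability, or it eliminates one arm on the current bin/region containing $\bx$ and pulls the survivor deterministically. So I would argue as follows. If both arms are active at $(t,\bx)$, the selection probabilities are equal and \eqref{eq:defoffairness} holds trivially. If arm $j$ has been eliminated, fairness requires $f_i^*(\bx)>f_j^*(\bx)$, or at least $f_i^*(\bx)\ge f_j^*(\bx)$ with ties contributing zero. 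The whole proof therefore reduces to one statement: with probability $1-\tilde O(1/T)$, the algorithm never eliminates the strictly better arm at any context in any round.

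First I would identify the good event $\mathcal{E}$ on which every local polynomial estimate used in an elimination test is within its confidence width of the truth. Concretely, for each bin $B$ and each arm $k$ still active there, $|\hat f_k(\bx)-f_k^*(\bx)|\le \mathrm{conf}_{k,B}$ for all $\bx\in B$. The confidence widths combine a stochastic term of order $\sqrt{\log T/n}$ and a bias term of order $L h^\beta$, as designed in \cite{hu2022smooth}. I would take their concentration lemma for the local polynomial estimator as given. It is built on samples collected while both arms are active, so those samples are i.i.d. uniform in the region and conditional on the history. I would union bound over the at most $\mathrm{poly}(T)$ (bin, arm, epoch) triples, which gives $\PP(\mathcal{E}^c)=\tilde O(1/T)$.

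Second, on $\mathcal{E}$ I would check the elimination rule. An arm $j$ is removed in bin $B$ only when $\hat f_i-\hat f_j>\mathrm{conf}_i+\mathrm{conf}_j$ at the relevant points. On $\mathcal{E}$ this implies $f_i^*>f_j^*$ on those points. The delicate part, which I expect to be the main obstacle, is upgrading this to \emph{every} $\bx$ in the bin, and also to the child bins that inherit the elimination. The test is made at finitely many points, or via a sup over the bin of the estimated difference, whereas fairness must hold uniformly in $\bx$, including contexts of measure zero. I would control this using the Lipschitz part of Assumption \ref{assump:betaholder} ($L_1$): the true difference varies by at most $2L_1\,\mathrm{diam}(B)$ over $B$. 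The algorithm's elimination threshold already contains a slack term of order $\mathrm{diam}(B)$ or $h^\beta$ that dominates this variation for $T$ large enough, which is where the ``sufficiently large $T$'' hypothesis enters. Because elimination is hierarchical, an arm eliminated in a parent bin stays eliminated in its children. Fairness in the children therefore follows from the parent-level guarantee.

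Third, I would handle the inestimable regions. In \cite{hu2022smooth}, when a bin is declared inestimable with too few samples, the arm with lower (optimistic) estimate is dropped. For $K=2$ this is the only comparison, and the dropping rule is again a confidence-based comparison of the two arms, so the argument of the previous step applies on $\mathcal{E}$. If some inestimable branch discards an arm without a confidence test, I would show that on $\mathcal{E}$ such a branch only occurs where the gap $|f_1^*-f_2^*|$ exceeds the widths, using the margin condition, Assumption \ref{assum:margin}, and regularity, Assumption \ref{assump:c0r0regular}. Combining the three steps, on $\mathcal{E}$ every strict preference is merit-justified for all $t$ and all $\bx$, which gives $(1-\tilde O(1/T))$-fairness.
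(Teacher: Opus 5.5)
Your skeleton matches the paper's proof. When both arms are active the selection probabilities coincide. When one arm is exploited it must be the better one. The good events $\overline{\mathcal{G}}_{k-1}\cap\overline{\mathcal{M}}_{k-1}$ fail with probability $O(k/T)$, and you sum over $O(\log T)$ epochs. The paper does not re-derive the middle step. It cites Lemma 5(ii) of \cite{hu2022smooth}, which gives $\big(\bigcup_{j\le k}\mathcal{E}_{a,j}\big)\cap\mathcal{X}\subseteq\{\bx: f_a^*(\bx)>f_{-a}^*(\bx)\}$ on that event, inestimable-region assignments included. Your re-derivation has two problems.

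The first is that your reduction is mis-stated. \eqref{eq:defoffairness} requires $f_i^*(\bx)>f_j^*(\bx)$ \emph{strictly} whenever $i$ is strictly preferred. Exploiting one arm at a tie point is therefore a violation even though it costs no regret; this is exactly the mechanism behind Theorem \ref{thm:covert}. So ``never eliminates the strictly better arm'' is not enough: you must show an arm is only ever excluded where it is strictly worse. Your Step 2 does give a strict inequality, so this is repairable.

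The genuine gap is Step 3, which is where the real content of Lemma 5(ii) lies. An inestimable point for arm $a$ is one where $a$ is still active, i.e.\ where no confidence test has separated the two arms. On your event the gap there is therefore at most of the order of the widths, and nothing forces it to exceed them. The margin condition bounds probability mass, so it cannot certify anything at every $\bx$, whereas Definition \ref{def:fairness} must hold even on null sets.

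The missing idea is geometric, not probabilistic:
\begin{itemize}
\item On the good event, the support of arm $a$ contains $\mathcal{R}_a$ at every epoch. This is the induction of Lemma \ref{lem:RksubsetSqk}: confidence tests cannot remove $a$ on $\mathcal{R}_a$, and by the next point neither can inestimability.
\item By Assumption \ref{assump:c0r0regular}, that support is then weakly $(c_0/2^d,H)$-regular at every point of $\mathcal{R}_a$ once $H\le r_0$, i.e.\ for $T$ large.
\item Hence any point where arm $a$'s support is irregular lies outside $\mathcal{R}_a$, where $f_a^*(\bx)<f_{-a}^*(\bx)$ strictly.
\item With $K=2$, excluding $a$ there means exploiting the strictly better arm.
\end{itemize}
This is the first step in the proof of Proposition \ref{prop:sqkregular}. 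Either supply this argument or cite Lemma 5(ii) as the paper does.

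Two smaller points. First, Algorithm 1 of \cite{hu2022smooth} uses a fixed grid with exploitation regions accumulating over epochs, not parent/child bins. Persistence of earlier decisions therefore comes from the nestedness of $\overline{\mathcal{G}}_{k-1}\cap\overline{\mathcal{M}}_{k-1}$. Second, your union bound over $\mathrm{poly}(T)$ grid points gives $\tilde O(1/T)$ only because the epoch lengths carry a $\log(T\delta_A^{-d})$ factor. It also needs the sample-size event $\overline{\mathcal{M}}$ so that the concentration lemma applies.
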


However, this inherent fairness property breaks down in the general multi-armed setting ($K>2$) primarily because the unbounded estimation error within these inestimable regions makes reliable pairwise comparisons among arms impossible.  
To address this challenge, we introduce a new fair smooth bandit algorithm whose construction is deeply aligned with a refined analysis of each arm’s sample distribution throughout the learning process. The algorithm is designed to produce sample distributions that are analytically manageable, while the analysis in turn informs and justifies key design choices. This co-evolving relationship between algorithmic design and technical analysis lies at the methodological heart of our work in this section.

\subsubsection{Smooth Fair Algorithm}

We present our proposed fair smooth bandit algorithm for general $K \geq 2$ in Algorithm \ref{alg:fairsmoothalg}. To address the exploration–exploitation tradeoff in non‑parametric bandits, we adopt an epoch‑based framework: the learning process is divided into geometrically growing epochs, and a fresh estimator is trained within each epoch (see, e.g., \cite{Simchi-LeviXu22,hu2022smooth}). The key novelty of our algorithm is reflected in several critical components: the choice of epoch lengths, the construction of error thresholds, and the arm‑selection rule. Crucially, our algorithm does not incorporate a detection mechanism for inestimable regions, even though each arm's sample support evolves stochastically. Omitting such detection is intentional and we can ensure that every arm‑comparison decision is based on reliable, uniformly controlled estimates, thereby upholding the uniform fairness requirement. We provide the technical justification and a detailed discussion of this choice in the Section \ref{eq:globalestima}.

 \begin{algorithm}[!h]
\caption{Fair Smooth Bandit Algorithm}\label{alg:fairsmoothalg}
\begin{algorithmic}[1]
\STATE \textbf{Input}: Grid lattice $G$, hypercubes $\mathcal{C}$, H\"older smoothness $\beta$, regularity constant $c_0$, context dimension $d$.
\STATE  Initialize $\mathcal{K}_{1,j}=\mathcal{K}$ for any $j \in\{1, \ldots,|\mathcal{C}|\}$.
\FOR{$q=1,2, \ldots, Q$ }
\STATE Set epoch schedule $|\cT_q|=\left\lceil\frac{2K}{p^*}\left(\frac{4^q\log \left(T \delta_A^{-d}\right)}{C_{K}}\right)^{\frac{2 \beta+d}{2 \beta}}\left(\log T\right)^{\frac{2\beta+d}{\beta^{\prime}-1}-\frac{2\beta+d}{2\beta}}+\frac{K^2}{2 p^{*2}} \log T\right\rceil$.
\IF{$q=1$}
\FOR{$t \in \mathcal{T}_1$}
\STATE Pull $\mathcal{\pi}_t \in \mathcal{K}$ randomly, equiprobably.
\ENDFOR
\ELSE 
\STATE Set error tolerance  as $\epsilon_q=2^{-q}(\log T)^{\frac{\beta^{\prime}-1-2\beta}{2\beta^{\prime}-2}}$.
\STATE Set $N_{q-1,k}=\left|\mathcal{T}_{q-1,k}\right|, H_{q-1,k}=N_{q-1,k}^{-1 /(2 \beta+d)}$ for $k \in \mathcal{K}$.
\STATE Compute index of exploration hypercubes $\mathcal{I}_q=\{j:j\in \{1, \ldots,|\mathcal{C}|\},|\cK_{q-1,j}|>1\}$.
\STATE For every $j\in \mathcal{I}_q$, construct the local polynomial estimate for $\bx\in \text{Cube}_j$ and every $k\in \cK_{q-1,j}$:
$$
\hat{f}_{q-1,k}(\bx)=\hat{f}^{\mathrm{LP}}\left(g(\bx) ; \mathcal{T}_{q-1,k}, H_{q-1,k}, \mathfrak{b}(\beta)\right).
$$

\STATE Update active arm sets: for $j\in \{1, \ldots,|\mathcal{C}|\}\setminus \mathcal{I}_q$, $\cK_{q,j}\leftarrow\cK_{q,j-1}$; for $j \in\mathcal{I}_q$ and the lattice point $G_j$,
$$
\mathcal{K}_{q,j} \leftarrow \left\{k \in \mathcal{K}_{q-1,j} \left|
    \begin{array}{l}
        \hat{f}_{q-1,k}(G_j) \text{ and } \max\limits_{l \in \mathcal{K}_{q-1,j}}\hat{f}_{q-1,l}(G_j) \\
        \text{are } (2\epsilon_{q-1})\text{-chained in } \{\hat{f}_{q-1,k}(G_j):k\in \mathcal{K}_{q-1,j}\}
    \end{array}
\right.\right\}.
$$
\FOR{$t\in\mathcal{T}_q$}
\STATE Observe context $\bx_t$, pull arm $\pi_t \in \cK_{q,u(\bx_t)}$ with equal probability, and receive reward $y_t$. 
\ENDFOR
\ENDIF
\STATE Log the samples $\mathcal{T}_{q,k}=\left\{\left(\bx_t, y_t\right): t \in \mathcal{T}_q, \mathcal{\pi}_t=k\right\}$ for $k \in \mathcal{K}$.
\ENDFOR
\end{algorithmic}
\end{algorithm}

The algorithm operates through geometrically increasing epochs $\left\{\mathcal{T}_k\right\}_{q=1}^Q$, where each hypercube $G_j$ maintains an evolving active arm set $\mathcal{K}_{q,j}$ initialized to include all arms. At each epoch transition, the active sets are updated by applying $(2\epsilon_q)$-chaining to the predicted rewards from $\cK_{q-1,j}$. During execution, when a covariate $\bx_t$ is observed, the policy randomly selects an arm from the active set corresponding to $\bx_t$'s containing hypercube. For the completeness, we briefly describe the grid structure and standard parameters required in the algorithm, where detailed elaborations can be found in Section 3.2.2 and Section 3.2.3 of \cite{hu2022smooth}. We define the grid lattice $G^\prime$ on $[0,1]^d$ as $G^{\prime}=\left\{\left(\frac{2 j_1+1}{2} \delta_A, \ldots, \frac{2 j_d+1}{2} \delta_A\right): j_i \in\left\{0, \ldots,\left\lceil\delta_A^{-1}\right\rceil-1\right\}, i=1, \ldots, d\right\}$ where $\delta_A=T^{-\frac{\beta}{2 \beta+d}}(\log T)^{-1}$. Let $g(\bx)=\arg \min _{\bx^{\prime} \in G^{\prime}}\left\|\bx-\bx^{\prime}\right\|$ denote the grid point nearest to $\bx$ and the the hypercube containing $\bx$ is defined as $\operatorname{Cube}(\bx)=\left\{\bx^{\prime} \in \cX: g\left(\bx^{\prime}\right)=g(\bx)\right\}$. Denote $u(\bx)\in\{1,...,|\mathcal{C}|\}$ as the index such that $\operatorname{Cube}_{u(\bx)}=\operatorname{Cube}(\bx)$. The grid lattice and hypercubes in $\cX$ are then given by $G=\left\{\bx \in G^{\prime}: \mathbb{P}(\operatorname{Cube}(\bx) \cap \cX)>0\right\}$ and $\mathcal{C}=\{\operatorname{Cube}(\bx): \bx \in G\}$ respectively. Enumerate the grid points in $G$ as $G_1, G_2, \dots, G_{|\cC|}$, so that $G = \{G_1, G_2, \dots, G_{|\cC|}\}$. The local polynomial is employed to estimate the expected reward function, and defined by $\hat{f}_{q-1,k}(\bx)=\hat{f}^{LP}(g(\bx) ; \mathcal{T}_{q-1,k}, H_{q-1,k}, \mathfrak{b}(\beta))$. The parameter $C_K$ is a positive constant given in the supplement. 
In practice, one can choose $C_K$ to be a sufficiently small constant.

\subsubsection{Technical Guarantee of Global Estimability}\label{eq:globalestima}

The conditioning of local polynomial estimators relies on the regularity of the support, as established in Theorem 3.2 of \cite{audibert2007fast}. Denote sample support of arm $k$ at epoch $q$ as $S_{q,k}=\left\{\boldsymbol{x} \in \cX: k \in \mathcal{K}_{q,u(\bx)} \right\}$; \cite{hu2022smooth} formally define the \textit{inestimable region} at epoch $q$ for arm $k$ as the set of $\bx$ where $S_{q,k}$ fails to be weakly $(\frac{c_0}{2^d}, H_{q,k})$-regular at $\bx$. As discussed in \cite{hu2022smooth}, inestimable regions are not only common but also often occupy a non-negligible measure in the smooth bandit setting, largely because $S_{q,k}$ evolves as a random and dynamic process. In any region where estimation error becomes uncontrollable, reliable comparisons between arms are infeasible. Consequently, to achieve meaningful fairness guarantee, we should suppress the inestimable regions during the whole horizon. 

Fortunately, although the sample support of each arm evolves stochastically across epochs, we show that it suffices to impose mild static regularity conditions on the reward functions.  The additional regularity condition (Assumption \ref{assump:Qxliplower}) concerns the reward structure in neighborhoods where multiple optimal arms may exist. This condition serves as a minimal technical prerequisite to facilitate our uniform estimation analysis, without imposing significant practical restrictions.
\begin{assumption}\label{assump:Qxliplower}
    Let $\cQ_{\bx} = \{k\in \cK: k = \argmax_{j\in\cK}f_j^*(\bx)\}$. Denote $\Delta_{i,j}(\bx):=f_i^*(\bx)-f_j^*(\bx)$. If $\max_{i,j\in\cK}\max_{\bx\in\cX}\Delta_{i,j}(\bx)> T^{-\frac{\beta}{2\beta+d}}+\frac{\sqrt{M_\beta}}{\lambda_0}CT^{-\frac{2\beta}{2\beta+d}}$, there exist positive constants $\mathfrak{r}, \tilde c$ and $c^{\prime}$, together with $1<\beta^{\prime}\leq \beta$, such that for all $k\in\cK$: 
     \item[(i)] Define $\cX_0^{(k)} = \{\bx \in \cX: k\in \cQ_{\bx}, |\cQ_{\bx}|>1\}$ and $B(\cX_0^{(k)})=\bigcup_{\bx\in\cX_0^{(k)}} B(\bx,\mathfrak{r})$, where $B(\bx,\mathfrak{r})$ denotes the ball in $\cX$ centered at $\bx$ with radius $\mathfrak{r}$. For any $\bx_1\in B(\cX_0^{(k)})\setminus \cR_k$, choose $\bx_0$ as the projection of $\bx_1$ on $\cX_0^{(k)}$. Then there exists an arm $i\in {\cQ_{\bx_1}}$  such that $\|\bx_1-\bx_0\|_2\leq \tilde c(f_i^*(\bx_1)-f_k^*(\bx_1))^{\frac{\beta'}{\beta}}$.
         \item[(ii)] For all $l\notin \cQ_{\bx_0}$ with $\bx_0 \in \cX_0^{(k)}$ and $\bx_1\in B(\bx_0,\mathfrak{r})$, $\max_{j\in\cK}f_j^*(\bx_1)-f_l^*(\bx_1)> c^\prime$.
         
\end{assumption}

This assumption depends on two constants: $M_\beta=\left|\left\{r \in \mathbb{Z}_{+}^d:|r| \leq \mathfrak{b}(\beta)\right\}\right|$ and $$\lambda_0=\frac{1}{4} p_{\min } \inf _{\substack{W \in \mathbb{R}^d, S \subset \mathbb{R}^d:\|W\|=1 \\ S \subseteq \mathcal{B}(0,1) \text { is compact, } \operatorname{Leb}(S)=c_0 v_d / 2^d}} \int_S\left(\sum_{|s| \leq \mathfrak{b}(\beta)} W_s u^s\right)^2 d u.$$ Here, $M_\beta$ denotes the number of basis functions in the local polynomial model, and $\lambda_0 > 0$ ensures the estimator’s uniform stability. The parameter $C$ is the corruption budget for adversarial attacks applied in Section \ref{sec:fairrobustalg}, which is treated zero in this section.

Assumption \ref{assump:Qxliplower} provides a mild, yet crucial, structural guarantee for analyzing the stochastic evolution of sample support sets. Specifically, it ensures that in neighborhoods where multiple arms could be optimal, the reward functions satisfy two local regularity properties: (i) Non‑degenerate local separation, which requires when moving away from a point where arms are tied, at least one arm exhibits a detectable reward advantage, with the rate of improvement allowed to be very slow when rewards are smooth; (ii) Stability of suboptimality, which requires arms that are strictly suboptimal at a boundary point remain clearly inferior in a surrounding neighborhood. These two conditions create a locally well-defined environment such that the evolution of each arm’s sample support can be analyzed. Notably, the assumption adapts naturally to the smoothness level $\beta$: for low $\beta$ (rough functions), the condition is stricter to compensate for poor local control; as $\beta \to \infty$, requirement (i) effectively imposes no restriction. This adaptability is illustrated in Figure \ref{fig:explainassump} via a two‑arm example.
As $\beta$ increases, arm 2 can afford flatter change near $x=0$ (more closeness to arm 1), reflecting weaker assumptions.

 \begin{figure} \FIGURE{\includegraphics[width=0.5\textwidth]{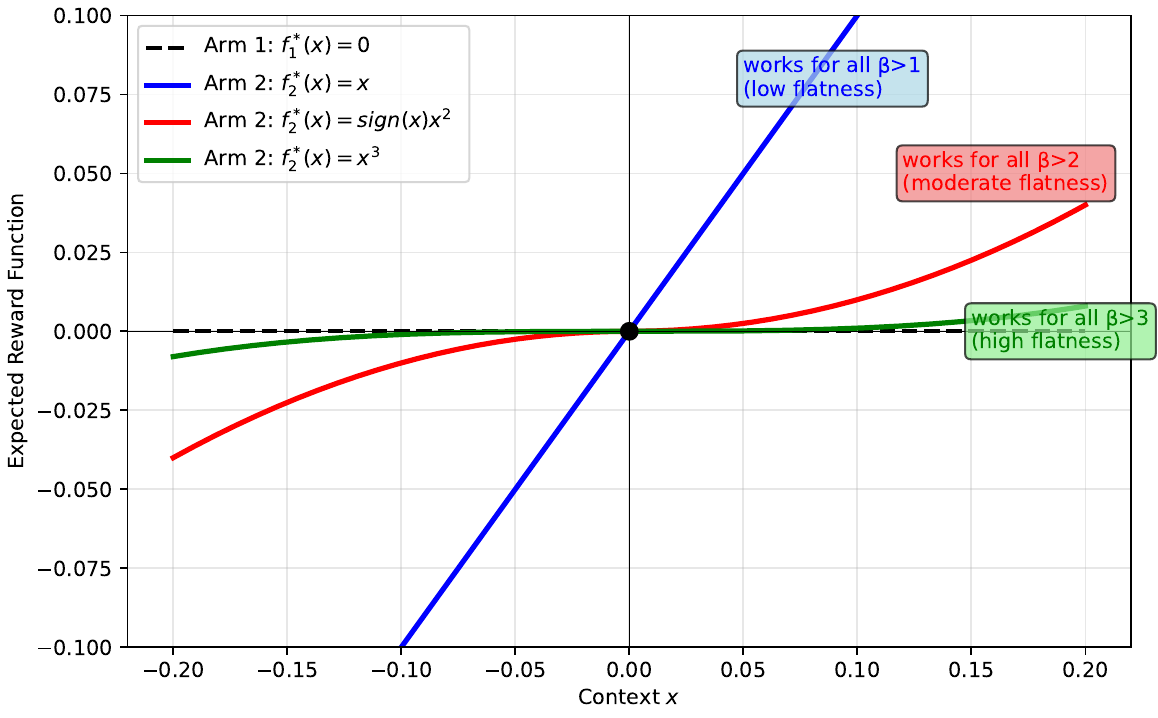}}
{Two-Arm Illustration: Flatness at Decision Boundary Adapts to $\beta$.  \label{fig:explainassump}}
{Note: The low flatness works for all $\beta>1$, the moderate flatness works for all $\beta>2$, and the high flatness works for all $\beta>3$. The plot shows function values near the decision boundary $x=0$ on a subregion $[-0.2, 0.2]$.}
\end{figure}
In real‑world systems, these restrictions are relatively weak: the allowed gradient can be arbitrarily slow (scaling with the smoothness parameter $\beta$), and the separation needs only be positive. The inherent noise, discrete outcomes, and natural smoothness of real‑world reward functions introduce sufficient ``fuzziness'' that the infinitely precise geometric configurations excluded by the assumption are rarely observed. Hence, Assumption \ref{assump:Qxliplower} imposes almost no practical restriction, yet it supplies the necessary structural regularity to uniformly govern the stochastic support evolution across all arms and epochs.

A sufficient condition for Assumption \ref{assump:Qxliplower} is shown in the following proposition.

\begin{proposition}[Easily Verifiable Sufficient Conditions]\label{prop:explainassump3.8}
    Let $\cK=\{1,2\}$ and $\beta>1$. Assume that either $|f^*_1(\bx)-f^*_2(\bx)|\leq  T^{-\frac{\beta}{2\beta+d}}+\frac{\sqrt{M_\beta}}{\lambda_0}CT^{-\frac{2\beta}{2\beta+d}}$ for all $\bx\in\Omega$, or  $\|\nabla f^*_1(\bx)- \nabla f^*_2(\bx)\|_2>\frac{2}{\tilde{c}}$ for all $\bx\in \cX_0^{(1)}=\cX_0^{(2)}$. Then Assumption \ref{assump:Qxliplower} is satisfied. 
\end{proposition}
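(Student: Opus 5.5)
The plan is to verify Assumption \ref{assump:Qxliplower} directly in the two cases of the hypothesis. In the first case, $|f_1^*-f_2^*|\le T^{-\frac{\beta}{2\beta+d}}+\frac{\sqrt{M_\beta}}{\lambda_0}CT^{-\frac{2\beta}{2\beta+d}}$ on all of $\Omega=\cX$. With $K=2$ and $\Delta_{i,i}\equiv 0$, this gives $\max_{i,j}\max_{\bx}\Delta_{i,j}(\bx)\le T^{-\frac{\beta}{2\beta+d}}+\frac{\sqrt{M_\beta}}{\lambda_0}CT^{-\frac{2\beta}{2\beta+d}}$. The premise of Assumption \ref{assump:Qxliplower} therefore fails, and the assumption holds vacuously. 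The substantive work is all in the second case.

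For the second case I would write $g=f_1^*-f_2^*$. Here $\cX_0^{(1)}=\cX_0^{(2)}=\{g=0\}=:\cX_0$, and $\|\nabla g\|>2/\tilde c$ on $\cX_0$.

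\textbf{Part (ii).} For $K=2$, any $\bx_0\in\cX_0$ has $\cQ_{\bx_0}=\{1,2\}$. Hence no arm $l\notin\cQ_{\bx_0}$ exists, and (ii) is vacuous.

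\textbf{Part (i).} Take $k=2$ without loss of generality, and let $\bx_1\in B(\cX_0,\mathfrak r)\setminus\cR_2$. Then $g(\bx_1)>0$ and $\cQ_{\bx_1}=\{1\}$. Let $\bx_0$ be the projection of $\bx_1$ onto the closed set $\cX_0$. I need $\|\bx_1-\bx_0\|\le \tilde c\, g(\bx_1)^{\beta'/\beta}$.

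I would first prove a linear lower bound $g(\bx_1)\ge \|\bx_1-\bx_0\|/\tilde c$, using a Taylor expansion along the segment from $\bx_0$ to $\bx_1$. Since $\beta>1$, $g$ is $C^1$ and $\nabla g$ is uniformly continuous on the compact $\cX$; this follows from the Hölder condition with $\mathfrak b(\beta)\ge1$. Also $g>0$ on the open segment, because otherwise some point of $\cX_0$ would be closer to $\bx_1$ than $\bx_0$.

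The key geometric fact is that $\bx_1-\bx_0$ is parallel to $\nabla g(\bx_0)$. This holds because $\cX_0$ is locally a $C^1$ hypersurface near $\bx_0$ by the implicit function theorem ($\nabla g\neq0$), and the nearest-point condition forces $\bx_1-\bx_0$ to be normal to that hypersurface. Its orientation is $+\nabla g(\bx_0)/\|\nabla g(\bx_0)\|$, since $g(\bx_1)>0$. Then
\[
g(\bx_1)=\nabla g(\bx_0)^{\mathrm T}(\bx_1-\bx_0)+o(\|\bx_1-\bx_0\|)\ \ge\ \Big(\tfrac{2}{\tilde c}-\omega(\|\bx_1-\bx_0\|)\Big)\|\bx_1-\bx_0\|,
\]
where $\omega$ is the modulus of continuity of $\nabla g$. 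Choosing $\mathfrak r$ small enough that $\omega(\mathfrak r)\le 1/\tilde c$ gives $g(\bx_1)\ge \|\bx_1-\bx_0\|/\tilde c$.

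To convert this into the required bound, set $\beta'=\beta$ (which is allowed, since $1<\beta'\le\beta$). The target then reads $\|\bx_1-\bx_0\|\le\tilde c\,g(\bx_1)$, which is exactly what was just shown. If a smaller $\beta'$ is wanted, the bound still holds whenever $g\le 1$, since then $g^{\beta'/\beta}\ge g$, possibly after adjusting $\tilde c$.

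\textbf{Main obstacle.} The delicate step is justifying that $\bx_1-\bx_0$ is normal to $\cX_0$. This needs $\bx_0$ to be an interior point of $\cX$; near $\partial\cX$ the projection could fail to be normal. I would first try to handle this by shrinking $\mathfrak r$ and using the uniform lower bound on $\|\nabla g\|$ over the compact set $\cX_0$. If the boundary still causes trouble, a fallback is to avoid normality altogether. Move from $\bx_1$ in the direction $-\nabla g$: a gradient-flow argument shows that a zero of $g$ is reached within distance about $\tilde c\, g(\bx_1)/2$. That distance upper-bounds $\|\bx_1-\bx_0\|$ by the definition of projection, which yields the same inequality.
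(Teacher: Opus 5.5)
Your argument is the paper's argument. The first alternative makes the premise of Assumption~\ref{assump:Qxliplower} false, and part~(ii) is vacuous since $\cQ_{\bx_0}=\{1,2\}$. Part~(i) comes from three steps: show $\bx_1-\bx_0\parallel\nabla g(\bx_0)$ (the paper cites the Lagrange multiplier theorem, you the implicit function theorem); propagate $\|\nabla g\|>2/\tilde c$ to $>1/\tilde c$ along the segment by uniform continuity and integrate; then take $\beta'=\beta$.

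The step you call the main obstacle is a genuine gap, and the paper's proof has it as well. The Lagrange step uses only the constraint $g(\bx)=0$. But $\bx_0$ minimizes over $\{g=0\}\cap\cX$, so it may be an endpoint of $\cX_0$ lying on $\partial\cX$, where no normality holds. Neither of your patches helps. Shrinking $\mathfrak r$ or invoking $\inf_{\cX_0}\|\nabla g\|$ does nothing, because the bad configuration sits at such an endpoint at every scale. The gradient flow can leave $\cX$ immediately, so the zero it reaches need not lie in $\cX_0$ and gives no bound on $\|\bx_1-\bx_0\|$.

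In fact the statement needs an extra hypothesis. Take the following instance:
\begin{itemize}
\item the L-shaped support $\cX=[0,2]^2\setminus\big((1,2]\times(1,2]\big)$ with uniform density;
\item $f_2^*\equiv\tfrac12$ and $f_1^*=\tfrac12+\tfrac14 g$, where $g(x,y)=y-1-(x-1)^{2m}$ and $m$ is an integer with $2m>\beta$.
\end{itemize}
Then $\|\nabla g\|\ge 1$ everywhere. The tie set is $\cX_0=\{(x,1+(x-1)^{2m}):0\le x\le 1\}$, since the rest of the curve lies in the removed square. The other standing assumptions hold, because the $\cR_k$ are curvilinear polygons with positive angles. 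For $\bx_1=(1+s,1)\in\cX\setminus\cR_1$, the projection onto $\cX_0$ is $\bx_0=(1,1)$. So (i) with $k=1$ would require $s\le\tilde c\,(s^{2m}/4)^{\beta'/\beta}$ for all small $s>0$. This is impossible for every $\tilde c$, every $\mathfrak r$ and every $\beta'>1$, since $2m\beta'/\beta>1$.

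Even for convex $\cX$, your claimed bound $\|\bx_1-\bx_0\|\le\tilde c\,g(\bx_1)$ fails at boundary endpoints. On $[0,1]^2$ take $g$ proportional to $y+\epsilon(x-\tfrac12)$ and $\bx_1=(\tfrac12+s,0)$. Then $\bx_0=(\tfrac12,0)$, and $\|\bx_1-\bx_0\|/g(\bx_1)$ is of order $1/\epsilon$ while $\|\nabla g\|$ stays of order one. There (i) can still be saved with $\beta'<\beta$ and small $\mathfrak r$, but that is a different argument.

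The simplest repair is to add the hypothesis $\operatorname{dist}(\cX_0,\partial\cX)>0$ and take $\mathfrak r$ below this distance. Then $\bx_0$ is interior, and both $B(\bx_0,\mathfrak r)$ and the segment $[\bx_0,\bx_1]$ lie in $\cX$. With that, your interior argument (equivalently the paper's) is complete.
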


Proposition \ref{prop:explainassump3.8} provides user-friendly, conservative criteria for verifying Assumption \ref{assump:Qxliplower}, enabling quick verification without precise knowledge of the smoothness parameter $\beta$. While these sufficient conditions cover two verifiable extremes including (i) global near-indistinguishability and (ii) clear gradient separation, Assumption \ref{assump:Qxliplower} is substantially more general. It indicates that \textit{vanishing derivatives are permissible} since uniform positivity need only be enforced on a suitable higher-order derivative tensor. This flexibility is crucial for practical applications where reward functions  exhibit flat regions at boundaries due to saturation or physical constraints.

The performance and fairness guarantees of our algorithm are governed by two carefully constructed events defined for each epoch $q$. Denote the sample size of arm $k$ collected at epoch $q$ as $N_{q, k}=|\cT_{q,k}|$. Then $\mathcal{M}_q$ ensures a sufficient sample size for each arm, which forms the foundation for reliable estimation; $\mathcal{G}_q$ directly certifies that the sample support $S_{q,k}$ is regular and that the estimation error is upper bounded. These events collectively characterize a well-controlled epoch $q$:
\begin{align*}
&\mathcal{M}_q  =\left\{\min _{k \in \mathcal{K}} N_{q, k} \geq\left(\frac{4^q\log(T\delta_A^{-d})}{C_{K}}\right)^{\frac{2 \beta+d}{2 \beta}}\left(\log T\right)^{(2\beta+d)\frac{2\beta-\beta^{\prime}+1}{2\beta(\beta^{\prime}-1)}}\right\}, \\
&\mathcal{G}_q = \left\{
\begin{aligned}
& \text{(i) } \text { for all }k\in\cK,  S_{q,k} \text{ is weakly }  (\frac{c_0}{2^d},H_{q,k})\text{-regular at all }\bx\in S_{q,k}\cap G \\
& \text{(ii) } \left|\hat{f}_{q, k}(\bx)-f^*_{ k}(\bx)\right| \leq \epsilon_q / 2 \text { for all } \bx \in S_{q,k} \text{ and } k\in\cK 
\end{aligned}
\right\}.
\end{align*}
The intersections of events are defined as $\overline{\mathcal{G}}_q=\bigcap_{1 \leq j \leq q} \mathcal{G}_q $ and $\overline{\mathcal{M}}_q=\bigcap_{1 \leq j \leq q} \mathcal{M}_q.$

A main contribution in this section is Proposition \ref{prop:sqkregular}, which guarantees that, provided the algorithm has performed well in all previous epochs (captured by $\overline{\mathcal{G}}_{q-1}\cap\overline{\mathcal{M}}_{q-1}$) and the current sample size is adequate (captured by $\mathcal{M}_{q}$), together with the assumptions, the support set for every arm remains weakly $(\frac{c_0}{2^d},H_{q,k})$-regular at epoch $q$. This property provides the theoretical foundation for controlling estimation error of local polynomial estimators at each hypercube, which serves as the cornerstone for achieving fairness guarantees.

\begin{proposition}\label{prop:sqkregular}
Suppose Assumptions \ref{assump:iiddensity}-\ref{assump:Qxliplower} hold.  Under event  $\overline{\mathcal{G}}_{q-1}\cap\overline{\mathcal{M}}_q$, assume $T$ is sufficiently large, for all $k\in\cK$, $S_{q,k}$ is weakly $(\frac{c_0}{2^d},H_{q,k})$-regular at all $\bx\in S_{q,k}\cap G$. 
\end{proposition}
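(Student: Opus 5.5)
Under $\overline{\mathcal{G}}_{q-1}$, every arm elimination made so far was justified: every arm removed from $\mathcal{K}_{q,j}$ is truly suboptimal at the lattice point $G_j$, by a margin comparable to $\epsilon_{q-1}$. Conversely, every arm whose gap to the best arm is smaller than about $\epsilon_{q-1}$ is still active. The plan is to use this two-sided description to sandwich $S_{q,k}$ between an inner set and an outer set. The inner set is $\mathcal{R}_k$ enlarged by a neighborhood where the gap $\max_j f_j^*-f_k^*$ is below order $\epsilon_{q-1}$. The outer set is the region where the gap is at most order $\epsilon_{q-2}$, up to grid discretization of size $\delta_A$. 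Weak regularity of $S_{q,k}$ at scale $H_{q,k}$ will then follow from the $(c_0,r_0)$-regularity of $\mathcal{R}_k$ (Assumption \ref{assump:c0r0regular}), together with Assumption \ref{assump:Qxliplower}, which controls the part of $S_{q,k}$ lying outside $\mathcal{R}_k$.

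\textbf{Step 1: the sandwich.} On $\mathcal{G}_{q-1}$(ii) the estimates satisfy $|\hat f_{q-1,k}-f_k^*|\le\epsilon_{q-1}/2$ on the support. With the $2\epsilon_{q-1}$-chaining rule, an induction over epochs shows two things. First, $k\in\mathcal{K}_{q,u(\bx)}$ whenever $\max_j f_j^*(G_{u(\bx)})-f_k^*(G_{u(\bx)})\le\epsilon_{q-1}$. Second, $k\notin\mathcal{K}_{q,u(\bx)}$ forces the gap to be at least $\epsilon_{q-1}$; the chain passes only through active arms, and each link adds at most $2\epsilon$ plus estimation error. The $(1,L_1)$-Hölder condition transfers these statements from the lattice point to the whole cube, at a cost $L_1\sqrt d\,\delta_A$. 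Since $\delta_A\ll\epsilon_q$ for large $T$, this cost is negligible. Consequently $\mathcal{R}_k\subseteq S_{q,k}$ up to cubes that meet $\mathcal{R}_k$.

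\textbf{Step 2: regularity at points.} Fix $\bx\in S_{q,k}\cap G$ and $r=H_{q,k}$. On $\mathcal{M}_q$, $N_{q,k}$ is large, so $H_{q,k}$ is small (below $r_0$ and $\mathfrak r$). There are three cases.

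\emph{(a) $\bx$ near $\mathcal{R}_k$.} Suppose $\mathrm{dist}(\bx,\mathcal{R}_k)\le r/2$. Apply weak $(c_0,r/2)$-regularity at the nearest point of $\mathcal{R}_k$. This gives $\operatorname{Leb}(S_{q,k}\cap\mathcal{B}(\bx,r))\ge c_0\operatorname{Leb}\mathcal{B}(\cdot,r/2)=\frac{c_0}{2^d}\operatorname{Leb}\mathcal{B}(\bx,r)$, which is exactly the constant in the claim.

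\emph{(b) $\bx$ far from any tie set $\cX_0^{(k)}$.} If $\bx$ is not within $\mathfrak r$ of $\cX_0^{(k)}$ and not in $\mathcal{R}_k$, then the gap at $\bx$ is bounded below by a constant, using continuity and compactness. That constant exceeds $\epsilon_{q-2}$, so $k$ would already have been eliminated and this case is vacuous.

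\emph{(c) $\bx\in B(\cX_0^{(k)})\setminus\mathcal{R}_k$.} Here Assumption \ref{assump:Qxliplower}(i) applies. Membership $\bx\in S_{q,k}$ gives a gap of $O(\epsilon_{q-1})$ for some optimal arm $i$, so $\|\bx-\bx_0\|\le\tilde c\,(C\epsilon_{q-1})^{\beta'/\beta}$ with $\bx_0\in\cX_0^{(k)}\subseteq\mathcal{R}_k$. Part (ii) of the assumption guarantees that no third arm interferes with the chain near $\bx_0$. It then suffices to show $\tilde c(C\epsilon_{q-1})^{\beta'/\beta}\le H_{q,k}/2$, which reduces to case (a).

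\textbf{Main obstacle: the scale comparison in (c).} The lower bound in $\mathcal{M}_q$ gives $H_{q,k}\le N_{q,k}^{-1/(2\beta+d)}\approx 2^{-q/\beta}\cdot(\log T)^{-\text{power}}$. The required inequality is between $\epsilon_{q-1}^{\beta'/\beta}\approx 2^{-q\beta'/\beta}(\log T)^{\frac{\beta'(\beta'-1-2\beta)}{\beta(2\beta'-2)}}$ and this $H_{q,k}$. It must hold in the correct direction, and this direction also depends on an upper bound on $N_{q,k}$, which comes from the epoch length $|\mathcal{T}_q|$. I would verify that the peculiar $\log T$ exponents in $|\mathcal{T}_q|$, $\epsilon_q$ and $\mathcal{M}_q$ were chosen precisely so that the $2^{-q}$ factors and the $\log T$ powers balance. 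Then, using $\beta'>1$ and large $T$, the displacement $\|\bx-\bx_0\|$ is at most $H_{q,k}/2$. If the direction of this comparison fails, I would instead fall back on regularity of the set $\{$gap $\le\epsilon\}$ near $\bx_0$, which contains a cone-like region around $\cX_0^{(k)}$ by the Hölder bound. I expect this exponent bookkeeping, together with handling the chain relation through multiple arms near ties, to be the delicate part.
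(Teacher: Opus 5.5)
Your plan matches the paper's proof in all essentials. It uses $\mathcal{R}_k\subseteq S_{q,k}$ (Lemma~\ref{lem:RksubsetSqk}). Points far from the tie set are excluded by a compactness lower bound on the gap (Lemma~\ref{lem_prev7}) against the chain bound $(2K-1)\epsilon_{q-1}$. Near ties, Assumption~\ref{assump:Qxliplower}(i) and the deterministic bound $N_{q,k}\le|\mathcal{T}_q|$ show that $\|\bx-\bx_0\|_2$ is negligible relative to $H_{q,k}$, and regularity of $\mathcal{R}_k$ at $\bx_0$ finishes. You correctly saw that the comparison needs $N_{q,k}\le|\mathcal{T}_q|$, while $\mathcal{M}_q$ only serves to make $H_{q,k}\le r_0$.

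Your half-radius ball is a cleaner endgame than the paper's shrunken ball $\mathcal{B}(\bx_0,H_{q,k}-\|\bx-\bx_0\|_2)$, which needs the stronger smallness $\|\bx-\bx_0\|_2/H_{q,k}\le 1-(1-c_0/2^d)^{1/d}$. Both work because the ratio tends to zero. The computation you deferred goes in the right direction, so no fallback is needed. Since $\log(T\delta_A^{-d})\asymp\log T$, one has $|\mathcal{T}_q|\asymp 2^{q(2\beta+d)/\beta}(\log T)^{(2\beta+d)/(\beta'-1)}$, hence
\[
\frac{\epsilon_{q-1}^{\beta'/\beta}}{H_{q,k}}\le \epsilon_{q-1}^{\beta'/\beta}\,|\mathcal{T}_q|^{\frac{1}{2\beta+d}}\lesssim 2^{q\frac{1-\beta'}{\beta}}(\log T)^{\frac{\beta'(\beta'-1-2\beta)+2\beta}{2\beta(\beta'-1)}}=2^{q\frac{1-\beta'}{\beta}}(\log T)^{\frac{\beta'-2\beta}{2\beta}}\longrightarrow 0
\]
uniformly in $q$, because $1<\beta'\le\beta$.

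What is missing is a case. Assumption~\ref{assump:Qxliplower} is conditional: $\mathfrak{r}$, $\tilde c$, $\beta'$ and part (i) are only granted when $\max_{i,j}\max_{\bx}\Delta_{i,j}(\bx)>T^{-\frac{\beta}{2\beta+d}}$ (here $C=0$). In the complementary regime your case (c) has no support, and neither does the reduction to $\operatorname{dist}(\bx,\mathcal{R}_k)\le H_{q,k}/2$. Every arm survives everywhere there, so $S_{q,k}=\mathcal{X}$ can contain points whose distance to $\mathcal{R}_k$ has nothing to do with $H_{q,k}$.

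The paper disposes of this regime first, via Lemma~\ref{lem:toosmallallsupport}. Your Step~1 argument, run at $\bx$ itself, already gives it. By the proof of Lemma~\ref{lem:epsilongreatthan1/2delta}, $\epsilon_{q-1}\ge T^{-\frac{\beta}{2\beta+d}}$, so every gap is at most $\epsilon_{q-1}$ and $S_{q,k}=\mathcal{X}$. Regularity then needs an argument different from yours. For $\bx\in\mathcal{X}\cap G$, take an arm $k'$ optimal at $\bx$. Then $\mathcal{R}_{k'}\subseteq\mathcal{X}$ is weakly $(c_0,H_{q,k})$-regular at $\bx$, using $H_{q,k}\le r_0$ from $\mathcal{M}_q$. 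Hence $\mathcal{X}$ is weakly $(\frac{c_0}{2^d},H_{q,k})$-regular at $\bx$. The same argument covers $q=1$, where $S_{1,k}=\mathcal{X}$ and there is no $\epsilon_0$ to feed into your gap bound.

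Two pieces of your plan are unnecessary. Part (ii) of the assumption is not used. Nor is the H\"older transfer from lattice points, because $\hat f_{q-1,k}$ is constant on each cube and $\mathcal{G}_{q-1}$(ii) already holds at every $\bx\in S_{q-1,k}$.
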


The analysis employs a localization technique that focuses on the neighborhood of optimality boundaries. At this local level, Assumption \ref{assump:Qxliplower} enables a precise geometric characterization by establishing a relationship between context displacement and reward suboptimality gaps. The core technical part lies in proving that the algorithmically determined local radius $H_{q,k}$ maintains proper scaling relative to these displacements, despite its dependence on the stochastic sample size $N_{q,k}$. The enforcement of a geometrically well-behaved support structure relies on a multi-scale geometric analysis that bridges the static structure of the reward functions and the dynamic evolution of the algorithm's support sets.

\begin{proposition}\label{thm:tailproboftwoevents}
Under Assumptions \ref{assump:iiddensity}-\ref{assump:Qxliplower}, for sufficiently large $T$ and any $1\leq q\leq Q-1$, 
\begin{align}\label{eq:thirdeq}
    \mathbb{P}\left(\overline{\mathcal{G}}_q^C \cup \overline{\mathcal{M}}_q^C\right) \leq \frac{K\left(5+2 M_\beta^2\right) q}{T}.
\end{align} 
\end{proposition}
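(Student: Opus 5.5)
We argue by induction over epochs, using a union bound over the first-failure epoch. Write
\[
\overline{\mathcal{G}}_q^C\cup\overline{\mathcal{M}}_q^C\subseteq\bigcup_{j=1}^{q}\Bigl[\bigl(\mathcal{M}_j^C\cap\overline{\mathcal{G}}_{j-1}\cap\overline{\mathcal{M}}_{j-1}\bigr)\cup\bigl(\mathcal{G}_j^C\cap\overline{\mathcal{G}}_{j-1}\cap\overline{\mathcal{M}}_{j}\bigr)\Bigr],
\]
with $\overline{\mathcal{G}}_0,\overline{\mathcal{M}}_0$ the whole space. It then suffices to show that each epoch contributes at most $K(5+2M_\beta^2)/T$, split into roughly $K/T$ or $2K/T$ from the sample-size event and $K(3+2M_\beta^2)/T$ from the estimation event. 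Summing over $j\le q$ gives the factor $q$.

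\textbf{Step 1: sample sizes ($\mathcal{M}_j$).} On $\overline{\mathcal{G}}_{j-1}$, every arm $k$ remains active on its whole optimal region $\mathcal{R}_k$. This is because under $\mathcal{G}_{j-1}$(ii) the estimates are within $\epsilon_{j-1}/2$ of the truth, so the optimal arm is $2\epsilon_{j-1}$-chained with the empirical maximum at every lattice point. Since the algorithm randomizes uniformly over at most $K$ active arms, for each $t\in\mathcal{T}_j$ we have $\PP(\pi_t=k)\ge p^*/K$, using the remark after Assumption \ref{assump:c0r0regular}. The active sets are fixed within epoch $j$, being determined by data from epoch $j-1$, so conditionally the indicators are i.i.d. Bernoulli. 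A Chernoff/Hoeffding bound then gives $N_{j,k}\ge \frac{p^*}{2K}|\mathcal{T}_j|$ with failure probability $\exp(-2(p^*/2K)^2|\mathcal{T}_j|)\le 1/T$. Here the additive term $\frac{K^2}{2p^{*2}}\log T$ in the epoch length is exactly what makes this work, and the first term of $|\mathcal{T}_j|$ matches the threshold in $\mathcal{M}_j$ after dividing by $2K/p^*$. Taking a union bound over $k$ yields $K/T$. For $j=1$ the same argument holds with probability $1/K$ per arm.

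\textbf{Step 2: regularity (part (i) of $\mathcal{G}_j$).} On $\overline{\mathcal{G}}_{j-1}\cap\overline{\mathcal{M}}_j$, Proposition \ref{prop:sqkregular} gives (i) deterministically, so no probability is lost here.

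\textbf{Step 3: uniform estimation error (part (ii) of $\mathcal{G}_j$).} Conditional on the epoch-$(j-1)$ data, the samples $\mathcal{T}_{j,k}$ are i.i.d. draws from $\mathsf P_X$ restricted to $S_{j,k}$, which is fixed. We therefore apply the offline local polynomial analysis (Audibert–Tsybakov style) at each lattice point $G_i$ with bandwidth $H_{j,k}=N_{j,k}^{-1/(2\beta+d)}$. Regularity from Step 2 makes the lower bound $\lambda_0$ available for the smallest eigenvalue of the local Gram matrix; this holds with probability $1-2M_\beta^2/T$ after a matrix Bernstein bound on each of the $M_\beta^2$ entries. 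The bias is then $O(H^\beta)$ and the variance term is sub-Gaussian at scale $\sqrt{\log(T\delta_A^{-d})/(NH^d)}$. A union bound over the $|G|\lesssim\delta_A^{-d}$ grid points explains the $\log(T\delta_A^{-d})$ factor. Plugging in the lower bound on $N_{j,k}$ from $\mathcal{M}_j$ gives total error at most $\epsilon_j/2$ with the stated choice of $C_K$, with failure probability about $3/T$ per arm from noise concentration and counting. Passing from lattice points to all $\bx\in S_{j,k}$ is automatic, since $\hat f$ is defined through $g(\bx)$, combined with the $L_1$-Lipschitz condition and $\delta_A\ll\epsilon_j$.

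\textbf{Main obstacle.} The main difficulty is Step 3: we must check that the powers of $\log T$ in $|\mathcal{T}_q|$ and $\epsilon_q$ (the exponents involving $\beta'$) line up so that bias plus variance is at most $\epsilon_j/2$ uniformly, while keeping the conditioning structure clean, namely that the randomness of $S_{j,k}$ and $N_{j,k}$ is handled by conditioning on the previous epoch. The constants $5$ and $2M_\beta^2$ come from carefully counting the individual concentration events.
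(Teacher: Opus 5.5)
Your proposal is correct and follows the paper's proof essentially step for step. It uses the same first-failure decomposition, Hoeffding for $\mathcal{M}_j$ via $\mathcal{R}_k\subseteq S_{j,k}$ (Lemma \ref{lem:RksubsetSqk}), Proposition \ref{prop:sqkregular} for part (i), and the local-polynomial tail bound of Lemma \ref{lem:A.6} for part (ii), which you re-derive in sketch; there, the arm-$k$ design density is proportional to $p(\bx)/|\mathcal{K}_{j,u(\bx)}|$ on $S_{j,k}$ rather than $\mathsf{P}_X$ restricted, but its two-sided bounds are all that is needed.

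One small slip in Step 1: with your deviation $\frac{p^*}{2K}|\mathcal{T}_j|$, the additive term $\frac{K^2}{2p^{*2}}\log T$ alone only gives $T^{-1/4}$. The $1/T$ bound must therefore come from the polylogarithmically dominant first term of $|\mathcal{T}_j|$ for large $T$. Alternatively, use the paper's deviation $\frac{p^*}{K}|\mathcal{T}_j|$ minus the $\mathcal{M}_j$ threshold, for which the additive term contributes exactly $\log T$ to the exponent.
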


Proposition \ref{thm:tailproboftwoevents} provides a high-probability guarantee that our algorithm performs reliably throughout its execution. It shows that the chance of running into problems, from too few samples to poor estimates in any of the first $q$ epochs, is at most $O(\frac{q}{T})$. This high success probability is essential for the algorithm's final regret performance and fairness.

\subsubsection{Fairness Guarantee and Regret Analysis}

Building upon the foundational guarantees in Propositions \ref{prop:sqkregular} and \ref{thm:tailproboftwoevents}, we now present the main theoretical outcomes of Algorithm \ref{alg:fairsmoothalg}. Theorems \ref{thm:smoothbanditfairness} and \ref{thm:minimax} establish that our algorithm simultaneously achieves two key objectives: provable fairness guarantees and a nearly optimal regret bound.

\begin{theorem} \label{thm:smoothbanditfairness}Suppose Assumptions \ref{assump:iiddensity}-\ref{assump:Qxliplower} hold. For sufficiently large $T$, the policy defined by Algorithm \ref{alg:fairsmoothalg} satisfies ($1-\delta$)-fairness with $\delta = \tilde{O}(1/T)$. 
\end{theorem}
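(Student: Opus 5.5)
The plan is to work on the good event $\overline{\mathcal{G}}_{Q}\cap\overline{\mathcal{M}}_{Q}$ (or more precisely $\overline{\mathcal{G}}_{Q-1}\cap\overline{\mathcal{M}}_{Q-1}$, since the last epoch's estimates are never used for decisions). By Proposition \ref{thm:tailproboftwoevents}, its complement has probability at most $K(5+2M_\beta^2)Q/T$. Because epoch lengths grow geometrically (like $4^{q(2\beta+d)/(2\beta)}$ up to logs), $Q=O(\log T)$, so this failure probability is $\tilde O(1/T)$. It then suffices to show that on this event the fairness condition \eqref{eq:defoffairness} holds deterministically for every round, context, and pair of arms.

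\textbf{Reduction to arm sets.} In epoch $q$, given $\bx_t=\bx$ and $\cF_{t-1}$, the policy is uniform on $\cK_{q,u(\bx)}$, and $\cK_{q,u(\bx)}$ is $\cF_{t-1}$-measurable. So $p(\pi_t=i\mid\cdot)>p(\pi_t=j\mid\cdot)$ happens exactly when $i\in\cK_{q,u(\bx)}$ and $j\notin\cK_{q,u(\bx)}$. Epoch 1 is uniform over $\cK$ and hence trivially fair. We therefore need the following claim: on the good event, for every $q\ge2$, every cube $j$, and all $\bx\in\mathrm{Cube}_j$, if $i\in\cK_{q,j}$ and $l\notin\cK_{q,j}$, then $f_i^*(\bx)>f_l^*(\bx)$.

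\textbf{Induction on $q$ via chaining.} Arm $l$ was removed at some epoch $q'\le q$ in which both $i,l\in\cK_{q'-1,j}$ and $i$ survived; this follows because the sets are nested and elimination is permanent. At that update, $\hat f_{q'-1,i}(G_j)$ is $2\epsilon_{q'-1}$-chained to the max, and $\hat f_{q'-1,l}(G_j)$ is not. A standard chaining argument gives $\hat f_{q'-1,i}(G_j)-\hat f_{q'-1,l}(G_j)>2\epsilon_{q'-1}$: since the values form a sorted list, the chain component containing the max is a top contiguous block, and $l$ lies strictly below a gap exceeding $2\epsilon_{q'-1}$ separating it from every member of that block.

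\textbf{From estimates to true rewards.} Since $\mathrm{Cube}_j\subseteq S_{q'-1,i}\cap S_{q'-1,l}$ (both arms were active there), event $\mathcal{G}_{q'-1}$(ii) gives $|\hat f_{q'-1,k}(\bx)-f^*_k(\bx)|\le\epsilon_{q'-1}/2$ for $k\in\{i,l\}$ and every $\bx\in\mathrm{Cube}_j$. Here $\hat f$ is constant on the cube, equal to its value at $G_j=g(\bx)$. Hence $f_i^*(\bx)-f_l^*(\bx)>2\epsilon_{q'-1}-\epsilon_{q'-1}=\epsilon_{q'-1}>0$ uniformly over the cube, which proves the claim for all later rounds. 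Note that the error bound in $\mathcal{G}$ is stated for all $\bx\in S_{q,k}$, not just lattice points, so the discretization bias is already absorbed in it.

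\textbf{Main obstacle.} The delicate part is not the chaining logic but the fact that Proposition \ref{thm:tailproboftwoevents} is stated only for $q\le Q-1$, together with pinning down that $Q=O(\log T)$ so the union over epochs stays $\tilde O(1/T)$. The error guarantee itself also rests on Proposition \ref{prop:sqkregular}'s regularity of the random supports; I take that as given.
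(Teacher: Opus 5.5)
Your proposal is correct and follows the paper's proof essentially step for step. You reduce to pairs $i\in\cK_{q,u(\bx)}$, $l\notin\cK_{q,u(\bx)}$, trace $l$'s removal to an earlier $2\epsilon_{q'-1}$-chaining update where $\hat f_{q'-1,i}-\hat f_{q'-1,l}>2\epsilon_{q'-1}$ (your sorted-block justification makes explicit what the paper only asserts), convert this via $\mathcal{G}_{q'-1}$(ii) to $f_i^*>f_l^*$, and union-bound with Proposition \ref{thm:tailproboftwoevents} at $q=Q-1$, using $Q=O(\log T)$ from Lemma \ref{lem:epsilongreatthan1/2delta}. For that reason the ``main obstacle'' you flag is not one: $q=Q-1$ is exactly the index the argument needs.
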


\begin{theorem}\label{thm:minimax} 
Suppose Assumptions \ref{assump:iiddensity}-\ref{assump:Qxliplower} hold. The expected cumulative regret is upper bounded by $R_T=\tilde{O}(T^{\frac{\beta+d-\alpha \beta}{2\beta+d}})$. Moreover, when $\alpha\beta\leq d$, the policy defined by Algorithm \ref{alg:fairsmoothalg} is rate-optimal (up to polylogarithmic factors) in the minimax sense. 
\end{theorem}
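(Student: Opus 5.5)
The plan is to decompose the regret by epochs and work on the good event $\overline{\mathcal{G}}_{Q}\cap\overline{\mathcal{M}}_{Q}$. On its complement we use the trivial per-round bound. Proposition \ref{thm:tailproboftwoevents} gives $\mathbb{P}(\overline{\mathcal{G}}_q^C\cup\overline{\mathcal{M}}_q^C)\le K(5+2M_\beta^2)q/T$, and since $Q=O(\log T)$, the bad event contributes at most $T\cdot\tilde O(1/T)=\tilde O(1)$ to the regret. The first epoch $\mathcal{T}_1$ has length $\tilde O(1)$ by the epoch schedule, so it is negligible too.

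On the good event, fix an epoch $q\ge2$ and a context $\bx$ in hypercube $\mathrm{Cube}_j$. The regret incurred at $\bx$ is at most $\max_k f_k^*(\bx)-\min_{k\in\cK_{q,j}}f_k^*(\bx)$, and we bound this gap in three steps.
\begin{itemize}
\item By part (ii) of $\mathcal{G}_{q-1}$, the estimates $\hat f_{q-1,k}(G_j)$ are within $\epsilon_{q-1}/2$ of $f_k^*$, up to a discretization error between $G_j$ and $\bx$ of order $L_1\delta_A$, which is negligible.
\item The optimal arm is never eliminated. Any surviving arm is $(2\epsilon_{q-1})$-chained to the estimated maximum through at most $K-1$ links.
\item Hence every surviving arm has true gap at most $c K\epsilon_{q-1}$ at $\bx$.
\end{itemize}
Arms with gap $\Delta(\bx)\in(0,cK\epsilon_{q-1}]$ can therefore be pulled. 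The per-epoch regret is then bounded by
\[
|\mathcal{T}_q|\cdot cK\epsilon_{q-1}\cdot\mathbb{P}\bigl(0<\Delta(X)\le cK\epsilon_{q-1}\bigr)\le c'|\mathcal{T}_q|\,\epsilon_{q-1}^{1+\alpha},
\]
using the margin condition (Assumption \ref{assum:margin}) summed over the $O(K^2)$ pairs.

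We then plug in the schedule. Up to the log factors built into the schedule, $|\mathcal{T}_q|\asymp 4^{q\frac{2\beta+d}{2\beta}}$ and $\epsilon_q\asymp2^{-q}$, so $\epsilon_q\asymp|\mathcal{T}_q|^{-\beta/(2\beta+d)}$. Thus epoch $q$ contributes $\tilde O\bigl(|\mathcal{T}_q|^{1-\frac{\beta(1+\alpha)}{2\beta+d}}\bigr)$. Since $\sum_q|\mathcal{T}_q|\le T$ and the terms grow geometrically, the sum is dominated by the last epoch. This gives $\tilde O\bigl(T^{\frac{\beta+d-\alpha\beta}{2\beta+d}}\bigr)$. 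The exponent is positive whenever $\alpha\beta<\beta+d$, which always holds since $\alpha\le1$, so the geometric-sum argument is valid. Two further contributions must be tracked. The final partial epoch $Q$ is handled the same way. The additive $\frac{K^2}{2p^{*2}}\log T$ part of each epoch contributes $O(\log^2T)$ in total.

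For optimality when $\alpha\beta\le d$, I would invoke the known minimax lower bound for smooth contextual bandits under the same Hölder and margin class. This is the Rigollet--Zeevi/Perchet--Rigollet construction, $\Omega\bigl(T^{\frac{\beta+d-\alpha\beta}{2\beta+d}}\bigr)$, used by \cite{hu2022smooth}, and it is valid in that regime. Imposing fairness only shrinks the policy class, so the lower bound still applies. The one check needed is that the hard instances satisfy the regularity Assumptions \ref{assump:c0r0regular} and \ref{assump:Qxliplower}. Bump-function constructions with disjoint supports on a grid should satisfy them, possibly via the sufficient condition of Proposition \ref{prop:explainassump3.8} or the near-indistinguishability clause.

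The main obstacle is the chaining step. Showing that surviving arms have gap $O(K\epsilon_{q-1})$ requires the estimation error bound to hold at every surviving arm on $S_{q-1,k}$. That is exactly where Proposition \ref{prop:sqkregular}, the regularity of the support, enters; without it the error could be uncontrolled in inestimable regions. A second delicate point is bookkeeping the polylog exponents involving $\beta'$ in $|\mathcal{T}_q|$ and $\epsilon_q$, so that they cancel to only polylogarithmic overhead.
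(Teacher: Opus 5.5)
Your proposal is correct and follows the paper's proof essentially step for step. The bad event costs $\tilde O(1)$ by Proposition \ref{thm:tailproboftwoevents}; on the good event, Lemma \ref{lem:RksubsetSqk} and the $(2\epsilon_{q-1})$-chaining bound surviving-arm gaps by $2K\epsilon_{q-1}$, the margin condition gives $\sum_q|\mathcal{T}_q|\epsilon_{q-1}^{1+\alpha}=\tilde O(T^{\frac{\beta+d-\alpha\beta}{2\beta+d}})$, and optimality follows from the lower bound in Proposition \ref{thm:thm3inhusmooth}, whose instance satisfies Assumption \ref{assump:Qxliplower} vacuously through the near-indistinguishability clause. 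The only cosmetic fix is to use the epoch-wise event $\overline{\mathcal{G}}_{q-1}\cap\overline{\mathcal{M}}_{q-1}$, preferably as an indicator, instead of a single global $\overline{\mathcal{G}}_{Q}\cap\overline{\mathcal{M}}_{Q}$, since Proposition \ref{thm:tailproboftwoevents} is stated only for $q\le Q-1$.
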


Together, Theorems \ref{thm:smoothbanditfairness} and \ref{thm:minimax} establish that Algorithm \ref{alg:fairsmoothalg} achieves ($1-\delta$)-fairness with $\delta = \tilde{O}(1/T)$ while preserving the optimal regret rate up to polylogarithmic factors, which resolves the key challenge of ensuring fairness while maintaining regret optimality in smoothed bandits. The regret bound exhibits the dependence on the problem's intrinsic parameters: it decreases as the reward smoothness $\beta$ or the margin exponent $\alpha$ increases, since a higher $\beta$ facilitates more accurate nonparametric estimation and a higher $\alpha$ implies fewer contexts near the decision boundary. Conversely, regret increases with the dimension $d$, reflecting the inherent difficulty of learning in higher dimensional spaces.

\section{Manipulating Meritocracy: Vulnerability to Adversarial Attacks}\label{sec:effectiveattack}

While we have established fairness guarantees under benign conditions, real-world deployment, such as in advertising, recommendations, or lending, exposes algorithms to strategic manipulation. In these high-stakes environments, actors have strong incentives to distort feedback (e.g., posting fake reviews to boost their own products or sabotage competitors). This section moves from the stochastic to the adversarial setting, revealing a critical paradox: fairness mechanisms themselves create a new vulnerability.

\subsection{Adversarial Model and Persistent Unfairness}

We formalize a model of adversarial reward manipulation and the resulting persistent unfairness. Suppose an attacker has a total corruption budget $C$. Then, in contrast to \eqref{eq:yi}, at time $t$, the observed reward is adversarially corrupted, with definition
\begin{equation}\label{eq:yi_corrup}
    \tilde{y}_t = f_k^*(\bx_t) + c_t+\varepsilon_{k,t}, \quad \mbox{if }\pi_t = k,
\end{equation}
and corruption budget
\begin{align}\label{eq_corrup_budget}
    \sum_{t=1}^T|c_t|\leq C.
\end{align}
Throughout this work, we assume that the adversary has access to the following information: 1) the true reward function $f_k^*(\bx)$ for all arms $k\in\cK$; and 2) the context $\bx_t$ as well as the selected arm $\pi_t$ at each time point prior to executing an attack $c_t$. Our adversarial model aligns with the oracle attacker paradigm commonly adopted in robust learning literature \citep{bogunovic2021stochastic, bogunovic2022robust}.  
The primary constraint for an attacker lies in the adversarial budget $C$ required to subvert fair algorithms. This budget provides a direct measure of algorithmic robustness, where greater resilience demands higher perturbation costs. Crucially, we demonstrate that adversaries can employ this budget in two distinct strategies, each with severe consequences.

\begin{definition}[Persistent Unfairness]\label{def:unfairness} 
An algorithm is said to be \emph{persistently unfair} if there exist positive constants $c_1$, $c_2$ and a horizon $N = \tilde{O}(1)$ such that for all rounds $t > N$, there exists $i,j\in\cK$ satisfying with probability at least $c_2$,
\begin{align}\label{eq:defofstrongperunfair}
f^*_i(\bx_t) \geq f^*_j(\bx_t) \quad \text{and} \quad \PP(\pi_t = i \mid \cF_{t-1}^+) < \PP(\pi_t = j \mid \cF_{t-1}^+)-c_1.
\end{align}
\end{definition}

Because strict fairness constraints are vulnerable to compromise, we focus on severe and actual violations, not minor deviations. This definition characterizes a fundamental and persistent form of algorithmic unfairness, where the policy systematically favors inferior actions on observed $\bx_t$. The requirement of explicit preference $c_1$ gaps ensures the unfairness is significant, while the persistence condition (holding for all $t>N$ with probability $c_2$) indicates an inherent structural bias rather than transient behavior. In the subsequent analysis, we let $\cU_t$ denote the event that unfairness occurs at round $t$, i.e., condition \eqref{eq:defofstrongperunfair} is satisfied for some $c_1$ and $i,j\in\cK$.

In what follows, we formally characterize this vulnerability by demonstrating how strategic perturbations to reward signals can systematically undermine algorithmic fairness, even under strictly bounded corruption budgets. Unlike prior work focused solely on regret maximization, we reveal a more nuanced threat landscape: adversaries can strategically choose to either undermine fairness covertly or induce a full systemic collapse.

\subsection{Covert Fairness-Only Attacks}

We identify a unique vulnerability in fair contextual bandit algorithms: persistent fairness violations undetectable by standard regret metrics. In a typical competitive scenario with two products of near-identical quality, an attacker can subtly suppress ratings for the marginally superior item while keeping cumulative regret largely unchanged. Such regret-neutral attacks create a dangerous incentive for strategic interference, where adversaries can effectively sabotage competitors' exposure at a negligible cost, a risk overlooked by prior literature focused solely on regret-maximizing disruptions \citep{jun2018adversarial,zuo2024near}.

This vulnerability is particularly salient in competitive environments where arms have partially overlapping optimal regions, a structure naturally accommodated by smooth reward functions. In contrast, linear reward models typically preclude such nontrivial overlaps. We therefore focus our analysis on attacks against Algorithm \ref{alg:fairsmoothalg} (which handles both linear and smooth settings), demonstrating how an adversary can exploit region overlap to induce persistent unfairness with minimal corruption budget. Specifically, consider arms $i$ and $j$, where each is optimal in $\cR_i$ and $\cR_j$ respectively, and the intersection $\cR_i\cap \cR_j$ has a non-negligible probability measure. Theorem \ref{thm:covert} demonstrates that in such competitive landscapes, an adversary with minimal budget can exploit this structural ambiguity to systematically undermine fairness while leaving the algorithm's regret performance unaffected.

\begin{theorem}\label{thm:covert}
Consider a two-armed contextual bandit instance where the optimal arm regions have non-negligible overlap, i.e., $\operatorname{Leb}(\cR_i\cap \cR_j) > c$ for some constant $c > 0$. Then when $T$ is large enough, an adversary with corruption budget $C = \tilde{O}(1)$ can make Algorithm \ref{alg:fairsmoothalg} persistently unfair under Assumptions \ref{assump:iiddensity}-\ref{assump:Qxliplower}, and with probability at least $1-\frac{K\left(5+2 M_\beta^2\right)+1 }{T}$, the total occurrence of unfairness $\sum_{i=1}^T\mathbb{I}(\cU_t)=\Omega(T)$. This is achieved while preserving the original regret bounds of the algorithm. 
\end{theorem}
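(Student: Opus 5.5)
We give an explicit attack and then check the three claims in turn: persistent unfairness, linear unfairness count, and unchanged regret. Fix the two arms $i,j$ with $\operatorname{Leb}(\cR_i\cap\cR_j)>c$. On this overlap $f_i^*=f_j^*$, so Definition \ref{def:fairness} forces the algorithm to pull both arms with equal probability there. Indeed, Algorithm \ref{alg:fairsmoothalg} keeps both arms active in every hypercube $\mathrm{Cube}_l$ contained in the overlap: on $\overline{\mathcal{G}}_{q-1}$ the two estimates differ by at most $\epsilon_{q-1}\le 2\epsilon_{q-1}$, so they remain chained.

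\textbf{Attack construction.} The adversary spends its budget only during the first epoch $\cT_1$, or the first few epochs, whose length is $\tilde O(1)$ up to the polylogarithmic factors in $|\cT_q|$ for $q=O(1)$. Whenever $\bx_t$ lies in the overlap and $\pi_t=j$, it sets $c_t=-\Delta$, with $\Delta$ a constant larger than $2\epsilon_1+\epsilon_1$. The total cost is $\Delta\,|\cT_1|=\tilde O(1)$. After this epoch the adversary stops attacking, so there is no further cost.

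\textbf{Key steps.} First, we show that with high probability the local polynomial estimate $\hat f_{1,j}$ is shifted by about $-\Delta$ on the overlap cubes whose bandwidth ball lies inside the overlap. This holds because local polynomial regression reproduces constants, so a constant additive corruption passes through exactly. We control the noise part with the concentration event used in Proposition \ref{thm:tailproboftwoevents}, applied to the uncorrupted residual, which costs an extra $1/T$ in the failure probability. The gap $\hat f_{1,i}-\hat f_{1,j}\ge \Delta-\epsilon_1>2\epsilon_1$ then breaks the chain, so arm $j$ is removed from $\cK_{2,l}$ for those cubes. Since active sets are nested, $\cK_{q,l}\subseteq\cK_{q-1,l}$, arm $j$ never returns. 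Hence for all $t>N=\tilde O(1)$, $\PP(\pi_t=i\mid\cF^+_{t-1})=1$ and $\PP(\pi_t=j\mid\cF^+_{t-1})=0$ whenever $\bx_t$ falls in these cubes, even though $f_j^*(\bx_t)\ge f_i^*(\bx_t)$. This happens with probability at least $c_2\asymp p_{\min}c/2$ by Assumption \ref{assump:iiddensity}, which is exactly \eqref{eq:defofstrongperunfair} with $c_1=1/2$. A Bernoulli/Hoeffding count over the remaining $T-N$ rounds then gives $\sum_t\mathbb{I}(\cU_t)=\Omega(T)$ with failure probability $1/T$. Taking a union bound with \eqref{eq:thirdeq} yields the stated probability $1-\frac{K(5+2M_\beta^2)+1}{T}$.

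\textbf{Regret preservation.} Removing $j$ on the overlap costs nothing, since $f_i^*=f_j^*$ there. Outside the overlap no corruption occurs after epoch 1, so the epoch-$q\ge2$ estimates use clean data and the analysis behind Theorem \ref{thm:minimax} goes through, plus an $\tilde O(1)$ term from epoch 1.

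\textbf{Main obstacle.} The delicate part is cubes near the boundary of the overlap. For these, the bandwidth ball, or arm $j$'s support, mixes corrupted and uncorrupted points. One might worry that wrongly eliminating $j$ where $f_j^*>f_i^*$ increases regret, or breaks the regularity needed in Proposition \ref{prop:sqkregular} for later epochs. I would avoid this by attacking only on an interior subset of the overlap at distance greater than $H_{1,j}$ from its boundary, which still has measure $\gtrsim c$ for large $T$. That way every affected estimate is exactly shifted and every unaffected estimate is clean. The residual effect on the support of arm $j$ is then confined to regions where $j$ is tied with $i$, so $i$ remains optimal there, consistent with how Proposition \ref{prop:sqkregular} treats optimal regions.
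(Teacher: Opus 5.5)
Your unfairness argument is essentially the paper's. You spend a polylogarithmic budget during $\mathcal{T}_1$ depressing one arm's observed rewards on a sub-region of the tie set $\mathcal{R}_i\cap\mathcal{R}_j$, use the epoch-1 good event to break the $2\epsilon_1$-chain there, invoke nestedness of the active sets, and finish with a Hoeffding count plus a union bound. The only difference is the epoch-1 mechanism. The paper replaces $f_i^*$ by a smooth perturbation $\tilde f_i^*$ of the same H\"older class that differs from $f_i^*$ only inside $D$. The corrupted epoch-1 data are then an honest sample from a valid instance, and Lemma~\ref{lem:A.6} gives $\epsilon_1/2$-accuracy uniformly, including near the edge of the attacked set. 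Your constant shift with exact reproduction of constants is more elementary, but it only controls cubes whose bandwidth ball lies inside $A$. Cubes whose ball straddles $\partial A$ are neither ``exactly shifted'' nor ``clean'': the equivalent kernel of a degree-$\ge 2$ fit takes negative values, so the shift need not even lie in $[-\Delta,0]$. This is harmless only because your margin keeps those cubes inside the tie set.

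The genuine gap is the regret-preservation step. You argue that epoch-$q\ge 2$ data are clean, so ``the analysis behind Theorem~\ref{thm:minimax} goes through.'' That analysis needs $\overline{\mathcal{G}}_{q-1}\cap\overline{\mathcal{M}}_{q-1}$ with high probability. Proposition~\ref{thm:tailproboftwoevents} obtains this from $\mathcal{R}_k\subseteq S_{q,k}$ (Lemma~\ref{lem:RksubsetSqk}), which gives $\mathbb{E}N_{q,k}\ge p^*|\mathcal{T}_q|/K$ for $\mathcal{M}_q$ and also drives Proposition~\ref{prop:sqkregular}. Your attack removes arm $j$ from most of $\mathcal{R}_j$ by design.

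Suppose $A$ is essentially the whole interior of the overlap, as you propose, and the instance has $\mathcal{R}_j$ (nearly) equal to the overlap. Then arm $j$ keeps only a thin shell of support. Since $|\mathcal{T}_q|$ is only about $\frac{2K}{p^*}$ times the $\mathcal{M}_q$ threshold, $\mathcal{M}_q$ fails. The $\epsilon_q/2$ guarantee for $\hat f_{q,j}$ is then lost, including outside the overlap, where $j$ is still active with gaps up to $2K\epsilon_1$. A wrongful elimination of $i$ there can cost far more than $\tilde O(T^{\frac{\beta+d-\alpha\beta}{2\beta+d}})$. Your remark that the residual effect is ``confined to regions where $j$ is tied with $i$'' restores neither the sample-size bound nor the regularity.

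The paper avoids this by attacking only $D\subset E\subset\mathcal{R}_i\cap\mathcal{R}_j$, with $E$ strictly containing $D$ and $\mathbb{P}\bigl(X\in(\mathcal{R}_i\cap\mathcal{R}_j)\setminus E\bigr)>\kappa'$. On that untouched tied region both arms stay active, so both keep collecting order-$|\mathcal{T}_q|$ samples and all bandwidths shrink. By locality of the local polynomial estimator, estimates and active sets at points outside $E$ then coincide, by induction over epochs, with those of the uncorrupted run. Inside $E$ every choice has zero regret, which gives $R_T^C\le R_T+\tilde O(1)$. You need this reservation of an unattacked portion of the tie set, or an equivalent device.
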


 The attacker achieves this by concentrating its corruption budget on the overlapping region $\cR_i \cap \cR_j$, where the arms' expected rewards are the same. Through strategic reward manipulation during early learning phases, the adversary induces persistent evaluation errors. This creates a self-reinforcing bias that favors the inferior arm, violating merit-based fairness principles while leaving cumulative regret virtually unchanged due to the zero performance gap in the targeted region. 
 
 This result highlights a critical vulnerability of fairness since  regret-based safeguards are insufficient to detect or prevent attacks targeting fairness. Such covert attacks create a hidden market distortion, eroding trust in the platform's integrity while leaving almost no trace. Our findings thus underscore the necessity of robustness guarantees that account for the integrity of fairness properties under strategic manipulation.

\subsection{Catastrophic Dual-Failure Attacks}

Covert fairness-only attacks are typically launched by insiders (e.g., competing sellers) seeking a hidden advantage. A more severe threat, however, comes from outside adversaries whose goal is not local gain but systemic collapse: simultaneously destroying both fairness and learning efficacy. Given a corruption budget, such an attacker can allocate resources across both objectives: poisoning rewards to induce unfair exposure while also forcing linear cumulative regret. Specifically, we have the following theorem.

\begin{theorem}\label{thm:constantbudgetunfairness}
When $T$ is sufficiently large, an adversary with budget $C=\tilde{O}(1)$ can achieve:
\begin{itemize}
    \item Under Assumptions \ref{assum_parabound}--\ref{assum_excite}, Algorithm \ref{alg:fairols} is persistently unfair, and the cumulative regret yields $R_T=\Omega(T)$. With probability at least $1-\frac{4K +1}{T^4}$, the total occurrence of unfairness satisfies $\sum_{i=1}^T\mathbb{I}(\cU_t)=\Omega(T)$.     
    \item Under Assumptions \ref{assump:iiddensity}--\ref{assump:Qxliplower}, Algorithm \ref{alg:fairsmoothalg} is persistently unfair, and the cumulative regret yields $R_T=\Omega(T)$ when $\max_{k \in \mathcal{K}} \max_{\bx \in \cX} \left( f^*_k(\bx) - \max_{j \neq k} f^*_j(\bx) \right) > c^{\prime\prime}$ for some positive constant $c^{\prime\prime}$. With probability at least $1-\frac{K\left(5+2 M_\beta^2\right)+1 }{T}$, the total occurrence of unfairness satisfies $\sum_{i=1}^T\mathbb{I}(\cU_t)=\Omega(T)$.
\end{itemize}
\end{theorem}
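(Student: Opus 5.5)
Both parts follow the same template. The attacker spends its $\tilde{O}(1)$ budget only during the initial phase that fixes a persistent estimator: the $O(\log T)$ exploration $\cT_0$ for Algorithm \ref{alg:fairols}, and epoch $\cT_1$ (length polylogarithmic in $T$ times constants) for Algorithm \ref{alg:fairsmoothalg}. The goal is to make the resulting estimate misorder two arms by a constant margin on a constant-probability region. The algorithm's own elimination or pre-screening rule then locks in the wrong ordering for all remaining rounds, with no further cost to the attacker. The two conclusions follow from this lock-in: persistent unfairness with $\sum_t\mathbb{I}(\cU_t)=\Omega(T)$, and, because the eliminated arm is strictly optimal on a set of constant measure, $R_T=\Omega(T)$.

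\textbf{Linear case.} Fix an arm $k$ with $\PP(\bz_t\in Q_k)\ge\tilde p$, i.e.\ $k$ beats every other arm by at least $h$ there. The exploration phase gives $k$ about $C_a\log T/K$ samples. On each of them the adversary sets $c_t=-M$ with $M=b\,(1+r\sqrt{d})+2h$, which shifts $\hat\beta_{k,0}$ by $-M e_1$ up to the noise error. Hence $\bz^\mathrm{T}\hat\beta_{k,0}\le \max_{l\ne k}\bz^\mathrm{T}\hat\beta_{l,0}-h$ for every $\bz$, and the total cost is $C=O(\log T)=\tilde O(1)$. Next I would invoke the concentration used in Theorem \ref{thm:linearfairness} (the event of probability $1-O(T^{-4})$ on which the uncorrupted OLS errors are below $h/4$ uniformly over $\cX$). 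On that event, arm $k$ is never $h/2$-chained to the top, so $k\notin\hat\cK_{\bx_t}$ for all $t>|\cT_0|$. Two consequences follow. First, $k$ never receives another sample, so no later data can undo the corruption. Second, on $Q_k$ the selected arm $j$ satisfies $f_k^*\ge f_j^*+h$ while $\PP(\pi_t=k\mid\cF^+_{t-1})=0<\PP(\pi_t=j\mid\cF^+_{t-1})$. If the candidate set has size at most $K$, this is a gap of at least $1/K=:c_1$ with probability at least $\tilde p=:c_2$. The per-round regret on $Q_k$ is at least $h$, so $R_T\ge h\tilde p(T-C_a\log T)=\Omega(T)$. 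Finally, Hoeffding's inequality applied to the i.i.d.\ indicators $\mathbb{I}(\bz_t\in Q_k)$ gives $\sum_t\mathbb{I}(\cU_t)=\Omega(T)$ with probability $1-T^{-4}$. A union bound over the $K$ arms' concentration events gives the stated $1-\frac{4K+1}{T^4}$.

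\textbf{Smooth case.} Let $k$ be the arm attaining the assumed margin, so $f_k^*(\bx_0)-\max_{j\ne k}f_j^*(\bx_0)>c''$. By $L_1$-Lipschitzness this gap exceeds $c''/2$ on a ball $B$ of constant radius, which has constant probability by Assumption \ref{assump:iiddensity}. During $\cT_1$ the adversary subtracts a constant $M>1+c''$ from every sample of arm $k$ whose context lies in a neighborhood of $B$ of size $\gg H_{1,k}$. There are $\tilde O(1)$ such samples, so the budget is $\tilde O(1)$. Local polynomial estimators reproduce constants, so $\hat f_{1,k}$ is shifted by $-M$ on the grid points of $B$. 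The uncorrupted part stays within $\epsilon_1/2$ of the truth on the good event of Proposition \ref{thm:tailproboftwoevents}. The gap then exceeds $2\epsilon_1$, and arm $k$ is dropped from $\cK_{2,j}$ for every cube in $B$. Since active sets only shrink, it is excluded forever. The unfairness and $\Omega(T)$ regret then follow exactly as in the linear case, with probability $1-\frac{K(5+2M_\beta^2)+1}{T}$.

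\textbf{Main obstacle.} The delicate step is the lock-in argument in the smooth setting. The corrupted estimate sits in epoch 1, while Propositions \ref{prop:sqkregular}--\ref{thm:tailproboftwoevents} were proved for clean data. I would therefore have to verify three things. First, the other arms' support and estimation-error events still hold, because they are untouched by the corruption. Second, the unfair comparison requires only that the other arms remain active on $B$. Third, the chaining rule can never re-add $k$. A further worry is that epoch 1 might be too long for a $\tilde O(1)$ budget. Its length is polylogarithmic in $T$, so the budget is still $\tilde O(1)$, but this needs to be checked explicitly.
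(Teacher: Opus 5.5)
Your template matches the paper's. During the reward-independent initial phase ($\cT_0$, resp.\ $\cT_1$) you shift the chosen arm's rewards by a large constant. Because OLS with an intercept and local polynomials reproduce constants, the corrupted estimate is exactly the clean estimate of a shifted function, so the clean concentration event still applies. The prescreen/elimination then locks the arm out, and Hoeffding gives the count.

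One step of your linear argument fails as written, though. The bound $\bz^\mathrm{T}\hat\beta_{k,0}\le\max_{l\ne k}\bz^\mathrm{T}\hat\beta_{l,0}-h$ does not prevent $k$ from being $h/2$-chained to the top: for $K\ge 3$, an intermediate arm within $h/2$ of both can link them. What you need is isolation, $\bz^\mathrm{T}\hat\beta_{k,0}<\min_{l\ne k}\bz^\mathrm{T}\hat\beta_{l,0}-h/2$ for every $\bz$, which is what the paper proves. This requires the shift to exceed the whole range of $f_k^*-f_l^*$ plus $h$. Your $M=b(1+r\sqrt d)+2h$ does not ensure that, because $|f_k^*(\bx)-f_l^*(\bx)|$ can approach $2b\sqrt{1+(d-1)r^2}$, which exceeds $b(1+r\sqrt d)$ (e.g.\ $d=2$, $r=1$). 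The paper's $M=2\max_k\|f_k^*\|_\infty+1$ (with $h<1$) repairs both points at no cost to the $O(\log T)$ budget. Note also that the surviving candidate set has at most $K-1$ arms, each drawn with probability at least $1/(K-1)$. That is what yields the strict inequality in Definition \ref{def:unfairness} with $c_1=1/K$.

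In the smooth case your route differs from the paper's. You corrupt only arm-$k$ samples near a ball $B$ on which $k$ wins by $c''/2$. The paper instead shifts every epoch-1 sample of the chosen arm, over all of $\cX$, by $-(2\max_k\|f_k^*\|_\infty+1)$. The global shift makes $\hat f_{1,i}$ the clean estimate of a shifted H\"older function at every grid point, so $i$ drops out of every cube and no bandwidth bookkeeping is needed.

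Your localized attack also works. A constant-radius enlargement of $B$ contains every window $\mathcal{B}(G_j,H_{1,k})$ around grid points of cubes meeting $B$, since $H_{1,k}\to0$ on $\mathcal{M}_1$. And $M>1+c''$ isolates $k$ there, because rewards lie in $[0,1]$ and $3\epsilon_1<c''$ for large $T$. This shows the attack can be confined to a small region, but the budget is $\tilde O(1)$ either way.

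Your ``main obstacle'' is largely moot. Epoch-1 sampling ignores rewards, so the clean estimates satisfy $\mathcal{G}_1$ with the probability given by Proposition \ref{thm:tailproboftwoevents}. Active sets are nested, so $k$ is never re-added. The empirical maximizer is always kept, so some arm other than $k$ stays active on $B$ and is drawn with probability at least $1/(K-1)$. Nothing about regularity or estimation in later epochs is needed, which is why the paper invokes only $\mathcal{G}_1$.
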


The positive parameter $c^{\prime\prime}$ ensures at least one arm holds a clear advantage somewhere, which rules out technically degenerate cases where avoiding the optimal arm costs almost nothing. Theorem \ref{thm:constantbudgetunfairness}\footnote{Theorems \ref{thm:covert} and \ref{thm:constantbudgetunfairness} are also valid for Algorithm 1 in \cite{hu2022smooth} when $K=2$.}establishes that an adversary can, with only a polylogarithmic budget $C = \tilde{O}(1)$, induce both persistent unfairness and linear regret simultaneously. This result underscores that the vulnerability enabling fairness manipulation can also be directly exploited to catastrophically degrade learning performance. Consequently, ensuring algorithmic robustness necessitates a comprehensive defense mechanism on both fairness and regret.

In sum, our findings reveal a two-fold insight: silent attack exposes a critical blind spot in robustness analysis, while catastrophic collapse demonstrates that fairness and regret must be defended as one. This demands a shift from isolated protection to dual resilience. In the next section, we introduce our robust algorithm designed to meet this standard.

\section{Robust Fairness for Contextual Bandits}\label{sec:fairrobustalg}

In this section, we propose the first robust fair bandit algorithms that can preserve $(1-\tilde{O}(1/T))$-fairness guarantees in adversarial settings. Furthermore, we establish the first regret upper bounds for algorithms maintaining fairness under corruption and provide matching lower bounds, demonstrating that our approach achieves optimal performance while upholding fairness.

\subsection{From Base to Robust Algorithms}

We adopt the corruption model specified in Section \ref{sec:effectiveattack}. In addition, we assume the corruption budget of an adversary used to arbitrarily perturb rewards satisfies $C \ll T$; otherwise, analyzing cumulative regret would be trivial. To highlight the critical role of the adversarial budget, following \citet{gupta2019better,bogunovic2021stochastic,he2022nearly}, we assume the corruption level $C$ is revealed to the learner. We remark that this assumption can be relaxed. In practice, if $C$ is unknown, standard robust adaptive techniques, such as using a carefully designed, time-dependent parameter in place of $C$, can be applied to maintain robustness without this prior knowledge. A detailed treatment of this adaptive extension is deferred to future work to maintain the focus on our core theoretical framework.

Adversarial corruption disrupts the foundational trust and signal reliability that fair algorithms rely upon. Standard robust learning methods typically address corruption by reducing the weight of highly uncertain data points. While effective in non‑fair contexts, such down‑weighting is incompatible with uniform fairness, because it introduces systematic exposure bias that is not justified by true reward gaps. Hence, rather than discarding uncertain signals, our framework explicitly compensates for adversarial noise and tightly control the resulting error so that decisions remain justified by true reward differences. We present the robust algorithms as in Algorithm \ref{alg:robustfairols} (for linear contextual bandit problem) and Algorithm \ref{alg:robustfairsmoothalg} (for smooth contextual bandit problem). Correspondingly, all parameters depending on epoch length and error thresholds are updated in the analysis with appropriate notation.

The robust fair OLS bandit algorithm implements two key modifications: (1) exploration extension through an additive $O(C)$ term to dilute corruption effects; and (2) threshold inflation by adding $O(C/t)$ terms to square root of confidence bounds to compensate bias. Specifically, Algorithm \ref{alg:robustfairols} incorporates adjustments to Steps 2 (exploration scaling) and 13 (error threshold design) of Algorithm~\ref{alg:fairols}, while preserving all other components.

\begin{algorithm}[!h]
\caption{Robust Fair OLS Bandit}\label{alg:robustfairols}
\label{alg:robust-linear}
\begin{algorithmic}[1]
\REQUIRE Parameters $C_a, C_b, h, T$, corruption budget $C$.
\STATE $\gamma_{\text{lin}} \gets \dfrac{64K^2dr^2}{h\lambda^*\tilde{p}}$ . \COMMENT{\textbf{Corruption scaling factor}}
\STATE $|\mathcal{T}_0^{RL}| \gets \lceil C_a\log T + \gamma_{\text{lin}} \cdot C\rceil$. \COMMENT{\textbf{Extended exploration}}
\FOR{$t \in\mathcal{T}_0^{RL}$}
    \STATE Random exploration (as in Steps 4-5 of Algorithm \ref{alg:fairols}).
\ENDFOR
\STATE Compute initial estimation $\hat{\beta}_{k,0}^{RL} \gets \hat{\beta}^{RL}(\mathcal{I}_{k,0})$ as in \eqref{eq:robustbetahat}.
\FOR{$t \in \{ |\mathcal{T}_0^{RL}| + 1,...,T\}$}
    \STATE Observe context $\boldsymbol{x}_t$ and compute $\hat{\mathcal{K}}^{RL}_{\boldsymbol{x}_t}$ (using $h/2$-chaining similar to Step 9 of Algorithm \ref{alg:fairols}).
    \IF{$\hat{\cK}^{RL}_{\bx_t}=\{k\}$}
    \STATE pull arm $\pi_t=k$.
    \ELSE
        \STATE Let $\kappa \gets \dfrac{192K^2dr^2}{\lambda^*\tilde{p}}\vee \dfrac{32dr^2}{\lambda^*\tilde{p}^2}$. \COMMENT{\textbf{Corruption scaling factor}}
        \STATE Let $\epsilon_t^{RL} \gets C_b \sqrt{\dfrac{\log T}{t} + \kappa \dfrac{C}{t}}$. \COMMENT{\textbf{Inflated threshold}}
        \STATE Compute the all-sample estimation $\hat{\beta}^{RL}_{k,t-1}=\hat{\beta}^{RL}(\mathcal{I}_{k,t-1})$ as in \eqref{eq:betahat}.
        \STATE Compute $\mathcal{K}_c^{RL}(\boldsymbol{x}_t)$ via $\epsilon_t^{RL}$-chaining (similar to Step 14 of Algorithm \ref{alg:fairols}).
        \STATE Randomly pull arm $\pi_t \in \cK_c^{RL}(\bx_t)$ with equal probability, and receive reward $y_t$.
    \ENDIF
    \STATE Update index sets $\mathcal{I}_{\pi_t,t} \gets \mathcal{I}_{\pi_t,t-1} \cup \{t\}$; $\mathcal{I}_{k,t} \leftarrow\mathcal{I}_{k,t-1}$ for $k\in\mathcal{K}\setminus\{\pi_t\}$.
\ENDFOR
\end{algorithmic}
\end{algorithm}

The robust fair smooth bandit algorithm directly addresses the interplay between corruption $C$ and smoothness $\beta$. We also implement two key modifications: (1) epoch adaptation by adjusting epoch lengths based on $C$ and $\beta$; (2) error compensation by incorporating corruption-dependent terms in error thresholds. Algorithm \ref{alg:robustfairsmoothalg} modifies Step 4 (epoch length) and Step 9 (error threshold design) in Algorithm~\ref{alg:fairsmoothalg}, leaving the remaining architecture intact.

\begin{algorithm}[!h]
\caption{Robust Fair Smooth Bandit}
\label{alg:robustfairsmoothalg}
\begin{algorithmic}[1]
\REQUIRE Grid lattice $G$, hypercubes $\mathcal{C}$, H\"older smoothness $\beta$, regularity constant $c_0$, context dimension $d$, corruption budget $C$. 
\FOR{$q = 1$ \TO $Q$}
    \STATE $\gamma_q \gets \left( C^{\frac{2\beta'}{2\beta'-1}} \vee 4^q \right)$.\COMMENT{\textbf{Corruption scaling factor}}
    \STATE $|\mathcal{T}_q^{RS}| \gets \left\lceil\frac{2K}{p^*}\left(\frac{\gamma_q\log \left(T \delta_A^{-d}\right)}{C_{K}}\right)^{\frac{2 \beta+d}{2 \beta}}\left(\log T\right)^{\frac{2\beta+d}{\beta^{\prime}-1}-\frac{2\beta+d}{2\beta}}+\frac{K^2}{2 p^{*2}} \log T\right\rceil $. \footnotemark\COMMENT{\textbf{Adaptive epoch length}}
    \IF{$q = 1$}
        \STATE Random exploration (as in line 6-7 of Algorithm \ref{alg:fairsmoothalg}).
    \ELSE
    \STATE Set error tolerance as $\epsilon_q^{RS}=(2^{-q}\wedge C^{-\frac{\beta^{\prime}}{2\beta^{\prime}-1}})(\log T)^{\frac{\beta^{\prime}-1-2\beta}{2\beta^{\prime}-2}}\vee T^{-\frac{\beta}{2\beta+d}}+\frac{2\sqrt{M_\beta}}{\lambda_0}C(\frac{p^*}{4K}|\cT_q^{RS}|)^{-\frac{2\beta}{2\beta+d}}$. \COMMENT{\textbf{Inflated threshold}}
    \STATE Set $N_{q-1,k}^{RS}=\left|\mathcal{T}_{q-1,k}^{RS}\right|, H_{q-1,k}^{RS}=(N_{q-1,k}^{RS})^{-1 /(2 \beta+d)}$ for $k \in \mathcal{K}$.
\STATE Compute index of exploration hypercubes $\mathcal{I}_q=\{j:j\in \{1, \ldots,|\mathcal{C}|\},|\cK^{RS}_{q-1,j}|>1\}$.
\STATE For each $j\in \mathcal{I}_q$ and $k\in \cK^{RS}_{q-1,j}$, construct the local polynomial estimate for $\bx\in \text{Cube}_j$
as in \eqref{eq:fhatRSqofx}:
\begin{align*}
    \hat{f}_{q-1,k}^{RS}(\bx)=\hat{f}^{\mathrm{LP}}\left(g(\bx) ; \mathcal{T}^{RS}_{q-1,k}, H^{RS}_{q-1,k}, \mathfrak{b}(\beta)\right).
\end{align*}
\vspace{-7mm}
\STATE Update active arm sets $\mathcal{K}_{q,j}^{RS}$ via $2\epsilon_{q-1}^{RS}$-chaining (similar to Step 13 of Algorithm \ref{alg:fairsmoothalg}).

\FOR{$t\in\mathcal{T}_q^{RS}$}
\STATE Observe context $\bx_t$, pull arm $\pi_t \in \cK^{RS}_{q,u(\bx_t)}$ with equal probability, and receive reward $y_t$. 
\ENDFOR
\ENDIF
\STATE Log the samples $\mathcal{T}_{q,k}^{RS}=\left\{\left(\bx_t, y_t\right): t \in \mathcal{T}_q^{RS}, \mathcal{\pi}_t=k\right\}$ for $k \in \mathcal{K}$.
\ENDFOR
\end{algorithmic}
\end{algorithm}

\footnotetext{Since the corruption budget directly influences the updated epoch length, there exists a regime in which the budget is sufficiently large to encourage $|\cT_1^{RS}|\geq T$. In such case, the algorithm resorts to purely random exploration over the entire horizon $T$, thereby incurring linear regret $O(T)$. In particular, this can happen when $C=\Omega(T^{\frac{2\beta}{2\beta+d}-\frac{\beta}{(2\beta+d)\beta^{\prime}}})$.}

\subsection{Fairness Guarantee and Regret Analysis}

In this section, we provide the first complete analysis for fair contextual bandits under adversary corruptions, including the fairness guarantee, and upper and lower bounds for fair contextual bandit algorithms. First, we build the foundation by examining how adversarial noise affects our estimators. We prove that despite adversarial reward manipulation, both least‑squares and local polynomial estimators still satisfy uniform error bounds when protected by our robust designs. Next, we provide formal fairness guarantees and regret upper bounds, confirming that our defensive adjustments is effective. Finally, we establish minimax optimality by deriving matching cumulative regret lower bounds, demonstrating that our algorithms achieve the best possible rates.

\subsubsection{Linear Reward Function} 

We begin with the linear contextual bandit setting, where the reward model follows \eqref{eq:linearreward}. Under adversarial corruption, the observed reward at time $t$ becomes
\begin{align}\label{eq_linear_corrupt}
    \tilde{y}_t=\bz_t^\mathrm{T}\beta_k + c_t +\varepsilon_{k,t}, \quad \mbox{if }\pi_t = k.
\end{align}
The corresponding robust OLS estimator, trained on a corrupted sample set $\mathcal{J}$, is given by
\begin{align}\label{eq:robustbetahat}
\hat{\beta}^{RL}(\mathcal{J}) = \argmin_{\beta\in\RR^d} \frac{1}{|\cJ|} \|\tilde{Y} - Z\beta\|_2^2 = (Z^\mathrm{T} Z)^{-1} Z^\mathrm{T} \tilde{Y},
\end{align}
where $\tilde{Y} = \{\tilde{y}_s : s \in \mathcal{J}\} \in \mathbb{R}^{|\mathcal{J}|}$ is the vector of corrupted rewards. The following two propositions provide uniform error bounds for the robust OLS estimator under adversarial corruption.

\begin{proposition}\label{prop:robustrandomtailbound} 
Assume that the conditions in Theorem \ref{thm:robustlinearfairness} are satisfied. Then the following tail inequality holds:
\begin{align*}
    \PP\left(\max_{\bx\in \cX}\max_{k\in\mathcal{K}}|(\hat{\beta}^{RL}_k(\mathcal{I}_{k,0})-\beta_k)^{\mathrm{T}}\bz|\geq \frac{h}{4}\right) \leq 7KT^{-4}.
\end{align*}
\end{proposition}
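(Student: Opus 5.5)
We bound the error of the initial-phase estimator by splitting the OLS error into a stochastic part and a corruption-bias part. For each arm $k$, write $Z_k$ for the design matrix of the samples in $\mathcal{I}_{k,0}$, $\boldsymbol{\varepsilon}_k$ for the noise vector, and $\mathbf{c}_k$ for the corruption vector. By \eqref{eq:robustbetahat},
\begin{align*}
\hat{\beta}^{RL}_k(\mathcal{I}_{k,0})-\beta_k=(Z_k^\mathrm{T}Z_k)^{-1}Z_k^\mathrm{T}\boldsymbol{\varepsilon}_k+(Z_k^\mathrm{T}Z_k)^{-1}Z_k^\mathrm{T}\mathbf{c}_k .
\end{align*}
Since $\|\bz\|_2\leq \sqrt{1+(d-1)r^2}\leq \sqrt{d}\,r$ for all $\bx\in\cX$, Cauchy--Schwarz reduces the claim to showing that each of the two vector terms has $\ell_2$-norm at most $h/(8\sqrt{d}r)$ with the stated probability.

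\textbf{Step 1 (enough samples and a well-conditioned design).} During $\mathcal{T}^{RL}_0$, arms are pulled uniformly at random, independently of the contexts and of the adversary's actions. Hence the samples $\{(\bz_t,\varepsilon_{k,t})\}_{t\in\mathcal{I}_{k,0}}$ are i.i.d., even though the corruptions $c_t$ may be chosen adaptively. Chernoff bounds give $|\mathcal{I}_{k,0}|\geq |\mathcal{T}^{RL}_0|/(2K)$ with probability at least $1-T^{-4}$ per arm. Among these samples, the number with $\bz_t\in Q_k$ is, again by Chernoff, at least $\tilde{p}|\mathcal{I}_{k,0}|/2$.

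On $Q_k$, Assumption \ref{assum_excite} gives a minimum eigenvalue of at least $\lambda^*$. A matrix Chernoff bound for bounded PSD summands (with $\|\bz\bz^\mathrm{T}\|\leq dr^2$), as in \cite{goldenshluger2013linear,bastani2020online}, then yields
\begin{align*}
\lambda_{\min}(Z_k^\mathrm{T}Z_k)\geq \frac{\lambda^*\tilde{p}}{4}\,|\mathcal{I}_{k,0}|
\end{align*}
with failure probability at most $d\exp(-c\,|\mathcal{I}_{k,0}|\lambda^*\tilde{p}/(dr^2))\leq T^{-4}$. This last inequality is where the requirement $C_a>\frac{20K^2}{\tilde{p}D_2}\vee\frac{8K^2}{\tilde{p}^2}$ from Theorem \ref{thm:linearfairness} enters.

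\textbf{Step 2 (noise term).} This follows the clean-case argument. The noise is sub-Gaussian and independent of the design, so conditional on $Z_k$ we have $\|(Z_k^\mathrm{T}Z_k)^{-1}Z_k^\mathrm{T}\boldsymbol{\varepsilon}_k\|_2\lesssim \sigma\sqrt{d\log T/(\lambda_{\min}(Z_k^\mathrm{T}Z_k))}$. One can get this either from a vector sub-Gaussian tail or coordinatewise with a union bound over $d$ directions. Plugging in $|\mathcal{I}_{k,0}|\gtrsim C_a\log T/K$ and $C_a>\frac{640K}{h^2D_1}$ makes this at most $h/(8\sqrt{d}r)$, with failure probability of order $T^{-4}$. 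This mirrors the proof of Theorem \ref{thm:linearfairness}.

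\textbf{Step 3 (corruption term, main obstacle).} This is the new ingredient and the term I expect to require the most care. Using $\|Z_k^\mathrm{T}\mathbf{c}_k\|_2\leq \sum_{s\in\mathcal{I}_{k,0}}\|\bz_s\|_2|c_s|\leq \sqrt{d}\,r\,C$ together with Step 1,
\begin{align*}
\|(Z_k^\mathrm{T}Z_k)^{-1}Z_k^\mathrm{T}\mathbf{c}_k\|_2\leq \frac{4\sqrt{d}\,r\,C}{\lambda^*\tilde{p}\,|\mathcal{I}_{k,0}|}\leq \frac{8K\sqrt{d}\,r\,C}{\lambda^*\tilde{p}\,(C_a\log T+\gamma_{\text{lin}}C)}.
\end{align*}
With $\gamma_{\text{lin}}=\frac{64K^2dr^2}{h\lambda^*\tilde{p}}$, the right-hand side is at most $\frac{h}{8K\sqrt{d}r}\leq \frac{h}{8\sqrt{d}r}$. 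This is exactly why the exploration phase is extended by an additive $O(C)$ term.

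The crux is that this deterministic bound holds even though the adversary is adaptive. It works because the bound depends on the corruptions only through the total budget $\sum_t|c_t|\leq C$ from \eqref{eq_corrup_budget}, and the randomness controlled in Steps 1--2 is independent of the $c_t$.

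\textbf{Combining.} A union bound over the $K$ arms and the events above (sample counts, $Q_k$ counts, eigenvalue, noise) gives a total failure probability of at most $7KT^{-4}$ once the constants are tallied. On the complement, both terms are below $h/(8\sqrt{d}r)$, so $\max_{\bx\in\cX}\max_{k\in\mathcal{K}}|(\hat{\beta}^{RL}_k(\mathcal{I}_{k,0})-\beta_k)^\mathrm{T}\bz|\leq h/4$, which is the claim.
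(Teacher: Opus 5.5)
Your proposal is correct and follows the paper's proof. Both split the error into the clean OLS error, handled exactly as in Proposition \ref{prop:randomtailbound} at level $h/8$, and a corruption bias bounded deterministically by $|\mathcal{I}_{k,0}|^{-1}\lambda_{\min}^{-1}(\hat{\Sigma}(\mathcal{I}_{k,0}))\,dr^2C$, which the added exploration length $\gamma_{\text{lin}}C$ drives below $h/8$. The only difference is minor: you count $Q_k$-samples inside $\mathcal{I}_{k,0}$ and get $\lambda_{\min}(\hat{\Sigma}(\mathcal{I}_{k,0}))\geq \lambda^*\tilde{p}/4$ instead of the paper's $\lambda^*\tilde{p}/(4K)$, and you reuse the same high-probability events for both terms, so you simply have more slack in the constants.
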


\begin{proposition}\label{prop:robustallsampletailbound}
Assume that the conditions in Theorem \ref{thm:robustlinearfairness} are satisfied. When $C_b>\sqrt{\frac{10}{D_4\tilde{p}}}\vee \frac{h}{2}\sqrt{6C_a}$ with $D_4=\frac{\lambda^{*2} \tilde{p}^2}{512d^2r^4\sigma^2}$, the following tail inequality holds for all $t>|\cT_0^{RL}|$:
\begin{align*}
    \PP\left(\max_{\bx\in \cX}\max_{k\in \cK}|(\hat{\beta}^{RL}_k(\mathcal{I}_{k,t})-\beta_k)^{\mathrm{T}}\bz|\geq \frac{\epsilon_{t+1}^{RL}}{2}\right)
    \leq 15KT^{-4}.
\end{align*}
\end{proposition}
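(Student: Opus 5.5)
We bound the all-sample robust OLS estimator $\hat\beta^{RL}_k(\mathcal{I}_{k,t})$ by splitting its error into a stochastic part and a corruption-induced bias. With $Z_{k,t}$ the design matrix and $\tilde Y$ the corrupted responses, \eqref{eq_linear_corrupt} gives
\begin{align*}
\hat{\beta}^{RL}_k(\mathcal{I}_{k,t})-\beta_k=(Z_{k,t}^{\mathrm{T}}Z_{k,t})^{-1}Z_{k,t}^{\mathrm{T}}\varepsilon+(Z_{k,t}^{\mathrm{T}}Z_{k,t})^{-1}Z_{k,t}^{\mathrm{T}}c,
\end{align*}
where $c=(c_s)_{s\in\mathcal{I}_{k,t}}$. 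The plan is to show three things for every $t>|\cT_0^{RL}|$, each outside an event of probability $O(KT^{-4})$:
\begin{itemize}
\item[(a)] $\lambda_{\min}(Z_{k,t}^{\mathrm{T}}Z_{k,t})\gtrsim \lambda^*\tilde p\, t$;
\item[(b)] the noise term satisfies $\max_{\bx}|\bz^{\mathrm{T}}(\cdot)|\le C_b\sqrt{\log T/t}/4$;
\item[(c)] the corruption term satisfies $\max_{\bx}|\bz^{\mathrm{T}}(\cdot)|\le C_b\sqrt{\kappa C/t}/4$.
\end{itemize}
Since $\sqrt{a}+\sqrt{b}\le\sqrt{2(a+b)}$ and $t\asymp t+1$, adding (b) and (c) gives the bound $\epsilon^{RL}_{t+1}/2$. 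A union bound over $k$ and the three bad events then yields $15KT^{-4}$.

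\textbf{Step (a): minimum eigenvalue.} We first invoke Proposition \ref{prop:robustrandomtailbound}. Outside probability $7KT^{-4}$, the initial estimator is $h/4$-accurate uniformly. On this event, every $\bz_s\in Q_k$ has $k\in\hat{\cK}^{RL}_{\bx_s}$: arm $k$'s estimate exceeds every other arm's by more than $h-h/2=h/2$, so $\hat{\cK}^{RL}_{\bx_s}=\{k\}$ and arm $k$ is pulled. During exploration, arm $k$ is pulled on $Q_k$ with probability $\tilde p/K$. Hence each of the $t$ rounds contributes, conditionally on the past, at least the matrix $\tilde p\,\mathbb{E}(\bz\bz^{\mathrm{T}}\mid \bz\in Q_k)/K\succeq \lambda^*\tilde p I/K$ to $\mathbb{E}[\bz_s\bz_s^{\mathrm{T}}\mathbb{I}(\pi_s=k)\mid\cF_{s-1}]$. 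A matrix Freedman/Azuma inequality, using $\|\bz_s\|^2\le dr^2$, gives $\lambda_{\min}(Z_{k,t}^{\mathrm{T}}Z_{k,t})\ge \lambda^*\tilde p t/(2K)$ except with probability $KT^{-4}$. The lower bound $|\cT_0^{RL}|\ge C_a\log T$ provides enough rounds for this concentration. This is the same argument as in the non-robust analysis behind Theorem \ref{thm:linearfairness}; the constants $D_2, D_4$ are inherited from it.

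\textbf{Step (b): noise term.} The noise term is handled exactly as in the uncorrupted case. The sum $Z^{\mathrm{T}}\varepsilon$ is a vector martingale with sub-Gaussian increments: the design is adapted, but each $\varepsilon_{k,s}$ is independent of $\cF^+_{s-1}$. A coordinatewise self-normalized/Azuma bound gives $\|Z^{\mathrm{T}}\varepsilon\|_2\lesssim \sigma\sqrt{d\cdot dr^2 t\log T}$. Dividing by the eigenvalue from (a) and multiplying by $\|\bz\|\le\sqrt d r$ yields a bound of order $\sqrt{\log T/(D_4\tilde p\, t)}$. This is below $C_b\sqrt{\log T/t}/4$ because $C_b>\sqrt{10/(D_4\tilde p)}$.

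\textbf{Step (c): corruption term, the main obstacle.} For the corruption term, the adversary's choices are adaptive. However, the bound is deterministic given (a):
\begin{align*}
\|Z^{\mathrm{T}}c\|_2\le \sum_s\|\bz_s\||c_s|\le \sqrt d\, r\, C,
\end{align*}
so that
\begin{align*}
|\bz^{\mathrm{T}}(Z^{\mathrm{T}}Z)^{-1}Z^{\mathrm{T}}c|\le \frac{2Kdr^2C}{\lambda^*\tilde p\,t}.
\end{align*}
This is an $O(C/t)$ quantity, but the threshold carries only $\sqrt{C/t}$. We use the extended exploration to close the gap: for $t>|\cT_0^{RL}|\ge\gamma_{\text{lin}}C$ we have $C/t\le 1/\gamma_{\text{lin}}$, hence $C/t\le\sqrt{C/t}\cdot\gamma_{\text{lin}}^{-1/2}$. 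It then suffices to have
\begin{align*}
\frac{2Kdr^2}{\lambda^*\tilde p}\sqrt{\frac{C}{t}}\gamma_{\text{lin}}^{-1/2}\le \frac{C_b}{4}\sqrt{\frac{\kappa C}{t}}.
\end{align*}
This follows from $\kappa\ge 192K^2dr^2/(\lambda^*\tilde p)$ together with $C_b\ge \frac h2\sqrt{6C_a}$ and the definition of $\gamma_{\text{lin}}$, which is where the $h$ appears.

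The delicate points are twofold. First, step (a) must hold on an event that does not itself depend on the corruption. This is why we pass through Proposition \ref{prop:robustrandomtailbound}, which already absorbed $C$ via $\gamma_{\text{lin}}$. Second, the constants must be matched so that the three pieces combine into $\epsilon^{RL}_{t+1}/2$. The remaining probabilities ($7KT^{-4}$ from Proposition \ref{prop:robustrandomtailbound}, plus the bad events in (a) and (b)) sum to at most $15KT^{-4}$.
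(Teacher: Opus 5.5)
Your proof is correct in structure, but it takes a different route from the paper's.

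\textbf{How the paper argues.} The paper splits time at $t=2|\mathcal{T}_0^{RL}|$.
\begin{itemize}
\item In the early window, it uses the monotonicity $|\mathcal{I}_{k,0}|\lambda_{\min}(\hat\Sigma(\mathcal{I}_{k,0}))\le|\mathcal{I}_{k,t}|\lambda_{\min}(\hat\Sigma(\mathcal{I}_{k,t}))$. This lets it inherit the $h/8$ control of both the noise and the bias from the initial-sample analysis. It then checks that $\epsilon^{RL}_{t+1}/4\ge h/8$ throughout that window. The branch $C_b>\frac h2\sqrt{6C_a}$ and the branch $\kappa\ge 192K^2dr^2/(\lambda^*\tilde p)=3h\gamma_{\text{lin}}$ are what make this check go through.
\item For $t>2|\mathcal{T}_0^{RL}|$, it uses the i.i.d.\ $Q_k$-subsample and Lemma~\ref{lem:minimaleigenvalue} to get $|\mathcal{I}_{k,t}|\gtrsim\tilde p t$ and $\lambda_{\min}(\hat\Sigma(\mathcal{I}_{k,t}))\gtrsim\lambda^*\tilde p$. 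The bias is then $\lesssim dr^2C/(\lambda^*\tilde p^2 t)$, which is the other branch of $\kappa$. The noise part is transferred from Proposition~\ref{prop:allsampletailbound}, whose constant $D_4$ carries no $K$.
\end{itemize}

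\textbf{How yours differs.} You run one matrix-martingale eigenvalue bound for all $t$ with the uniform rate $\lambda^*\tilde p/K$. You pair it with a self-contained martingale bound on $Z^{\mathrm{T}}\varepsilon$ and the same deterministic bias bound. This buys two things: no case split, and direct handling of the adaptively chosen index sets of the corrupted run. The paper, by contrast, re-uses Proposition~\ref{prop:allsampletailbound}, which was proved for the uncorrupted run. The paper's split buys the $K$-free eigenvalue rate that its $D_4$ branch needs.

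\textbf{On the threshold.} The paper's proof actually works with $\epsilon^{RL}_t=C_b\sqrt{\log T/t}+\kappa C/t$. That is the form consistent with the $O(\log^2T+C)$ regret in Theorem~\ref{thm:robustminimaxoptimallinear}, not the square-root form displayed in Algorithm~\ref{alg:robustfairols}. Your conversion $C/t\le\gamma_{\text{lin}}^{-1/2}\sqrt{C/t}$ correctly handles the displayed form. Under the additive form, your step (c) is immediate, since $2Kdr^2/(\lambda^*\tilde p)\le\kappa/4$.

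\textbf{A constant mismatch to fix.}
\begin{itemize}
\item Step (a) only gives $\lambda_{\min}(Z^{\mathrm{T}}Z)\ge\lambda^*\tilde p t/(2K)$; your bullet (a) drops the $1/K$.
\item Step (b) therefore gives a noise error of order $\frac{Kdr^2\sigma}{\lambda^*\tilde p}\sqrt{\log T/t}$. Use $\|Z^{\mathrm{T}}\varepsilon\|_2\lesssim\sqrt d\,r\sigma\sqrt{t\log T}$, since each coordinate of $\bz_s$ is at most $r$; your extra $\sqrt d$ only makes it worse.
\item This bound does not follow from $C_b>\sqrt{10/(D_4\tilde p)}\asymp dr^2\sigma/(\lambda^*\tilde p^{3/2})$. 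The ratio between the two is $K\sqrt{\tilde p}$, which can be as large as $\sqrt K$ because $\tilde p\le 1/K$.
\item It does follow from the other branch. From $C_a>640K/(h^2D_1)$ you get $\frac h2\sqrt{6C_a}\gtrsim K^{3/2}dr^2\sigma/(\lambda^*\tilde p)$.
\end{itemize}
So either invoke that branch explicitly, or split at $2|\mathcal{T}_0^{RL}|$ as the paper does. Similarly, the Freedman step needs $\lambda^*\tilde p\,|\mathcal{T}_0^{RL}|/(Kdr^2)\gtrsim\log T$ with an explicit constant. Check this against the stated conditions on $C_a$ rather than asserting it. Alternatively, replace it by Lemma~\ref{lem:minimaleigenvalue} applied to the $Q_k$-subsample.
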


Proposition \ref{prop:robustrandomtailbound} demonstrates that the initial estimator derived from randomly collected samples maintains the critical safety margin of $h/4$ with high probability, which ensures that superior arms are included in the candidate set $\hat{\mathcal{K}}_{\bm{x}_t}$.
Proposition \ref{prop:robustallsampletailbound} establishes a uniform convergence bound for the corrupted all-sample estimator. The resulting convergence rate explicitly depends on the order of $C$, with the bound reducing to the uncorrupted case when $C=0$.

The convergence guarantees (Propositions \ref{prop:robustrandomtailbound}-\ref{prop:robustallsampletailbound}) ensure arm selections are made using statistically valid comparisons, yielding the fairness guarantee through: (i) initial safe exploration, and (ii) controlled adaptive estimation errors. Based on these convergence guarantees, the following theorem establishes that, under the corruption model \eqref{eq_linear_corrupt}, Algorithm~\ref{alg:robustfairols} still admits $(1-\frac{1}{T})$-fairness. This demonstrates that the algorithm completely withstands adversarial corruption at the fairness level.

\begin{theorem}\label{thm:robustlinearfairness}
    Suppose that Assumptions in Theorem \ref{thm:linearfairness} hold. When $T>\sqrt{15K}$, the policy defined by Algorithm \ref{alg:robustfairols} satisfies $(1-\frac{1}{T})$-fairness. 
\end{theorem}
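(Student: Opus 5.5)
We would prove Theorem~\ref{thm:robustlinearfairness} by isolating a single good event on which the two corrupted estimators are uniformly accurate. On that event we then check deterministically that every selection rule in Algorithm~\ref{alg:robustfairols} respects \eqref{eq:defoffairness}. Define
\begin{align*}
\mathcal{E}_0&=\Big\{\max_{\bx\in\cX}\max_{k\in\cK}|(\hat{\beta}^{RL}_{k,0}-\beta_k)^{\mathrm{T}}\bz|<\tfrac{h}{4}\Big\},\\
\mathcal{E}_t&=\Big\{\max_{\bx\in\cX}\max_{k\in\cK}|(\hat{\beta}^{RL}_{k,t-1}-\beta_k)^{\mathrm{T}}\bz|<\tfrac{\epsilon^{RL}_{t}}{2}\Big\},\quad t>|\cT_0^{RL}|,
\end{align*}
and set $\mathcal{E}=\mathcal{E}_0\cap\bigcap_{t}\mathcal{E}_t$. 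By Proposition~\ref{prop:robustrandomtailbound}, Proposition~\ref{prop:robustallsampletailbound} and a union bound over at most $T$ rounds,
\[
\PP(\mathcal{E}^C)\le 7KT^{-4}+15KT^{-3}\le 22KT^{-3}.
\]
This is at most $1/T$ once $T^2\ge 22K$. The stated condition $T>\sqrt{15K}$ is of exactly this form, perhaps after absorbing the $7K$ term using $T^{-4}\le T^{-3}$, or by a slightly sharper count. The key point is that both estimator bounds hold uniformly over all $\bx\in\cX$. Fairness is therefore uniform in the context, as Definition~\ref{def:fairness} requires. Moreover, the selection probabilities conditional on $\cF_{t-1}$ and $\bx_t=\bx$ are deterministic functions of the estimators, so once $\mathcal{E}$ holds the fairness check is purely deterministic.

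During the exploration rounds $t\in\cT_0^{RL}$ every arm has probability $1/K$, so \eqref{eq:defoffairness} holds trivially. For later rounds we use one elementary lemma about chaining. Suppose $|\hat v_k-v_k|<\eta/2$ for all $k$. Then $v_i\le v_j$ and $j$ being $\eta$-chained to the estimated maximum together imply that $i$ is also $\eta$-chained to it. The reason is that $\hat v_i<\hat v_j+\eta$, so either $\hat v_i\le\hat v_j$ or $\hat v_i$ lies within $\eta$ of $\hat v_j$. If $\hat v_i\le \hat v_j$, then $\hat v_i$ sits in the interval between $\hat v_j$ and the estimated maximum. That interval is covered by a chain of links, and $\hat v_i$ falls within $\eta$ of one of its endpoints. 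If instead $\hat v_i$ is within $\eta$ of $\hat v_j$, then $i$ is directly linked to $j$. In both cases $i$ joins the component of the maximum.

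We apply the lemma twice. First with $\eta=h/2$ to the prescreen $\hat{\cK}^{RL}_{\bx}$, using $\mathcal{E}_0$ (accuracy $h/4$). Then with $\eta=\epsilon_t^{RL}$ to $\cK_c^{RL}(\bx)$ inside $\hat{\cK}^{RL}_{\bx}$, using $\mathcal{E}_t$. Together these give the following monotonicity: if $f_i^*(\bx)\ge f_j^*(\bx)$ and $j$ is pulled with positive probability, then $i$ is pulled with at least the same probability. This holds whether $\hat{\cK}^{RL}_{\bx}$ is a singleton or not, and the uniform randomization in the final step makes any two candidate arms equiprobable. Taking the contrapositive, $p(\pi_t=i\mid\cdot)>p(\pi_t=j\mid\cdot)$ forces $j\notin$ candidates and $i\in$ candidates. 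We must then show this implies $f_i^*(\bx)>f_j^*(\bx)$. Ties are handled by the lemma as well: if $f_i^*(\bx)=f_j^*(\bx)$, applying it in both directions puts both arms in or both out, so they receive equal probability.

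The main obstacle is the probabilistic input, which Propositions~\ref{prop:robustrandomtailbound}--\ref{prop:robustallsampletailbound} already supply, so within this proof the delicate step is confirming that their thresholds match what the algorithm uses. Specifically, $\epsilon^{RL}_{t}$ must be the value at index $t-1\mapsto t$ as used in round $t$. The probability of the complement of $\mathcal{E}$ must also fit under $1/T$ with the constant $\sqrt{15K}$. We would also verify that the hypotheses of those propositions, namely the constants $C_a,C_b$ and the extended exploration length $\gamma_{\text{lin}}C$, are implied by the assumptions of Theorem~\ref{thm:linearfairness}.
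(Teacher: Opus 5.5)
Your route is the paper's. Its proof says that the argument of Theorem~\ref{thm:linearfairness} carries over once Propositions~\ref{prop:robustrandomtailbound} and~\ref{prop:robustallsampletailbound} supply the $h/4$ and $\epsilon^{RL}_t/2$ accuracy events, and then it union-bounds over rounds. Your monotonicity packaging of the deterministic step is a tidier version of the paper's Case~1/Case~2 check. It also explicitly covers arms removed at the $h/2$ prescreen. The paper's Case~2 compares all-sample estimates of every arm outside $\cK_c(\bx)$ with those inside, which is only justified for arms in $\hat{\cK}_{\bx}$.

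However, your chaining lemma as written is false, and its proof runs in the wrong direction. With hypothesis $v_i\le v_j$ it says a worse arm is a candidate whenever a better one is. A counterexample is $\hat v=v$, $v_j=10$ the maximum, $v_i=0$, $\eta=1$. The step ``if $\hat v_i\le\hat v_j$ then $\hat v_i$ sits between $\hat v_j$ and the estimated maximum'' fails because the maximum lies above $\hat v_j$.

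You need the hypothesis $v_i\ge v_j$, which gives $\hat v_i>\hat v_j-\eta$. Then there are two cases. Either $\hat v_i\ge\hat v_j$, so $\hat v_i\in[\hat v_j,\max_l\hat v_l]$; the component of the maximum is a maximal run of sorted values with consecutive gaps at most $\eta$, so it contains every value in this range. Or $|\hat v_i-\hat v_j|<\eta$, and $i$ is linked to $j$ directly. This corrected lemma is exactly the monotonicity you go on to use. With it the deterministic part is complete, including ties and the singleton case.

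On the constant, $22KT^{-3}$ does not reach $T>\sqrt{15K}$, but the sharper count you allude to does.
\begin{itemize}
\item Round $t$ uses $\mathcal{I}_{k,t-1}$ and $\epsilon^{RL}_t$, i.e.\ Proposition~\ref{prop:robustallsampletailbound} at index $t-1$. So that proposition is needed for at most $T-|\cT_0^{RL}|-1\le T-1$ rounds.
\item The first post-exploration round, where $\mathcal{I}_{k,t-1}=\mathcal{I}_{k,0}$, is covered by $\mathcal{E}_0$. This is because $\epsilon^{RL}_s$ decreases in $s$ and the proposition's proof already uses $\epsilon^{RL}_{t+1}\ge h/2$ for $t\le 2|\cT_0^{RL}|$.
\item Hence $\PP(\mathcal{E}^C)\le 7KT^{-4}+15K(T-1)T^{-4}<15KT^{-3}$, which is at most $1/T$ whenever $T^2\ge 15K$.
\end{itemize}
The paper reaches the same constant differently. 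It treats $15KT^{-4}$ as the per-round failure probability, with the initial-estimator failure already counted inside that bound, and union-bounds over $T$ rounds.

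Finally, your planned hypothesis check will not go through literally. Proposition~\ref{prop:robustallsampletailbound} requires $C_b>\frac{h}{2}\sqrt{6C_a}$, which is strictly stronger than the $C_b>\frac{h}{2}\sqrt{2C_a+1}$ assumed in Theorem~\ref{thm:linearfairness}. This stronger condition has to be read into the theorem's hypotheses, as the paper implicitly does.
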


Having established fairness under corruption, we now derive the cumulative regret upper bound for Algorithm \ref{alg:robustfairols}, quantifying the performance cost of simultaneously maintaining fairness and robustness.

\begin{theorem}[Upper bounds for Robust Linear Contextual Bandit] \label{thm:robustminimaxoptimallinear}
Assume that the conditions in Theorem \ref{thm:robustlinearfairness} are satisfied. Then the cumulative regret triggered by Algorithm \ref{alg:robustfairols} is bounded by
    $R_T=O(\log^2 T + C).$
\end{theorem}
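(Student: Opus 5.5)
The plan is to split the horizon into the extended exploration phase $\cT_0^{RL}$ and the chaining phase $t>|\cT_0^{RL}|$, and to work on the good event
\[
\mathcal{E}=\Big\{\max_{\bx,k}|(\hat\beta^{RL}_{k,0}-\beta_k)^{\mathrm T}\bz|<\tfrac h4\Big\}\cap\bigcap_{t}\Big\{\max_{\bx,k}|(\hat\beta^{RL}_{k,t}-\beta_k)^{\mathrm T}\bz|<\tfrac{\epsilon^{RL}_{t+1}}2\Big\}.
\]
By Propositions \ref{prop:robustrandomtailbound} and \ref{prop:robustallsampletailbound} and a union bound over $t\le T$, $\PP(\mathcal{E}^c)=O(KT^{-3})$. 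Off $\mathcal{E}$, per-round regret is at most $2(1+\sqrt{d-1}\,r)b$, so this event contributes only $O(KT^{-2})$. The exploration phase costs at most a constant times $|\cT_0^{RL}|=O(\log T+\gamma_{\rm lin}C)$. This is precisely where the additive $O(C)$ term enters.

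On $\mathcal{E}$ in the second phase, I will reuse the argument behind Theorem \ref{thm:minimaxoptimallinear}, with $\epsilon_t$ replaced by $\epsilon^{RL}_t$. First, the $h/2$-prescreening never excludes the optimal arm, and any arm kept in $\hat\cK^{RL}_{\bx_t}$ has true gap at most $O(Kh)$. Second, if $\bx_t$ lies in some $Q_i$, then $\hat\cK^{RL}_{\bx_t}=\{i\}$ and no regret is incurred. Third, when randomization does occur, every arm in $\cK^{RL}_c(\bx_t)$ is $\epsilon_t^{RL}$-chained to the estimated maximum. By the uniform error bound, each such arm has true gap at most $(2K-1)\epsilon_t^{RL}$, because chaining across at most $K$ links accumulates $K\epsilon$ of estimation slack plus per-link slack. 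Moreover, a positive gap can occur only if $0<|\bz_t^{\mathrm T}(\beta_i-\beta_j)|\le 2K\epsilon^{RL}_t$ for some pair. Combining these with the margin condition (Assumption \ref{assum_margin1}) bounds the conditional expected per-round regret by
\[
O\big(K^2(2K\epsilon_t^{RL})\cdot L\,(2K\epsilon_t^{RL})\big)=O\big((\epsilon_t^{RL})^2\big).
\]

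Summing this over the second phase gives
\[
\sum_{t}C_b^2\Big(\frac{\log T}{t}+\kappa\frac{C}{t}\Big)=O\big((\log T+C)\log T\big).
\]
This is off by a $\log T$ factor in the $C$ term, and fixing that is the main obstacle. My first attempt would be to sharpen the per-round bound using the conservative gap cap. The per-round regret is also at most the maximal gap among chained arms, so it is bounded by $\min\{O(\epsilon_t^{RL}),\,O(h)\}$ times the probability of lying in the margin band $O(\epsilon_t^{RL})$. That alone does not remove the log. Instead I would split the horizon at $t_0=\kappa C\vee|\cT_0^{RL}|$. For $t\le t_0$, I use the trivial per-round bound. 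Since $t_0=O(C)$ beyond the exploration length, this phase costs $O(C)$. For $t>t_0$, I have $\kappa C/t\le 1$, and I would bound the corruption term differently. The estimator bias due to corruption is at most $\|(Z^{\mathrm T}Z)^{-1}Z^{\mathrm T}c\|$, where $Z^{\mathrm T}Z\succeq c\,t\lambda^*\tilde p I$ by the exploration argument, so the bias is at most $O(\sum_s|c_s|/t)=O(C/t)$ rather than $O(\sqrt{C/t})$. A more careful analysis would therefore use an effective threshold of order $\sqrt{\log T/t}+C/t$ inside the margin computation; one must still check this is consistent with the inflated threshold the algorithm actually uses. The regret from the $\epsilon_t^{RL}$ width should then be bounded via the event that the true gap exceeds the actual estimation error, which is $O(\sqrt{\log T/t}+C/t)$. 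Squaring and summing gives $O(\log^2T)+O(C^2\sum_{t>C}t^{-2})=O(\log^2T+C)$.

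If that refinement fails, the fallback is to note that the inflated part of the chaining only causes regret when the true gap is at most $\epsilon_t^{RL}$ on a band of probability $L\epsilon_t^{RL}$. Then bound $\sum_{t>t_0}\kappa C/t$ restricted to such gaps, using that gaps below $\sqrt{\log T/t}$ are already charged to the $\log^2 T$ term. Adding the three pieces — the failure event, the exploration phase with the burn-in up to $t_0$, and the post-$t_0$ margin sum — yields $R_T=O(\log^2T+C)$.
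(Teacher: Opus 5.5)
\textbf{There is a genuine gap in your repair step.} Your skeleton is the paper's: a failure event, an exploration phase of length $O(\log T+\gamma_{\text{lin}}C)$, and a per-round margin bound of order $(\epsilon_t^{RL})^2$ on the good event. You also correctly diagnose that with $\epsilon_t^{RL}=C_b\sqrt{\log T/t+\kappa C/t}$ the sum contains $C\log T$.

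The repair fails because, in the randomization branch, the algorithm pulls uniformly from the $\epsilon_t^{RL}$-chained set. So the threshold decides which suboptimal arms are played, not the realized estimation error. On the good event, any arm whose true gap is below roughly $\epsilon_t^{RL}$ minus twice the actual error is chained. Once $\epsilon_t^{RL}<h/2$, it also survives prescreening, so it is pulled with probability at least $1/K$.

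Hence the $\Theta((\epsilon_t^{RL})^2)$ per-round cost is tight whenever the margin is tight. An example is two arms with rewards $x$ and $0$, with $x$ uniform on $[-1,1]$. Knowing that the true corruption bias is only $O(C/t)$ cannot shrink this cost.

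The split at $t_0\approx\kappa C$ changes nothing, since for $t>t_0$ the threshold is still of order $\sqrt{C/t}\gg C/t$. Your fallback fails for the same reason: gaps in the band $(\sqrt{\log T/t},\sqrt{\kappa C/t})$ cost order $\kappa C/t$ per round. With the square-root-inflated threshold, the regret on such instances genuinely contains a term of order $C\log(T/C)$. No sharper analysis of that algorithm can give $O(\log^2T+C)$.

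\textbf{The missing idea is on the algorithmic side.} The threshold must be inflated additively, $\epsilon_t^{RL}=C_b\sqrt{\log T/t}+\kappa C/t$. This is the form the paper's proof of Proposition~\ref{prop:robustallsampletailbound} actually uses: it writes $\epsilon_{t+1}/4=\frac{C_b}{4}\sqrt{\log T/(t+1)}+\frac{\kappa}{4}\,C/(t+1)$, despite the square-root display in the algorithm box. It suffices for the uniform error bound precisely because, as you observed, the bias is $O(C/t)$.

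With this threshold, your first computation closes immediately. We have $(\epsilon_t^{RL})^2\le 2C_b^2\log T/t+2\kappa^2C^2/t^2$. Only rounds $t>|\cT_0^{RL}|\gtrsim \log T+C$ contribute, so $\sum_t C^2/t^2\lesssim C^2/(\log T+C)\le C$. This gives $R_2\lesssim\log^2T+C$, exactly the paper's argument, with no burn-in split needed.
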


Theorem \ref{thm:robustminimaxoptimallinear} provides upper bounds with additive corruption dependence for \emph{fair} linear contextual bandits. We further establish an algorithm-independent minimax lower bound for all fair admissible policies under corruption budget $C$, confirming the necessity of the linear $C$-dependence term in Theorem \ref{thm:lowerboundlinear}. Comparing the lower bound with the upper bound in Theorem \ref{thm:robustminimaxoptimallinear}, we can conclude that the policy defined by Algorithm \ref{alg:robustfairols} is rate-optimal under corruption budget $C$ in the minimax sense.

\begin{theorem}[Lower bounds for Linear Contextual Bandit Under Corruption]\label{thm:lowerboundlinear}
  Let $\Pi$ denote all admissible policies that admits $(1-\delta)$-fairness with $\delta=\Theta(1/T)$. For the problem class $\mathcal{P}$ satisfies Assumptions \ref{assum_parabound}-\ref{assum_excite}, we have
\begin{align*}
     \inf_{\pi\in\Pi}\sup_{\mathbb{P} \in \mathcal{P}}R_T(\pi) =\Omega \left( \log^2 T + C\right).
\end{align*} 
\end{theorem}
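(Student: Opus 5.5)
The lower bound has two parts, and I would prove them separately. Since $\max(a,b)\ge (a+b)/2$, it suffices to show that the minimax regret over fair admissible policies is at least $c_1\log^2 T$ and, separately, at least $c_2 C$, each on some instance (or family of instances) in $\mathcal{P}$. The $\Omega(\log^2 T)$ part is exactly Theorem \ref{thm:price}. Corruption can only enlarge the adversary's power: take $c_t\equiv 0$, which is feasible for every budget $C\ge 0$. So the supremum over corrupted environments dominates the uncorrupted minimax regret, and Theorem \ref{thm:price} applies directly. All the work is therefore in the additive $\Omega(C)$ term, and we may assume $C\gtrsim \log^2 T$ and $C\le T$.

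For the $\Omega(C)$ term I would use a standard indistinguishability (simulation) construction.

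\textbf{Construction.} Take $K=2$ and two linear instances $\mathbb{P}^{(1)},\mathbb{P}^{(2)}\in\mathcal{P}$ with the same context distribution. Choose parameters $\beta^{(1)}_1,\beta^{(1)}_2$ and $\beta^{(2)}_1,\beta^{(2)}_2$ such that:
\begin{itemize}
\item arm 1 is optimal on a region of probability at least $\tilde p$ under instance (1), and arm 2 is optimal there under instance (2);
\item the gap on that region is a constant $h'\ge h$, so that Assumptions \ref{assum_parabound}--\ref{assum_excite} hold for both instances;
\item $|\bz^{\mathrm T}(\beta^{(1)}_k-\beta^{(2)}_k)|\le 2b$ uniformly.
\end{itemize}
Concretely, I would swap the roles of the two arms by exchanging the parameter vectors.

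\textbf{Adversary.} In instance (1), the adversary sets $c_t=\bz_t^{\mathrm T}(\beta^{(2)}_{\pi_t}-\beta^{(1)}_{\pi_t})$, so observed rewards are distributed exactly as in instance (2). It does this until the cumulative $\sum|c_t|$ would exceed $C$, and after that it stops. Each corrupted round costs at most a constant $2b'$ (after shifting and using bounded contexts). The adversary can therefore make the first $\tau\asymp C/(2b')$ rounds of the two worlds statistically identical, and I would couple them so the policy's actions coincide on those rounds.

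\textbf{Regret on the identical window.} During these $\tau$ rounds, on the region where the arms swap, any pull of arm 1 is a mistake in world (2) and any pull of arm 2 is a mistake in world (1). Summing the regrets of the two worlds over this window gives at least $h'\tilde p\,\tau$ in total. Hence one of the two worlds incurs regret at least $h'\tilde p\,\tau/2=\Omega(C)$.

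\textbf{Using fairness.} Fairness reinforces this. On the window, the policy's conditional arm probabilities are identical across worlds, but fairness requires them to respect opposite orderings. Hence, outside a probability-$\delta$ event, it must play both arms equiprobably on that region. This makes the per-round regret exactly $h'/2$ in whichever world applies, without appealing to a two-point averaging argument. The $\delta=\Theta(1/T)$ failure event contributes at most $\delta T=O(1)$ regret, which is negligible.

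\textbf{Main obstacle: the budget accounting.} The attack must both stay within $C$ and make the worlds exactly indistinguishable, including rounds in which the policy pulls arms off the swap region. I would handle this by swapping parameters globally, so that every round is corrupted at a bounded cost. It then remains to verify that the constructed parameters satisfy the margin condition (Assumption \ref{assum_margin1}) and the eigenvalue condition (Assumption \ref{assum_excite}). For that I would take a uniform context distribution on $[-r,r]^{d-1}$ with $\beta_1-\beta_2$ a nonzero intercept-plus-slope vector, for which the margin density is bounded. Combining the two bounds through $\max(a,b)\ge(a+b)/2$ then yields $\Omega(\log^2T+C)$.
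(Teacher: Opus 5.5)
Your proposal is correct and follows essentially the same route as the paper. In both, the $\Omega(\log^2 T)$ term comes from Theorem \ref{thm:price} with $c_t\equiv 0$, and the $\Omega(C)$ term from a two-point construction where an $O(1)$-per-round attack keeps two swapped instances indistinguishable for $\Theta(C)$ rounds. The only cosmetic difference is the attack: the paper takes $\beta_1=0$, $\beta_2=a\in\{\pm1\}$ and zeroes out all observed rewards, so $\pi_t$ is independent of $a$, whereas you corrupt one world to simulate the other; your additional fairness step is not needed for the bound.
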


\subsubsection{Smooth Reward Function} For the smooth contextual bandit problem under adversarial corruption, recall that the observed reward at time $t$ is as in \eqref{eq:yi_corrup} with corruption $c_t$ satisfying \eqref{eq_corrup_budget}. We define the corrupted local polynomial estimator as
\begin{align}\label{eq:fhatRSqofx}
  \hat{f}^{RS}_{q-1,k}(\bx)=\hat{f}^{\mathrm{LP}}\left(g(\bx) ; \mathcal{T}_{q-1,k}^{RS}, H_{q-1,k}^{RS}, \mathfrak{b}(\beta)\right),  
\end{align}
where the corrupted samples collected at epoch $q$ for arm $k$ are given by $\mathcal{T}_{q,k}^{RS} = \left\{ \left( \bx_t, \tilde{y}_t \right) : t \in \mathcal{T}_q^{RS}, \, \pi_t = k \right\}$. 
We modify two critical events that characterize whether epoch $q$ remains well‑controlled under adversarial corruption as
\begin{align*}
&\mathcal{M}^{RS}_q  =\left\{\min _{k \in \mathcal{K}} N_{q, k}^{RS} \geq\left(\frac{(C^{\frac{2\beta^{\prime}}{2\beta^{\prime}-1}}\vee4^q)\log(T\delta_A^{-d})}{C_{K}}\right)^{\frac{2 \beta+d}{2 \beta}}\left(\log T\right)^{(2\beta+d)\frac{2\beta-\beta^{\prime}+1}{2\beta(\beta^{\prime}-1)}}\right\}, \\
&\mathcal{G}^{RS}_q = \left\{
\begin{aligned}
& \text{(i) } \text { for all }k\in\cK,  S_{q,k}^{RS} \text{ is weakly }  (\frac{c_0}{2^d},H_{q,k}^{RS})\text{-regular at all }\bx\in S_{q,k}^{RS}\cap G \\
& \text{(ii) } \left|\hat{f}_{q, k}^{RS}(\bx)-f^*_{ k}(\bx)\right| \leq \epsilon_q^{RS} / 2 \text { for all } \bx \in S_{q,k}^{RS} \text{ and } k\in\cK 
\end{aligned}
\right\},
\end{align*}
with $\overline{\mathcal{G}}_q^{RS}=\bigcap_{1 \leq j \leq q} \mathcal{G}_q^{RS} $ and $\overline{\mathcal{M}}_q^{RS}=\bigcap_{1 \leq j \leq q} \mathcal{M}_q^{RS}.$ In particular, maintaining these events under adversarial perturbations is highly challenging, because attacks can easily fragment the sample support and expand the ``inestimable regions'', further ruining estimations. To address this problem, our analysis tightly couples the smoothness structure of the reward functions with the corruption‑scaled designs to defend the adversary’s interventions. The following propositions formalize this robustness guarantee.

\begin{proposition}\label{prop:robustsqkregular}
Suppose Assumptions \ref{assump:iiddensity}-\ref{assump:Qxliplower} hold.  Under event  $\overline{\mathcal{G}}_{q-1}^{RS}\cap\overline{\mathcal{M}}_q^{RS}$, for sufficiently large $T$, $S_{q,k}^{RS}$ is weakly $(\frac{c_0}{2^d},H_{q,k}^{RS})$-regular at all $\bx\in S_{q,k}^{RS}\cap G$ for all $k\in\cK$. 
\end{proposition}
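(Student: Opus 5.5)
The proof follows the template of Proposition \ref{prop:sqkregular}. The new ingredient is that the chaining decisions forming $S^{RS}_{q,k}$ use the inflated threshold $\epsilon^{RS}_{q-1}$. On $\overline{\mathcal{G}}^{RS}_{q-1}$ the corrupted estimates are $\epsilon^{RS}_{q-1}/2$-accurate on the relevant supports. So, exactly as in the uncorrupted case, every arm optimal at a grid point $G_j$ is retained in $\mathcal{K}^{RS}_{q,j}$, and every retained arm $k$ satisfies
\begin{align*}
\max_l f_l^*(G_j)-f_k^*(G_j)\le c_K\,\epsilon^{RS}_{q-1}\quad\text{for a constant }c_K\text{ depending only on }K.
\end{align*}
Hence $\mathcal{R}_k\cap G\subseteq S^{RS}_{q,k}$, up to the cube discretization $\delta_A$, which is negligible relative to $H^{RS}_{q,k}$. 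Moreover $S^{RS}_{q,k}$ lies within the $O(\epsilon^{RS}_{q-1})$-suboptimality sublevel set of arm $k$.

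Fix $\bx\in S^{RS}_{q,k}\cap G$ and split into cases as in the base proof.

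\textbf{Case (a):} $\bx$ lies in $\mathcal{R}_k$ at distance at least $H^{RS}_{q,k}$ from the optimality boundary, or far from $B(\cX_0^{(k)})$. Here regularity is inherited directly from the $(c_0,r_0)$-regularity of $\mathcal{R}_k$ (Assumption \ref{assump:c0r0regular}). This needs $H^{RS}_{q,k}\le r_0$ and needs Assumption \ref{assump:Qxliplower}(ii), which guarantees that arms suboptimal at a tie point are eliminated nearby once $\epsilon^{RS}_{q-1}<c'/c_K$. Both hold for large $T$, because on $\mathcal{M}^{RS}_q$ we have $N^{RS}_{q,k}\to\infty$ and, under the standing condition $C\ll T$ (cf. the footnote's regime), $\epsilon^{RS}_{q-1}\to 0$.

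\textbf{Case (b):} $\bx\notin\mathcal{R}_k$ but $\bx\in B(\cX_0^{(k)})$. Then by the chaining property $f_i^*(\bx)-f_k^*(\bx)\le c_K\epsilon^{RS}_{q-1}$ for the arm $i$ in Assumption \ref{assump:Qxliplower}(i). So the projection $\bx_0$ onto the tie set satisfies
\begin{align*}
\|\bx-\bx_0\|\le \tilde c\,(c_K\epsilon^{RS}_{q-1})^{\beta'/\beta}.
\end{align*}
The ball $\mathcal{B}(\bx,H^{RS}_{q,k})$ then contains a ball of radius of order $H^{RS}_{q,k}$ centered in $\mathcal{R}_k$ near $\bx_0$, provided
\begin{align*}
\tilde c\,(c_K\epsilon^{RS}_{q-1})^{\beta'/\beta}\le \tfrac12 H^{RS}_{q,k}.
\end{align*}
When that holds, weak $(c_0/2^d,H^{RS}_{q,k})$-regularity follows from regularity of $\mathcal{R}_k\subseteq S^{RS}_{q,k}$ at radius $H^{RS}_{q,k}/2$, which loses exactly the factor $2^{-d}$.

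\textbf{Main obstacle: the scale comparison.} The key step is verifying
\begin{align*}
(\epsilon^{RS}_{q-1})^{\beta'/\beta}\lesssim H^{RS}_{q,k}=(N^{RS}_{q,k})^{-1/(2\beta+d)}.
\end{align*}
The difficulty is the mismatch between two quantities. $N^{RS}_{q,k}$ is random and is bounded below only on $\mathcal{M}^{RS}_q$; it can also be bounded above by $|\mathcal{T}^{RS}_q|$, the epoch length. By contrast, $\epsilon^{RS}_{q-1}$ carries corruption-dependent pieces. I would bound each of its three components separately.

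\emph{First component.} This is $(2^{-(q-1)}\wedge C^{-\beta'/(2\beta'-1)})$ times a log factor. Raised to the power $\beta'/\beta$, it is compared with $|\mathcal{T}^{RS}_q|^{-1/(2\beta+d)}\approx(\gamma_q\log)^{-1/(2\beta)}$ times log factors. This matches precisely because $\gamma_q=C^{2\beta'/(2\beta'-1)}\vee 4^q$ was designed so that $\gamma_q^{-\beta'/(2\beta')}=C^{-\beta'/(2\beta'-1)}\wedge 2^{-q}$. The extra $(\log T)$ powers in the epoch length absorb the logarithmic factors, using $\beta'>1$ to get a strictly positive margin.

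\emph{Second component.} This is the floor $T^{-\beta/(2\beta+d)}$. Since $N^{RS}_{q,k}\le T$, it suffices that $T^{-\beta'/(2\beta+d)}\lesssim T^{-1/(2\beta+d)}$, which holds because $\beta'>1$. The same floor is what triggers the first clause of Assumption \ref{assump:Qxliplower}.

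\emph{Third component.} This is the corruption bias $\frac{2\sqrt{M_\beta}}{\lambda_0}C(\frac{p^*}{4K}|\mathcal{T}_q^{RS}|)^{-2\beta/(2\beta+d)}$. By the choice of $\gamma_q\ge C^{2\beta'/(2\beta'-1)}$, it is dominated by the first component up to constants.

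Collecting the three bounds gives the inequality for sufficiently large $T$, uniformly over $k$ and $\bx$, and completes the proof.
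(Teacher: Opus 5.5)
Your reduction is the same as the paper's. You reuse the argument of Proposition~\ref{prop:sqkregular} and check that $K\epsilon^{RS}_{q-1}$ is small and that $(\epsilon^{RS}_{q-1})^{\beta'/\beta}/H^{RS}_{q,k}\to 0$, component by component. Your $H/2$-ball step is a clean variant of the paper's shrunken-ball step and is fine, and your first two components are essentially right.

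\textbf{The gap: the third component.} You claim the corruption bias $C|\mathcal{T}^{RS}_{q}|^{-2\beta/(2\beta+d)}$ is ``dominated by the first component up to constants.'' This is false.
\begin{itemize}
\item Up to logarithms, $|\mathcal{T}^{RS}_q|^{-2\beta/(2\beta+d)}\asymp\gamma_q^{-1}$, while the first component is $\asymp\gamma_q^{-1/2}$. So their ratio is $C\gamma_q^{-1/2}$.
\item When $\gamma_q=C^{2\beta'/(2\beta'-1)}$, this ratio equals $C^{(\beta'-1)/(2\beta'-1)}$. The only compensating factor is $(\log T)^{-\beta/(\beta'-1)-1/2}$.
\item Example: take $\beta=\beta'=2$, $d=1$, $C=T^{1/10}$. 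Then $Q^{RS}>1$ still holds, yet the bias exceeds the first component by a factor of order $T^{1/30}$.
\end{itemize}
So for any polynomial budget the bias is the \emph{dominant} part of $\epsilon^{RS}_{q-1}$. Your argument leaves exactly this part of the scale comparison unproved.

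\textbf{The fix.} Compare the bias directly with $H^{RS}_{q,k}\ge|\mathcal{T}^{RS}_q|^{-1/(2\beta+d)}$:
\[
\frac{\bigl(C|\mathcal{T}^{RS}_q|^{-\frac{2\beta}{2\beta+d}}\bigr)^{\beta'/\beta}}{|\mathcal{T}^{RS}_q|^{-\frac{1}{2\beta+d}}}=C^{\beta'/\beta}\,|\mathcal{T}^{RS}_q|^{-\frac{2\beta'-1}{2\beta+d}}\lesssim(\log T)^{-\frac{2\beta'-1}{\beta'-1}}.
\]
The last step uses $|\mathcal{T}^{RS}_q|\gtrsim C^{\frac{2\beta'}{2\beta'-1}\cdot\frac{2\beta+d}{2\beta}}(\log T)^{\frac{2\beta+d}{\beta'-1}}$. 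This is the real purpose of the exponent $2\beta'/(2\beta'-1)$ in $\gamma_q$. It is exactly the power of $C$ that keeps $C^{\beta'/\beta}|\mathcal{T}^{RS}_q|^{-(2\beta'-1)/(2\beta+d)}$ bounded; it does not make the bias smaller than the first component. For the other requirement, smallness of $K\epsilon^{RS}_{q-1}$, the bias causes no trouble, since $C\gamma_q^{-1}\le C^{-1/(2\beta'-1)}$.

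\textbf{Two smaller points.}
\begin{itemize}
\item Points of $S^{RS}_{q,k}$ outside $\mathcal{R}_k\cup B(\mathcal{X}_0^{(k)})$ are excluded by the uniform gap $c''$ of Lemma~\ref{lem_prev7}, combined with $(2K-1)\epsilon^{RS}_{q-1}<c''$. They are not excluded by Assumption~\ref{assump:Qxliplower}(ii). Also, for $\bx\in\mathcal{R}_k$ you need no distance-to-boundary condition.
\item The case where $\max_{i,j}\max_{\bx}\Delta_{i,j}(\bx)$ is below the threshold in Assumption~\ref{assump:Qxliplower} must be handled separately. Use Lemma~\ref{lem:robusttoosmallallsupport}, which gives $S^{RS}_{q,k}=\mathcal{X}$, because part (i) of the assumption is unavailable there.
\end{itemize}
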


\begin{proposition}\label{thm:robusttailproboftwoevents}
When $T$ is sufficiently large and $Q^{RS}>1$, under Assumptions \ref{assump:iiddensity}-\ref{assump:Qxliplower}, for any $1\leq q\leq Q^{RS}-1$, it holds that
\begin{align}\label{eq:thirdeqrobust}
    \mathbb{P}\left((\overline{\mathcal{G}}_q^{RS})^C \cup (\overline{\mathcal{M}}_q^{RS})^C\right) \leq \frac{K\left(5+2 M_\beta^2\right) q}{T}.
\end{align} 
\end{proposition}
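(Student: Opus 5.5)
The proof of Proposition \ref{thm:robusttailproboftwoevents} will follow the uncorrupted argument behind Proposition \ref{thm:tailproboftwoevents}, by induction on $q$. Write
\[
\mathbb{P}\bigl((\overline{\mathcal{G}}_q^{RS})^C \cup (\overline{\mathcal{M}}_q^{RS})^C\bigr)\le \mathbb{P}\bigl((\overline{\mathcal{G}}_{q-1}^{RS})^C \cup (\overline{\mathcal{M}}_{q-1}^{RS})^C\bigr)+\mathbb{P}\bigl((\mathcal{M}_q^{RS})^C\cap \overline{\mathcal{G}}_{q-1}^{RS}\cap\overline{\mathcal{M}}_{q-1}^{RS}\bigr)+\mathbb{P}\bigl((\mathcal{G}_q^{RS})^C\cap \overline{\mathcal{G}}_{q-1}^{RS}\cap\overline{\mathcal{M}}_{q}^{RS}\bigr).
\]
It then suffices to show that each new epoch contributes at most $K(5+2M_\beta^2)/T$. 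The base case $q=1$ is the same statement with the pure random exploration epoch.

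\textbf{Sample sizes.} For the $\mathcal{M}_q^{RS}$ term, I condition on $\overline{\mathcal{G}}_{q-1}^{RS}$. On this event the fairness/chaining argument keeps every arm active on its optimal region $\mathcal{R}_k$: the true maximizer is never $2\epsilon_{q-1}^{RS}$-separated from the estimated maximum when errors are at most $\epsilon_{q-1}^{RS}/2$. Each round of epoch $q$ therefore pulls arm $k$ with probability at least $p^*/K$, independently of the corruption, since the corruption affects rewards and not which arm is pulled in the current epoch. A Chernoff/Hoeffding bound with $|\mathcal{T}_q^{RS}|$ as chosen in the algorithm, whose extra $\frac{K^2}{2p^{*2}}\log T$ term handles the deviation, gives $N^{RS}_{q,k}\ge \frac{p^*}{2K}|\mathcal{T}_q^{RS}|$. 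This dominates the threshold in $\mathcal{M}_q^{RS}$ with failure probability at most $K/T$.

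\textbf{Regularity and estimation error.} For the $\mathcal{G}_q^{RS}$ term, part (i) holds deterministically on $\overline{\mathcal{G}}_{q-1}^{RS}\cap\overline{\mathcal{M}}_q^{RS}$ by Proposition \ref{prop:robustsqkregular}. For part (ii), I decompose the local polynomial estimator at each grid point as bias plus stochastic part plus corruption part, using the linearity of $\hat f^{LP}$ in the responses. The bias is $O(H^\beta)$. The stochastic part is handled exactly as in the clean case:
\begin{itemize}
\item regularity gives a smallest eigenvalue of the local Gram matrix of at least $\lambda_0$ times the local count, with failure probability controlled by a matrix Bernstein bound, which produces the $2M_\beta^2 K/T$ term;
\item sub-Gaussian concentration plus a union bound over the $|G|\lesssim\delta_A^{-d}$ grid points, which produces the $\log(T\delta_A^{-d})$ factor and a few $K/T$ terms.
\end{itemize}
The corruption part is deterministic. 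The weights of the LP estimator are bounded in absolute value by $\frac{\sqrt{M_\beta}}{\lambda_0 N H^d}$, so the corruption contributes at most
\[
\frac{\sqrt{M_\beta}}{\lambda_0}\,\frac{\sum_t|c_t|}{N_{q,k}H_{q,k}^{d}}\le \frac{\sqrt{M_\beta}}{\lambda_0}\,C\,N_{q,k}^{-\frac{2\beta}{2\beta+d}}.
\]
Plugging in $N_{q,k}\ge \frac{p^*}{4K}|\mathcal{T}_q^{RS}|$ shows this is at most half of the inflated additive term in $\epsilon_q^{RS}$. The remaining part of $\epsilon_q^{RS}$ is the clean tolerance $(2^{-q}\wedge C^{-\beta'/(2\beta'-1)})(\log T)^{\cdots}$. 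The $C^{2\beta'/(2\beta'-1)}$ scaling in $\gamma_q$ was chosen exactly so that the stochastic-plus-bias error, of order $(\gamma_q\log)^{-1/2}$ up to logs, stays below half of it.

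\textbf{Main obstacle.} The delicate step is making sure the corruption, which is adaptive, does not break the conditional-independence structure used in the concentration bounds. The adversary's $c_t$ may depend on the current $\bx_t$ and $\pi_t$. I handle this by bounding the corruption term pathwise by the worst case $C$, so no probabilistic argument is needed for it, and by noting that the noise $\varepsilon$ and the design within epoch $q$ remain independent given $\mathcal{F}$ at the epoch start, because the arm sets are frozen. Summing the per-epoch failure probabilities over $q$ epochs then gives the $\frac{K(5+2M_\beta^2)q}{T}$ bound.
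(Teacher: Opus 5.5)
Your proposal is correct and follows essentially the same route as the paper. It uses the same epoch-by-epoch split into $\mathcal{M}_q^{RS}$- and $\mathcal{G}_q^{RS}$-failure terms: Lemma \ref{lem:RksubsetSqk} plus Hoeffding gives the $K/T$ sample-size term, Proposition \ref{prop:robustsqkregular} gives regularity, and the bias/stochastic/pathwise-corruption decomposition of Lemma \ref{lem:robustA.6}, with the $\gamma_q$ and $\epsilon_q'$ exponents cancelling, gives the $(4+2M_\beta^2)K/T$ term. The only details you leave implicit are that $N^{RS}_{q,k}\ge \frac{p^*}{4K}|\mathcal{T}_q^{RS}|$ holds directly on $\mathcal{M}_q^{RS}$ (its threshold exceeds $\frac{p^*}{4K}|\mathcal{T}_q^{RS}|$ for large $T$), and that the grid-point bound extends to all of $S_{q,k}^{RS}$ via the $L_1$-Lipschitz condition together with $\epsilon_q'\ge(1+L_1\sqrt{d})\delta_A$ from Lemma \ref{lem:robustepsilongreatthan1/2delta}.
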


The analytical challenge stems from a fundamental tension: while extending epochs dilutes the influence of each corrupted sample, it simultaneously shrinks the estimation radius. This shrinkage imposes a stricter requirement on the regularity of sample support, but adversarial corruption even makes it theoretically harder to control and analyze. Proposition \ref{prop:robustsqkregular} bridges this gap by reconstructing the probabilistic and geometric reasoning needed to preserve weak regularity under corruption. It shows that even when epochs are lengthened to counteract corruption, the sample support can retain its regularity provided the local polynomial estimation error from the preceding epoch remains controlled by our designed threshold. This result actively defends regularity against the distortions introduced by adversarial perturbations. Proposition \ref{thm:robusttailproboftwoevents} then guarantees with high probability that these ``good events'' persist throughout learning.

Together, Propositions \ref{prop:robustsqkregular} and \ref{thm:robusttailproboftwoevents} serve the base for Theorems \ref{thm:robustsmoothbanditfairness} and \ref{thm:robustupperbound}, that is, even under adversarial perturbations, fairness is preserved and our algorithm can still achieve sub-linear regret bounds. Note that if the algorithm only performs a single epoch of random exploration (i.e., $Q^{RS}=1$), fairness trivially holds because all arms are selected with equal probability, precluding any estimated merit-based bias.

\begin{theorem} \label{thm:robustsmoothbanditfairness}
    Suppose Assumptions \ref{assump:iiddensity}-\ref{assump:Qxliplower} hold. For sufficiently large $T$, the policy defined by Algorithm \ref{alg:robustfairsmoothalg} satisfies ($1-\delta$)-fairness with $\delta = \tilde{O}(1/T)$. 
\end{theorem}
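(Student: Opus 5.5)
The plan is to reduce fairness of Algorithm \ref{alg:robustfairsmoothalg} to the good events $\overline{\mathcal{G}}_q^{RS}\cap\overline{\mathcal{M}}_q^{RS}$. On these events, every comparison the algorithm makes is certified by the true reward ordering.

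\textbf{Step 1: the single-epoch case.} First we dispose of the trivial case $Q^{RS}=1$. There the whole horizon is uniform random exploration, so $p(\pi_t=i\mid \bx_t=\bx,\cF_{t-1})=1/K$ for all $i$. The premise of \eqref{eq:defoffairness} then never holds, and fairness holds with probability one.

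\textbf{Step 2: a single good event for $Q^{RS}>1$.} Let
\[
E=\overline{\mathcal{G}}_{Q^{RS}-1}^{RS}\cap\overline{\mathcal{M}}_{Q^{RS}-1}^{RS}.
\]
By Proposition \ref{thm:robusttailproboftwoevents} with $q=Q^{RS}-1$,
\[
\PP(E^C)\le K(5+2M_\beta^2)(Q^{RS}-1)/T .
\]
Because the epoch lengths grow at least geometrically (through the factor $4^q$ inside $\gamma_q$), we have $Q^{RS}=O(\log T)$. Hence $\PP(E^C)=\tilde O(1/T)$, which is the claimed $\delta$. We work on $E$ from now on.

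\textbf{Step 3: the induction on epochs.} Within epoch $q$, the law of $\pi_t$ given $\bx_t=\bx$ and $\cF_{t-1}$ is uniform on $\cK^{RS}_{q,u(\bx)}$. Therefore fairness at $\bx$ reduces to the following \emph{separation claim}: if $i\in\cK^{RS}_{q,u(\bx)}$ and $j\notin\cK^{RS}_{q,u(\bx)}$, then $f_i^*(\bx)>f_j^*(\bx)$. We prove this by induction on $q$, using the nested structure $\cK^{RS}_{q,j}\subseteq\cK^{RS}_{q-1,j}$.

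Suppose $j$ is removed at epoch $q$ in cube $u(\bx)$ while $i$ survives. Then $i$ and $j$ both belong to $\cK^{RS}_{q-1,u(\bx)}$, so $\bx\in S^{RS}_{q-1,i}\cap S^{RS}_{q-1,j}$. On $\mathcal{G}^{RS}_{q-1}$(ii), both estimates are within $\epsilon^{RS}_{q-1}/2$ of the truth. This uses that $\hat f^{RS}$ is evaluated at $g(\bx)$: the error bound should be read at the lattice point, and the discretisation error $L_1\delta_A$ is absorbed into the $T^{-\beta/(2\beta+d)}$ floor of $\epsilon^{RS}_q$.

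Since $j$ is not $2\epsilon^{RS}_{q-1}$-chained to the maximum, there is a gap of more than $2\epsilon^{RS}_{q-1}$ separating the estimated value of $j$ from the chain containing $i$ and the maximum. Every surviving arm lies above this gap, so $\hat f_i>\hat f_j+2\epsilon^{RS}_{q-1}$. Combining with the $\epsilon^{RS}_{q-1}/2$ accuracy gives, at the lattice point,
\[
f_i^*-f_j^*>\epsilon^{RS}_{q-1} .
\]
Transferring this to every $\bx$ in the cube costs at most $2L_1\delta_A<\epsilon^{RS}_{q-1}$, so the inequality $f_i^*(\bx)>f_j^*(\bx)$ holds throughout the cube. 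Arms removed at earlier epochs are handled by the induction hypothesis, since $i$ survived those epochs as well. Finally, arms that share the active set are selected with equal probability, so they never trigger the premise of \eqref{eq:defoffairness}.

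\textbf{Main obstacle.} The delicate step is ensuring that $\mathcal{G}^{RS}_{q-1}$(ii) really delivers \emph{uniform} accuracy on all of $S^{RS}_{q-1,k}$ despite corruption. This is exactly what Proposition \ref{prop:robustsqkregular} supplies: weak regularity keeps the local polynomial design matrix eigenvalue at least $\lambda_0$. The corruption bias is then at most
\[
\frac{\sqrt{M_\beta}}{\lambda_0}\,C\,(N^{RS}_{q-1,k})^{-2\beta/(2\beta+d)} ,
\]
which the inflated threshold $\epsilon^{RS}_q$ dominates via $\mathcal{M}^{RS}_q$ together with $N_{q,k}\ge \frac{p^*}{4K}|\cT^{RS}_q|$. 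All of this is already encoded in Propositions \ref{prop:robustsqkregular} and \ref{thm:robusttailproboftwoevents}, so here I only need to check that the threshold scaling matches their statement.
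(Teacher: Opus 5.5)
Your proposal is correct and follows essentially the paper's route: you dispose of $Q^{RS}=1$ directly, and for $Q^{RS}>1$ you rerun the proof of Theorem~\ref{thm:smoothbanditfairness} on the event $\overline{\mathcal{G}}^{RS}_{Q^{RS}-1}\cap\overline{\mathcal{M}}^{RS}_{Q^{RS}-1}$, whose complement has probability $\tilde O(1/T)$ by Proposition~\ref{thm:robusttailproboftwoevents} together with $Q^{RS}=O(\log T)$; your explicit gap argument for why a non-chained arm sits more than $2\epsilon^{RS}_{q-1}$ below every surviving arm sharpens the paper's one-line claim. The lattice-point detour and Lipschitz transfer are unnecessary: $\mathcal{G}^{RS}_{q-1}$(ii) already holds at every $\bx\in S^{RS}_{q-1,k}$, and $\hat f^{RS}_{q-1,k}$ is constant on each cube, so $f_i^*(\bx)-f_j^*(\bx)>\epsilon^{RS}_{q-1}$ follows pointwise.
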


\begin{theorem}[Upper bounds for Robust Smooth Contextual Bandit]\label{thm:robustupperbound} 
    Suppose Assumptions \ref{assump:iiddensity}-\ref{assump:Qxliplower} hold and $Q^{RS}>1$. The expected cumulative regret admits the following upper bounds: 
    \begin{align*}
        R_T=\begin{cases}
                \tilde{O}(T^{1-\frac{(1+\alpha)\beta}{2\beta+d}}+C^{1+\alpha}T^{1-\frac{2\beta}{2\beta+d}(1+\alpha)}), \quad\quad &\alpha\beta\leq \frac{d}{2};\\
                \tilde{O}(T^{1-\frac{(1+\alpha)\beta}{2\beta+d}}+C^{\frac{2\beta+d}{2\beta}+[\frac{d}{2\beta}-\alpha]\frac{1}{2\beta'-1}}), \quad\quad &\alpha\beta > \frac{d}{2}.
            \end{cases}
    \end{align*}
\end{theorem}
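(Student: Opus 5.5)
The plan is to follow the standard epoch-wise regret decomposition used for Theorem \ref{thm:minimax}, with the corruption-inflated threshold $\epsilon_q^{RS}$ and epoch lengths $|\mathcal{T}_q^{RS}|$ substituted in. First, split the regret on the good event $\overline{\mathcal{G}}_{Q^{RS}}^{RS}\cap\overline{\mathcal{M}}_{Q^{RS}}^{RS}$ and its complement. By Proposition \ref{thm:robusttailproboftwoevents}, the complement has probability $O(Q^{RS}K M_\beta^2/T)=\tilde O(1/T)$. Since $Q^{RS}=O(\log T)$ and per-round regret is bounded, the complement contributes $\tilde O(1)$. The first epoch is pure random exploration and costs $|\mathcal{T}_1^{RS}|$. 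Because $\gamma_1=C^{\frac{2\beta'}{2\beta'-1}}\vee 4$, this is $\tilde O\bigl(1+C^{\frac{2\beta'}{2\beta'-1}\cdot\frac{2\beta+d}{2\beta}}\bigr)$, and it must be absorbed into the stated bounds.

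On the good event, take epoch $q\ge2$. Event $\mathcal{G}_{q-1}^{RS}$(ii) gives $|\hat f^{RS}_{q-1,k}-f_k^*|\le\epsilon_{q-1}^{RS}/2$ on the supports, and Proposition \ref{prop:robustsqkregular} guarantees these supports are regular. Then the $2\epsilon_{q-1}^{RS}$-chaining rule keeps every optimal arm active. Moreover, any arm $k$ remaining active at $\bx$ satisfies $\max_j f_j^*(\bx)-f_k^*(\bx)\le cK\epsilon_{q-1}^{RS}$: a chain has at most $K$ links, each of length at most $2\epsilon$, plus the estimation error. Hence the per-round regret in epoch $q$ is at most $cK\epsilon_{q-1}^{RS}\,\mathbb{I}(0<\Delta(\bx_t)\le cK\epsilon_{q-1}^{RS})$. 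Using the margin condition (Assumption \ref{assum:margin}), the expected regret of epoch $q$ is bounded by $|\mathcal{T}_q^{RS}|\,(\epsilon_{q-1}^{RS})^{1+\alpha}$ up to constants. Summing over $q$ gives
\[
R_T\lesssim \tilde O(1)+|\mathcal{T}_1^{RS}|+\sum_{q=2}^{Q^{RS}}|\mathcal{T}_q^{RS}|\,(\epsilon_{q-1}^{RS})^{1+\alpha}.
\]

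The main work is to evaluate this sum. I decompose $\epsilon_{q-1}^{RS}\lesssim A_q+B_q$, where $A_q=\bigl(2^{-q}\wedge C^{-\frac{\beta'}{2\beta'-1}}\bigr)\operatorname{polylog}\vee T^{-\frac{\beta}{2\beta+d}}$ is the statistical term and $B_q=C\,|\mathcal{T}_q^{RS}|^{-\frac{2\beta}{2\beta+d}}$ is the corruption bias. Since $(A+B)^{1+\alpha}\lesssim A^{1+\alpha}+B^{1+\alpha}$, the sum splits into two parts.

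For the statistical part, note that $|\mathcal{T}_q^{RS}|\asymp \gamma_q^{\frac{2\beta+d}{2\beta}}$ up to logs, and $\gamma_q^{-1/2}\asymp 2^{-q}\wedge C^{-\frac{\beta'}{2\beta'-1}}$. So each term is roughly $\gamma_q^{\frac{2\beta+d}{2\beta}-\frac{1+\alpha}{2}}$. Two cases arise:
\begin{itemize}
\item In the last epochs, where $|\mathcal{T}_q|\asymp T$, the floor $T^{-\frac{\beta}{2\beta+d}}$ is the active part of $A_q$. This yields $T^{1-\frac{(1+\alpha)\beta}{2\beta+d}}$, as in the uncorrupted analysis.
\item In the plateau epochs, where $\gamma_q=C^{\frac{2\beta'}{2\beta'-1}}$, the plateau is either absorbed by this leading term or contributes a $C$-power.
\end{itemize}

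For the corruption part, each term is $|\mathcal{T}_q|\,C^{1+\alpha}|\mathcal{T}_q|^{-\frac{2\beta(1+\alpha)}{2\beta+d}}=C^{1+\alpha}|\mathcal{T}_q|^{1-\frac{2\beta(1+\alpha)}{2\beta+d}}$. The sign of the exponent $1-\frac{2\beta(1+\alpha)}{2\beta+d}=\frac{d-2\alpha\beta}{2\beta+d}$ decides the case split:
\begin{itemize}
\item If $\alpha\beta\le d/2$, the exponent is nonnegative and the geometric sum is dominated by $|\mathcal{T}_q|\asymp T$. This gives $C^{1+\alpha}T^{1-\frac{2\beta}{2\beta+d}(1+\alpha)}$.
\item If $\alpha\beta>d/2$, the exponent is negative and the sum is dominated by the smallest epoch, $|\mathcal{T}_q|\asymp C^{\frac{2\beta'}{2\beta'-1}\frac{2\beta+d}{2\beta}}$. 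Substituting gives
\[
C^{1+\alpha}\,C^{\frac{2\beta'}{2\beta'-1}\cdot\frac{d-2\alpha\beta}{2\beta}}=C^{\frac{2\beta+d}{2\beta}+[\frac{d}{2\beta}-\alpha]\frac{1}{2\beta'-1}}.
\]
Here I use $1+\alpha+\frac{\beta'}{2\beta'-1}\bigl(\frac{d}{\beta}-2\alpha\bigr)=\frac{2\beta+d}{2\beta}+\bigl(\frac{d}{2\beta}-\alpha\bigr)\frac{1}{2\beta'-1}$.
\end{itemize}
The first-epoch cost $C^{\frac{2\beta'}{2\beta'-1}\frac{2\beta+d}{2\beta}}$ must likewise be dominated. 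When $\alpha\beta>d/2$ it matches the plateau regret of the statistical term, which is of the same order as the above. When $\alpha\beta\le d/2$, I would check it against $C^{1+\alpha}T^{\frac{d-2\alpha\beta}{2\beta+d}}$ using $|\mathcal{T}_1^{RS}|\le T$.

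I expect the main obstacle to be this exponent bookkeeping. The specific points to pin down are:
\begin{itemize}
\item confirming that the plateau and first-epoch terms never exceed the stated bounds, including the polylog exponents involving $\beta'$;
\item keeping the number of plateau epochs at $O(\log T)$;
\item handling the truncation where the final epoch is cut at $T$, so that I use $\sum_q|\mathcal{T}_q|\le T$ rather than the nominal lengths.
\end{itemize}
I would first verify the $\alpha\beta>d/2$ case carefully, since its tightness relies on the exact choice $\gamma_q=C^{\frac{2\beta'}{2\beta'-1}}$.
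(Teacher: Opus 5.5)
Your route is the paper's. You restrict to the good events of Proposition \ref{thm:robusttailproboftwoevents} and bound the regret of epoch $q$ by $|\mathcal{T}_q^{RS}|(\epsilon_{q-1}^{RS})^{1+\alpha}$ via chaining and the margin condition. You then split $\epsilon_{q-1}^{RS}$ into a statistical part and the bias $C|\mathcal{T}^{RS}|^{-\frac{2\beta}{2\beta+d}}$ (the paper's $J_{1,1}$ and $J_{1,2}$), and let the sign of $\frac{d-2\alpha\beta}{2\beta+d}$ decide which epoch dominates. Your exponent identity for $\alpha\beta>d/2$ is correct.

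The step that fails is the one you defer to the end: absorbing epoch $1$. That epoch is uniform exploration. On any instance where uniform play has constant expected per-round regret, it therefore costs order $|\mathcal{T}_1^{RS}|=\tilde\Theta\bigl(C^{\frac{2\beta'}{2\beta'-1}\cdot\frac{2\beta+d}{2\beta}}\bigr)$, whatever the adversary does.

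\begin{itemize}
\item For $\alpha\beta>d/2$ you say this cost ``matches the plateau regret of the statistical term.'' It does not. That regret is $|\mathcal{T}_q^{RS}|A_q^{1+\alpha}$, smaller by the vanishing factor $A_q^{1+\alpha}$, and up to logarithms it is always below $T^{1-\frac{(1+\alpha)\beta}{2\beta+d}}$. Moreover, up to logarithms the stated $C$-term is exactly the plateau bias regret $|\mathcal{T}_1^{RS}|\bigl(C|\mathcal{T}_1^{RS}|^{-\frac{2\beta}{2\beta+d}}\bigr)^{1+\alpha}\asymp|\mathcal{T}_1^{RS}|\,C^{-\frac{1+\alpha}{2\beta'-1}}$. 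So epoch $1$ exceeds it by the polynomial factor $C^{\frac{1+\alpha}{2\beta'-1}}$.
\item For $\alpha\beta\le d/2$ your check via $|\mathcal{T}_1^{RS}|\le T$ also fails. The condition $Q^{RS}>1$ allows $|\mathcal{T}_1^{RS}|\asymp T$, i.e.\ $C\asymp T^{\frac{2\beta}{2\beta+d}\cdot\frac{2\beta'-1}{2\beta'}}$ up to logarithms. The two stated terms are then $T^{1-\frac{(1+\alpha)\beta}{2\beta+d}}$ and $T^{1-\frac{(1+\alpha)\beta}{(2\beta+d)\beta'}}$. Both are polynomially smaller than the order-$T$ cost of epoch $1$.
\end{itemize}

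So this cannot be settled by bookkeeping. Your decomposition proves the displayed rates \emph{plus} $\tilde O\bigl(C^{\frac{2\beta'}{2\beta'-1}\cdot\frac{2\beta+d}{2\beta}}\bigr)$. Without that extra term the statement is false for large admissible $C$. A concrete instance is two linear arms crossing inside $[0,1]$ with uniform contexts; it satisfies Assumptions \ref{assump:iiddensity}--\ref{assum:margin} directly and Assumption \ref{assump:Qxliplower} by Proposition \ref{prop:explainassump3.8}.

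The paper's proof does not handle this either. It sums $|\mathcal{T}_q^{RS}|(\epsilon_{q-1}^{RS})^{1+\alpha}$ starting from $q=1$, implicitly treating the exploration epoch as an exploitation epoch with threshold $\epsilon_0^{RS}$. A correct version must either add the first-epoch term to the bound or restrict $C$ so that this term is dominated by the displayed rates.
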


Theorem \ref{thm:robustsmoothbanditfairness} establishes that our robust smooth bandit algorithm is attack-resistant, which guarantees uniform fairness with probability at least $1-\tilde{O}(1/T)$. Moreover, Theorem \ref{thm:robustupperbound} establishes the \textit{first} regret upper bound for fair smooth contextual bandits under adversarial attacks, exhibiting a multiplicative dependence on the corruption budget $C$. When $\alpha\beta\leq \frac{d}{2}$, while the bound contains a $C^{1+\alpha}$-scaling term $T^{1-\frac{2\beta}{2\beta+d}(1+\alpha)}$, its impact is inherently limited due to the smoothness of reward functions. Even in the worst case when $\alpha=0$, this term simplifies to $T^{d/(2\beta+d)}$, which becomes negligible as the smoothness parameter $\beta$ grows large. The upper bound crucially diminishes sharper margin conditions (large $\alpha$) and smoother reward functions (large $\beta$). Notably, when $ C = O(T^{\frac{\beta}{2\beta+d}}) $, the dominant term becomes the uncorrupted regret $ \tilde{O}(T^{1 - \frac{(1+\alpha)\beta}{2\beta+d}}) $, highlighting that for moderate corruption, the effect of adversarial perturbations can be effectively neutralized by the smoothness of the reward functions.
When $\alpha\beta > \frac{d}{2}$, the amplification effect of $C$ on $T$ is mitigated by the improved margin condition, yet the exponent of $C$ remains greater than 1. Specifically, the exponent is bounded by $\frac{2\beta+d}{2\beta}$, demonstrating that smoother reward functions (larger $\beta$) effectively constrain the adversarial impact within a manageable range.

To fully characterize the fundamental limits of achieving robustness alongside fairness, we establish the minimax lower bounds for smooth contextual bandits. This analysis addresses a significant gap in the literature: while $C$-budgeted corrupted lower bounds for smooth functions have been explored in the framework of Bayesian Optimization \citep{cai2021lower}, their characterization in the contextual bandit setting, especially under the constraints of meritocratic fairness, remains largely unexplored. We develop novel analytical techniques to construct adversarial instances that simultaneously satisfy fairness requirements while maximizing estimation difficulty.

\begin{theorem}[Lower bounds for Smooth Contextual Bandit Under Corruption]\label{thm:robustlowerbound} 
Fix positive parameters $\alpha$, $\beta$ with $\alpha \beta< d$. Let $\Pi$ denote all admissible policies that admits $(1-\tilde{O}(\frac{1}{T}))$-fairness. For the problem class $\mathcal{P}$ satisfying Assumptions \ref{assump:iiddensity}-\ref{assump:Qxliplower}, we have
\begin{align*}
     \inf_{\pi\in\Pi}\sup_{\mathbb{P} \in \mathcal{P}}R_T(\pi) =\Omega \left( T^{\frac{\beta+d-\alpha \beta}{2 \beta+d}}+C^{\frac{\alpha\beta+\beta}{\beta+d}}T^{\frac{d-\alpha\beta}{\beta+d}}\right).
\end{align*}
\end{theorem}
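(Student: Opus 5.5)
The lower bound splits into two parts, and I will prove each separately; the maximum of two lower bounds is of the order of their sum.

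\textbf{The $T^{\frac{\beta+d-\alpha\beta}{2\beta+d}}$ term.} This is the standard uncorrupted minimax rate for smooth contextual bandits under the margin condition with $\alpha\beta<d$. Setting $C=0$ leaves a class of fair policies no larger than the class of all admissible policies. So any lower bound for admissible policies (e.g.\ the construction of Rigollet--Zeevi/Perchet--Rigollet as used in \cite{hu2022smooth}) transfers directly. The one check needed is that the hard instances satisfy Assumptions \ref{assump:iiddensity}--\ref{assump:Qxliplower}. For this I will use two arms, $f_1\equiv 1/2$ and $f_2 = 1/2 + \sum_j \sigma_j \phi_j$, where the $\phi_j$ are disjoint smooth bumps. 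Two-arm instances with nondegenerate gradients at the boundaries can be verified through Proposition \ref{prop:explainassump3.8}, perhaps after adding a small separated third region to meet the regularity of $\cR_k$.

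\textbf{The corruption term $C^{\frac{\alpha\beta+\beta}{\beta+d}}T^{\frac{d-\alpha\beta}{\beta+d}}$.} I will build a packing of indistinguishable instances in the spirit of \cite{cai2021lower}. Partition $\cX$ into $M=r^{-d}$ cubes of side $r$. Take $m\asymp r^{\alpha\beta-d}$ of them (chosen so that the margin condition holds with exponent $\alpha$), and place in each a bump $\pm r^{\beta}\phi((\bx-c_j)/r)$ on arm 2 relative to a constant arm 1. In each instance the bump signs are fixed, say $+$. The adversary erases the bump, i.e.\ sets $c_t=-(f_2^*(\bx_t)-1/2)$, whenever arm 2 is pulled inside a bump cube. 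Its cost is about $r^\beta$ per such pull.

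Any policy that does not pull arm 2 in bump cube $j$ pays regret about $r^\beta$ per visit, or about $T r^{d}\, r^{\beta}$ in total over the horizon. Pulling arm 2 there instead forces the adversary to spend budget. Choose $r$ so that the budget suffices to mask all $m$ bumps for the number of pulls a learner needs in order to distinguish them. Equating $C \asymp m\cdot(\text{pulls})\cdot r^\beta$ with the regret tradeoff gives $r\asymp (C/T)^{1/(\beta+d)}$. Substituting, the regret $T\cdot m\cdot r^{d}\cdot r^{\beta}=T r^{\alpha\beta+\beta}$ equals $C^{\frac{\alpha\beta+\beta}{\beta+d}}T^{\frac{d-\alpha\beta}{\beta+d}}$, which is the target.

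The argument runs as follows. While the budget lasts, the observed rewards are identically distributed under the ``bump'' and ``no bump'' instances. Any policy that pulls arm 2 rarely in the bump cubes therefore incurs the regret above. A policy that pulls arm 2 often either exhausts $C$, after which the deficit gets counted, or loses regret under the mirrored instance with sign $-$. I will formalize this by comparing the two instances via a KL-divergence/Le Cam two-point argument on the bump cubes, where the KL divergence is zero until the budget runs out.

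\textbf{Fairness and the main obstacle.} Fairness must also be accounted for: the infimum runs only over $(1-\tilde O(1/T))$-fair policies, which is a restriction, so the lower bound still holds. The instance construction, however, must not make the fair class empty. Here I will check that uniform random play is fair and that the instances satisfy Assumption \ref{assump:Qxliplower}. I expect the main obstacle to be this verification, specifically the interaction between the bump construction, whose small boundaries lie near ties, and Assumption \ref{assump:Qxliplower}(i) together with the regularity of the optimal regions. My first attempt will choose bumps with steep gradients at the level set $\{\phi=0\}$, so that the gradient criterion of Proposition \ref{prop:explainassump3.8} applies locally. I will also arrange that the threshold clause in Assumption \ref{assump:Qxliplower} (the regime where gaps are at most $T^{-\beta/(2\beta+d)}+\cdots$) covers the remaining cases.
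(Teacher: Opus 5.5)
\textbf{First term.} Your treatment matches the paper. It invokes the lower bound of \cite{hu2022smooth} through Proposition \ref{thm:thm3inhusmooth}. Assumption \ref{assump:Qxliplower} then holds vacuously, because every gap is at most $C_\phi T^{-\beta/(2\beta+d)}$, which is below the threshold clause. No gradient criterion and no extra region are needed.

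\textbf{Corruption term: the argument never uses fairness and gives only $\Omega(C)$.} In your scheme the adversary pays about $r^\beta$ for every arm-2 pull in every masked cube. Hiding all $m$ bumps for the whole horizon therefore costs about $Tmr^{d}r^{\beta}=Tr^{\alpha\beta+\beta}$, which is exactly the regret you want to extract. Conversely, a policy that pulls arm 2 until the budget is gone loses only about $(C/r^\beta)\cdot r^\beta=C$ under the mirrored sign. So your Le Cam dichotomy yields $\Omega(\min\{C,Tr^{\alpha\beta+\beta}\})$, which is at most of order $C$. That falls short of $C^{\frac{\alpha\beta+\beta}{\beta+d}}T^{\frac{d-\alpha\beta}{\beta+d}}$ by the factor $(T/C)^{\frac{d-\alpha\beta}{\beta+d}}\asymp m$.

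Your scaling $r\asymp (C/T)^{1/(\beta+d)}$ comes from $C\asymp Tr^{d}r^{\beta}$, the cost of masking a \emph{single} cube. Yet you count the regret $Tmr^{d}r^{\beta}$ over all $m$ cubes. This bookkeeping closes only if something other than the adversary generates regret in the other $m-1$ cubes.

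\textbf{The missing mechanism is fairness, used actively.} Treating it as a mere restriction on the infimum does not suffice. The paper's construction works as follows.
\begin{itemize}
\item Arm 2 is $\equiv 1/2$, and in instance $i$ arm 1 is $1/2+\sum_{j\neq i}\varphi_j$. Arm 1 weakly dominates everywhere, so the unconstrained policy ``always pull arm 1'' has zero regret on these instances. The bound is genuinely a price of fairness.
\item The two arms are exactly tied on cube $\mathcal{X}_i$.
\item The adversary only restores the single missing bump $\varphi_i$ on arm-1 pulls. This costs about $Tq^{-d}q^{-\beta}\le C/2$ on average over instances, so at least half the instances produce identically distributed filtrations.
\item Definition \ref{def:fairness} forces $\PP(\pi_t=1\mid \bx_t=\bx,\cF_{t-1})=\PP(\pi_t=2\mid \bx_t=\bx,\cF_{t-1})$ on the tied cube $\mathcal{X}_l$ in instance $l$.
\item Indistinguishability transfers this equality to every cube $\mathcal{X}_i$, $i\in\mathcal{I}$, under every instance in $\mathcal{I}$.
\item Under instance $1$, the policy therefore pulls the strictly worse arm with probability $1/2$ on about $m/2$ cubes. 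The regret is $\asymp Tm\omega q^{-\beta}\asymp mC$, which is the target rate.
\end{itemize}

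This mechanism needs exact ties. Your $\pm$ sign packing has no tied cubes, so invoking fairness there would force nothing; the construction itself has to change, not just the accounting.
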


In Theorem~\ref{thm:robustlowerbound}, our lower bounds reveal a fundamental shift in the cost of robustness. Unlike linear bandits, where corruption typically adds a separate $O(C)$ term to the regret, we prove that for smooth functions, $C$ and $T$ are inevitably coupled. This finding answers a key question: the standard additive regret is theoretically impossible. In smooth, fair settings, the adversary's budget has an amplified impact: the same level of corruption $C$ inflicts significantly greater long-term regret compared to simpler environments like linear rewards, as its per-unit harm scales with $T$.

In the regime $\alpha\beta \leq d/2$ where performance is governed by the problem's intrinsic complexity rather than a strong margin condition, the minimax-optimality guarantee holds when the corruption budget satisfies $C = O(T^{\frac{\beta}{2\beta+d}})$. While a corresponding threshold exists for the margin-dominant regime ($\alpha\beta > d/2$), we focus on the former as it is independent of external margin conditions. Notably, in both regimes, beyond optimality thresholds, the gap vanishes as the smoothness parameter $\beta \rightarrow \infty$, demonstrating that stronger smoothness mitigates the impact of corruption on regret.

\section{Numerical Experiments}\label{sec_numerical}

In this section, we present a series of numerical experiments to evaluate the practical performance of our proposed fair and robust contextual bandit algorithms against other benchmarks. First, we compare our fair algorithms against standard baselines under typical stochastic conditions (Section \ref{subsec:stochastic}). Second, we examine how these algorithms defend adversarial reward manipulations (Section \ref{subsec:adversarial}). Finally, we apply our algorithms to a real-world wine brokerage scenario to assess its performance in practical applications (Section \ref{subsec:realworldexp}). Detailed configuration of all the experiments and more experimental results can be found in the supplement.

\subsection{Verifying Fairness and Regret in the Stochastic Setting}\label{subsec:stochastic}

The primary objectives of this experiment are twofold: First, to validate that the proposed algorithm achieves regret comparable to existing minimax-optimal contextual bandit methods; Second, to demonstrate its effectiveness in reducing unfair decisions while maintaining competitive performance. To quantify fairness in practice, we record the cumulative count of unfair decisions $\cU_t$, providing a direct measure of system performance over the horizon.

\subsubsection{Linear Setting}\label{subsubsec:experimentlinear}

We compare the proposed Fair OLS algorithm with several existing methods: (1) a standard greedy algorithm \citep{bastani2021mostly}, (2) a UCB-style algorithm \citep{abbasi2011improved}, (3) OLS bandit algorithm \citep{goldenshluger2013linear}, and (4) a random baseline for benchmarking, which is perfectly fair but does not learn.

Consider a linear contextual bandit problem with $K=10$ arms and context dimension $d=10$. Contexts are drawn uniformly from $\mathcal{X} = [-1,1]^d$ at each round. The true reward function for arm $k$ given context $\bx \in \mathcal{X}$ follows a linear model with arm-specific structure:
\[
y_k(\bx) = f^*_k(\bx)+\varepsilon =w_k^\mathrm{T} \bx + b_k + \varepsilon,
\]
where $\varepsilon \sim \mathcal{N}(0, \sigma^2)$ with $\sigma=0.05$ represents stochastic noise. The weight vectors $w_k \in \mathbb{R}^d$ are constructed to create local advantage patterns. 
Specifically, the weight matrix \( W \in \mathbb{R}^{K \times d} \) is defined cyclically: $ W_{k,j} = 2 \cdot \mathbb{I}(j \equiv k) + 1 \cdot \mathbb{I}(j \equiv k+1) - 1 \cdot \mathbb{I}(j \equiv k-1),$ where the equivalence \( \equiv \) is taken modulo \( d \). Therefore, each arm is strongest in its corresponding dimension while being directly influenced by its immediate cyclical neighbors.
The bias terms $b_k = 0.5\sin(2\pi k/K)$ introduce variations across arms.

\begin{figure}[ht]
\FIGURE{
\begin{tabular}{cc}
    \subfigure[Linear Setting]{\includegraphics[width=0.8\textwidth]{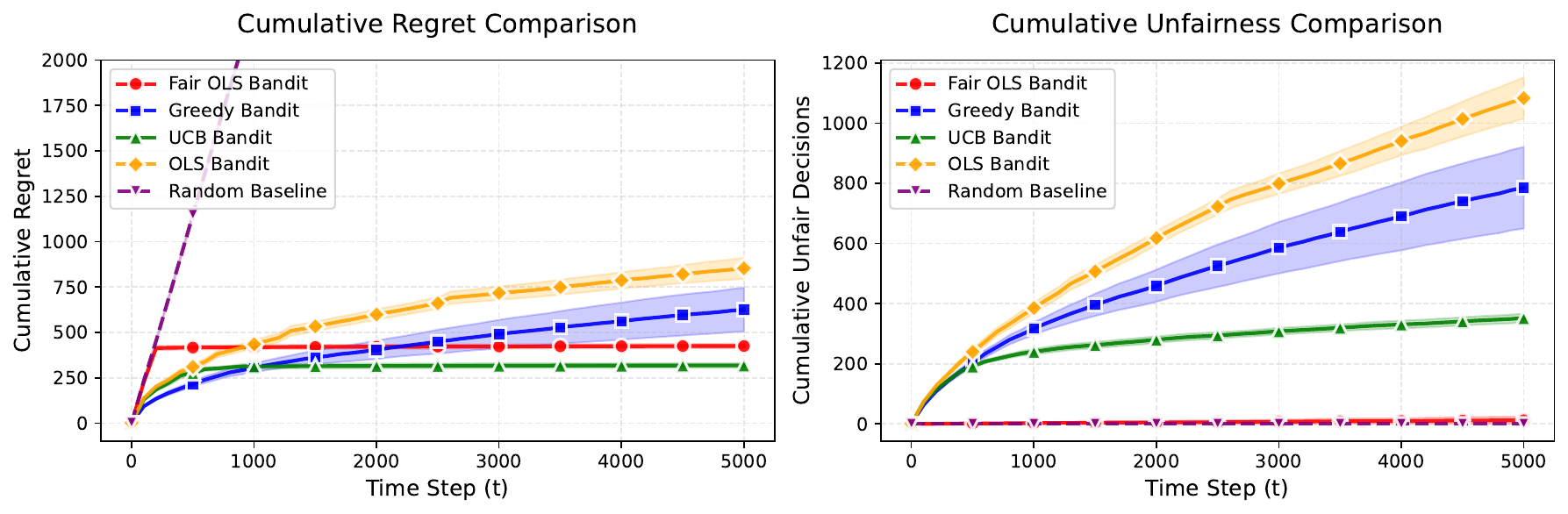}}\\
\subfigure[Smooth Setting]{\includegraphics[width=0.8\textwidth]{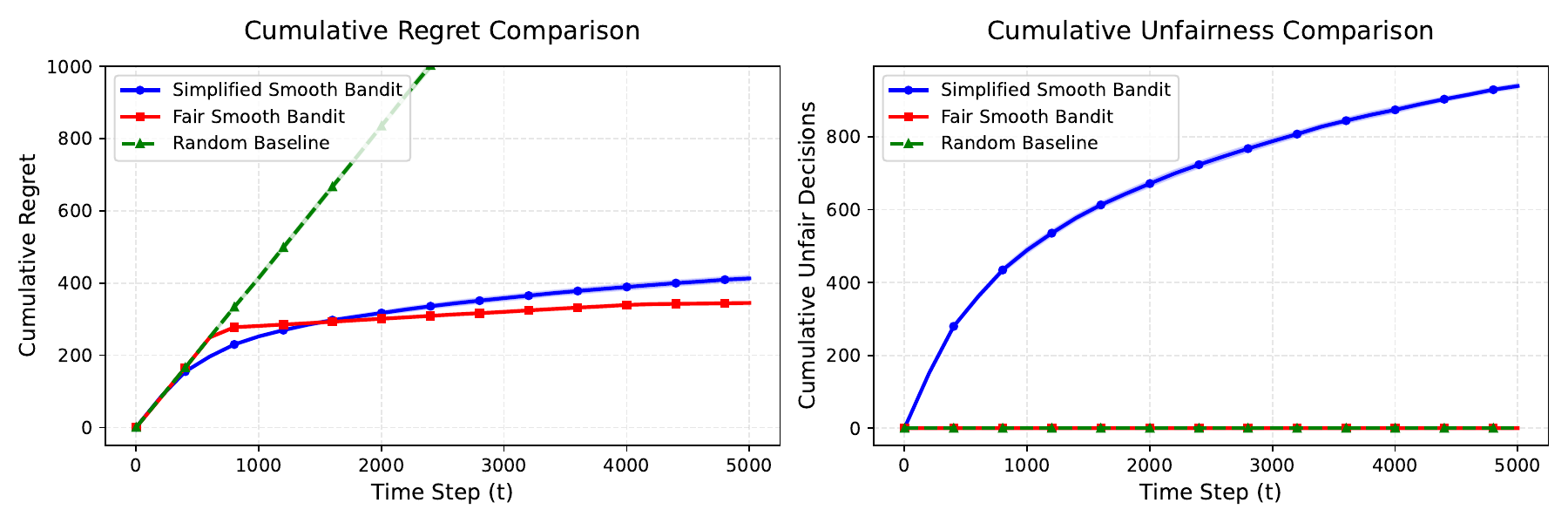}}
\end{tabular}
}
{Performance comparison of linear and smooth contextual bandit algorithms in the stochastic setting. Left: Cumulative regret over time. Right: Cumulative unfair decisions over time.  \label{fig:linear-algorithms-comparison} } 
{Note: Lines show mean values from 10 independent runs, with shaded areas representing 95\% confidence intervals.
}
\end{figure}

We conducted 10 independent runs with a time horizon of \( T = 5000 \). Results are presented in Figure \ref{fig:linear-algorithms-comparison} (a) with mean values accompanied by 95\% confidence intervals. From the left regret comparison plot in Figure \ref{fig:linear-algorithms-comparison} (a), we observe that our Fair OLS algorithm achieves a cumulative regret comparable to other minimax-optimal baselines. This empirically validates that the proposed fairness mechanism does not lead to a significant degradation in learning efficiency, despite the theoretical introduction of an additional logarithmic factor. More importantly, the right plot tracking the cumulative number of unfair decisions reveals a stark contrast: our algorithm effectively eliminates unfair decisions over time, while the benchmark algorithms, despite varying in severity, consistently exhibit substantial unfairness throughout the learning process.

\subsubsection{Smooth Setting}\label{subsubsec:smoothsetting}

To evaluate performance in the non-parametric setting, we compare our smooth fair algorithm with the original smooth bandit baseline \citep{hu2022smooth}. Since the exact implementation of inestimable-region detection places a heavy computational burden, we directly follow the numerical experiment of \cite{hu2022smooth} and adopt their simplified smooth bandit algorithm for empirical comparison, which is a practical and computationally efficient UCB-style variant shown to achieve strong empirical performance. As a baseline, we also include a random policy.

We consider a contextual bandit problem with \( K = 4 \) arms and context dimension \( d = 2 \). Contexts are sampled uniformly from the space \( \mathcal{X} = [-1, 1]^d \) in each round. The true reward function for arm \( k \) given context \( \bm{x} \in \mathcal{X} \) is defined as:
\[
y_k(\bm{x}) = f_k^*(\bm{x}) + \varepsilon =  \exp\left( -\| \bx - \bm{\mu}_k \|_2^2 \right) + \varepsilon,
\]
where \( \varepsilon \sim \mathcal{N}(0, \sigma^2) \) with \( \sigma = 0.05 \) represents stochastic noise, and the set of arm centers is $\bm{\mu}_k \in \{(0.5, 0.5), (-0.5, 0.5), (-0.5, -0.5), (0.5, -0.5)\}$ for $k=0, 1, 2, 3$. The reward structure introduces sufficient complexity to challenge the learning algorithms while satisfying the smoothness requirements (\(\beta\)-Hölder continuity) necessary for our theoretical analysis. The smoothness parameter is set to \( \beta = 5 \). 

Consistent with the linear setting, we perform 10 independent runs with a time horizon of \( T = 5000 \). Results are presented in Figure \ref{fig:linear-algorithms-comparison} (b). The regret plot shows our fair smooth algorithm maintains a competitive performance, while the cumulative unfairness plot confirms it substantially reduces unfair decisions. 
Note that to demonstrate algorithmic efficacy within a finite horizon \( T \), we apply a simplified version of the epoch schedule (detailed in the supplement), which modifies only the constant factors while preserving the original asymptotic order; this may result in a small performance gap relative to the theoretical bound. The same simplified schedule is employed in subsequent experiments.

\subsection{Robustness Under Adversarial Reward Perturbations}\label{subsec:adversarial}

\subsubsection{Adversarial Linear Setting}

In this experiment, we assess the robustness of the proposed algorithms against adversarial observations, which allows an adversary to tamper with the observed rewards of specific arms using a finite total budget $C$. We use the same linear contextual bandit instance as in Section \ref{subsubsec:experimentlinear} ($K=10$, $d=10$). The adversarial mechanism is implemented such that five arms out of ten, designated as the vulnerable arms, are susceptible to attack. The adversary is granted a budget $C = 200$ to execute attacks by manipulating the expected reward for a vulnerable arm $k$ to a misleading low value $-4$. The manipulation only succeeds if the remaining budget is sufficient. We perform 10 independent runs with a time horizon $T=10,000$. The results are summarized in Figure \ref{fig:linear-adversarial-comparison} (a).

\begin{figure}[ht]
\FIGURE{
\begin{tabular}{cc}
    \subfigure[Linear Setting]{\includegraphics[width=0.8\textwidth]{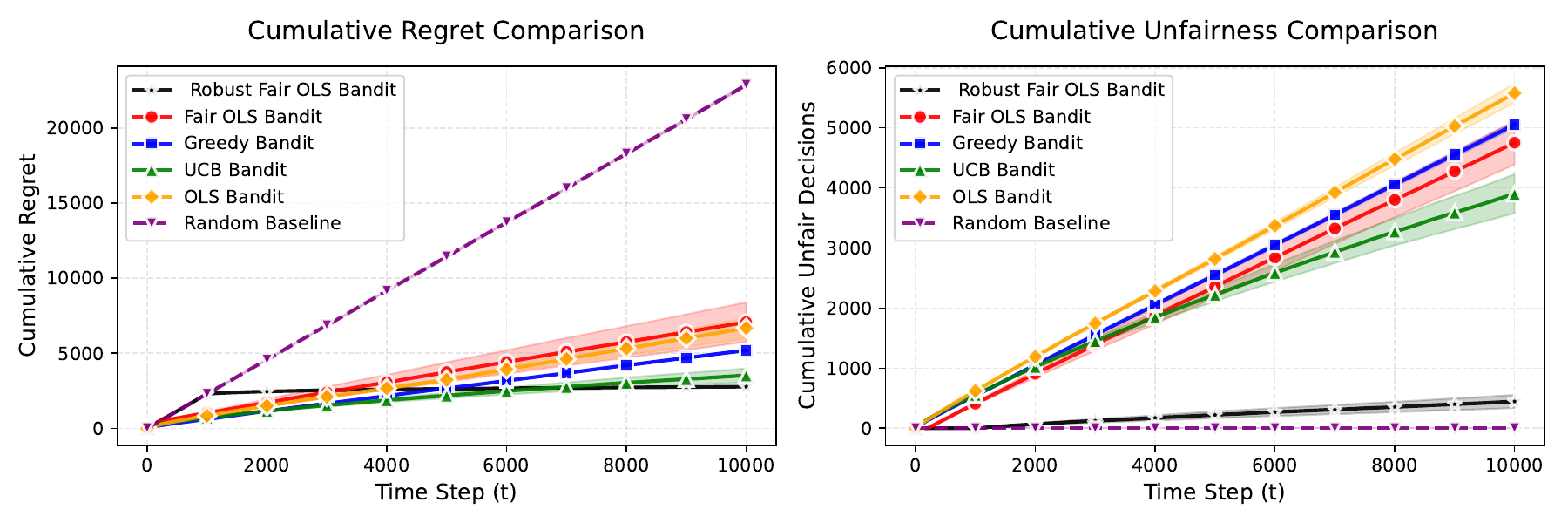}}\\
\subfigure[Smooth Setting]{\includegraphics[width=0.8\textwidth]{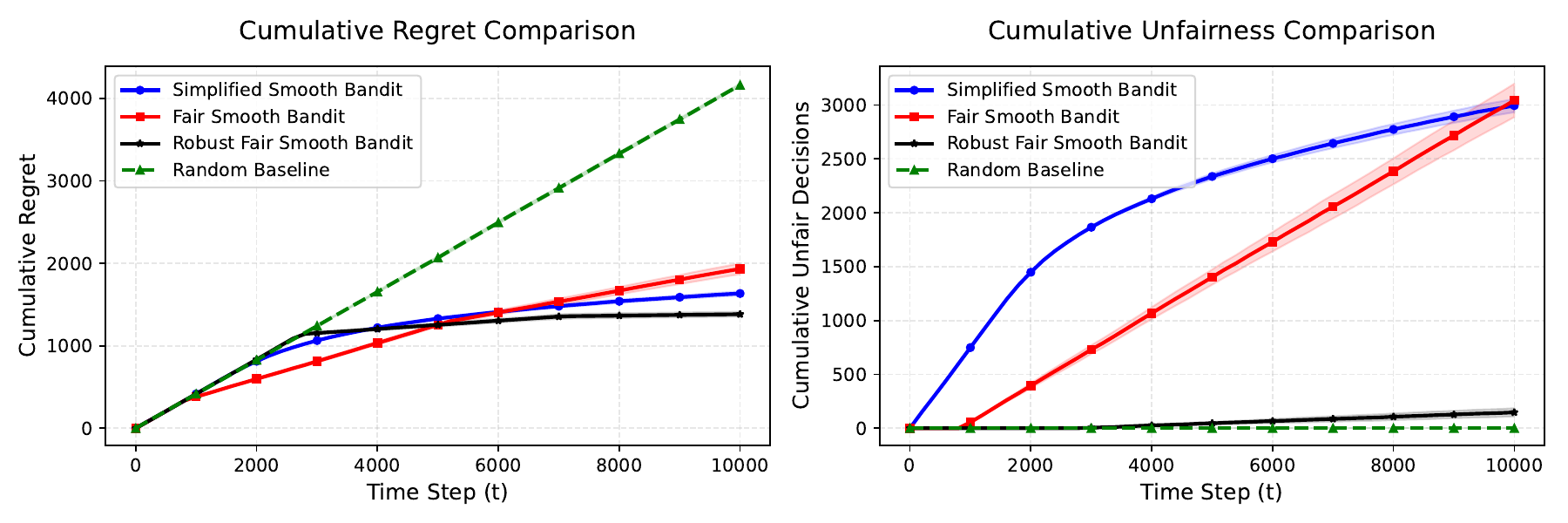}}
\end{tabular}
}
{Performance comparison of linear and smooth contextual bandit algorithms in the adversarial setting. Left: Cumulative regret over time. Right: Cumulative unfair decisions over time.  \label{fig:linear-adversarial-comparison} } 
{Note: Lines show mean values from 10 independent runs, with shaded areas representing 95\% confidence intervals. 
}
\end{figure}

The experimental outcomes highlight the performance divergence between robust and non-robust policies under adversarial influence. Specifically, the non-robust algorithms exhibit a near-linear and rapid growth trend in both cumulative regret and cumulative unfair decisions throughout the time horizon. Notably, the Fair OLS algorithm performs, under attack, nearly as poorly as those that are inherently unfair. In contrast, our proposed Robust Fair OLS algorithm demonstrates strong resilience against the attack. Its cumulative regret quickly transitions to a slow growth rate, indicating effective mitigation of the reward corruption. Furthermore, its cumulative unfairness count grows at a markedly slower rate than its competitors, confirming that the fairness mechanism successfully operates even when the observed reward is attacked.

\subsubsection{Adversarial Smooth Setting}

In this experiment, we examine the impact of adversarial attacks in the complex non-parametric regime. We utilize the same underlying smooth contextual bandit problem as in Section \ref{subsubsec:smoothsetting}. The adversarial mechanism is configured similar to the linear setting: two arms out of four are targeted for corruption, with the attack attempting to push the expected reward to a misleading low value of $-0.1$, while constrained by the budget $C=200$.  Following the same setup as previous experiments, we conduct 10 independent runs over a time horizon of $T=10,000$.

As shown in Figure \ref{fig:linear-adversarial-comparison} (b), for the fair smooth algorithm, we observe an approximately linear growth in both regret and unfair events, aligning with the theoretical analysis presented in Section \ref{sec:effectiveattack}; the simplified smooth bandit algorithm also degrades in both regret and fairness under adversarial conditions. 
In contrast, the robust fair smooth algorithm not only achieves the smallest regret trend but also effectively controls the number of unfair events. This validates the joint effectiveness and robustness of our proposed approach.

\subsection{Real-World Validation}\label{subsec:realworldexp}

\textbf{Dataset and Platform Design.} To further validate the practical applicability of our proposed robust fair algorithms, we conduct a real-world experiment using the Wine Quality Dataset from the UCI Machine Learning Repository \citep{wine_quality_186}. This dataset comprises 6,497 wine samples, each described by 11 physicochemical features (e.g., acidity, sugar, pH, alcohol content) and a sensory quality rating (on a scale of 0–10) as the response variable. The feature set and realistic quality assessments provide a solid basis for simulating a wine brokerage platform. In our constructed scenario, a central system sequentially recommends wine agents (arms) to suppliers (users) based on the physicochemical features of each wine (contexts), aiming to maximize cumulative profit while ensuring fair exposure among agents. We consider three wine agents, each specializing in a distinct market segment: Agent 1 (Economy Agent) sells economic wines, Agent 2 (Mid-range Agent) sells mid-tier wines, and Agent 3 (Premium Agent) sells high-end wines. The context vector \(\bx_t\) corresponds to the normalized physicochemical features of a wine sample at round \(t\). 

\textbf{Reward Structure. } The reward functions are designed to reflect the intrinsic economic logic of each agent’s market niche: the premium agent profits most from high-quality wines, the economy agent from lower-quality wines, and the mid-range agent from wines of intermediate quality. Specifically, the reward functions are defined as follows:
\[
\begin{aligned}
y_1(q) = \frac{2}{1 + \exp(-(q - 6))}, 
y_3(q) = \frac{2}{1 + \exp((q - 6))},
y_2(q) = y_1(q)\times y_3(q).
\end{aligned}
\]
These functions collectively create a structured competitive landscape where each arm is optimal in a distinct quality region, thereby presenting a meaningful test for merit-based fairness.

\textbf{Feature Processing. }
For linear contextual bandit algorithms, we use the original 11-dimensional features after standard normalization. However, smooth bandit algorithms rely on local polynomial regression, which is computationally expensive when the dimension is high due to the curse of dimensionality. To mitigate this, we first reduce the dimensionality to 3 using a neural network encoder. This encoder, which is a four-layer fully-connected network with ReLU activations, is trained as an encoder to preserve essential information from the original features. Thus, linear algorithms use the full 11-dimensional features, while smooth algorithms operate on the compressed 3-dimensional representations. To account for natural variations in the data, we run each algorithm over 10 random permutations of the wine dataset.

\textbf{Fairness Threshold in Real Data. }
Different with synthetic dataset, we adopt a practical fairness threshold: an unfair event is flagged only if a candidate arm’s observed reward is at least 0.01 lower than other arms for the given context. This tolerance, negligible relative to the reward scale, accommodates the natural variability in human expert scores, thus preventing measurement noise from being misattributed as algorithmic bias. 

\subsubsection{Benign Marketplace} In this experiment, we evaluate how well our algorithms balance fairness and efficiency in a stable, non-adversarial marketplace. As shown in the left panel of Figure \ref{fig:realdatalinear}, the regret curves of our methods closely track those of the corresponding baselines, confirming that the fairness mechanisms do not cause extra profit loss for this platform, which is consistent with our theoretical expectations. More importantly, the right panel plots the cumulative unfair decisions: while the baseline algorithms accumulate a substantial number of unfair choices throughout the horizon, our fair variants sharply reduce such events. This reduction translates directly into a more equitable marketplace: wine agents are far less likely to receive undeservedly low exposure. By enforcing merit‑based exposure, our algorithms help maintain a healthy market environment where various kinds of wines reliably reach suitable agents, thereby supporting long‑term platform sustainability and agents trust. 

Operationally, this means good agents are less likely to be buried. These findings show platform operators that fairness can coexist with profit, and can even sustain ecosystem vitality. When both suppliers and agents trust the matching process, they stay and invest, enriching platform diversity and long-term resilience.

\begin{figure}[ht]
\FIGURE{
\begin{tabular}{cc}
    \subfigure[Linear Setting]{\includegraphics[width=0.8\textwidth]{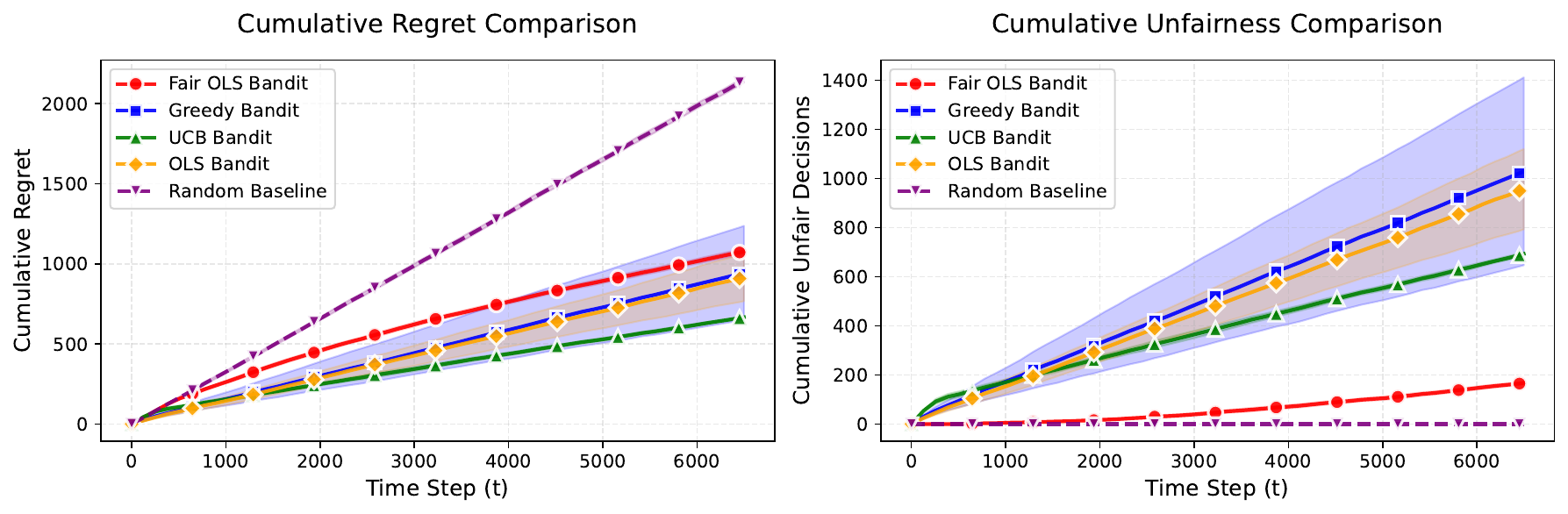}}\\
\subfigure[Smooth Setting]{\includegraphics[width=0.8\textwidth]{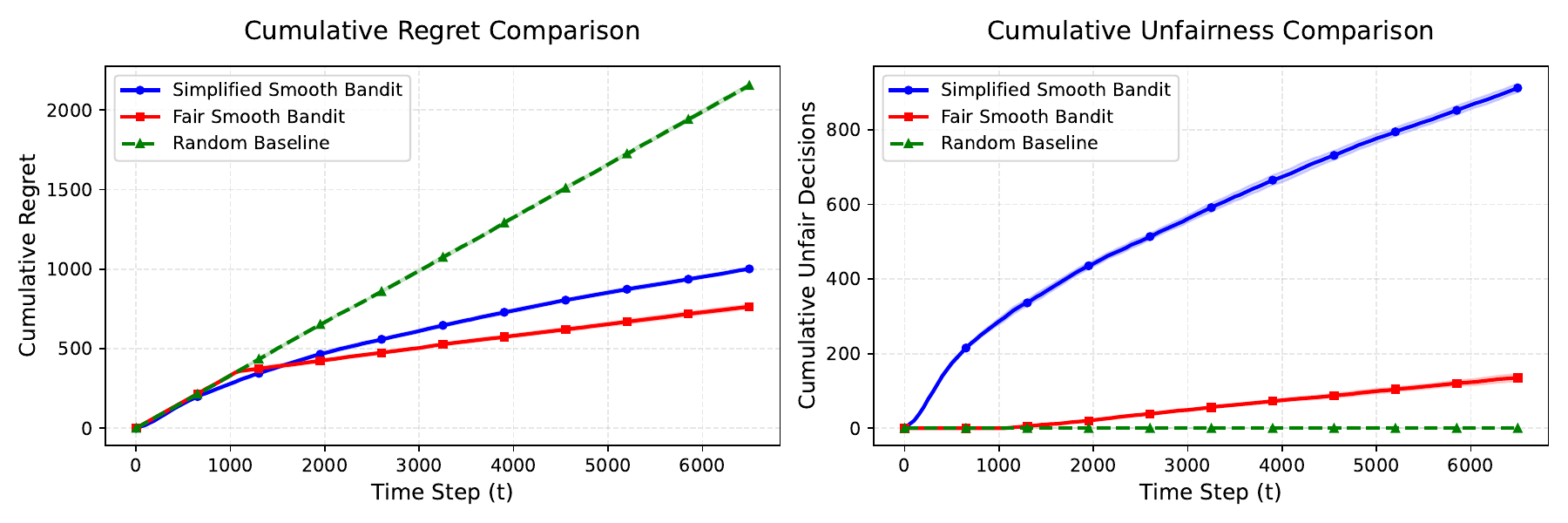}}
\end{tabular}
}
{Performance comparison of linear and smooth contextual bandit algorithms on the wine brokerage platform. Left: Cumulative regret over time. Right: Cumulative unfair decisions over time.  Note: An unfair decision occurs when a candidate agent yields at least 0.01 lower observed profit than other agents to compensate for observation randomness. Lines show mean values from 10 independent runs, with shaded areas representing 95\% confidence intervals.\label{fig:realdatalinear} } 
{}
\end{figure}

\subsubsection{Adversarial Marketplace}

Beyond the benign setting, real-world platforms often operate in competitive environments. In a marketplace where multiple agents specialize in different quality segments, there exists an inherent incentive for participants to influence exposure outcomes. For instance, a mid‑range agent might benefit if both high‑end and economy offerings appear less attractive. Similarly, competing platforms may also seek to attack the brokerage system by systematically distorting reward signals. These scenarios reflect realistic threats.

To simulate this adversarial marketplace, we consider a scenario where agents attempt to monopolize certain market segments. Specifically, we model an attack that targets the high-end and economy agents: an adversary strategically corrupts their observed rewards to zero during the learning process, with the goal of making the mid‑range agent appear dominant across all quality segments. This mirrors real‑world attacks where a player might artificially suppress competitors' performance metrics to gain unfair advantage. Our experiments examine whether the proposed robust fair algorithms can withstand such manipulation while maintaining both fairness and efficiency.

\begin{figure}[ht]
\FIGURE{
\begin{tabular}{cc}
    \subfigure[Linear Setting]{\includegraphics[width=0.8\textwidth]{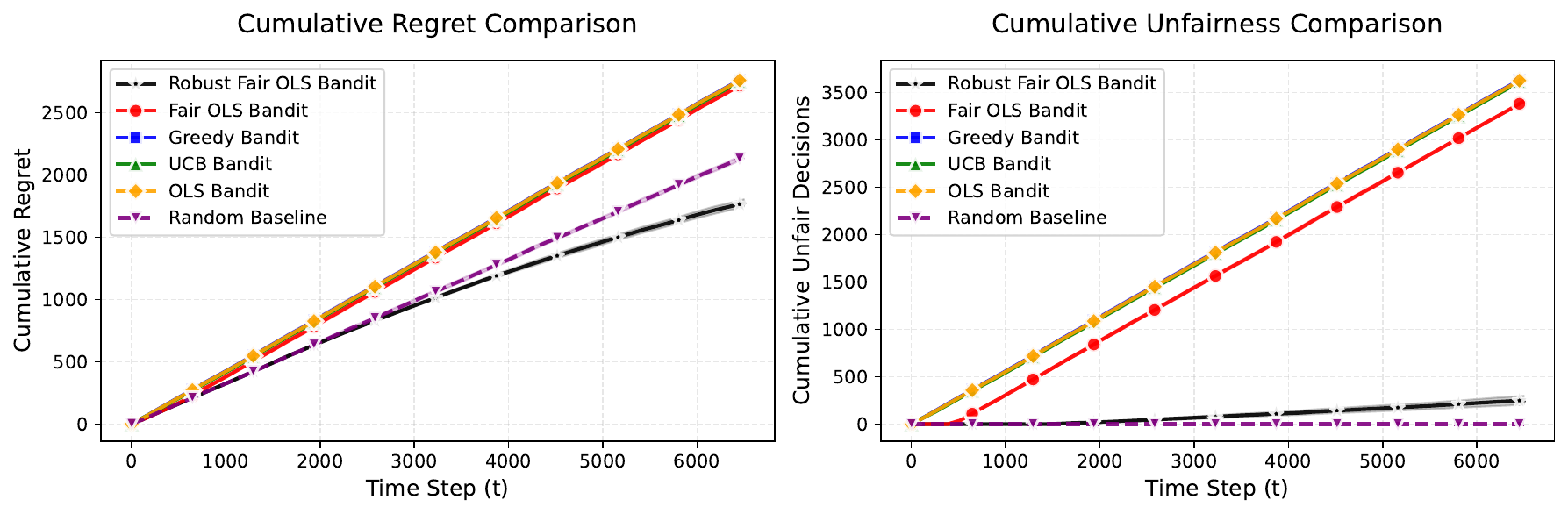}}\\
\subfigure[Smooth Setting]{\includegraphics[width=0.8\textwidth]{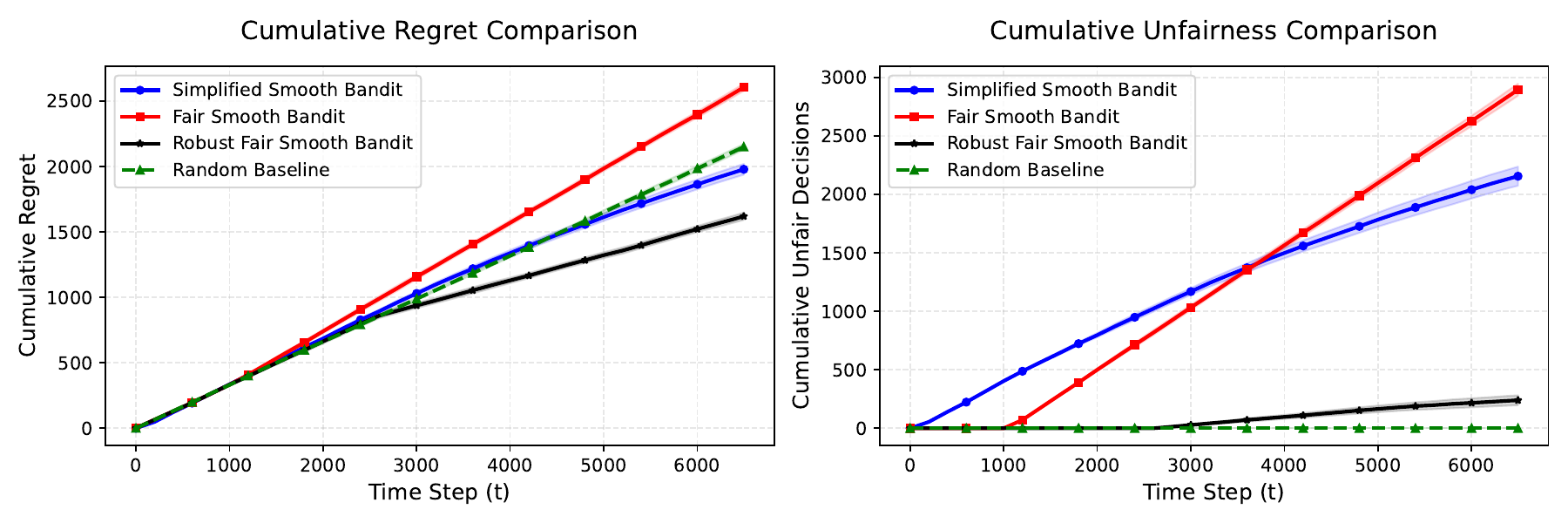}}
\end{tabular}
}
{Performance comparison of linear and smooth contextual bandit algorithms on the wine brokerage platform under attack. Left: Cumulative regret over time. Right: Cumulative unfair decisions over time. Note: An unfair decision occurs when a candidate agent yields at least 0.01 lower observed profit than other agents to compensate for observation randomness. Lines show mean values from 10 independent runs, with shaded areas representing 95\% confidence intervals. \label{fig:realdatalinearcorrupt} } 
{
}
\end{figure}

The left panel of Figure \ref{fig:realdatalinearcorrupt} shows cumulative regret under attack for the linear and smooth algorithms, respectively. Only our robust fair algorithms maintain sub‑linear regret growth. In contrast, all other algorithms display near‑linear regret, with some exceeding even that of a random policy. The right panel tracks cumulative unfair decisions under the same attack. Under corruption, the unfairness curves of fair but non‑robust algorithms rise sharply, compared with the benign case. Our robust algorithms, however, keep unfairness close to the levels observed without attack. This pattern reveals a concrete dual failure in methods lacking robustness: they not only fail to learn efficiently (linear regret), but also systematically distort exposure (linear growth in unfairness). In this practical scenario, robustness mechanisms are necessary to preserve learning efficiency and exposure fairness when rewards are subject to manipulation.

\section{Conclusions and Discussion}\label{sec_conclusion}

In this work, we develop the first framework for attack-resistant uniform fairness in contextual bandits, progressing from foundational algorithm design to the discovery of critical vulnerabilities and the establishment of robust governance. We started with proposing novel algorithms for both linear and smooth (non-parametric) reward settings, integrating arm elimination with confidence-bound chaining techniques to simultaneously guarantee $(1-\widetilde{O}(1/T))$-uniform fairness and near-optimal regret bounds. We then expose the fragility of fair systems by proving that a negligible $\tilde{O}(1)$ reward corruption can be strategically leveraged to induce persistent unfairness, which can either occur silently or lead to a total system collapse with linear regret.  To address this, we design robust variants that protect the fairness guarantees
against adversarial attacks, and provide the first complete minimax analysis of regret under corruption. Notably, we achieve tight regret bounds of $O(\log^2 T + C)$ in linear settings. In the smooth regime, our algorithm retains near-minimax optimality for corruption levels up to \( C = O(T^{\frac{\beta}{2\beta+d}}) \) when external margin conditions are mild (i.e., in the general case \( \alpha\beta \leq d/2 \)). A similarly structured optimality threshold dependent on the margin exponent \( \alpha \) also holds for the margin-dominant regime (\( \alpha\beta > d/2 \)).

We conclude by highlighting limitations and future directions. First, while our smooth bandit algorithm requires $C = O(T^{\frac{\beta}{2\beta+d}})$ for minimax optimal regret, relaxing this constraint remains open. Second, extending our item-level fairness guarantees to hierarchical fairness constraints (e.g., group-item compositions) would enhance practical applicability. Third, our corruption model assumes static budgets; designing defenses against adaptive adversaries with strategic budget allocation warrants investigation. Finally, the computational complexity of non-parametric robust estimation motivates developing approximation techniques for large-scale deployment. These directions will bridge theoretical guarantees with real-world adversarial robustness requirements.

\bibliographystyle{apalike}
\bibliography{ref} 



\appendix

\newpage
\setcounter{page}{1}

\numberwithin{figure}{section}
\numberwithin{table}{section}
\numberwithin{example}{section}

\section*{Supplemental Material}
This appendix provides the complete technical details supporting the main paper. It is organized as follows: Section \ref{ecsec:proof} contains proofs of the main theorems; Section \ref{ecsec:expdetail} presents details in experiments.

\section{Technical Proofs}\label{ecsec:proof}

This section provides the complete proofs of all main results, presented in the order they appear in the body of the paper.

\subsection{Proofs in Section \ref{sec:fairalglinear}}\label{EC_pf_fairbandit}
We begin by stating two key propositions that support the subsequent analysis; their proofs are provided in Section \ref{subsec:ECproofinEC_pf_fairbandit}. Throughout this section, we define the event $\mathcal{A}=\{|\mathcal{I}_{k,0}|\geq \frac{1}{2K}|\mathcal{T}_{0}| \}$, event $\mathcal{B}=\{\max_{\bx\in\cX}\max_{k\in\mathcal{K}}|(\hat{\beta}_k(\mathcal{I}_{k,0})-\beta_k)^{\mathrm{T}}\bz|< \frac{h}{4}\}$ and event $\mathcal{W}_t=\{\max_{\bx\in\cX}\max_{k\in \cK}|(\hat{\beta}_k(\mathcal{I}_{k,t-1})-\beta_k)^{\mathrm{T}}\bz|< \frac{\epsilon_{t}}{2}\}$. For brevity, throughout the analysis, we say that fairness is fulfilled at time $t$, with the understanding that at $t$, \eqref{eq:defoffairness} holds for all context $\bx\in\cX$ and all pair of arms $i,j\in\mathcal{K}$.

\begin{proposition}\label{prop:randomtailbound} 
Suppose Assumptions \ref{assum_parabound}-\ref{assum_excite} hold and assume $T>2d$. When $C_a>\frac{20K^2}{\tilde{p}D_2}\vee \frac{8K^2}{\tilde{p}^2}\vee \frac{640K}{h^2D_1}$ with $D_1=\frac{\lambda^{*2}\tilde{p}^2}{32d^2r^4\sigma^2K^2}$, $D_2=\min \left(\frac{1}{2}, \frac{\lambda^{*}}{8 r^2}\right)$, the following tail inequality holds:
\begin{align*}
    \PP(\max_{\bx\in \cX}\max_{k\in\mathcal{K}}|(\hat{\beta}_k(\mathcal{I}_{k,0})-\beta_k)^{\mathrm{T}}\bz|\geq \frac{h}{4})\leq 4KT^{-4}.
\end{align*}
\end{proposition}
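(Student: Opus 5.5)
We split the failure event according to the event $\mathcal{A}=\{|\mathcal{I}_{k,0}|\geq \frac{1}{2K}|\mathcal{T}_0|\ \forall k\}$, and additionally introduce, for each arm $k$, the event $\mathcal{E}_k$ that the empirical Gram matrix of the samples in $\mathcal{I}_{k,0}$ has minimum eigenvalue at least a constant multiple of $|\mathcal{I}_{k,0}|$. We then bound
\[
\PP\Big(\max_{\bx}\max_k|(\hat\beta_k(\mathcal{I}_{k,0})-\beta_k)^{\mathrm T}\bz|\ge \tfrac h4\Big)\le \PP(\mathcal{A}^C)+\sum_k\PP(\mathcal{A}\cap\mathcal{E}_k^C)+\sum_k\PP\Big(\mathcal{A}\cap\mathcal{E}_k\cap\big\{\|\hat\beta_k-\beta_k\|_2\ge \tfrac{h}{4\sqrt{d}\,r}\big\}\Big).
\]
The last reduction uses $\|\bz\|_2\le \sqrt{1+(d-1)r^2}\le \sqrt d\, r$ (recall $r\ge1$) together with Cauchy--Schwarz, which removes the supremum over $\bx$ at no cost.

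\textbf{Step 1.} Since the arms in $\mathcal{T}_0$ are pulled uniformly at random and independently of the contexts, $|\mathcal{I}_{k,0}|\sim\mathrm{Bin}(|\mathcal{T}_0|,1/K)$. Hoeffding's inequality gives $\PP(|\mathcal{I}_{k,0}|<|\mathcal{T}_0|/(2K))\le \exp(-|\mathcal{T}_0|/(2K^2))$. With $|\mathcal{T}_0|=C_a\log T$ and $C_a>8K^2$, this is at most $T^{-4}$ per arm. (The condition $C_a>8K^2/\tilde p^2$ is even stronger than needed here.)

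\textbf{Step 2.} Condition on the pull indicators. The contexts in $\mathcal{I}_{k,0}$ are then i.i.d. from $\PP_X$. By Assumption~\ref{assum_excite},
\[
\mathbb{E}[\bz\bz^{\mathrm T}]\succeq \PP(\bz\in Q_k)\,\mathbb{E}[\bz\bz^{\mathrm T}\mid Q_k]\succeq \tilde p\lambda^*I.
\]
A matrix Chernoff bound, using $\lambda_{\max}(\bz\bz^{\mathrm T})\le dr^2$, yields
\[
\PP\Big(\lambda_{\min}\big(\textstyle\sum_{s\in\mathcal{I}_{k,0}}\bz_s\bz_s^{\mathrm T}\big)<\tfrac{\tilde p\lambda^*}{2}|\mathcal{I}_{k,0}|\Big)\le d\exp\big(-c\,\tilde p\lambda^*|\mathcal{I}_{k,0}|/(dr^2)\big).
\]
On $\mathcal{A}$ we have $|\mathcal{I}_{k,0}|\ge C_a\log T/(2K)$, so this becomes $d\,T^{-c'C_a\tilde p D_2/K^2}$. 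The hypothesis $C_a>20K^2/(\tilde pD_2)$ makes the exponent at least $5$, and $T>2d$ absorbs the factor $d$, leaving at most $T^{-4}$. This is where the constant $D_2=\min(1/2,\lambda^*/(8r^2))$ enters. An alternative route is to use Assumption~\ref{assum_excite} through the samples falling in $Q_k$, at a fraction of at least roughly $\tilde p/2$ of $\mathcal{I}_{k,0}$, each contributing conditional covariance $\lambda^*$; I will follow whichever matches the stated constants.

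\textbf{Step 3.} On $\mathcal{A}\cap\mathcal{E}_k$ write
\[
\hat\beta_k-\beta_k=(Z^{\mathrm T}Z)^{-1}Z^{\mathrm T}\boldsymbol\varepsilon .
\]
Conditionally on the design, the noise is independent sub-Gaussian with parameter $\sigma$. Each coordinate of $Z^{\mathrm T}\boldsymbol\varepsilon$ is therefore sub-Gaussian with variance proxy at most $|\mathcal{I}_{k,0}|r^2\sigma^2$. A union bound over the $d$ coordinates gives
\[
\PP(\|\hat\beta_k-\beta_k\|_2\ge \chi)\le 2d\exp\!\Big(-\frac{\chi^2(\tilde p\lambda^*/2)^2|\mathcal{I}_{k,0}|}{2d^2r^2\sigma^2}\Big).
\]
Plugging in $\chi=h/(4\sqrt d r)$ and $|\mathcal{I}_{k,0}|\ge C_a\log T/(2K)$, the exponent is of order $h^2D_1C_a\log T/K$, up to the precise constant, which is where $D_1=\lambda^{*2}\tilde p^2/(32d^2r^4\sigma^2K^2)$ appears. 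The condition $C_a>640K/(h^2D_1)$ then makes the bound at most $2d\,T^{-5}\le T^{-4}$.

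Summing the three steps over $k$ gives at most $KT^{-4}+KT^{-4}+2KT^{-4}\le 4KT^{-4}$. This last step is also where I would most likely need to adjust how the factor $2$ from two-sided sub-Gaussian tails is allocated.

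The main obstacle is Step 2, the eigenvalue lower bound. The constants must be lined up exactly with $D_2$ and with the condition $C_a>20K^2/(\tilde pD_2)$. Some care is also needed to justify conditional i.i.d.-ness of the contexts given the random index set. This holds because the arm choices in $\mathcal{T}_0$ are independent of the contexts, so conditioning on $\{\pi_t\}_{t\in\mathcal{T}_0}$ leaves the contexts i.i.d.
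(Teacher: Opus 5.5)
Your skeleton matches the paper's proof: a Hoeffding bound on $|\mathcal{I}_{k,0}|$, a minimum-eigenvalue event, an OLS deviation bound after Cauchy--Schwarz with $\|\bz\|_2\le\sqrt d\,r$, and a union bound giving $4KT^{-4}$. Step~1 and your conditioning argument for i.i.d.\ contexts are fine.

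\textbf{The gap is Step~2: your derivation cannot reach the stated constant $D_2$.} The paper does not run matrix Chernoff itself. It applies Lemma~\ref{lem:minimaleigenvalue} to $\mathcal{J}'=\{t\in\mathcal{I}_{k,0}:\bz_t\in Q_k\}$ with $p=\tilde p/K$ and $\phi_1^2=\lambda^*$, after a Hoeffding bound $|\mathcal{J}'|\ge\tilde p|\mathcal{T}_0|/(2K)$; this is where $C_a>8K^2/\tilde p^2$ is used. The lemma's constant $\tilde C(\sqrt{\lambda^*})$ is literally $D_2$. Your own matrix Chernoff bound needs $R\ge\lambda_{\max}(\bz\bz^{\mathrm T})=\|\bz\|_2^2$, which under Assumption~\ref{assum_parabound} can be as large as $1+(d-1)r^2$. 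So your exponent is of order $\tilde p\lambda^*|\mathcal{I}_{k,0}|/(dr^2)$. Since $\lambda^*\le 1\le r^2$ (look at the intercept coordinate), $D_2=\lambda^*/(8r^2)$, and the hypothesis reads $C_a>160K^2r^2/(\tilde p\lambda^*)$. Combined with $|\mathcal{I}_{k,0}|\ge C_a\log T/(2K)$ and the $\delta=1/2$ bound $d\,e^{-\mu_{\min}/(8R)}$, this gives only $d\,T^{-10K/d}$. Its exponent falls below $4$ once $d>5K/2$. Writing the bound as $d\,T^{-c'C_a\tilde pD_2/K^2}$ hides exactly this factor $K/d$. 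The $Q_k$-subsample variant you mention carries the same $1/d$.

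This loss is not a bookkeeping artifact. The constant $\min(1/2,\phi_1^2/(8r^2))$ in Lemma~\ref{lem:minimaleigenvalue} is what matrix Chernoff gives under $\|\bz\|_2\le r$. Yet the paper itself uses $\|\bz\|_2\le\sqrt d\,r$ in Lemma~\ref{prop:betahatadapt}. A concrete example shows the $1/d$ is unavoidable under coordinatewise bounds. Let $\bz$ be uniform on the rows of a $d\times d$ Sylvester--Hadamard matrix (entries $\pm1$, first column all ones, so $r=1$). Then $\mathbb{E}[\bz\bz^{\mathrm T}]=I$, but $\hat\Sigma$ is singular whenever a fixed row is never drawn. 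That event has probability $(1-1/d)^n\ge e^{-n/(d-1)}$. For $d\ge 32$ and large $n$ this exceeds the lemma's bound $d\,e^{-n/16}$ (take $p=1$). A slight smoothing of the atoms gives a density without changing this. So the stated $D_2$ is justified only through that lemma's normalization.

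You have two honest options. Either cite Lemma~\ref{lem:minimaleigenvalue} as a black box, exactly as the paper does. Or keep your self-contained Step~2 and strengthen the hypothesis to, e.g., $C_a>10dK/(\tilde pD_2)$, under which $d\,e^{-\mu_{\min}/(8R)}\le d\,T^{-5}\le T^{-4}$.

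\textbf{Step~3 also has a smaller slip.} The coordinatewise union bound gives $2d\exp\{-u^2/(2d\,|\mathcal{I}_{k,0}|r^2\sigma^2)\}$, with $d$ rather than $d^2$ in the denominator. With your $d^2$, the final exponent is only $80K^2\log T/d$. With the correct factor the exponent is $h^2K^2D_1|\mathcal{I}_{k,0}|/4\ge 80K^2\log T$. Step~3 then closes with ample room, because your eigenvalue threshold $\tilde p\lambda^*/2$ is stronger than the paper's $\lambda^*\tilde p/(4K)$, to which $D_1$ is calibrated.
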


\begin{proposition}\label{prop:allsampletailbound}
Assume that the conditions in Proposition \ref{prop:randomtailbound} are satisfied. Furthermore, when $C_b>\sqrt{\frac{10}{D_4\tilde{p}}}\vee \frac{h}{2}\sqrt{2C_a+1}$ with $D_4=\frac{\lambda^{*2} \tilde{p}^2}{512d^2r^4\sigma^2}$, the following tail inequality holds for all $t>|\cT_0|$:
    \begin{align*}
\PP(\max_{\bx\in\cX}\max_{k\in \cK}|(\hat{\beta}_k(\mathcal{I}_{k,t})-\beta_k)^{\mathrm{T}}\bz|\geq \frac{\epsilon_{t+1}}{2})
    \leq 8KT^{-4}.
\end{align*}
\end{proposition}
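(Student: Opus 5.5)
We need to prove Proposition \ref{prop:allsampletailbound}: a uniform bound over $\bx$ and $k$ for the all-sample OLS estimator at time $t>|\cT_0|$, with threshold $\epsilon_{t+1}/2=\frac{C_b}{2}\sqrt{\log T/(t+1)}$.

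The plan is to work on $\mathcal{A}\cap\mathcal{B}$, which by Proposition \ref{prop:randomtailbound} and a Chernoff bound for the uniform initial sampling fails with probability at most about $4KT^{-4}$ plus a $T^{-4}$-order term; the condition $C_a>8K^2/\tilde p^2$ is what makes the Chernoff part small. We then split the error into two factors. For fixed $k$ write $\hat\beta_k(\mathcal{I}_{k,t})-\beta_k=(Z^\mathrm{T}Z)^{-1}Z^\mathrm{T}\varepsilon$. Then for every $\bx\in\cX$ we have $|\bz^\mathrm{T}(\hat\beta_k-\beta_k)|\le \|\bz\|_2\,\|(Z^\mathrm{T}Z)^{-1}\|_{op}\,\|Z^\mathrm{T}\varepsilon\|_2$, with $\|\bz\|_2\le\sqrt{d}\,r$. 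The supremum over $\bx$ is therefore handled deterministically, and it suffices to control (a) $\lambda_{\min}(Z^\mathrm{T}Z)$ from below and (b) $\|Z^\mathrm{T}\varepsilon\|_2$ from above.

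\textbf{Step (a): the design matrix grows linearly.} On $\mathcal{B}$, every context $\bz_s\in Q_k$ with $s>|\cT_0|$ is handled as follows. The initial estimates are accurate to within $h/4$, so arm $k$ is separated from every other arm by more than $h/2$. Hence $\hat{\cK}_{\bx_s}=\{k\}$ and arm $k$ is pulled deterministically. Thus $Z^\mathrm{T}Z\succeq\sum_{s\le t}\bz_s\bz_s^\mathrm{T}\mathbb{I}(\bz_s\in Q_k)$, a sum of i.i.d.\ PSD matrices whose mean has minimum eigenvalue at least $\tilde p\lambda^*$. A matrix Chernoff bound, using $\|\bz\|^2\le dr^2$, gives $\lambda_{\min}\ge \tilde p\lambda^* t/4$ with failure probability $d\exp(-c\,\tilde p\lambda^* t/(dr^2))$. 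For $t\ge C_a\log T$ this is $\le T^{-4}$, by the constraint on $C_a$ through $D_2$. Note that the Assumption \ref{assum_excite} region forces pulls without needing any independence from past rewards, which is the clean feature of this route.

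\textbf{Step (b): the martingale term.} Each coordinate of $Z^\mathrm{T}\varepsilon=\sum_{s\in\mathcal{I}_{k,t}}\bz_s\varepsilon_s$ is a martingale with respect to the filtration $\cF_s^+$, because the decision to include $s$ depends only on the past and on $\bx_s$, while $\varepsilon_s$ is fresh sub-Gaussian noise. We then apply a self-normalized or Freedman-type inequality, coordinatewise with the predictable quadratic variation bounded by $r^2 t$. This gives $\|Z^\mathrm{T}\varepsilon\|_2\le \sqrt{d}\,r\sigma\sqrt{c\,t\log T}$ except with probability $\le 2dT^{-5}$ per $k$. Combining, the error is at most $\frac{4dr^2\sigma}{\tilde p\lambda^*}\sqrt{c\log T/t}$. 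This is $\le\frac{C_b}{2}\sqrt{\log T/(t+1)}$ exactly when $C_b^2\gtrsim 1/(D_4\tilde p)$, matching the stated $\sqrt{10/(D_4\tilde p)}$.

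\textbf{Early times and the main obstacle.} For $t$ near $|\cT_0|$ the second constraint $C_b>\frac h2\sqrt{2C_a+1}$ covers small $t$. In that regime $\epsilon_{t+1}/2\ge h/4\cdot\sqrt{(2C_a+1)\log T/(t+1)}$, and for $t\le (2C_a+1)\log T$ we fall back on the $h/4$ accuracy inherited from the $\mathcal{I}_{k,0}$-based bound. A union bound over $k$ and the four failure sources yields $8KT^{-4}$. The main obstacle is step (b): the index set $\mathcal{I}_{k,t}$ is chosen adaptively, so standard OLS concentration for a fixed design does not apply. I would handle this with the martingale/self-normalized argument and make sure the constants line up with $D_4$.
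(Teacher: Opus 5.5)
Your proposal follows essentially the same route as the paper. On $\mathcal{B}$, every context in $Q_k$ after the exploration phase is forced onto arm $k$, which gives an i.i.d.\ subsample for the eigenvalue lower bound. The adaptively collected noise term is controlled by a martingale Bernstein-type bound (the paper's Lemma~\ref{lem:olstail} via Lemma~\ref{prop:betahatadapt}), Cauchy--Schwarz handles the supremum over $\bx$, and the early window is covered by the $h/4$ accuracy together with $C_b\ge\frac{h}{2}\sqrt{2C_a+1}$. Your unnormalized domination $Z^\mathrm{T}Z\succeq\sum_s \bz_s\bz_s^\mathrm{T}\mathbb{I}(\bz_s\in Q_k)$ is a slightly cleaner packaging of the paper's separate count and $\lambda_{\min}(\hat\Sigma)$ bounds.

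Two small fixes. First, that sum may only run over $|\cT_0|<s\le t$, so $\lambda_{\min}(Z^\mathrm{T}Z)\ge\tilde p\lambda^* t/4$ requires $t>2|\cT_0|$; this is exactly the paper's split point. Second, in the window $|\cT_0|<t\le 2|\cT_0|$ the $h/4$ bound for $\hat\beta_k(\mathcal{I}_{k,t})$ is not literally inherited from $\hat\beta_k(\mathcal{I}_{k,0})$. It should be derived from $Z_t^\mathrm{T}Z_t\succeq Z_0^\mathrm{T}Z_0$ combined with your martingale bound on $Z_t^\mathrm{T}\varepsilon$, up to the numerical constant in $C_a$; the paper is equally terse at this step.
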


\subsubsection{Proof of Theorem \ref{thm:linearfairness}} 
When $t\leq |\mathcal{T}_0|$, the algorithm randomly pull arms with equal probability. Thus, it is clear that for any $i,j\in\mathcal{K}$, and all $\bx\in\cX$, $\PP(\pi_t=i|\cF_{t-1},\bx_t=\bx)=\PP(\pi_t=j|\cF_{t-1},\bx_t=\bx)$, which automatically fulfills fairness. Then we focus on the time that $t>|\mathcal{T}_0|$. At time $t$, for any $\bx\in\cX$ and $\bx_t=\bx$, we consider two cases separately.

\noindent\textit{Case 1.} $\bz^{\mathrm{T}}\hat{\beta}_{k,0}\geq\max\limits_{l \in \cK\setminus\{k\}}\bz^{\mathrm{T}}\hat{\beta}_{l,0}+h/2$. 
The algorithm mandates that if $\bz^{\mathrm{T}}\hat{\beta}_{k,0}\geq\max\limits_{l \in \cK\setminus\{k\}}\bz^{\mathrm{T}}\hat{\beta}_{l,0}+h/2$, then  $\PP(\pi_t=k|\cF_{t-1},\bz_t=\bz)=1$. This implies that fairness is fulfilled at time $t$ when $\bz^{\mathrm{T}}\beta_{k}>\max_{l\in\mathcal{K}\setminus\{k\}}\bz^{\mathrm{T}}\beta_{l}$. Denote $o:=\arg\max_{l\in\mathcal{K}\setminus\{k\}}\bz^{\mathrm{T}}\beta_{l}$. Recall that $\mathcal{B}=\{\max_{\bx\in\cX}\max_{k\in\mathcal{K}}|(\hat{\beta}_k(\mathcal{I}_{k,0})-\beta_k)^{\mathrm{T}}\bz|< \frac{h}{4}\}$. We proceed by showing that if event $\mathcal{B}$ holds, 
\begin{align*}
    &\bz^{\mathrm{T}}\beta_{k}-\bz^{\mathrm{T}}\beta_{o}\nonumber\\
    =&\bz^{\mathrm{T}}\beta_{k}-\bz^{\mathrm{T}}\hat{\beta}_{k,0}+\bz^{\mathrm{T}}\hat{\beta}_{k,0}-\bz^{\mathrm{T}}\hat{\beta}_{o,0}+\bz^{\mathrm{T}}\hat{\beta}_{o,0}-\bz^{\mathrm{T}}\beta_{o}\nonumber\\
    > & -\frac{h}{4}+\frac{h}{2}-\frac{h}{4}\nonumber\\
    = & 0.
\end{align*}
It implies that if event $\mathcal{B}$ holds, fairness is fulfilled at time $t$. As we have established in Proposition \ref{prop:randomtailbound}, $\PP(\mathcal{B})\geq 1-4KT^{-4}$. Thus, at time $t$, fairness is fulfilled with probability at least $1-4KT^{-4}$ under Case 1.

\noindent\textit{Case 2.} $\bz^{\mathrm{T}}\hat{\beta}_{k,0}\leq\max\limits_{l \in \cK\setminus\{k\}}\bz^{\mathrm{T}}\hat{\beta}_{l,0}+h/2$. The algorithm pulls arm $\pi_t \in \cK_c(\bx)$ uniformly at random. Therefore, for any $h\in\mathcal{K}\setminus\cK_c(\bx)$ and any $g\in\cK_c(\bx)$, $\PP(\pi_t=h)=0<\PP(\pi_t=g)$. It remains to show that $\bz^{\mathrm{T}}\beta_{g}>\bz^{\mathrm{T}}\beta_{h}$ in order to fulfill fairness.  Since $h\in\mathcal{K}\setminus\cK_c(\bx)$, we have $\bz^{\mathrm{T}}\hat{\beta}_{g,t-1}>\bz^{\mathrm{T}}\hat{\beta}_{h,t-1}+\epsilon_{t}$.  Then under event $\mathcal{W}_t$, it holds that
\begin{align*}
    &\bz^{\mathrm{T}}\beta_{g}-\bz^{\mathrm{T}}\beta_{h}\nonumber\\
    =&\bz^{\mathrm{T}}\beta_{g}-\bz^{\mathrm{T}}\hat{\beta}_{g,t-1}+\bz^{\mathrm{T}}\hat{\beta}_{g,t-1}-\bz^{\mathrm{T}}\hat{\beta}_{h,t-1}+\bz^{\mathrm{T}}\hat{\beta}_{h,t-1}-\bz^{\mathrm{T}}\beta_{h}\nonumber\\
    > & -\frac{\epsilon_{t}}{2}+\epsilon_{t}-\frac{\epsilon_{t}}{2}\nonumber\\
    = & 0.
\end{align*}
It implies that if event $\mathcal{W}_t$ holds, fairness is fulfilled at time $t$. As we have established in Proposition \ref{prop:allsampletailbound}, $\PP(\mathcal{W}_t)\geq 1-8KT^{-4}$. Thus, at time $t$, fairness is fulfilled with probability at least $1-8KT^{-4}$. 

Combining two cases, we can conclude that at time $t$, fairness is fulfilled with probability at least $1-8KT^{-4}$. Using union bound, the event that for all $t\in[T]$ and all pair of arms $i,j\in\mathcal{K}$, fairness is fulfilled happens with probability at least $1-8KT^{-3}>1-1/T$ as long as $T>2\sqrt{2K}$. This finishes the proof. \hfill\Halmos

\subsubsection{Proof of Theorem \ref{thm:minimaxoptimallinear}}
We decompose the cumulative regret into three parts:
\begin{align*}
    R_T :=  &\sum_{t=1}^T  \mathbb{E}\left(\max_k \bz_t^{\mathrm{T}}\beta_{k}  -  \bz_t^{\mathrm{T}}\beta_{\pi_t}\right)\nonumber\\
    =&\sum_{t=1}^{|\mathcal{T}_0|}  \mathbb{E}\left(\max_k \bz_t^{\mathrm{T}}\beta_{k}  -  \bz_t^{\mathrm{T}}\beta_{\pi_t}\right)+\sum_{t=|\mathcal{T}_0|+1}^T  \mathbb{E}\left((\max_k \bz_t^{\mathrm{T}}\beta_{k}  -  \bz_t^{\mathrm{T}}\beta_{\pi_t})\mathbb{I}(\mathcal{W}_t)\right)\nonumber\\
    +&\sum_{t=|\mathcal{T}_0|+1}^T  \mathbb{E}\left((\max_k \bz_t^{\mathrm{T}}\beta_{k}  -  \bz_t^{\mathrm{T}}\beta_{\pi_t})\mathbb{I}(\mathcal{W}_t^C)\right)\nonumber\\
    := & R_1+R_2+R_3.
\end{align*}
First, since $\max_k \bz_t^{\mathrm{T}}\beta_{k}  -  \bz_t^{\mathrm{T}}\beta_{\pi_t}=O(1)$, it is obvious that $R_1\lesssim \log T$. Moreover, we have
\begin{align*}
    R_3:=&\sum_{t=|\mathcal{T}_0|+1}^T  \mathbb{E}\left((\max_k \bz_t^{\mathrm{T}}\beta_{k}  -  \bz_t^{\mathrm{T}}\beta_{\pi_t})\mathbb{I}(\mathcal{W}_t^C)\right)
    \lesssim \sum_{t=|\mathcal{T}_0|+1}^T \PP(\mathcal{W}_t^C)
    \leq  T(8KT^{-4})
    = O(1).
\end{align*}
It remains to bound $R_2$. If $\bz_t^{\mathrm{T}}\hat{\beta}_{k,0}\geq\max\limits_{l \in \cK\setminus\{k\}}\bz_t^{\mathrm{T}}\hat{\beta}_{l,0}+h/2$, we have proved that $\PP(\pi_t=\arg\max_{l\in\mathcal{K}}\bz_t^{\mathrm{T}}\beta_{l})\geq 1-4KT^{-4}$, which leads to $$\mathbb{E}\left((\max_k \bz_t^{\mathrm{T}}\beta_{k}  -  \bz_t^{\mathrm{T}}\beta_{\pi_t})\mathbb{I}(\mathcal{W}_t)\mathbb{I}(\bz_t^{\mathrm{T}}\hat{\beta}_{k,0}\geq\max\limits_{l \in \cK\setminus\{k\}}\bz_t^{\mathrm{T}}\hat{\beta}_{l,0}+h/2)\right)\lesssim 4KT^{-4}.$$ Otherwise, under event $\mathcal{W}_t$, for $o:=\arg\max_{l\in\mathcal{K}}\bz_t^{\mathrm{T}}\beta_{l}$ and for $v:= \arg\max_{l\in\hat{\cK}_{\bx_t}}\bz_t^{\mathrm{T}}\hat{\beta}_{l,t-1}$, 
\begin{align*}
    0\geq \bz_t^{\mathrm{T}}\hat{\beta}_{o,t-1}-\bz_t^{\mathrm{T}}\hat{\beta}_{v,t-1}=&\bz_t^{\mathrm{T}}\hat{\beta}_{o,t-1}-\bz_t^{\mathrm{T}}{\beta}_{o}+\bz_t^{\mathrm{T}}{\beta}_{o}-\bz_t^{\mathrm{T}}{\beta}_{v}+\bz_t^{\mathrm{T}}{\beta}_{v}-\bz_t^{\mathrm{T}}\hat{\beta}_{v,t-1}\nonumber\\
    \geq & -\epsilon_t/2-\epsilon_t/2 =-\epsilon_t,
\end{align*}
which implies that $\arg\max_{l\in\mathcal{K}}\bz_t^{\mathrm{T}}\beta_{l}\in \cK_c(\bx_t)$. Then it follows that for any $k\in \cK_c(\bx_t)$,
\begin{align*}
    \bz_t^{\mathrm{T}}\hat{\beta}_{o,t-1}-\bz_t^{\mathrm{T}}\hat{\beta}_{k,t-1}\leq 
    (|\cK_c(\bx_t)|-1) \epsilon_t\leq (K-1)\epsilon_t,
\end{align*}
which implies that
\begin{align*}
    \bz_t^{\mathrm{T}}{\beta}_{o}-\bz_t^{\mathrm{T}}{\beta}_{k}=&\bz_t^{\mathrm{T}}{\beta}_{o}-\bz_t^{\mathrm{T}}\hat{\beta}_{o,t-1}+\bz_t^{\mathrm{T}}\hat{\beta}_{o,t-1}-\bz_t^{\mathrm{T}}\hat{\beta}_{k,t-1}+\bz_t^{\mathrm{T}}\hat{\beta}_{k,t-1}-\bz_t^{\mathrm{T}}{\beta}_{k}\nonumber\\
    \leq & \epsilon_t/2+(K-1)\epsilon_t+\epsilon_t/2\nonumber\\
    \leq & K\epsilon_t.
\end{align*}
Since $\pi_t\in \cK_c(\bx_t)$, it follows that $\bz_t^{\mathrm{T}}{\beta}_{o}-\bz_t^{\mathrm{T}}{\beta}_{\pi_t}\leq K\epsilon_t$ under event $\mathcal{W}_t$, which leads to 
\begin{align*}
    \mathbb{E}\left((\max_k \bz_t^{\mathrm{T}}\beta_{k}  -  \bz_t^{\mathrm{T}}\beta_{\pi_t})\mathbb{I}(\mathcal{W}_t)\mathbb{I}(\bz_t^{\mathrm{T}}\hat{\beta}_{k,0}<\max\limits_{l \in \cK\setminus\{k\}}\bz_t^{\mathrm{T}}\hat{\beta}_{l,0}+h/2)\right)&\leq K\epsilon_t\PP(0<(\max_k \bz_t^{\mathrm{T}}\beta_{k}  -  \bz_t^{\mathrm{T}}\beta_{\pi_t})\leq K\epsilon_t)\nonumber\\
    &\lesssim  K^4\epsilon_t^2  \lesssim \log T/t,
\end{align*}
where the second inequality is by taking union bound over all pair of arms on the margin condition (Assumption \ref{assum_margin1}).
In conclusion, under both cases, $ \mathbb{E}\left((\max_k \bz_t^{\mathrm{T}}\beta_{k}  -  \bz_t^{\mathrm{T}}\beta_{\pi_t})\mathbb{I}(\mathcal{W}_t)\right)\lesssim \log T/t+T^{-4}$. Then we have
\begin{align*}
    R_2&=\sum_{t=|\mathcal{T}_0|+1}^T  \mathbb{E}\left((\max_k \bz_t^{\mathrm{T}}\beta_{k}  -  \bz_t^{\mathrm{T}}\beta_{\pi_t})\mathbb{I}(\mathcal{W}_t)\right)
    \lesssim \sum_{t=1}^T (\log T/t+T^{-4})
    \lesssim \log^2 T.
\end{align*}
Combining the bounds on $R_1$, $R_2$ and $R_3$, we have $R_T=O(\log^2T)$. This finishes the proof. \hfill\Halmos 

\subsubsection{Proof of Theorem \ref{thm:price}}

\begin{lemma}\label{lem:lemmaec2.1}
    Consider the following two experiments: In the first, let $\theta_i \sim P_i$ and $r_i^1, \ldots, r_i^t \sim \cN\left(\theta_i,\sigma^2\right)$, and $W$ denote the joint distribution on $\left(\theta_i, r_i^1, \ldots, r_i^t\right)$. In the second, let $\theta_i \sim P_i$, and $r_i^1, \ldots, r_i^t \sim \cN\left(\theta_i,\sigma^2\right)$, and then re-draw the mean $\theta_i^{\prime} \sim P_i\left(r_i^1, \ldots, r_i^t\right)$ from its posterior distribution given the rewards. Let $\left(\theta_i^{\prime}, r_i^1, \ldots, r_i^t\right) \sim W^{\prime}$. Then, $W$ and $W^{\prime}$ are identical distributions.
\end{lemma}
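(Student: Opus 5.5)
The plan is to compare the two joint laws by factoring them in the order (rewards, mean) rather than (mean, rewards), and to use nothing beyond Bayes' rule. Write $r=(r_i^1,\ldots,r_i^t)$. Let $m$ be the marginal law of $r$ in the first experiment, i.e.\ the $P_i$-mixture of the product Gaussian likelihoods $\prod_{s}\phi_\sigma(r_i^s-\theta)$.

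Under $W$, the law of $(\theta_i,r)$ is $P_i(d\theta)\prod_s\phi_\sigma(r_i^s-\theta)\,dr$. By the definition of the posterior $P_i(\cdot\mid r)$ (regular conditional distribution, which exists since everything lives on Polish spaces), this equals $m(dr)\,P_i(d\theta\mid r)$. This is simply the disintegration of the joint law along the $r$-marginal.

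Under $W'$, first note that $r$ is generated exactly as in the first experiment: draw $\theta_i\sim P_i$, then Gaussian rewards. So the marginal of $r$ under $W'$ is again $m$. Next, $\theta_i'$ is drawn from $P_i(\cdot\mid r)$, and given $r$ it is conditionally independent of the discarded $\theta_i$. After integrating out $\theta_i$, the joint law of $(\theta_i',r)$ is therefore $m(dr)\,P_i(d\theta'\mid r)$.

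The two factorizations coincide, so $W=W'$. To make the argument rigorous, I would check equality on product sets $A\times B$. We have $W(\theta\in A,r\in B)=\int_B P_i(A\mid r)\,m(dr)$ by the defining property of conditional probability, and $W'(\theta'\in A,r\in B)=\int_B P_i(A\mid r)\,m(dr)$ by construction. Product sets form a $\pi$-system generating the product $\sigma$-algebra, so the $\pi$-$\lambda$ theorem extends the equality to all measurable sets.

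The only delicate point is the measure-theoretic existence and version-independence of the posterior kernel, which holds because the posterior is unique $m$-almost surely. No step is a real obstacle here.
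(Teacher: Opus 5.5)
Your proposal is correct and follows the same route as the paper, which simply invokes Lemma 4 of \cite{joseph2016fairness} with the Bernoulli likelihood replaced by a Gaussian one. That argument is exactly the Bayes'-rule refactorization you give, writing the joint law as the marginal of the rewards times the posterior of the mean, so that both experiments yield $m(dr)\,P_i(d\theta\mid r)$; your disintegration and $\pi$-$\lambda$ steps just make this rigorous for continuous rewards (and because the Gaussian likelihood is strictly positive, the posterior is defined for every $r$, so the version issue disappears entirely).
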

\textit{Proof.} 
The proof follows exactly the same reasoning as Lemma 4 in \cite{joseph2016fairness}, replacing the Bernoulli likelihood with a Gaussian likelihood. \hfill\Halmos

\noindent\textit{Proof of Theorem \ref{thm:price}.} 
Let $\{x_t\}_{t \geq 1}$ be i.i.d. contexts at time $t$, each uniformly distributed on $\mathcal{X} = [-1, 1]$. Consider the observations of arm 1 and 2: $Y_t^{(1)} = x_t + \epsilon_t$ and $Y_t^{(2)} = \theta + \epsilon_t$, where $\theta \sim \mathrm{Unif}(-1/2, 1/2)$ and $\epsilon_t \sim \mathcal{N}(0, 1)$.

\noindent\textbf{Step 1. Verifying the Assumptions.}
We verify assumptions one by one. For Assumption \ref{assum_parabound}, the contexts $\{x_t : t=1,2,\ldots\}$ are i.i.d. with density $1/2$ with respect to Lebesgue measure, drawn from the fixed distribution $\mathbb{P}_X$ with support $\mathcal{X} = [-1, 1] \subseteq [-r, r]$ by taking $r=1 \geq 1$. Moreover, $\|\beta_1\|_2 = \|(0, 1)^\mathrm{T}\|_2 = 1$ and $\|\beta_2\|_2 = \|(\theta, 0)^\mathrm{T}\|_2 = |\theta| \leq 1/2 < 1$, so $\|\beta_k\|_2 \leq b=1$ holds for all $k \in \{1,2\}$. Thus, Assumption \ref{assum_parabound} is satisfied.

For Assumption \ref{assum_margin1}, consider the difference vectors: for $i=1, j=2$, $\beta_1 - \beta_2 = (-\theta, 1)^\mathrm{T}$, so $\bz_t^\mathrm{T} (\beta_1 - \beta_2) = x_t - \theta$; the case $i=2, j=1$ yields $\theta - x_t = -(x_t - \theta)$, with the same absolute value. The probability $\mathbb{P}(0 < |x_t - \theta| \leq \rho)$ is required to satisfy the bound in Assumption \ref{assum_margin1} for all $\rho > 0$. Since $x_t \sim \mathrm{Unif}[-1, 1]$ with density $1/2$ and $|\theta| \leq 1/2$, for $\rho \leq 1/2$ the interval $[\theta - \rho, \theta + \rho] \cap [-1, 1] = [\theta - \rho, \theta + \rho]$, with Lebesgue measure $2\rho$. We obtain $\mathbb{P}(0 < |x_t - \theta| \leq \rho) = (1/2) \cdot 2\rho = \rho$. For $\rho > 1/2$, the probability is bounded by $1 < 2\rho$. Thus, taking $L=2 > 0$ satisfies the condition for all $\rho > 0$ and all $i \neq j$. Assumption \ref{assum_margin1} therefore holds with $L=2$.

For Assumption \ref{assum_excite}, take $h = 1/4 > 0$. For arm 1, $Q_1 = \{\bz_t : \bz_t^\mathrm{T} \beta_1 > \bz_t^\mathrm{T} \beta_2 + h \} = \{x_t > \theta + h \}$; since $|\theta| \leq 1/2$, $\theta + h \leq 3/4 < 1$, so $\mathbb{P}(Q_1) = [1 - (\theta + h)] / 2 \geq  (1/4)/2 = 1/8$. For arm 2, $Q_2 = \{\bz_t : \bz_t^\mathrm{T} \beta_2 > \bz_t^\mathrm{T} \beta_1 + h \} = \{x_t < \theta - h \}$; $\theta - h \geq -3/4 > -1$, so $\mathbb{P}(Q_2) = [(\theta - h) - (-1)] / 2 \geq (1/4)/2 = 1/8$. Thus, $\mathbb{P}(\bz_t \in Q_k) \geq \tilde{p} = 1/8 $ for all $k \in \mathcal{K}$. For the minimum eigenvalue condition, note that $\bz_t \bz_t^\mathrm{T} = \begin{pmatrix} 1 & x_t \\ x_t & x_t^2 \end{pmatrix},$ so $\mathbb{E}\left[\mathbf{z}_t \mathbf{z}_t^{\mathrm{T}} \mid Q_k\right]=\left(\begin{array}{cc}1 & \mu_k \\ \mu_k & \mathbb{E}\left[x_t^2 \mid Q_k\right]\end{array}\right)$, where $\mu_k = \mathbb{E}[x_t \mid Q_k]$ and $\mathbb{E}[x_t^2 \mid Q_k] = \mathrm{Var}(x_t \mid Q_k) + \mu_k^2$. It can be verified that $\min_k \lambda_{\min}(\mathbb{E}[\bz_t \bz_t^\mathrm{T} \mid Q_k]) \geq 0.002 $, so taking $\lambda^* = 0.002$ satisfies the condition. Assumption \ref{assum_excite} therefore holds with $h=1/4$, $\tilde{p}=1/8$, and $\lambda^*=0.002$.

\noindent\textbf{Step 2. Establishing the Concept of $\delta'$-Distinguishability and Its Consequences. }
We analyze the behavior at time $t+1$ based on $t$ prior interactions. For simplicity, we suppress the dependency on $t$ in some of our notations. Assume the observations are generated from $\theta_0$. We say the filtration $\mathcal{F}_t$ $\delta'$-distinguishes at $x \in \mathcal{X}$ if either
\begin{align}\label{eq:ptheta'simpp>x}
    \mathbb{P}_{\theta' \sim \mathbb{P}_{\theta \mid \mathcal{F}_t}}[\theta' > x] \geq 1 - \delta'
\end{align}
or
\begin{align}\label{eq:ptheta'simpp<x}
    \mathbb{P}_{\theta' \sim \mathbb{P}_{\theta \mid \mathcal{F}_t}}[\theta' < x] \geq 1 - \delta',
\end{align}
where $\mathbb{P}_{\theta \mid \mathcal{F}_t}$ denotes the posterior distribution of $\theta$ given $\mathcal{F}_t$.

Let $s$ denote the number of observations from arm 2 up to time $t$, and let $\bar{Y}$ be the sample mean of these $s$ observations. We first establish a lower bound on $\mathbb{P}(s \geq t/16)$. For any $(1-\delta)$-fair algorithm, it holds with probability at least $1 - \delta$ over $\mathcal{F}_T|\theta_0$ that, for each $t \in [T]$, $\mathbb{P}(\pi_t = 2 \mid \mathcal{F}_{t-1},\theta_0 > x_t) \geq 1/2$. Since $\mathbb{P}(\theta_0 > x_t \mid \mathcal{F}_{t-1}) \geq \mathbb{P}(-1/2 > x_t) = 1/4$, we have  
\[
\mathbb{P}(\pi_t = 2 \mid \mathcal{F}_{t-1}) \geq (1/2) \times (1/4) = 1/8.
\]

Let $\mathcal{E}$ denote the global fairness event described above. Define $p_t := \mathbb{P}(\pi_t = 2 \mid \mathcal{F}_{t-1}, \mathcal{E}) \geq 1/8$. Then for any $\theta_0$,
\begin{align}\label{eq:probofs<t/16}
    \mathbb{P}_{\mathcal{F}_t|\theta_0}\left(s < \frac{t}{16}\right) &= \mathbb{P}\left(s < \frac{t}{16} \;\middle|\; \mathcal{E}\right) \mathbb{P}(\mathcal{E}) + \mathbb{P}\left(s < \frac{t}{16} \;\middle|\; \mathcal{E}^c\right) \mathbb{P}(\mathcal{E}^c) \nonumber\\
    &\leq \mathbb{P}\left(s < \frac{t}{16} \;\middle|\; \mathcal{E}\right) + \delta.
\end{align}

Since $\mathbb{E}[\mathbb{I}\{\pi_t = 2\} - p_t \mid \mathcal{F}_{t-1}, \mathcal{E}] = 0$, the sequence $\{M_i := \mathbb{I}\{\pi_i = 2\} - p_i\}_{i \geq 1}$ forms a martingale difference sequence with respect to $\{\mathcal{F}_{i-1}, \mathcal{E}\}_{i \geq 1}$. Note that $\sum_{i=1}^t M_i = s - \sum_{i=1}^t p_i$. Given $|M_i| \leq 1$, Azuma-Hoeffding's inequality yields
\[
\mathbb{P}\left( \sum_{i=1}^t M_i \leq -\lambda \;\middle|\; \mathcal{E}\right) \leq \exp\left( -\frac{2\lambda^2}{t} \right), \quad \lambda > 0.
\]

As $\sum_{i=1}^t p_i \geq t/8$ , it follows that
\begin{align*}
    \mathbb{P}\left( s < \frac{t}{16} \;\middle|\; \mathcal{E} \right) &= \mathbb{P}\left( s - \sum_{i=1}^t p_i < \frac{t}{16} - \sum_{i=1}^t p_i \;\middle|\; \mathcal{E} \right)\nonumber \\
    &\leq \mathbb{P}\left( s - \sum_{i=1}^t p_i < -\frac{t}{16} \;\middle|\; \mathcal{E} \right) \leq \exp\left( -\frac{t}{128} \right).
\end{align*}

By \eqref{eq:probofs<t/16}, this implies for the fixed $\theta_0$:
\[
\mathbb{P}_{\mathcal{F}_t|\theta_0}\left(s < \frac{t}{16}\right) \leq \exp\left( -\frac{t}{128} \right) + \delta,
\]
and hence,
\begin{align}\label{eq:probsgeqt16geq}
    \mathbb{P}_{\mathcal{F}_t|\theta_0}\left(s \geq \frac{t}{16}\right) \geq 1 - \delta - \exp\left( -\frac{t}{128} \right).
\end{align}

Now suppose \eqref{eq:ptheta'simpp>x} holds. Since the posterior distribution $\theta' \sim \mathbb{P}_{\theta \mid \mathcal{F}_t}$ is a Gaussian distribution $\mathcal{N}(\bar{Y}, 1/s)$ truncated to the interval $[-1/2, 1/2]$, we have
\begin{align*}
    \frac{\Phi\left(\sqrt{s}(1/2 - \bar{Y})\right) - \Phi\left(\sqrt{s}(x - \bar{Y})\right)}{\Phi\left(\sqrt{s}(1/2 - \bar{Y})\right) - \Phi\left(\sqrt{s}(-1/2 - \bar{Y})\right)} \geq 1 - \delta',
\end{align*}
which rearranges to
\begin{align}\label{eq:phizx>1-deltaphia}
    \Phi(z_x) \leq (1 - \delta') \Phi(a) + \delta' \Phi(b),
\end{align}
where $\Phi$ is the cumulative distribution function of a standard normal distribution, $a := \sqrt{s}(-1/2 - \bar{Y})$, $b := \sqrt{s}(1/2 - \bar{Y})$, and $z_x := \sqrt{s}(x - \bar{Y})$.

We next show that if $\delta' < 1/2$ and $a \leq -\sqrt{2 \log \frac{1 - \delta'}{\delta'}}$, then $\Phi(a) \leq \frac{\delta'}{1 - \delta'}$. Since $\delta' < 1/2$, we have $\frac{1 - \delta'}{\delta'} > 1$, so $\log \frac{1 - \delta'}{\delta'} > 0$ and $-\sqrt{2 \log \frac{1 - \delta'}{\delta'}} < 0$. By the symmetry of the standard normal distribution,
$$
\Phi(a) = \mathbb{P}(Z \leq a) = \mathbb{P}(Z \geq -a).
$$
Note that $\frac{\delta'}{1 - \delta'} < 1$. Thus,
$$
-a \geq \sqrt{2 \log \frac{1 - \delta'}{\delta'}} \implies \frac{a^2}{2} \geq \log \frac{1 - \delta'}{\delta'} \implies e^{-a^2/2} \leq \frac{\delta'}{1 - \delta'}.
$$
Hoeffding's tail bound for the standard normal gives $\mathbb{P}(Z \geq -a) \leq e^{-a^2/2}$, so $\Phi(a) \leq \frac{\delta'}{1 - \delta'}$. Combining this with \eqref{eq:phizx>1-deltaphia}, we obtain $\Phi(z_x) \leq (1 - \delta') \Phi(a) + \delta' \Phi(b) \leq 2\delta'$ whenever $\delta' < 1/2$ and $a \leq -\sqrt{2 \log \frac{1 - \delta'}{\delta'}}$. Since $\delta'$ can be arbitrarily small as discussed later, we only consider the condition $a \leq -\sqrt{2 \log \frac{1 - \delta'}{\delta'}}$ here.

Define the event $\mathcal{A} := \{|\bar{Y} - \theta_0| \leq \sqrt{\frac{1}{s} \log \frac{1}{4\delta'}}\}$. Given a fixed $\theta_0$, the observations from arm 2 are i.i.d. with mean $\theta_0$. 
For any fixed set of time indices where arm 2 was pulled, the rewards at those times are i.i.d. with mean $\theta_0$, regardless of how those time indices were selected.
Then it follows that for any fixed $\theta_0$, by Hoeffding's inequality,
\begin{align}\label{eq:pcft|thetaca}
    \PP_{\cF_t|\theta}(\cA\mid s\geq t/16)\geq 1-2\sqrt{4\delta'}.
\end{align}

Under event $\mathcal{A}$ and the conditions $s \geq t/16$, $|\theta_0| \leq 1/4$, $t > 8192 \log(1/\delta')$, and $\delta' < 1/2$, we have
\begin{align*}
    a &= \sqrt{s} \left( -\frac{1}{2} - \bar{Y} \right) \nonumber \\
    &\leq \sqrt{s} \left( -\frac{1}{2} - \theta_0 + \sqrt{\frac{1}{s} \log \frac{1}{4\delta'}} \right) \leq \sqrt{s} \left( -\frac{1}{4} + \sqrt{\frac{1}{s} \log \frac{1}{4\delta'}} \right) \nonumber \\
    &= -\frac{1}{4} \sqrt{s} + \sqrt{\log \frac{1}{4\delta'}} \leq -\sqrt{2 \log \frac{1}{\delta'}} \leq -\sqrt{2 \log \frac{1 - \delta'}{\delta'}}.
\end{align*}
This implies $\Phi(z_x) \leq 2\delta'$, so $z_x \leq -\sqrt{2 \log \frac{1}{4\delta'}}$. By the definition of $z_x$,
\begin{align*}
    \sqrt{s} \left( x - \theta_0 - \sqrt{\frac{1}{s} \log \frac{1}{4\delta'}} \right) \leq -\sqrt{2 \log \frac{1}{4\delta'}} \implies x - \theta_0 \leq -\frac{1}{4} \sqrt{\frac{1}{s} \log \frac{2}{\delta'}} \leq -\frac{1}{4} \sqrt{\frac{1}{t} \log \frac{2}{\delta'}}.
\end{align*}

A symmetric argument under $\mathcal{A}$ and the same conditions shows that \eqref{eq:ptheta'simpp<x} implies
\begin{align*}
    x - \theta_0 \geq \frac{1}{4} \sqrt{\frac{1}{t} \log \frac{2}{\delta'}}.
\end{align*}

Therefore, under $\mathcal{A}$ and the conditions $s \geq t/16$, $|\theta_0| \leq 1/4$, $t > 8192 \log(1/\delta')$, and $\delta' < 1/2$, $\cF_t$ $\delta'$-distinguishes at $x$ implies either $ x - \theta_0 \leq -\frac{1}{4} \sqrt{\frac{1}{t} \log \frac{2}{\delta'}}$ or $ x - \theta_0 \geq \frac{1}{4} \sqrt{\frac{1}{t} \log \frac{2}{\delta'}}.$

\textbf{Step 3. Uncertainty Constraints for Fair Algorithms.} We say $\cF_t$ is unfair for $(\theta_0,x)$ if either $\theta_0-x\leq 0$ and $\PP(\pi_{t+1}=1\mid x_{t+1}=x,\cF_t)<\PP(\pi_{t+1}=2\mid x_{t+1}=x,\cF_t)$, or  $\theta_0-x\geq 0$ and $\PP(\pi_{t+1}=1\mid x_{t+1}=x,\cF_t)>\PP(\pi_{t+1}=2\mid x_{t+1}=x,\cF_t)$.

For any $\delta$-fair algorithm, we have that, for any fixed $\theta_0$ and $x$,
\begin{align*}
    \PP_{\cF_t|\theta_0}[\cF_t\text{ is unfair for }(\theta_0,x) ]\leq \delta,
\end{align*}
which implies,
\begin{align}\label{eq:ptheta0cftunfair}
\PP_{\theta_0,\cF_t}[\cF_t\text{ is unfair for }(\theta_0,x) ]\leq \delta.
\end{align}

By Lemma \ref{lem:lemmaec2.1} and \eqref{eq:ptheta0cftunfair}, we have,
\begin{align}
    \PP_{\cF_t,\theta'\sim \PP_{\theta|\cF_t}}[\cF_t\text{ is unfair for }(\theta',x) ]\leq \delta,
\end{align}
and by  Markov’s inequality,
\begin{align}\label{eq_lb_linear_ftunfairprob1}
    \PP_{\cF_t}\left[\PP_{\theta'\sim \PP_{\theta|\cF_t}}[\cF_t\text{ is unfair for }(\theta',x)\geq 6\delta]\right]\leq \frac{1}{6}.
\end{align}

We consider $\cF_t$ such that $\cF_t$ does not $\delta'$-distinguish at $x$ and $\PP(\pi_{t+1}=1\mid x_{t+1}=x,\cF_t)>\PP(\pi_{t+1}=2\mid x_{t+1}=x,\cF_t)$, then condition on this $\cF_t$, it holds that
\begin{align}\label{eq:delta'<ptheta'simptheta|ft}
    \delta' < & \mathbb{P}_{\theta' \sim \mathbb{P}_{\theta \mid \mathcal{F}_t}}(\theta'\geq x) \nonumber\\
    = & \mathbb{P}_{\theta' \sim \mathbb{P}_{\theta \mid \mathcal{F}_t}}[\theta'\geq x,\PP(\pi_{t+1}=1\mid x_{t+1}=x,\cF_t)>\PP(\pi_{t+1}=2\mid x_{t+1}=x,\cF_t)]\nonumber\\
    \leq & \mathbb{P}_{\theta' \sim \mathbb{P}_{\theta \mid \mathcal{F}_t}}[\cF_t\text{ is unfair for }(\theta',x)].
\end{align}
Similarly, for $\cF_t$ such that $\cF_t$ does not $\delta'$-distinguish at $x$ and $\PP(\pi_{t+1}=1\mid x_{t+1}=x,\cF_t)<\PP(\pi_{t+1}=2\mid x_{t+1}=x,\cF_t)$, then condition on this $\cF_t$, it holds that
\begin{align}\label{eq:delta'>ptheta'simptheta|ft}
    \delta'&<\mathbb{P}_{\theta' \sim \mathbb{P}_{\theta \mid \mathcal{F}_t}}[\theta'\leq x,\PP(\pi_{t+1}=1\mid x_{t+1}=x,\cF_t)<\PP(\pi_{t+1}=2\mid x_{t+1}=x,\cF_t)]\nonumber\\
    &\leq \mathbb{P}_{\theta' \sim \mathbb{P}_{\theta \mid \mathcal{F}_t}}[\cF_t\text{ is unfair for }(\theta',x)].
\end{align}

Let $\delta'=6\delta$. Therefore, by \eqref{eq_lb_linear_ftunfairprob1} and \eqref{eq:delta'<ptheta'simptheta|ft}, the probability of $\cF_t$ that does not $\delta'$-distinguish at $x$ and $\PP(\pi_{t+1}=1\mid x_{t+1}=x,\cF_t)>\PP(\pi_{t+1}=2\mid x_{t+1}=x,\cF_t)$ is less than $\frac{1}{6}$. Hence, with probability at least $\frac{5}{6}$ over the marginal distribution of $\cF_t$, it must be that either $\cF_t$ $\delta'$-distinguishes at $x$, or  $\PP(\pi_{t+1}=1\mid x_{t+1}=x,\cF_t) \leq \PP(\pi_{t+1}=2\mid x_{t+1}=x,\cF_t)$. We define this event as $\cC_1$ for notational simplicity. Symmetrically, by \eqref{eq:delta'>ptheta'simptheta|ft}, with probability at least $\frac{5}{6}$ over the marginal distribution of  $\cF_t$,  $\cF_t$ either $\delta'$-distinguish at $x$ or $\PP(\pi_{t+1}=1\mid x_{t+1}=x,\cF_t)\geq \PP(\pi_{t+1}=2\mid x_{t+1}=x,\cF_t)$, where this event is defined as $\cC_2$ for notational simplicity.

Define events $\cB_{\cF_t}=\{s \geq t/16\}$ and $\cB_{\theta_0}=\{|\theta_0| \leq 1/4\}$. Since we have proved that, under events $\mathcal{A}$, $\cB_{\cF_t}$, $\cB_{\theta_0}$, together with the conditions $t > 8192 \log(1/\delta')$, and $\delta' < 1/2$, $\cF_t$ $\delta'$-distinguishes at $x$ implies either $ x - \theta_0 \leq -\frac{1}{4} \sqrt{\frac{1}{t} \log \frac{2}{\delta'}}$ or $ x - \theta_0 \geq \frac{1}{4} \sqrt{\frac{1}{t} \log \frac{2}{\delta'}}.$  Therefore, under events $\mathcal{A}$, $\cB_{\cF_t}$, $\cB_{\theta_0}$, $\cC_1$, together with the conditions $t > 8192 \log(1/\delta')$, and $\delta' < 1/2$, we have either $| x - \theta_0 |\geq \frac{1}{4} \sqrt{\frac{1}{t} \log \frac{2}{\delta'}}$ or $\PP(\pi_{t+1}=1\mid x_{t+1}=x,\cF_t)\leq\PP(\pi_{t+1}=2\mid x_{t+1}=x,\cF_t)$;  under $\mathcal{A}$, $\cB_{\cF_t}$, $\cB_{\theta_0}$, $\cC_2$, together with the conditions $t > 8192 \log(1/\delta')$, and $\delta' < 1/2$, we have either $| x - \theta_0 |\geq \frac{1}{4} \sqrt{\frac{1}{t} \log \frac{2}{\delta'}}$ or $\PP(\pi_{t+1}=1\mid x_{t+1}=x,\cF_t)\geq\PP(\pi_{t+1}=2\mid x_{t+1}=x,\cF_t)$. Putting these events together, it is clear that under $\mathcal{A}$, $\cB_{\cF_t}$, $\cB_{\theta_0}$, $\cC_1$, $\cC_2$, together with the conditions $t > 8192 \log(1/\delta')$, and $\delta' < 1/2$, we have either $| x - \theta_0 |\geq \frac{1}{4} \sqrt{\frac{1}{t} \log \frac{2}{\delta'}}$ or $\PP(\pi_{t+1}=1\mid x_{t+1}=x,\cF_t)=\PP(\pi_{t+1}=2\mid x_{t+1}=x,\cF_t)=\frac{1}{2}$.

\textbf{Step 4: Derivation of the Regret Lower Bound}
For convenience, we summarize the probability of these events.
For $\cC_1,\cC_2$, by its definition, we have
\begin{align*}
    \PP_{\cF_t}(\cC_1)\geq \frac{5}{6},    \PP_{\cF_t}(\cC_2)\geq \frac{5}{6}\implies \PP_{\cF_t}(\cC_1,\cC_2)\geq \frac{4}{6},
\end{align*}
which implies, since $\PP(\cB_{\theta_0})=\frac{1}{2}$,
\begin{align}\label{eq:pft|btheta0c1c2}
    \PP_{\cF_t|\cB_{\theta_0}}(\cC_1,\cC_2)=\frac{\PP_{\cF_t}(\cC_1,\cC_2,\cB_{\theta_0})}{\PP_{\cF_t}(\cB_{\theta_0})}\geq 2(\frac{4}{6}+\frac{1}{2}-1)\geq\frac{1}{3}.
\end{align}

For any fixed $\theta_0$, by \eqref{eq:probsgeqt16geq},
\begin{align}\label{eq:pfttheta0bft}
 \PP_{\cF_t|\theta_0}(  \cB_{\cF_t} )\geq 1 - \delta - \exp\left( -\frac{t}{128} \right).
\end{align}
By \eqref{eq:pcft|thetaca}, for any fixed $\theta_0$, we have
\begin{align}\label{eq:pcfttheta0amidb}
    \PP_{\cF_t|\theta_0}(\cA\mid \cB_{\cF_t})\geq 1-2\sqrt{4\delta'}.
\end{align}

Define $r_t := \frac{1}{4} \sqrt{\frac{1}{t} \log \frac{2}{\delta'}}$. The regret can be bounded from below as
\begin{align}\label{eq:suprtpipi*}
   & \sup \left\{R_T\left(\pi, \pi^*\right):\left(P_{X, Y^{(1)}}, P_{X, Y^{(2)}}\right) \in \mathcal{P}\right\} \nonumber\\ 
   \geq & \sup _{\theta \in \Theta} \mathbb{E} \sum_{t=0}^{T-1}\left|\theta-x_{t+1}\right|\left[I\left\{x_{t+1}-\theta \geq 0, \pi_{t+1}=2\right\}+I\left\{x_{t+1}-\theta<0, \pi_{t+1}=1\right\}\right]\nonumber\\ 
   \geq & \mathbb{E}_{\theta_0\sim \PP_{\theta| \cB(\theta)}} \sum_{t=0}^{T-1} \mathbb{E}\left[\left|\theta_0-x_{t+1}\right|\left[I\left\{x_{t+1}-\theta_0 \geq 0, \pi_{t+1}=2\right\}+I\left\{x_{t+1}-\theta_0<0, \pi_{t+1}=1\right\}\right]\right].
\end{align}
   
Let $L_{t+1} = \left|\theta_0-x_{t+1}\right|\left[I\left\{x_{t+1}-\theta_0 \geq 0, \pi_{t+1}=2\right\}+I\left\{x_{t+1}-\theta_0<0, \pi_{t+1}=1\right\}\right]$, then
\begin{align}\label{eq:supRTpi:pxy^1}
 & \sup \left\{R_T\left(\pi, \pi^*\right):\left(P_{X, Y^{(1)}}, P_{X, Y^{(2)}}\right) \in \mathcal{P}\right\} \nonumber\\ 
   \geq &\mathbb{E}_{\theta_0\sim \PP_{\theta| \cB(\theta)}} \sum_{t= 8192 \log(1/\delta')}^{T-1} \mathbb{E}\left[L_{t+1}\;\Bigm|\; \mathcal{A}, \cB_{\cF_t},  \cC_1, \cC_2, | x_{t+1} - \theta_0 |< r_t \right]\nonumber\\
   &\times \PP(\mathcal{A}, \cB_{\cF_t}, \cC_1, \cC_2,| x_{t+1} - \theta_0 |< r_t)\nonumber\\ 
   := & \mathbb{E}_{\theta_0\sim \PP_{\theta| \cB(\theta)}}\sum_{t= 8192 \log(1/\delta')}^{T-1} I_{1,t+1}\times I_{2,t+1}.
\end{align}

For any $\theta_0$ drawn from the distribution $\PP_{\theta| \cB(\theta)}$,
consider $I_{1,t+1}$ with  $t > 8192 \log(1/\delta')$. Assume $T$ is sufficiently large. Let $\delta=\Theta(1/T)$, $\delta'={6\delta}=\Theta(1/T)$, then $t$ can be chosen as $t=\Theta(\log T)$, and $\delta'< 1/2$. 
According to Step 3, under $\mathcal{A}$, $\cB_{\cF_t}$, $\cC_1$, $\cC_2$, and $| x_{t+1} - \theta_0 |< r_t$, we have $\mathbb{P}(\pi_{t+1}=1 \mid x_{t+1}, \cF_{t}) = \mathbb{P}(\pi_{t+1}=2 \mid x_{t+1}, \cF_{t}) = 1/2$. Thus,
\begin{equation}\label{eq:I1t+1>Etheta0-xt+1}
\begin{split}
    I_{1,t+1} & \geq \mathbb{E} \Bigl[ |\theta_0-x_{t+1}| \bigl( \mathbb{I}\{x_{t+1}-\theta_0 \geq 0, \pi_{t+1}=2\} \\
    & \quad + \mathbb{I}\{x_{t+1}-\theta_0<0, \pi_{t+1}=1\} \bigr) \Bigm| \mathcal{A}, \cB_{\cF_t}, \cC_1, \cC_2, | x_{t+1} - \theta_0 |< r_t  \Bigr] \\
    & \geq \frac{1}{2} \mathbb{E} \Bigl[ |\theta_0-x_{t+1}| \Bigm| \mathcal{A}, \cB_{\cF_t}, \cC_1, \cC_2, | x_{t+1} - \theta_0 |< r_t  \Bigr],
\end{split}
\end{equation}
The events $\mathcal{A}$, $\cB_{\cF_t}$, $\cC_1$, $\cC_2$ depend on $\cF_{t}$ (independent of $x_{t+1}$).  
It follows that for $\theta_0$ satisfying $\cB_{\theta_0}$, by the independency between these events and $x_{t+1}$, and by \eqref{eq:pft|btheta0c1c2}, \eqref{eq:probsgeqt16geq} and \eqref{eq:pcfttheta0amidb},
\begin{align*}
    &\PP(\mathcal{A}, \cB_{\cF_t}, \cC_1, \cC_2\;\Bigm|\; | x_{t+1} - \theta_0 |< r_t)= \PP(\mathcal{A}, \cB_{\cF_t}, \cC_1, \cC_2)\nonumber\\
    \geq &\left(1 - \delta - \exp( -\frac{t}{128})\right)\left(1-2\sqrt{4\delta'}\right)+\frac{1}{3}-1\geq c
\end{align*}
for some positive constant $c$, which implies for any $\theta_0$ satisfying $\cB_{\theta_0}$,
\begin{align}\label{eq:Etheta0-xt+1A,B,C1,C2}
    &\mathbb{E}\Bigl[ \left|\theta_0-x_{t+1}\right| \;\Bigm|\; \mathcal{A}, \cB_{\cF_t}, \cC_1, \cC_2,| x_{t+1} - \theta_0 |< r_t \Bigr] \nonumber\\
    = &\mathbb{E}\Bigl[ \left|\theta_0-x_{t+1}\right| \;\Bigm|\; | x_{t+1} - \theta_0 |< r_t \Bigr],
\end{align}
since for fixed $\theta_0$, we have $\theta_0-x_{t+1}$ is independent of $\{\mathcal{A}, \cB_{\cF_t}, \cC_1, \cC_2\}$ and $\PP(\mathcal{A}, \cB_{\cF_t}, \cC_1, \cC_2,| x_{t+1} - \theta_0 |< r_t)>0$.

To compute $\mathbb{E}\Bigl[ \left|\theta_0-x_{t+1}\right| \Bigm| | x_{t+1} - \theta_0 |< r_t \Bigr]$, note that
\begin{align*}
    \mathbb{E}\Bigl[ \left|\theta_0-x_{t+1}\right| I_{| x_{t+1} - \theta_0 |< r_t} \Bigr] = \int_{\theta_0 - r_t} ^{\theta_0 + r_t}|\theta_0 - x| \, \cdot \frac{1}{2} \, dx=\frac{1}{2}r_t^2,
\end{align*}
together with the fact that the probability $\mathbb{P}(| x_{t+1} - \theta_0 |< r_t) = r_t$, implies
$$\mathbb{E}\Bigl[ \left|\theta_0-x_{t+1}\right|\;\Bigm|\; | x_{t+1} - \theta_0 |< r_t \Bigr] = \frac{r_t^2 / 2}{r_t } = \frac{r_t}{2}.$$
Therefore, together with \eqref{eq:I1t+1>Etheta0-xt+1} and \eqref{eq:Etheta0-xt+1A,B,C1,C2},
$$I_{1,t+1} \gtrsim \frac{1}{2} \cdot \frac{r_t}{2} = \frac{r_t}{4} = \frac{1}{16} \sqrt{\frac{1}{t} \log \frac{2}{\delta'}}.$$

In addition, $I_{2,t+1} = \mathbb{P}(\mathcal{A}, \cB_{\cF_t}, \cC_1, \cC_2, |x_{t+1} - \theta_0 |< r_t) \gtrsim \mathbb{P}(| x_{t+1} - \theta_0 |< r_t) \gtrsim {r_t} \gtrsim \sqrt{\frac{\log T}{t}}$ (since $r_t=\frac{1}{4} \sqrt{\frac{1}{t} \log \frac{2}{\delta'}}\gtrsim \sqrt{\frac{\log T}{t}}$). The product satisfies $I_{1,t+1} \times I_{2,t+1} \gtrsim \frac{\log T}{t}$. We have for any $\theta_0$ satisfying $\cB_{\theta_0}$,
\begin{align*}
    \sum_{t= 8192 \log(1/\delta')+128}^{T-1} I_{1,t+1}\times I_{2,t+1}= \Omega((\log T)^2),
\end{align*}
which, together with \eqref{eq:suprtpipi*},\eqref{eq:supRTpi:pxy^1}, leads to 
\begin{align*}
&\sup \left\{R_T\left(\pi, \pi^*\right):\left(P_{X, Y^{(1)}}, P_{X, Y^{(2)}}\right) \in \mathcal{P}\right\}
 \geq  \Omega((\log T)^2).
\end{align*}
\hfill\Halmos

\subsection{Proofs in Section \ref{sec:fairalgsmooth}}\label{ec_pf_nonpara}

\subsubsection{Supporting Lemmas and Propositions}

We first present several lemmas and propositions that are used in this section, whose proofs can be found in Section \ref{ec_pf_lemmasinfair_nonpara}. Recall that in Section \ref{sec:fairalgsmooth}, we take $C=0$ in Assumption \ref{assump:Qxliplower}.

\begin{lemma}\label{lem_prev7}
    Under Assumption \ref{assump:Qxliplower}, if $\max_{i,j\in\cK}\max_{\bx\in\cX}\Delta_{i,j}(\bx)> T^{-\frac{\beta}{2\beta+d}}$,  there exists a positive constant $c^{\prime\prime}$ such that for all $k\in \cK$ and $\bx\in\cX\setminus (B(\cX_0^{(k)})\cup \mathcal{R}_k)$, $\max_{j\in\cK}f_j^*(\bx)-f_k^*(\bx)> c^{\prime\prime}$. 
\end{lemma}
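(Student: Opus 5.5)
The plan is a compactness and continuity argument. Fix $k\in\cK$ and set $U_k := \cX\setminus (B(\cX_0^{(k)})\cup\cR_k)$. Consider the gap function $g_k(\bx) := \max_{j\in\cK}f_j^*(\bx)-f_k^*(\bx)$. By Assumption \ref{assump:betaholder} each $f_j^*$ is $(1,L_1,\cX)$-H\"older, hence Lipschitz, so $g_k$ is Lipschitz with constant $2L_1$. The goal is to show that $g_k$ is bounded below by a positive constant on $U_k$, and then take $c^{\prime\prime}$ to be a constant slightly below the minimum of these bounds over the finitely many $k$.

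First I will show that $g_k>0$ pointwise on $U_k$. Since $\bx\notin\cR_k$, arm $k$ is not maximal at $\bx$, so $f_k^*(\bx)<\max_j f_j^*(\bx)$ by the definition of $\cR_k$. Next I would like to pass to the closure. The set $B(\cX_0^{(k)})$ is a union of open balls of radius $\mathfrak{r}$, so it is relatively open in $\cX$. The set $\cR_k$, however, is closed, since it is the zero set of the continuous function $g_k$. Consequently $U_k$ need not be closed, and points of $\overline{U_k}$ can lie on $\partial\cR_k$, where $g_k=0$. This is the main obstacle.

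To handle it, take $\bx^\dagger\in\overline{U_k}\cap\cR_k$, the limit of points $\bx_n\in U_k$. Because $\bx_n\notin\cR_k$, some other arm $j$ strictly beats $k$ at each $\bx_n$. Passing to a subsequence along which $j$ is fixed and using continuity, $f_j^*(\bx^\dagger)\ge f_k^*(\bx^\dagger)=\max_l f_l^*(\bx^\dagger)$. Hence $k,j\in\cQ_{\bx^\dagger}$, so $|\cQ_{\bx^\dagger}|>1$ and $\bx^\dagger\in\cX_0^{(k)}$. Then the ball $B(\bx^\dagger,\mathfrak{r})\subseteq B(\cX_0^{(k)})$ contains $\bx_n$ for all large $n$, which contradicts $\bx_n\in U_k$.

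It follows that $\overline{U_k}\cap\cR_k=\emptyset$, and also $\overline{U_k}\cap\cX_0^{(k)}=\emptyset$. On the compact set $\overline{U_k}\subseteq\cX$ (using compactness of $\cX$ from Assumption \ref{assump:iiddensity}), the continuous function $g_k$ is therefore strictly positive. It attains a minimum $m_k>0$, provided $U_k\neq\emptyset$; if $U_k=\emptyset$ there is nothing to prove.

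Finally, set $c^{\prime\prime}:=\tfrac12\min_k m_k$, which gives the strict inequality. The hypothesis $\max\Delta_{i,j}>T^{-\beta/(2\beta+d)}$ is used only to activate Assumption \ref{assump:Qxliplower}, which is what makes $\mathfrak{r}$ and $\cX_0^{(k)}$ well defined. I would also remark that $c^{\prime\prime}$ depends only on the fixed reward functions and not on $T$. Part (ii) of Assumption \ref{assump:Qxliplower} is an alternative route: it directly gives the gap $c^\prime$ for arms not in $\cQ_{\bx_0}$ near $\cX_0^{(k)}$. The compactness argument above, however, does not need it.
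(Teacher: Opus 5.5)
Your proposal is correct and follows essentially the same compactness-and-continuity argument as the paper. The paper takes points $\bx_n\in\cX\setminus(B(\cX_0^{(k)})\cup\cR_k)$ whose gap tends to zero, extracts a convergent subsequence, and contradicts using the $\mathfrak{r}$-ball around the limit. Your fixed-$j$ subsequence step shows that the limit point lies in $\cX_0^{(k)}$, not merely in $\cR_k$, which makes explicit a step the paper's proof passes over.
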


\begin{lemma}\label{lem:RksubsetSqk}
    Under event $\overline{\mathcal{G}}_{q-1}$, $\mathcal{R}_k\subseteq S_{q,k}$ for all $k\in\cK$.
\end{lemma}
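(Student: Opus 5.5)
We argue by induction on the epoch index $q$. The claim $\mathcal{R}_k\subseteq S_{q,k}$ means that for every $\bx\in\mathcal{R}_k$, arm $k$ is still in the active set $\mathcal{K}_{q,u(\bx)}$ of the hypercube containing $\bx$.

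\textbf{Base case.} For $q=1$ the claim is immediate, since $\mathcal{K}_{1,j}=\mathcal{K}$ for every $j$, so $S_{1,k}=\cX\supseteq\mathcal{R}_k$.

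\textbf{Inductive step.} Fix $q\ge 2$ and assume $\mathcal{R}_k\subseteq S_{q-1,k}$; this is available because $\overline{\mathcal{G}}_{q-1}\subseteq\overline{\mathcal{G}}_{q-2}$, so the inductive hypothesis applies on the same event. Take $\bx\in\mathcal{R}_k$ and let $j=u(\bx)$, so that $k\in\mathcal{K}_{q-1,j}$. If $j\notin\mathcal{I}_q$, then $\mathcal{K}_{q,j}=\mathcal{K}_{q-1,j}\ni k$ and we are done. Otherwise $k$ survives the update provided $\hat f_{q-1,k}(G_j)$ is $(2\epsilon_{q-1})$-chained to $\max_{l\in\mathcal{K}_{q-1,j}}\hat f_{q-1,l}(G_j)$. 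It suffices to show the stronger direct link
\begin{align*}
\hat f_{q-1,v}(G_j)-\hat f_{q-1,k}(G_j)\le 2\epsilon_{q-1},\qquad v=\arg\max_{l\in\mathcal{K}_{q-1,j}}\hat f_{q-1,l}(G_j).
\end{align*}

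\textbf{The comparison.} We apply $\mathcal{G}_{q-1}$(ii) to both $k$ and $v$. Here the estimator is evaluated at the lattice point, $\hat f_{q-1,l}(\bx)=\hat f^{LP}(g(\bx);\cdot)$, and the event bounds $|\hat f_{q-1,l}(\bx)-f_l^*(\bx)|\le\epsilon_{q-1}/2$ for all $\bx\in S_{q-1,l}$. Both $k$ and $v$ lie in $\mathcal{K}_{q-1,j}$, so $\bx\in S_{q-1,k}\cap S_{q-1,v}$, giving
\begin{align*}
\hat f_{q-1,v}(g(\bx))-\hat f_{q-1,k}(g(\bx))\le f_v^*(\bx)-f_k^*(\bx)+\epsilon_{q-1}\le\epsilon_{q-1},
\end{align*}
since $f_k^*(\bx)\ge f_v^*(\bx)$ on $\mathcal{R}_k$. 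This is well within the threshold $2\epsilon_{q-1}$.

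\textbf{Main obstacle.} The delicate point is bookkeeping: we must confirm that event (ii) is stated at the actual context $\bx$ (with $\hat f$ read through $g$), and not only at grid points. If only grid-point control were available, the extra slack between $\epsilon_{q-1}$ and $2\epsilon_{q-1}$ would absorb the Lipschitz discrepancy $L_1\delta_A\sqrt d$. This discrepancy is $o(\epsilon_{q-1})$ for large $T$, because $\delta_A=T^{-\beta/(2\beta+d)}(\log T)^{-1}$ is much smaller than $\epsilon_q$ for all $q\le Q$.
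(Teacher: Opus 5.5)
Your proof is correct and follows essentially the same route as the paper's. Both argue by induction over epochs, apply $\mathcal{G}_{q-1}$(ii) at the context $\bx$ to both $k$ and the empirical leader (which also lies in $\mathcal{K}_{q-1,u(\bx)}$), and use $f_k^*(\bx)\ge f_v^*(\bx)$ on $\mathcal{R}_k$ to get a direct $\epsilon_{q-1}$-link, which is inside the $2\epsilon_{q-1}$ chaining threshold.

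Your separate treatment of cubes outside $\mathcal{I}_q$ is a small addition the paper omits. The grid-point fallback is not needed: event (ii) is already stated for all $\bx\in S_{q-1,k}$, and $\hat f_{q-1,l}(\bx)=\hat f_{q-1,l}(G_{u(\bx)})$ by construction.
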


\begin{lemma}\label{lem:epsilongreatthan1/2delta}   
    When $T>e^{C_K}$, $Q \leq\lceil \frac{\beta}{(2\beta+d)\log2}\log(T\left(\log T\right)^{-\frac{2\beta+d}{\beta^{\prime}-1}+\frac{2\beta+d}{2\beta}})\rceil$, and for all $1\leq q\leq Q$, $\epsilon_q \geq \frac{1}{2} \delta_A.$ 
\end{lemma}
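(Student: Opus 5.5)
The plan is to bound $Q$ by requiring that the second-to-last epoch fits inside the horizon, and then to plug that bound into the explicit formula for $\epsilon_q$. The whole argument is exponent bookkeeping, with no probabilistic content.

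\textbf{Bounding $Q$.} Take $Q$ to be the index of the last epoch that starts before time $T$. Then epoch $Q-1$ is completed, so $|\mathcal{T}_{Q-1}|\le T$.
\begin{itemize}
\item Since $\delta_A<1$, we have $\log(T\delta_A^{-d})\ge \log T$, and the hypothesis $T>e^{C_K}$ gives $\log(T\delta_A^{-d})/C_K>1$.
\item From $p^*\le 1$ and $K\ge 2$ we get $2K/p^*>1$, and the additive $\frac{K^2}{2p^{*2}}\log T$ term is positive.
\end{itemize}
Together these give a strict lower bound on the epoch length. Write $e:=\frac{2\beta+d}{\beta'-1}-\frac{2\beta+d}{2\beta}$. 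Then
\begin{align*}
T\ \ge\ |\mathcal{T}_{Q-1}|\ >\ 4^{(Q-1)\frac{2\beta+d}{2\beta}}(\log T)^{e}=2^{(Q-1)\frac{2\beta+d}{\beta}}(\log T)^{e}.
\end{align*}
Taking logarithms gives $Q-1<x$, where
\begin{align*}
x:=\frac{\beta}{(2\beta+d)\log 2}\log\bigl(T(\log T)^{-e}\bigr).
\end{align*}
Because $Q$ is an integer and the inequality is strict, $Q<x+1$ implies $Q\le\lceil x\rceil$, which is the claimed bound. The strictness is the one point that needs care; it is what the factor $2K/p^*>1$ and the positive additive term are used for.

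\textbf{Bounding $\epsilon_q$.} The tolerance $\epsilon_q=2^{-q}(\log T)^{\frac{\beta'-1-2\beta}{2\beta'-2}}$ is decreasing in $q$, so it suffices to treat $q=Q$. From $Q<x+1$ we get $2^{-Q}>\tfrac12\,2^{-x}$. Next compute
\begin{align*}
2^{-x}=\bigl(T(\log T)^{-e}\bigr)^{-\frac{\beta}{2\beta+d}}=T^{-\frac{\beta}{2\beta+d}}(\log T)^{\frac{\beta}{\beta'-1}-\frac12}.
\end{align*}
The exponent of the tolerance factor is $\frac{\beta'-1-2\beta}{2\beta'-2}=\frac12-\frac{\beta}{\beta'-1}$, so the two $\log T$ powers cancel exactly. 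This gives
\begin{align*}
\epsilon_Q>\tfrac12\,T^{-\frac{\beta}{2\beta+d}}=\tfrac12\,\delta_A\log T.
\end{align*}
Since $\log T\ge 1$ for the $T$ under consideration, $\epsilon_q\ge\epsilon_Q\ge\tfrac12\delta_A$ for all $1\le q\le Q$.

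The main thing to check is this exact cancellation of the $(\log T)$ exponents, which the definitions of the epoch length and of $\epsilon_q$ appear designed to produce. I expect no deeper obstacle.
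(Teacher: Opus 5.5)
Your proposal is correct and follows essentially the paper's argument. The paper lower-bounds $\sum_{q\le Q_0}|\mathcal{T}_q|$ by the single term $|\mathcal{T}_{Q_0}|$ and chooses $Q_0$ so that this term alone reaches $T$; that is the contrapositive of your observation that epoch $Q-1$ must fit inside the horizon, and the bound on $\epsilon_Q$ comes from the same exact cancellation of the $\log T$ exponents. Like the paper, you implicitly need $T$ large enough that $\log T\ge 1$, and also $x>0$ in the case $Q=1$ (where there is no epoch $Q-1$); the hypothesis $T>e^{C_K}$ alone does not guarantee either when $C_K$ is small.
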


\begin{lemma}\label{lem:toosmallallsupport}
    When $\max_{i,j\in\cK}\max_{\bx\in\cX}\Delta_{i,j}(\bx)\leq T^{-\frac{\beta}{2\beta+d}}$, Under event  $\overline{\mathcal{G}}_{q-1}\cap\overline{\mathcal{M}}_q$, $S_{q,k}=\cX$ for all $k\in\cK$.
\end{lemma}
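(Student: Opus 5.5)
The statement to prove is Lemma \ref{lem:toosmallallsupport}: if $\max_{i,j}\max_{\bx}\Delta_{i,j}(\bx)\le T^{-\frac{\beta}{2\beta+d}}$, then on $\overline{\mathcal{G}}_{q-1}\cap\overline{\mathcal{M}}_q$ we have $S_{q,k}=\cX$ for every arm $k$. The plan is an induction over epochs. The claim is that no arm is ever removed from any active set $\mathcal{K}_{q,j}$. Since $S_{q,k}=\{\bx:k\in\mathcal{K}_{q,u(\bx)}\}$, this gives $S_{q,k}=\cX$ directly.

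\textbf{Base case.} For $q=1$ every $\mathcal{K}_{1,j}=\mathcal{K}$, so the claim holds.

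\textbf{Inductive step.} Suppose $\mathcal{K}_{q-1,j}=\mathcal{K}$ for all $j$, so that $S_{q-1,k}=\cX$ for every $k$. On $\mathcal{G}_{q-1}$, part (ii) gives
\[
|\hat f_{q-1,k}(\bx)-f_k^*(\bx)|\le \epsilon_{q-1}/2 \quad \text{for all } \bx\in\cX \text{ and all } k.
\]
This holds in particular at the lattice points $G_j$, which we may take to lie in $S_{q-1,k}$; if the support convention $\mathbb{P}(\operatorname{Cube}\cap\cX)>0$ requires it, we instead use $g(\bx)$ for some $\bx$ in the cube. For any arm $k$ and any $j$, let $l^\ast$ be the maximizer of $\hat f_{q-1,\cdot}(G_j)$. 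Then
\[
\hat f_{q-1,l^\ast}(G_j)-\hat f_{q-1,k}(G_j)\le \Delta_{l^\ast,k}(G_j)+\epsilon_{q-1}\le T^{-\frac{\beta}{2\beta+d}}+\epsilon_{q-1}.
\]

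\textbf{Key comparison.} It remains to show $T^{-\frac{\beta}{2\beta+d}}\le \epsilon_{q-1}$. Then the gap is at most $2\epsilon_{q-1}$, so $k$ is directly $2\epsilon_{q-1}$-linked to the maximum, and hence chained to it. Thus $k$ stays in $\mathcal{K}_{q,j}$. For the comparison we use Lemma \ref{lem:epsilongreatthan1/2delta}, which gives $\epsilon_q\ge \frac12\delta_A=\frac12 T^{-\frac{\beta}{2\beta+d}}(\log T)^{-1}$ for all $q\le Q$. Taken alone, this is weaker than needed by a factor of $2\log T$. This is the main obstacle.

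I would try two fixes, in this order. First, compare directly, without going through $\delta_A$. We have $\epsilon_{q-1}=2^{-(q-1)}(\log T)^{\frac{\beta'-1-2\beta}{2\beta'-2}}$, and the epoch schedule forces $4^{Q}\lesssim T^{\frac{2\beta}{2\beta+d}}$ times a power of $\log T$. From this, $2^{-q}$ should be at least $T^{-\frac{\beta}{2\beta+d}}$ times a polylog factor that beats the negative power of $\log T$. The exponent of $\log T$ in $|\mathcal{T}_q|$ appears chosen exactly so that this works for large $T$, so I would redo the computation in the proof of Lemma \ref{lem:epsilongreatthan1/2delta} while keeping the $\log$ factors. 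Second, if the constants are slightly off, I would exploit the slack from multi-hop chaining. Arm $k$ need not be directly linked to the maximum. It suffices that the estimated values, which all lie in an interval of width $T^{-\frac{\beta}{2\beta+d}}+\epsilon_{q-1}$, are chained. They are chained provided this width is at most $2\epsilon_{q-1}$.

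\textbf{Conclusion.} After the induction step, the sets $\mathcal{K}_{q,j}$ are still all of $\mathcal{K}$, and therefore $S_{q,k}=\cX$. The event $\overline{\mathcal{M}}_q$ only ensures that the estimators are defined with enough samples, and it plays no further role here.
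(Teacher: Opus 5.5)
Your proposal is essentially the paper's argument, and your first fix is exactly how the comparison you flag gets resolved. The proof of Lemma~\ref{lem:epsilongreatthan1/2delta} actually establishes the intermediate bound $\epsilon_Q=2^{-Q}(\log T)^{\frac{\beta'-1-2\beta}{2\beta'-2}}\ge \frac12 T^{-\frac{\beta}{2\beta+d}}$, because the powers of $\log T$ coming from $|\mathcal{T}_q|$ and from $\epsilon_q$ cancel in that computation; hence $\epsilon_{q-1}\ge \epsilon_{Q-1}=2\epsilon_Q\ge T^{-\frac{\beta}{2\beta+d}}$ for all $q\le Q$, which is what the paper uses. Your second fallback is not needed, and it would not have helped anyway, since it imposes the same width condition. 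Your explicit induction keeping $S_{q-1,k}=\mathcal{X}$, which is needed to apply part (ii) of $\mathcal{G}_{q-1}$ at every point, is a step the paper's short proof uses only tacitly, so your write-up is if anything the more careful one.
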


\begin{lemma}\label{lem:A.6}
For any $1 \leq q \leq Q-1$, and integers $n_{q, k}$ that satisfy $n_{q, k} \geq\left(\frac{6 \sqrt{M_\beta} L v_d p_{\max }}{p^* \lambda_0 \epsilon_q}\right)^{\frac{2 \beta+d}{\beta}}$, under Assumptions \ref{assump:iiddensity}-\ref{assump:c0r0regular}, we assume that $S_{q,k}$ is weakly  $(\frac{c_0}{2^d},H_{q,k})$-regular at all $\bx\in S_{q,k}\cap G$ , then for sufficiently large $T$, the estimator $\hat{f}_{q,k}$ based on samples in the $q^{t h}$ epoch satisfies that
\begin{align*}
& \mathbb{P}\left(\sup_{k \in \cK}\sup _{\bx \in S_{q,k}}\left|\hat{f}_{q,k}(\bx)-f^*_k(\bx)\right| \geq 1/2\epsilon_q \mid \overline{\mathcal{G}}_{q-1}, \overline{\mathcal{M}}_{q-1}, N_{ q, k}=n_{ q, k}\right) \\
\leq & K\delta_A^{-d}\left(4+2 M_\beta^2\right) \exp \left(-C_K n_{q, k}^{\frac{2 \beta}{2 \beta+d}} \epsilon_q^2\right) .
\end{align*}
where $C_K$ and $\lambda_0$ are specified as
\begin{align*}
    \lambda_0=&\frac{1}{4} p_{\text {min }} \inf _{\substack{W \in \mathbb{R}^d, S \subset \mathbb{R}^d:\|W\|=1 \\ S \subset \mathcal{B}(0,1) \text { is compact, Leb }(S)=c_0 v_d / 2^d}} \int_S\left(\sum_{|s| \leq \mathfrak{b}(\beta)} W_s u^s\right)^2 d u\\
   C_{K}=&\frac{3 p^* \lambda_0^2}{4\left(1+L_1 \sqrt{d}\right)^2} \min \left\{\frac{1}{6K M_\beta^4 p_{\max } v_d+2 p^* \lambda_0 M_\beta^2}, \frac{1}{54K M_\beta v_d p_{\max }+6 \sqrt{M_\beta} p^* \lambda_0}\right. \\
&\left.\frac{1}{54K M_\beta L^2 v_d p_{\max }+6 \sqrt{M_\beta} L\left(K v_d p_{\max }+p^*\right) \lambda_0}\right\} .
\end{align*}
\end{lemma}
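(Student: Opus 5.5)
The plan is to follow the standard uniform-concentration argument for local polynomial estimators (as in \cite{hu2022smooth}, Lemma A.6 type), working conditionally on $\overline{\mathcal{G}}_{q-1}\cap\overline{\mathcal{M}}_{q-1}$ and $N_{q,k}=n_{q,k}$. Under this conditioning the support $S_{q,k}$ is fixed by $\mathcal{F}$ up to epoch $q-1$. The $n_{q,k}$ samples of arm $k$ in epoch $q$ are then i.i.d. draws from $\mathsf{P}_X$ restricted to $S_{q,k}$, because the active sets are frozen during the epoch and the arm is chosen uniformly among them. The rewards are independent Gaussian noise around $f_k^*$. Since $\hat f_{q,k}(\bx)$ depends on $\bx$ only through $g(\bx)$, the supremum over $\bx\in S_{q,k}$ reduces to a maximum over at most $\delta_A^{-d}$ lattice points, plus an approximation term $|f_k^*(\bx)-f_k^*(g(\bx))|\le L_1\sqrt d\,\delta_A$. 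That term is below $\epsilon_q/4$ by Lemma \ref{lem:epsilongreatthan1/2delta} up to constants, and this is where the factor $(1+L_1\sqrt d)$ in $C_K$ enters. A union bound over the $K$ arms and $\delta_A^{-d}$ grid points gives the prefactor $K\delta_A^{-d}$.

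At a fixed grid point $G_j\in S_{q,k}\cap G$ with bandwidth $H=H_{q,k}=n_{q,k}^{-1/(2\beta+d)}$, I decompose the error as bias plus variance via the Gram matrix $\hat B=\frac{1}{n H^d}\sum_{t}U(\frac{\bx_t-G_j}{H})U(\cdot)^{\mathrm T}\mathbb{I}(\bx_t\in\mathcal{B}(G_j,H))$. Three concentration steps follow.

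First, I show $\lambda_{\min}(\hat B)\ge \lambda_0$ with probability at least $1-2M_\beta^2\exp(\cdot)$. The population matrix has minimal eigenvalue at least $2\lambda_0$ by weak $(c_0/2^d,H)$-regularity of $S_{q,k}$ at $G_j$ and $p\ge p_{\min}$; this is exactly the definition of $\lambda_0$, after rescaling the ball to the unit ball. Entrywise Bernstein with a union over $M_\beta^2$ entries then gives the $2M_\beta^2$ term.

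Second, I bound the local sample count $\sum_t\mathbb{I}(\bx_t\in\mathcal{B}(G_j,H))\lesssim n H^d v_d p_{\max}/p^*$ with Bernstein or Chernoff; the $p^*$ appears because arm $k$'s sampling density is inflated by at most $1/\PP(S_{q,k})\le 1/p^*$, using $\mathcal{R}_k\subseteq S_{q,k}$ from Lemma \ref{lem:RksubsetSqk}.

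Third, on these events the bias is at most $\sqrt{M_\beta}L H^\beta/\lambda_0$ by the Hölder Taylor remainder. The lower bound on $n_{q,k}$ in the hypothesis makes this at most $\epsilon_q/6$. The noise term is a weighted Gaussian sum with variance $\lesssim M_\beta/(\lambda_0^2 n H^d)$, and it yields a tail $\exp(-c\,nH^d\epsilon_q^2)=\exp(-c\,n^{2\beta/(2\beta+d)}\epsilon_q^2)$.

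Collecting the failure probabilities from the eigenvalue event, the count event, and the two-sided noise tails, and taking the worst exponent constant, gives $(4+2M_\beta^2)\exp(-C_K n^{2\beta/(2\beta+d)}\epsilon_q^2)$ per point. The union bound finishes the proof; "sufficiently large $T$" is used to absorb the grid-approximation term and lower-order terms.

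The main obstacle is justifying the i.i.d. conditional structure and matching the explicit constant $C_K$. The bookkeeping of the Bernstein variance and range terms for each of the three events is what produces the three entries of the minimum in $C_K$. I would first carry out the Gram-matrix Bernstein carefully, since its range term $M_\beta^2/(nH^d)$ and variance term $K v_d p_{\max}/(p^* nH^d)$ dictate the first entry, and then mirror that computation for the other two events.
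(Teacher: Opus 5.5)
Your plan follows the paper's route. The paper proves this lemma by deferring to Lemma 16 of \cite{hu2022smooth}, with Lemma \ref{lem:epsilongreatthan1/2delta} as its only new input. Its robust analogue (Lemma \ref{lem:robustA.6}) spells out exactly your steps: the Gram-matrix eigenvalue bound with failure probability $2M_\beta^2\exp(\cdot)$; a $4\exp(\cdot)$ tail for noise plus bias at each lattice point, with threshold $\epsilon_q/(2(1+L_1\sqrt d))$; a union bound over the $\delta_A^{-d}$ lattice points; the Lipschitz extension off the lattice; and a union bound over the $K$ arms.

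One step is misstated, and it is the step that makes your constants match. Conditional on the past and on $N_{q,k}=n_{q,k}$, the arm-$k$ contexts in epoch $q$ are i.i.d., but not from $\mathsf{P}_X$ restricted to $S_{q,k}$. Because the arm is drawn uniformly from $\cK_{q,u(\bx)}$, their density on $S_{q,k}$ is $\mu_k(\bx)=p(\bx)\,|\cK_{q,u(\bx)}|^{-1}/\PP(\pi_t=k)$. With $\mathcal{R}_k\subseteq S_{q,k}$ (Lemma \ref{lem:RksubsetSqk}), this only gives $p_{\min}/K\le\mu_k\le Kp_{\max}/p^*$.

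The upper bound is what produces the $Kv_dp_{\max}/p^*$ terms in $C_K$. You use it in your last paragraph, which contradicts your earlier ``inflated by at most $1/\PP(S_{q,k})$''. The same factor $K$ then enters your bias step, although the hypothesis on $n_{q,k}$ carries no $K$.

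The lower bound is the real issue. It only yields $\lambda_{\min}$ of the population Gram matrix $\ge \frac{p_{\min}}{K}\cdot\frac{4\lambda_0}{p_{\min}}=\frac{4\lambda_0}{K}$. That equals the $2\lambda_0$ you need when $K=2$, the two-arm setting of \cite{hu2022smooth} from which this $\lambda_0$ is inherited, but it falls short for $K\ge 3$. So ``by regularity and $p\ge p_{\min}$'' does not give $\lambda_{\min}(\hat B)\ge\lambda_0$ with the stated $\lambda_0$. You would need $\lambda_0$, and correspondingly $C_K$ and the threshold on $n_{q,k}$, rescaled by $K$-dependent factors. This only affects constants, and the paper's proof, which imports the two-arm constants of \cite{hu2022smooth}, glosses over the same point.

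Similarly, $\epsilon_q\ge\frac12\delta_A$ by itself does not make $\frac12L_1\sqrt d\,\delta_A$ small relative to $\epsilon_q$. What does is the $\log T$ slack in the proof of Lemma \ref{lem:epsilongreatthan1/2delta}: $\epsilon_q\ge\frac12T^{-\beta/(2\beta+d)}=\frac12(\log T)\delta_A\ge(1+L_1\sqrt d)\delta_A$ for large $T$. This is the form used in Lemma \ref{lem:robustA.6}.
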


\begin{proposition}\label{thm:thm3inhusmooth}
Fix any positive parameters $\alpha, \beta, d, L, L_1$ satisfying $\alpha \beta \leq d$. For any admissible policy $\pi$ and $T$, there exists a contextual bandit instance satisfying Assumptions \ref{assump:iiddensity}-\ref{assump:Qxliplower} with the provided parameters such that
$$
\sup _{\mathbb{P} \in \mathcal{P}} R_T(\pi)=\Omega\left(T^{\frac{\beta+d-\alpha \beta}{2 \beta+d}}\right),
$$
where the $\Omega(\cdot)$ term only depends on the parameters of the class $\mathcal{P}$ and not on $\pi$.
\end{proposition}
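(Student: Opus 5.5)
The plan is to adapt the classical nonparametric lower-bound construction (Rigollet--Zeevi, Perchet--Rigollet; Theorem 3 of \cite{hu2022smooth}) to two arms. The only new ingredient is checking that the hard instances also satisfy Assumptions \ref{assump:c0r0regular} and \ref{assump:Qxliplower}. The key device is to choose the bump amplitude so that the premise of Assumption \ref{assump:Qxliplower} fails, which makes that assumption vacuous on the whole hard family.

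\textbf{Construction.} Take $\cX=[0,1]^d$ with the uniform density, $K=2$, and $f_1^*\equiv 1/2$. Let $h=c_hT^{-1/(2\beta+d)}$ and partition $\cX$ into $h^{-d}$ cubes with centers $\boldsymbol{c}_j$. Fix a smooth radial bump $\phi$ supported in the unit ball with $\phi\le 1$, and pick $m\asymp h^{\alpha\beta-d}$ of the cubes; this requires $\alpha\beta\le d$, so that $m\le h^{-d}$. For $\sigma\in\{\pm1\}^m$ set
\[
f_2^{\sigma}(\bx)=\tfrac12+\sum_{j=1}^m\sigma_j\, a\,h^{\beta}\phi\bigl((\bx-\boldsymbol{c}_j)/h\bigr),
\]
with a constant $a$ small enough that $a h^\beta\le T^{-\beta/(2\beta+d)}$, i.e.\ $a c_h^\beta\le1$.

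\textbf{Checking the assumptions.}
\begin{itemize}
\item $\beta$-H\"older: the scaling $h^\beta\phi(\cdot/h)$ preserves the H\"older seminorm.
\item $(1,L_1)$-H\"older: the Lipschitz constant is $O(h^{\beta-1})\to0$ because $\beta>1$.
\item Margin (Assumption \ref{assum:margin}): the gap is nonzero only on the $m$ bumps, whose total mass is $\asymp m h^d = h^{\alpha\beta}$. Hence for $\delta\gtrsim h^\beta$ we get $\PP(0<|f_1^*-f_2^*|\le\delta)\lesssim h^{\alpha\beta}\lesssim\delta^\alpha$. For smaller $\delta$ the set sits inside the bump shoulders, and a radial $\phi$ with nondegenerate profile gives $\lesssim m h^d(\delta/h^\beta)\le C_0\delta^\alpha$ since $\alpha\le1$.
\item Regularity (Assumption \ref{assump:c0r0regular}): $\cR_1$ is $\cX$ minus the open positive bumps, and $\cR_2$ is $\cX$ minus the open negative bumps. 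Removing disjoint balls of radius $h$ sitting in separate cubes leaves a set that is weakly $(c_0,r)$-regular with $c_0$ independent of $h$.
\item Assumption \ref{assump:Qxliplower}: since $\max_{i,j,\bx}\Delta_{i,j}(\bx)=ah^\beta\le T^{-\beta/(2\beta+d)}$ (with $C=0$), its premise fails and it imposes no constraint.
\end{itemize}
I expect this choice of $a$ to be the cleanest route. The fallback, if one insists on nonvacuous instances, is a bump with power-law decay at its boundary.

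\textbf{Assouad argument.} For each $j$, the two instances differing only in $\sigma_j$ differ on a set of mass $\asymp h^d$ by $\asymp h^\beta$. With Gaussian noise, the KL divergence accumulated over $T$ rounds is at most $\asymp \mathbb{E}[N_j]\,h^{2\beta}$, where $N_j$ counts the pulls of arm 2 with $\bx_t$ in bump $j$, and $\mathbb{E}[N_j]\le T h^d$. The choice $h\asymp T^{-1/(2\beta+d)}$ makes $T h^{d+2\beta}=O(1)$, so the two hypotheses cannot be reliably distinguished. Averaging over uniform $\sigma$ and using Le Cam's two-point bound per coordinate, any admissible policy then plays the wrong arm in bump $j$ a constant fraction of the $\asymp T h^d$ visits. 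Each wrong play costs $\asymp h^\beta$ on a constant fraction of the bump. Summing over bumps gives
\[
R_T\gtrsim m\,T h^{d}h^{\beta}=T h^{\alpha\beta+\beta}=T^{\frac{\beta+d-\alpha\beta}{2\beta+d}}.
\]
A Bayesian average lower-bounds the supremum over $\mathcal P$. No fairness restriction is needed here, since this proposition bounds all admissible policies.

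The delicate points are the small-$\delta$ regime of the margin verification and the uniform regularity constant of $\cR_k$ near the removed balls. Both are geometric bookkeeping once the amplitude is frozen at $ah^\beta$.
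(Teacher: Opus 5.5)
Your overall route matches the paper's. The paper takes the two-armed hard family of Theorem~3 in \cite{hu2022smooth} and notes that its largest gap is at most $T^{-\beta/(2\beta+d)}$, so the premise of Assumption~\ref{assump:Qxliplower} fails and the assumption holds vacuously. Your choice $ah^\beta\le T^{-\beta/(2\beta+d)}$ is the same device, and your Assouad computation is the standard one.

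\textbf{The gap: your instance violates the margin condition.} With the \emph{uniform} covariate density on $[0,1]^d$, Assumption~\ref{assum:margin} fails for small $\delta$. For $\delta<ah^\beta$, the set $\{0<|f_1^*-f_2^\sigma|\le\delta\}$ is the union of the bump shoulders, so its mass depends on how fast $\phi$ vanishes at the edge of its support. Since $\beta>1$, $\phi$ is $\mathfrak{b}(\beta)$-times continuously differentiable and identically zero outside its support. So every derivative of order at most $\mathfrak{b}(\beta)$ vanishes on $\partial\,\mathrm{supp}\,\phi$, and the H\"older condition forces $\phi(y)\le L\,\mathrm{dist}(y,\partial\,\mathrm{supp}\,\phi)^{\beta}$.

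For a radial bump positive inside its support ball, this gives $\operatorname{Leb}\{0<\phi\le s\}\gtrsim s^{1/\beta}$, not $\lesssim s$. The linear ``nondegenerate profile'' you invoke exists only when $\beta\le1$. Hence $\PP(0<|f_1^*-f_2^\sigma|\le\delta)\gtrsim mh^d(\delta/(ah^\beta))^{1/\beta}$. For fixed $T$ and $\delta\to0$ this is not $O(\delta^\alpha)$ once $\alpha\beta>1$, a case the proposition covers (e.g.\ $\alpha=1$, $\beta=2$, $d=2$). For a standard $C^\infty$ bump such as $\exp(-1/(1-|y|^2))$, the shell has measure $\asymp1/\log(1/s)$ and the condition fails for every $\alpha>0$.

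\textbf{The fix: the Audibert--Tsybakov device.} This is built into the construction of \cite{hu2022smooth} and reproduced in the paper's proof of Theorem~\ref{thm:robustlowerbound}.
\begin{enumerate}
\item Take a bump that is flat on an inner ball, say $\phi\equiv1$ on $\mathcal{B}(0,1/4)$ and supported in $\mathcal{B}(0,1/2)$.
\item Let $\PP_X$ put mass $\omega\asymp h^d$, with constant density, only on the inner balls $\mathcal{B}(\boldsymbol{c}_j,h/4)$ of the active cubes.
\item Put the remaining mass on a region where the arms tie exactly, and zero density on the annuli where $\phi$ decays.
\end{enumerate}
On the support the gap is then either $0$ or exactly $ah^\beta$. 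So $\PP(0<|f_1^*-f_2^\sigma|\le t)\le m\omega\,\mathbb{I}\{t\ge ah^\beta\}\lesssim h^{\alpha\beta}\le a^{-\alpha}t^\alpha$ for all $t\ge0$.

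Assumption~\ref{assump:iiddensity} still holds, because the density is bounded above and below on its now-holed support. Assumption~\ref{assump:c0r0regular} must be re-checked for that support, which the paper delegates to \cite{hu2022smooth}.

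With this change your Assouad step goes through verbatim: each bump gets $\asymp T\omega$ visits, the per-coordinate KL is $\lesssim T\omega h^{2\beta}=O(1)$, and each wrong pull costs exactly $ah^\beta$. The rest of your argument then coincides with the paper's.
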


\subsubsection{Proof of Proposition \ref{thm:smoothbanditfair}}
First we need to clarify the notations used in \cite{hu2022smooth} which is different from ours. In this proof, the two arms are denoted as arm 1 and arm -1 respectively and the epochs are from 1 to $K$. Moreover, $\mathcal{E}_{+1, j}$ refers to the exploitation region of arm 1 at epoch $j$, and  similarly $\mathcal{E}_{-1, j}$ refers to the exploitation region of arm -1 at epoch $j$. The exploitation region of arm $i$ means that only arm $i$ would be chosen when context falls on this region.

When $t\in\mathcal{T}_1$, the algorithm pulls two arms with equal probability, which implies that $\PP(\pi_t=1)=\PP(\pi_t=2)=1/2.$ Thus, fairness condition is satisfied trivially. After that, at epoch $k>1$, the algorithm pulls arm $a$ when $\bx_t\in \bigcup_{j=1}^k \mathcal{E}_{a, j}$ and randomly pulls arm with equal probability if $\bx_t\in \cX\setminus\left(\bigcup_{a\in \{-1,+1\}}\bigcup_{j=1}^k \mathcal{E}_{a, j}\right)$. The fairness condition is obviously satisfied in the latter case. 

Then we focus on the scenario that $\bx_t\in \bigcup_{j=1}^k \mathcal{E}_{a, j}$. By statement(ii) of Lemma 5 in \cite{hu2022smooth}, under certain event $\overline{\mathcal{G}}_{k-1} \cap \overline{\mathcal{M}}_{k-1}$, $\left(\bigcup_{j=1}^k \mathcal{E}_{a, j}\right) \cap \cX \subseteq\{\bx \in \cX: f_a^*(\bx)>f_{-a}^*(\bx)\}$, which validates the fairness condition. Thus, under certain event $\overline{\mathcal{G}}_{k-1} \cap \overline{\mathcal{M}}_{k-1}$, the fairness condition is satisfied for epoch $k$ by combining both cases. According to Theorem 1 in \cite{hu2022smooth}, if $T$ is large enough, $\mathbb{P}\left(\overline{\mathcal{G}}_{k-1} \cap \overline{\mathcal{M}}_{k-1}\right) \geq 1-\frac{c(k-1)}{T}$ where $c$ is some positive constant. By taking union bound over all epochs from 2 to $K$, the fairness condition is satisfied with probability $\mathbb{P}\left( \bigcap_{k=2}^{K} (\overline{\mathcal{G}}_{k-1} \cap \overline{\mathcal{M}}_{k-1})\right) =\mathbb{P}\left( \overline{\mathcal{G}}_{K-1} \cap \overline{\mathcal{M}}_{K-1})\right)\geq 1-\frac{c(K-1)}{T}=1-\tilde{O}(1/T)$ since Lemma 2 in \cite{hu2022smooth} gives $K=O(\log T)$.\hfill\Halmos

\subsubsection{Proof of Proposition \ref{prop:explainassump3.8}}

First, we have that $\cX_0^{(1)} = \cX_0^{(2)} = \{\bx \in \cX: f^*_1(\bx)=f^*_2(\bx)\}$, thus $\cQ_{\bx_0}=\{1,2\}=\cK$ for $\bx_0\in \cX_0^{(1)}= \cX_0^{(2)}$.  If $|f_1^*(\bx)-f_2^*(\bx)|\leq T^{-\frac{\beta}{2\beta+d}}+\frac{\sqrt{M_\beta}}{\lambda_0}CT^{-\frac{2\beta}{2\beta+d}}$ for all $\bx\in\cX$, then Assumption \ref{assump:Qxliplower} trivially holds. It remains to consider the scenario that there exists an $\bx\in\cX$ such that $|f_1^*(\bx)-f_2^*(\bx)|> T^{-\frac{\beta}{2\beta+d}}+\frac{\sqrt{M_\beta}}{\lambda_0}CT^{-\frac{2\beta}{2\beta+d}}$. We will show that it satisfies the two conditions in Assumption \ref{assump:Qxliplower} as follows.

\noindent\textit{Assumption \ref{assump:Qxliplower} (i).} We consider two cases.

\noindent\textit{Case 1, $d=1$.} Since $\beta>1$, $f_1^{*\prime}(x)-  f^{*\prime}_2(x)$ is continuous on a compact set $\Omega$, then by the Heine–Cantor theorem, it is uniformly continuous. Then there exists a constant $\mathfrak{r}>0$ such that for any $x_0\in\cX_0^{(k)}$ and $|x-x_0|\leq\mathfrak{r}$, 
\begin{align*}
    \left|f_1^{*\prime}(x)-  f^{*\prime}_2(x) - (f_1^{*\prime}(x_0)-  f^{*\prime}_2(x_0))\right|\leq \frac{1}{\tilde c},
\end{align*}
which, together with $\left|f_1^{*\prime}(x_0)-  f^{*\prime}_2(x_0)\right|>\frac{2}{\tilde c}$, implies that
\begin{align*}
    \left|f_1^{*\prime}(x)-  f^{*\prime}_2(x)\right| \geq  \left|f_1^{*\prime}(x_0)-  f^{*\prime}_2(x_0)\right| - \frac{1}{\tilde c} > \frac{1}{\tilde c}.
\end{align*}
Consider $k=1$ without loss of generality. For any $\bx_1\in B(\cX_0^{(1)})\setminus \cR_1$ and $0\leq \mathfrak{t}\leq 1$, we have $|x_0+\mathfrak{t}(x_1-x_0)-x_0|\leq \mathfrak{r}$, thus $$|f_1^{*\prime}(x_0+\mathfrak{t}(x_1-x_0))-  f^{*\prime}_2(x_0+\mathfrak{t}(x_1-x_0))|>\frac{1}{\tilde c},$$ which implies
\begin{align*}
f_2^*(x_1)-f_1^*(x_1)=\int_0^1|f_1^{*\prime}(x_0+\mathfrak{t}(x_1-x_0))-  f^{*\prime}_2(x_0+\mathfrak{t}(x_1-x_0))|\times |x_1-x_0| d \mathfrak{t}\geq \frac{1}{\tilde c}|x_1-x_0|.
\end{align*}
Thus Assumption \ref{assump:Qxliplower} (i) holds by choosing $\beta'=\beta$.

\noindent\textit{Case 2, $d>1$.}  It suffices to consider $\cX_0^{(2)}$ because $\cX_0^{(1)}=\cX_0^{(2)}$ in this two-arm scenario. Since $\bx_0$ is the projection of $\bx_1$ on $\cX_0^{(2)}$, $\bx_0$ is the solution of the optimization problem
\begin{align*}
    \argmin_{\bx\in \cX_0^{(2)}} &\|\bx_1-\bx\|_2,\\
    \text { s.t. } & f^*_1(\bx)-f^*_2(\bx)=0.
\end{align*}
Since $\beta>1$, $f^*_1(\bx)-f^*_2(\bx)$ has continuous first-order derivative. Therefore, the Lagrange multiplier theorem implies that there exists a unique Lagrange multiplier $\lambda^*$ such that $$\frac{\bx_1-\bx_0}{\|\bx_1-\bx_0\|_2}=\lambda^*\nabla \big(f^*_1(\bx_0)-f^*_2(\bx_0)\big).$$ Since we choose $\bx_1\in B(\cX_0^{(k)})\setminus\cR^{(k)}$, it can be seen that $\bx_1\neq \bx_0$, and thus $\lambda^*\neq 0$. 

Denote $\bv=\frac{(\bx_1-\bx_0)}{\|\bx_1-\bx_0\|_2}$. Therefore, $\bx_1-\bx_0$ is parallel to $\nabla \big(f^*_1(\bx_0)-f^*_2(\bx_0)\big)$, which gives that
\begin{align*}
    \big|\nabla_{\bv}\big(f^*_1(\bx_0)-  f^*_2(\bx_0)\big)\big|&=\bigg|\frac{\nabla \big(f^*_1(\bx_0)-f^*_2(\bx_0)\big)^\mathrm{T}(\bx_1-\bx_0)}{\|\bx_1-\bx_0\|_2}\bigg|\nonumber\\
    &=\|\nabla \big(f^*_1(\bx_0)-f^*_2(\bx_0)\big)\|_2>\frac{2}{\tilde{c}}.
\end{align*}

 Similar to Case 1, $\nabla_{\bv}\big(f^*_1(\bx)-  f^*_2(\bx)\big)$ is continuous for all $\bx=\bx_0+\mathfrak{t}(\bx_1-\bx_0)\in \Omega$, which, together with the Heine–Cantor theorem, implies that $\nabla_{\bv}\big(f^*_1(\bx)-  f^*_2(\bx)\big)$ is uniformly continuous. Therefore, similar to Case 1 again, there exists $\mathfrak{r}>0$ such that if $\|\bx-\bx_0\|_2=\|\mathfrak{t}(\bx_1-\bx_0)\|_2\leq\mathfrak{r}$, then $|\nabla_{\bv}\big(f^*_1(\bx)-  f^*_2(\bx)\big)|>\frac{1}{\tilde{c}}$. By the fact that $\bx=\bx_0+\mathfrak{t}(\bx_1-\bx_0)$ for $0\leq\mathfrak{t}\leq 1$ and certainly $\|\bx-\bx_0\|_2=\|\mathfrak{t}(\bx_1-\bx_0)\|_2\leq\mathfrak{r}$, we have 
 \begin{align*}
     &\big|\nabla_{\bv}\big(f^*_1(\bx_0+\mathfrak{t}(\bx_1-\bx_0))-  f^*_2(\bx_0+\mathfrak{t}(\bx_1-\bx_0))\big)\big|>\frac{1}{\tilde{c}}
 \end{align*}
 for all $0\leq\mathfrak{t}\leq 1$, which implies, consider $k=1$ without loss of generality, 
\begin{align*}
f_2^*(\bx_1)-f_1^*(\bx_1)&=\int_0^1\big|\nabla\big(f^*_1(\bx_0+\mathfrak{t}(\bx_1-\bx_0))-  f^*_2(\bx_0+\mathfrak{t}(\bx_1-\bx_0))\big)\big|^\mathrm{T}(\bx_1-\bx_0) d \mathfrak{t}\nonumber\\
&=\int_0^1\big|\nabla_{\bv}\big(f^*_1(\bx_0+\mathfrak{t}(\bx_1-\bx_0))-  f^*_2(\bx_0+\mathfrak{t}(\bx_1-\bx_0))\big)\big| d \mathfrak{t} \|\bx_1-\bx_0\|_2\nonumber\\
&\geq \frac{1}{\tilde c}\|\bx_1-\bx_0\|_2.
\end{align*}
Thus, Assumption \ref{assump:Qxliplower} (i) holds for the case $d>1$ by choosing $\beta'=\beta$.

\noindent\textit{Assumption \ref{assump:Qxliplower} (ii).} Since $\cQ_{\bx_0}=\{1,2\}=\cK$, there is no arm in $\cK\setminus\cQ_{\bx_0}$, and Assumption \ref{assump:Qxliplower} (ii) holds trivially. This finishes the proof. \hfill\Halmos

\subsubsection{Proof of Proposition \ref{prop:sqkregular}}

We prove this proposition by contradiction. Suppose there exists an arm $k$ and $\tilde{\bx}\in S_{q,k}\cap G$ such that $S_{q,k}$ is not weakly $(\frac{c_0}{2^d},H_{q,k})$-regular at $\tilde{\bx}$. Then by Lemma \ref{lem:toosmallallsupport}, apparently $\max_{i,j\in\cK}\max_{\bx\in\cX}\Delta_{i,j}(\bx)> T^{-\frac{\beta}{2\beta+d}}$.

First we prove $\tilde{\bx}\in B(\cX_0^{(k)})\setminus \mathcal{R}_k$, where $B(\cX_0^{(k)})$ is as in Assumption \ref{assump:Qxliplower}. Assume that $\tilde{\bx}\in \mathcal{R}_k$. Then 
the definition of weakly $(c_0,r_0)$-regularity implies that $\cR_{k}$ is also not weakly $(\frac{c_0}{2^d},H_{q,k})$-regular at $\tilde{\bx}$. By Assumption \ref{assump:c0r0regular}, $\mathcal{R}_k$ is weakly $(\frac{c_0}{2^d},H_{q,k})$-regular at all $\bx\in \mathcal{R}_k$ since $H_{q,k}\leq r_0$ under event $\mathcal{M}_q$ and sufficiently large $T$, which leads to a contradiction and implies that $\tilde{\bx}\notin \mathcal{R}_k$. Then it remains to prove $\tilde{\bx}\in B(\cX_0^{(k)})$. Since $\tilde{\bx}\in S_{q,k}$, we have under event $\overline{\mathcal{G}}_{q-1}$,
\begin{align*}
    f_i^*(\tilde{\bx})-f_k^*(\tilde{\bx})&=f_i^*(\tilde{\bx})-\hat{f}_{q-1,i}(\tilde{\bx})+\hat{f}_{q-1,i}(\tilde{\bx})-\hat{f}_{q-1,k}(\tilde{\bx})+\hat{f}_{q-1,k}(\tilde{\bx})-f_k^*(\tilde{\bx})\nonumber\\
    & \leq \frac{1}{2}\epsilon_{q-1}+2(K-1)\epsilon_{q-1}+\frac{1}{2}\epsilon_{q-1}\nonumber\\
    & = (2K-1)\epsilon_{q-1},
\end{align*}
where $i=\arg\max_{j\in\cK}f_j^*(\tilde{\bx})$. Thus, $f_i^*(\tilde{\bx})-f_k^*(\tilde{\bx})\leq (2K-1)\epsilon_{q-1}\leq K(\log T)^{\frac{\beta^{\prime}-1-2\beta}{2\beta^{\prime}-2}}$. By Lemma \ref{lem_prev7}, if  $\tilde{\bx}\notin B(\cX_0^{(k)})$, $\max_{j\in\cK}f_j^*(\tilde{\bx})-f_k^*(\tilde{\bx})> c^{\prime\prime}$, which contradicts the fact that $f_i^*(\tilde{\bx})-f_k^*(\tilde{\bx})\leq K(\log T)^{\frac{\beta^{\prime}-1-2\beta}{2\beta^{\prime}-2}}<c^{\prime\prime}$ as long as $T$ is large enough, since $1<\beta^{\prime}\leq \beta$. 

Denote $\bx_0$ as the projection of $\tilde{\bx}$ on $\cX_0^{(k)}$. Assumption \ref{assump:Qxliplower} implies that there exists an arm $j\in {\cQ_{\tilde{\bx}}}$ such that $\|\tilde{\bx}-\bx_0\|_2^{\frac{\beta}{\beta^{\prime}}}\leq \tilde c(f_j^*(\tilde{\bx})-f_k^*(\tilde{\bx}))$. According to Lemma \ref{lem:RksubsetSqk}, since $j\in {\cQ_{\tilde{\bx}}}$, we have $j\in \cK_{q,u(\tilde{\bx})}$. Moreover, since $\tilde{\bx}\in S_{q,k}$, which implies $k\in \cK_{q,u(\tilde{\bx})}$, we have $\hat{f}_{q-1,j}(\tilde{\bx})$ and $\hat{f}_{q-1,k}(\tilde{\bx})$ are both $(2\epsilon_{q-1})$-chained to $\max_{o\in \cK_{q-1,u(\tilde{\bx})}}\hat{f}_{q-1,o}(\tilde{\bx})$. Thus,
\begin{align*}
    f_j^*(\tilde{\bx})-f_k^*(\tilde{\bx})&=f_j^*(\tilde{\bx})-\hat{f}_{q-1,j}(\tilde{\bx})+\hat{f}_{q-1,j}(\tilde{\bx})-\hat{f}_{q-1,k}(\tilde{\bx})+\hat{f}_{q-1,k}(\tilde{\bx})-f_k^*(\tilde{\bx})\nonumber\\
    & \leq \frac{1}{2}\epsilon_{q-1}+2(K-1)\epsilon_{q-1}+\frac{1}{2}\epsilon_{q-1}\nonumber\\
    & \leq  2K\epsilon_{q-1},
\end{align*}
which implies that $\|\tilde{\bx}-\bx_0\|_2\lesssim \epsilon_{q-1}^{\frac{\beta^{\prime}}{\beta}}\lesssim 2^{-q\frac{\beta^{\prime}}{\beta}}(\log T)^{\frac{(\beta^{\prime}-1-2\beta)\beta^{\prime}}{(2\beta^{\prime}-2)\beta}}$.

Since $H_{q,k}=N_{q,k}^{-1 /(2 \beta+d)}\geq |\cT_{q}|^{-1 /(2 \beta+d)}= \left\lceil\frac{2K}{p^*}\left(\frac{4^q\log \left(T \delta_A^{-d}\right)}{C_{K}}\right)^{\frac{2 \beta+d}{2 \beta}}\left(\log T\right)^{\frac{2\beta+d}{\beta^{\prime}-1}-\frac{2\beta+d}{2\beta}}+\frac{K^2}{2 p^{*2}} \log T\right\rceil^{-1 /(2 \beta+d)}$, we have
\begin{align}\label{eq:bxbx0andHcomp1}
    \frac{\|\tilde{\bx}-\bx_0\|_2}{H_{q,k}}
    &\lesssim  \frac{2^{-q\frac{\beta^{\prime}}{\beta}}(\log T)^{\frac{(\beta^{\prime}-1-2\beta)\beta^{\prime}}{(2\beta^{\prime}-2)\beta}}}{\left(\frac{2K}{p^*}\left(\frac{4^q\log \left(T \delta_A^{-d}\right)}{C_{K}}\right)^{\frac{2 \beta+d}{2 \beta}}\left(\log T\right)^{\frac{2\beta+d}{\beta^{\prime}-1}-\frac{2\beta+d}{2\beta}}+\frac{K^2}{2 p^{*2}} \log T\right)^{-1 /(2 \beta+d)}}\nonumber\\
    &\lesssim (\log T)^{\frac{\beta^{\prime}-2\beta}{2\beta}}2^{q(\frac{1-\beta^{\prime}}{\beta})}\leq 1-(1-\frac{c_0}{2^{d}})^{\frac{1}{d}},
\end{align}
where the last inequality is by taking $T$ large enough and $1<\beta^{\prime}\leq \beta$.

Obviously, $\mathcal{B}(\bx_0, H_{q,k}-\|\tilde{\bx}-\bx_0\|_2)\subseteq \mathcal{B}(\bx_0, H_{q,k})$. Since any $\bx\in \mathcal{B}(\bx_0, H_{q,k}-\|\tilde{\bx}-\bx_0\|_2)$ satisfies $\|\bx-\bx_0\|_2\leq H_{q,k}-\|\tilde{\bx}-\bx_0\|_2$, it holds that $\|\tilde{\bx}-\bx\|_2\leq \|\tilde{\bx}-\bx_0\|_2+\|\bx-\bx_0\|_2\leq  H_{q,k}$, which implies that $\bx\in \mathcal{B}(\tilde{\bx}, H_{q,k})$. Therefore, $\mathcal{B}(\bx_0, H_{q,k}-\|\tilde{\bx}-\bx_0\|_2)\subseteq \mathcal{B}(\bx_0, H_{q,k})\cap \mathcal{B}(\tilde{\bx}, H_{q,k})$. We can bound $\operatorname{Leb}[\mathcal{B}(\bx_0,H_{q,k}-\|\tilde{\bx}-\bx_0\|_2)]$ by
\begin{align}\label{eq:lebbx0h0geq}
    \operatorname{Leb}[\mathcal{B}(\bx_0,H_{q,k}-\|\tilde{\bx}-\bx_0\|_2)]&=(\frac{H_{q,k}-\|\tilde{\bx}-\bx_0\|_2}{H_{q,k}})^d\operatorname{Leb}[\mathcal{B}(\bx_0,H_{q,k})]\nonumber\\
    &=(1-\frac{\|\tilde{\bx}-\bx_0\|_2}{H_{q,k}})^d\operatorname{Leb}[\mathcal{B}(\bx_0,H_{q,k})]\nonumber\\
    & \geq  (1-\frac{c_0}{2^{d}}) \operatorname{Leb}[\mathcal{B}(\bx_0,H_{q,k})].
\end{align}
where the last inequality is by \eqref{eq:bxbx0andHcomp1}.

Assumption \ref{assump:c0r0regular} gives that
\begin{align*}
        \operatorname{Leb}[\mathcal{R}_k \cap \mathcal{B}(\bx_0, H_{q,k}-\|\tilde{\bx}-\bx_0\|_2)] \geq \frac{c_0}{2^{d-1}} \operatorname{Leb}[\mathcal{B}(\bx_0, H_{q,k}-\|\tilde{\bx}-\bx_0\|_2)],
\end{align*}
which implies that
\begin{align*}
        &\operatorname{Leb}[\mathcal{B}(\bx_0, H_{q,k}-\|\tilde{\bx}-\bx_0\|_2)\setminus(\mathcal{R}_k \cap \mathcal{B}(\bx_0, H_{q,k}-\|\tilde{\bx}-\bx_0\|_2))] \nonumber\\\leq &(1-\frac{c_0}{2^{d-1}})\operatorname{Leb}[\mathcal{B}(\bx_0, H_{q,k}-\|\tilde{\bx}-\bx_0\|_2)]\nonumber\\
        \leq&(1-\frac{c_0}{2^{d-1}})\operatorname{Leb}[\mathcal{B}(\bx_0, H_{q,k})].
\end{align*}
Therefore, together with \eqref{eq:lebbx0h0geq}, we have
\begin{align*}
    &\operatorname{Leb}[\mathcal{R}_k \cap \mathcal{B}(\bx_0, H_{q,k}-\|\tilde{\bx}-\bx_0\|_2))]\nonumber\\\geq  &\operatorname{Leb}[\mathcal{B}(\bx_0, H_{q,k}-\|\tilde{\bx}-\bx_0\|_2)]-(1-\frac{c_0}{2^{d-1}})\operatorname{Leb}[\mathcal{B}(\bx_0, H_{q,k})]\nonumber\\
    \geq & (1-\frac{c_0}{2^{d}}-1+\frac{c_0}{2^{d-1}})\operatorname{Leb}[\mathcal{B}(\bx_0, H_{q,k})]\nonumber\\
    =&\frac{c_0}{2^d} \operatorname{Leb}[\mathcal{B}(\bx_0, H_{q,k})].
\end{align*}
Recall that $\mathcal{B}(\bx_0, H_{q,k}-\|\tilde{\bx}-\bx_0\|_2)\subseteq \mathcal{B}(\bx_0, H_{q,k})\cap \mathcal{B}(\tilde{\bx}, H_{q,k})$ and $\cR_k\subseteq S_{q,k}$, then
\begin{align*}
     \operatorname{Leb}[S_{q,k}  \cap \mathcal{B}(\tilde{\bx}, H_{q,k})] &\geq \operatorname{Leb}[\cR_{k}  \cap \mathcal{B}(\tilde{\bx}, H_{q,k})]\nonumber\\
     &\geq \operatorname{Leb}[\cR_{k}  \cap \mathcal{B}(\bx_0, H_{q,k}-\|\tilde{\bx}-\bx_0\|_2)]\nonumber\\
     &\geq \frac{c_0}{2^d} \operatorname{Leb}[\mathcal{B}(\bx_0, H_{q,k})]\nonumber\\
     &=\frac{c_0}{2^d} \operatorname{Leb}[\mathcal{B}(\tilde{\bx}, H_{q,k})],
\end{align*}
which gives that $S_{q,k}$ is weakly $(\frac{c_0}{2^d},H_{q,k})$-regular at $\tilde{\bx}$. This contradicts the assumption, thus implying that for all $k\in \cK$, $S_{q,k}$ is weakly $(\frac{c_0}{2^d},H_{q,k})$-regular at every $\bx\in S_{q,k}\cap G$. This finishes the proof. \hfill\Halmos

\subsubsection{Proof of Proposition \ref{thm:tailproboftwoevents}}

Recall that 
\begin{align*}
&\mathcal{M}_q  =\left\{\min _{k \in \mathcal{K}} N_{q, k} \geq\left(\frac{4^q\log(T\delta_A^{-d})}{C_{K}}\right)^{\frac{2 \beta+d}{2 \beta}}\left(\log T\right)^{(2\beta+d)\frac{2\beta-\beta^{\prime}+1}{2\beta(\beta^{\prime}-1)}}\right\}, \\
&\mathcal{G}_q = \left\{
\begin{aligned}
& \text{(i) } \text { for all }k\in\cK, S_{q,k} \text{ is weakly }  (\frac{c_0}{2^d},H_{q,k})\text{-regular at all }\bx\in S_{q,k}\cap G \\
& \text{(ii) } \left|\hat{f}_{q, k}(\bx)-f^*_{ k}(\bx)\right| \leq \epsilon_q / 2 \text { for all } \bx \in S_{q,k} \text{ and } k\in\cK 
\end{aligned}
\right\}.
\end{align*}
We first show the following two inequalities:
\begin{align}
     &\mathbb{P}\left(\mathcal{G}_q^C \mid \overline{\mathcal{G}}_{q-1}, \overline{\mathcal{M}}_q\right) \leq \frac{\left(4+2 M_\beta^2\right)K}{T},\label{eq:firsteq}\\ 
     &\mathbb{P}\left(\mathcal{M}_q^C \mid \overline{\mathcal{G}}_{q-1}, \overline{\mathcal{M}}_{q-1}\right) \leq \frac{K}{T}.\label{eq:secondeq}
\end{align}

\noindent\textit{Proof of \eqref{eq:firsteq}}. 
When $T$ is sufficiently large, we have that under event $\mathcal{M}_q$, $\min _{k \in \mathcal{K}} N_{q, k} \geq\left(\frac{4^q\log(T\delta_A^{-d})}{C_{K}}\right)^{\frac{2 \beta+d}{2 \beta}}\left(\log T\right)^{(2\beta+d)\frac{2\beta-\beta^{\prime}+1}{2\beta(\beta^{\prime}-1)}}\geq\left(\frac{6 \sqrt{M_\beta} L v_d p_{\max }}{p^* \lambda_0 \epsilon_q}\right)^{\frac{2 \beta+d}{\beta}}$ by direct computation. By Proposition \ref{prop:sqkregular}, under event  $\overline{\mathcal{G}}_{q-1}\cap\overline{\mathcal{M}}_q$, for all $k\in\cK$, $S_{q,k}$ is weakly $(\frac{c_0}{2^d},H_{q,k})$-regular at all $\bx\in S_{q,k}\cap G$, which proves (i) of event $\mathcal{G}_{q}$. Then the conditions in Lemma \ref{lem:A.6} are satisfied. We finish the proof by showing that
\begin{align*}
    &K\delta_A^{-d}\left(4+2 M_\beta^2\right) \exp \left(-C_K n_{q, k}^{\frac{2 \beta}{2 \beta+d}} \epsilon_q^2\right)\nonumber\\
    \leq &K\delta_A^{-d}\left(4+2 M_\beta^2\right) \exp \left(-4^q\log(T\delta_A^{-d})\left(\log T\right)^{\frac{2\beta-\beta^{\prime}+1}{\beta^{\prime}-1}}4^{-q}(\log T)^{\frac{\beta^{\prime}-1-2\beta}{\beta^{\prime}-1}}\right)\nonumber\\
    = &\frac{\left(4+2 M_\beta^2\right)K}{T}.
\end{align*}

\noindent\textit{Proof of \eqref{eq:secondeq}}. Lemma \ref{lem:RksubsetSqk} implies that $\mathcal{R}_k\subseteq S_{q,k}$ for all $k\in\cK$. Together with Assumptions \ref{assump:iiddensity} and \ref{assump:c0r0regular}, we have
\begin{align*}
\mathbb{E}\left(N_{q, k} \mid \overline{\mathcal{G}}_{q-1}, \overline{\mathcal{M}}_{q-1}\right) & =\mathbb{E}\left(\sum_{t \in \mathcal{T}_q} \mathbb{I}\left\{\pi_t =k \right\}\mid \overline{\mathcal{G}}_{q-1}, \overline{\mathcal{M}}_{q-1}\right) \\
& =\sum_{t \in \mathcal{T}_q} \mathbb{P}\left(\pi_t =k  \mid \overline{\mathcal{G}}_{q-1}, \overline{\mathcal{M}}_{q-1}\right) \nonumber\\
& \geq \sum_{t \in \mathcal{T}_q} \mathbb{P}\left(\pi_t =k, \bx_t\in S_{q,k} \mid \overline{\mathcal{G}}_{q-1}, \overline{\mathcal{M}}_{q-1}\right) \nonumber\\
& \geq \frac{1}{K} \sum_{t \in \mathcal{T}_q} \mathbb{P}\left(\bx_t \in \mathcal{R}_k \mid \overline{\mathcal{G}}_{q-1}, \overline{\mathcal{M}}_{q-1}\right) \geq \frac{p^*}{K} |\mathcal{T}_q| .
\end{align*}

When $q>1$, the decisions in the epoch $q$ are only dependent on the history samples, thus it is obvious that $\cT_{q,k}$ are i.i.d. conditional on $\{\bigcup_{k\in\cK}\cT_{h,k}\}_{h=1}^{q-1}$. Together with  Hoeffding's inequality, 
we have
\begin{align*}
& \mathbb{P}\left(N_{q, k}<\left(\frac{4^q\log \left(T \delta_A^{-d}\right)}{C_K }\right)^{\frac{2 \beta+d}{2 \beta}}\left(\log T\right)^{(2\beta+d)\frac{2\beta-\beta^{\prime}+1}{2\beta(\beta^{\prime}-1)}}\bigg|~ \overline{\mathcal{G}}_{q-1}, \overline{\mathcal{M}}_{q-1},\{\bigcup_{k\in\cK}\cT_{h,k}\}_{h=1}^{q-1}\right) \nonumber\\
\leq & 
\mathbb{P}\bigg(\mathbb{E}\left(N_{q, k} \mid \overline{\mathcal{G}}_{k-1}, \overline{\mathcal{M}}_{k-1}\right)-N_{q, k}>\frac{p^*}{K} |\mathcal{T}_q|\\
&\quad -
\left(\frac{4^q\log \left(T \delta_A^{-d}\right)}{C_K }\right)^{\frac{2 \beta+d}{2 \beta}}\left(\log T\right)^{(2\beta+d)\frac{2\beta-\beta^{\prime}+1}{2\beta(\beta^{\prime}-1)}} \bigg|~ \overline{\mathcal{G}}_{q-1}, \overline{\mathcal{M}}_{q-1},\{\bigcup_{k\in\cK}\cT_{h,k}\}_{h=1}^{q-1}\bigg) \nonumber\\
\leq & \exp \left(-\frac{2}{|\mathcal{T}_q|}\left[\frac{p^*}{K} |\mathcal{T}_q|-\left(\frac{4^q\log \left(T \delta_A^{-d}\right)}{C_K}\right)^{\frac{2 \beta+d}{2 \beta}}\left(\log T\right)^{(2\beta+d)\frac{2\beta-\beta^{\prime}+1}{2\beta(\beta^{\prime}-1)}}\right]^2\right)  \nonumber\\
\leq &  \exp \left(-\frac{2}{|\mathcal{T}_q|}\left(\frac{p^{*2}}{K^2}|\mathcal{T}_q|^2-\frac{2p^*}{K} |\mathcal{T}_q|\left(\frac{4^q\log \left(T \delta_A^{-d}\right)}{C_K}\right)^{\frac{2 \beta+d}{2 \beta}}\left(\log T\right)^{(2\beta+d)\frac{2\beta-\beta^{\prime}+1}{2\beta(\beta^{\prime}-1)}}\right)\right)\nonumber\\
=& \exp \left(-\frac{2p^{*2}}{K^2}|\mathcal{T}_q|+\frac{4p^*}{K} \left(\frac{4^q\log \left(T \delta_A^{-d}\right)}{C_K}\right)^{\frac{2 \beta+d}{2 \beta}}\left(\log T\right)^{(2\beta+d)\frac{2\beta-\beta^{\prime}+1}{2\beta(\beta^{\prime}-1)}}\right).
\end{align*}
Since 
\begin{align*}
   |\cT_q|&=\left\lceil\frac{2K}{p^*}\left(\frac{4^q\log \left(T \delta_A^{-d}\right)}{C_{K}}\right)^{\frac{2 \beta+d}{2 \beta}}\left(\log T\right)^{\frac{2\beta+d}{\beta^{\prime}-1}-\frac{2\beta+d}{2\beta}}+\frac{K^2}{2 p^{*2}} \log T\right\rceil\nonumber\\
   &>\frac{2K}{p^*}\left(\frac{4^q\log \left(T \delta_A^{-d}\right)}{C_{K}}\right)^{\frac{2 \beta+d}{2 \beta}}\left(\log T\right)^{(2\beta+d)\frac{2\beta-\beta^{\prime}+1}{2\beta(\beta^{\prime}-1)}}+\frac{K^2}{2 p^{*2}} \log T ,
\end{align*}
we have
\begin{align*}
    \frac{2p^{*2}}{K^2}|\mathcal{T}_q|-\frac{4p^*}{K} \left(\frac{4^q\log \left(T \delta_A^{-d}\right)}{C_K}\right)^{\frac{2 \beta+d}{2 \beta}}\left(\log T\right)^{(2\beta+d)\frac{2\beta-\beta^{\prime}+1}{2\beta(\beta^{\prime}-1)}}\geq \log T,
\end{align*}
which implies
\begin{align*}
    \mathbb{P}\left(N_{q, k}<\left(\frac{4^q\log \left(T \delta_A^{-d}\right)}{C_K }\right)^{\frac{2 \beta+d}{2 \beta}}\left(\log T\right)^{(2\beta+d)\frac{2\beta-\beta^{\prime}+1}{2\beta(\beta^{\prime}-1)}}\bigg|~ \overline{\mathcal{G}}_{q-1}, \overline{\mathcal{M}}_{q-1},\{\bigcup_{k\in\cK}\cT_{h,k}\}_{h=1}^{q-1}\right)\leq \frac{1}{T}.
\end{align*}
In that way, when marginalizing over $\{\bigcup_{k\in\cK}\cT_{h,k}\}_{h=1}^{q-1}$, we have
\begin{align*}
    \mathbb{P}\left(\min_{k\in \cK}N_{q, k}<\left(\frac{4^q\log \left(T \delta_A^{-d}\right)}{C_K }\right)^{\frac{2 \beta+d}{2 \beta}}\left(\log T\right)^{(2\beta+d)\frac{2\beta-\beta^{\prime}+1}{2\beta(\beta^{\prime}-1)}}\bigg|~ \overline{\mathcal{G}}_{q-1}, \overline{\mathcal{M}}_{q-1}\right)\leq \frac{K}{T}.
\end{align*}

\noindent\textit{Proof of \eqref{eq:thirdeq} in Proposition \ref{thm:tailproboftwoevents}}. It can be verified that
\begin{align*}
    \overline{\mathcal{G}}_q^C=\cup_{j=1}^q \mathcal{G}_j^C&\subseteq \bigcup_{j=0}^{q-1} (\mathcal{G}_{j+1}^C\cap \overline{\mathcal{G}}_j)\subseteq\bigcup_{j=0}^{q-1} (\mathcal{G}_{j+1}^C\cap\overline{\mathcal{M}}_{j+1} \cap \overline{\mathcal{G}}_j) \cup (\mathcal{G}_{j+1}^C\cap\overline{\mathcal{M}}_{j+1}^C \cap \overline{\mathcal{G}}_j)\nonumber\\
    &\subseteq\bigcup_{j=0}^{q-1} (\mathcal{G}_{j+1}^C\cap\overline{\mathcal{M}}_{j+1} \cap \overline{\mathcal{G}}_j)\cup (\overline{\mathcal{M}}_{j+1}^C \cap \overline{\mathcal{G}}_j)\nonumber\\
    &\subseteq \bigcup_{j=0}^{q-1} \left(\mathcal{G}_{j+1}^C\cap\overline{\mathcal{M}}_{j+1} \cap \overline{\mathcal{G}}_j)\cup (\bigcup_{i=0}^{j}{\mathcal{M}}_{i+1}^C \cap\overline{\mathcal{M}}_{i}\cap \overline{\mathcal{G}}_j)\right)\nonumber\\
    &=\left(\bigcup_{j=0}^{q-1} \mathcal{G}_{j+1}^C \cap \overline{\mathcal{M}}_{j+1} \cap \overline{\mathcal{G}}_j\right) \cup\left(\bigcup_{j=0}^{q-1} \mathcal{M}_{j+1}^C \cap \overline{\mathcal{G}}_j \cap \overline{\mathcal{M}}_j\right) .   
\end{align*}
Similarly,
\begin{align*}
    \overline{\mathcal{M}}_q^C=\cup_{j=1}^q \mathcal{M}_j^C&\subseteq \bigcup_{j=0}^{q-1} (\mathcal{M}_{j+1}^C\cap \overline{\mathcal{M}}_j)\subseteq\bigcup_{j=0}^{q-1} (\mathcal{M}_{j+1}^C\cap\overline{\mathcal{G}}_{j} \cap \overline{\mathcal{M}}_j) \cup (\mathcal{M}_{j+1}^C\cap\overline{\mathcal{G}}_{j}^C \cap \overline{\mathcal{M}}_j)\nonumber\\
    &\subseteq\bigcup_{j=0}^{q-1} (\mathcal{M}_{j+1}^C\cap\overline{\mathcal{G}}_{j} \cap \overline{\mathcal{M}}_j)\cup (\overline{\mathcal{G}}_{j}^C \cap \overline{\mathcal{M}}_j)\nonumber\\
    &\subseteq \bigcup_{j=0}^{q-1}\left( (\mathcal{M}_{j+1}^C\cap\overline{\mathcal{G}}_{j} \cap \overline{\mathcal{M}}_j)\cup (\bigcup_{i=0}^{j-1}{\mathcal{G}}_{i+1}^C \cap\overline{\mathcal{M}}_{j+1}\cap \overline{\mathcal{G}}_{i })\right)\nonumber\\
    &\subseteq \bigcup_{j=0}^{q-1} (\mathcal{M}_{j+1}^C\cap\overline{\mathcal{G}}_{j} \cap \overline{\mathcal{M}}_j)\cup\left(\bigcup_{j=0}^{q-1} \mathcal{G}_{j+1}^C \cap \overline{\mathcal{M}}_{j+1} \cap \overline{\mathcal{G}}_{j}\right) .   
\end{align*}
Therefore, we can obtain that
\begin{align*}
    \overline{\mathcal{G}}_k^C \cup \overline{\mathcal{M}}_k^C \subseteq\left(\bigcup_{j=0}^{k-1} \mathcal{G}_{j+1}^C \cap \overline{\mathcal{M}}_{j+1} \cap \overline{\mathcal{G}}_j\right) \cup\left(\bigcup_{j=0}^{k-1} \mathcal{M}_{j+1}^C \cap \overline{\mathcal{G}}_j \cap \overline{\mathcal{M}}_j\right) .
\end{align*}

With \eqref{eq:firsteq} and \eqref{eq:secondeq}, we can conclude that
\begin{align*}
\mathbb{P}\left(\overline{\mathcal{G}}_q^C \cup \overline{\mathcal{M}}_q^C\right) & \leq \sum_{j=0}^{q-1} \mathbb{P}\left(\mathcal{G}_{j+1}^C \cap \overline{\mathcal{M}}_{j+1} \cap \overline{\mathcal{G}}_j\right)+\sum_{j=0}^{q-1} \mathbb{P}\left(\mathcal{M}_{j+1}^C \cap \overline{\mathcal{G}}_j \cap \overline{\mathcal{M}}_j\right) \nonumber\\
& \leq \sum_{j=0}^{q-1} \mathbb{P}\left(\mathcal{G}_{j+1}^C \mid \overline{\mathcal{G}}_j, \overline{\mathcal{M}}_{j+1}\right)+\sum_{j=0}^{q-1} \mathbb{P}\left(\mathcal{M}_{j+1}^C \mid \overline{\mathcal{G}}_j, \overline{\mathcal{M}}_j\right)\nonumber \\
& \leq \sum_{j=0}^{q-1} \frac{\left(4+2 M_\beta^2\right)K}{T}+\frac{K}{T}=\frac{\left(5+2 M_\beta^2\right) qK}{T} .
\end{align*}
This finishes the proof.\hfill\Halmos

\subsubsection{Proof of Theorem \ref{thm:smoothbanditfairness}}
When $t\in\mathcal{T}_1$, the algorithm pulls all arms with equal probability, which implies that for all $\bx\in\cX$, $\PP(\pi_t=i|\cF_{t-1},\bx_t=\bx)=1/K$ for all $i\in\cK$. Thus, fairness condition is satisfied trivially at $t$. 

At epoch $q>1$, when $\bx_t=\bx$, the algorithm randomly pulls arms from set $\cK_{q,u(\bx)}$ with equal probability. For the pair of arms $i,j\in \cK_{q,u(\bx)}$, $\PP(\pi_t=i|\cF_{t-1},\bx_t=\bx)=\PP(\pi_t=j|\cF_{t-1},\bx_t=\bx)\geq 1/K$. On the other hand, when $i,j\in \cK\setminus\cK_{q,u(\bx)}$, $\PP(\pi_t=i|\cF_{t-1},\bx_t=\bx)=\PP(\pi_t=j|\cF_{t-1},\bx_t=\bx)=0$. The former two cases automatically satisfy fairness constraint \eqref{eq:defoffairness}. Therefore, it remains to consider the arm pairs with $i\in \cK_{q,u(\bx)}$ and $j\in \cK\setminus\cK_{q,u(\bx)}$, which satisfies $\PP(\pi_t=i|\cF_{t-1},\bx_t=\bx)\geq \frac{1}{K}>\PP(\pi_t=j|\cF_{t-1},\bx_t=\bx)=0$. For $j\in \cK\setminus\cK_{q,u(\bx)}$, there exists an epoch $1\leq l\leq q-1$ such that $j\in \cK_{l,u(\bx)}$ and $j\in \cK\setminus\cK_{l+1,u(\bx)}$. Moreover, the selection rule implies that $i\in \cK_{q,u(\bx)}\subseteq \cK_{l,u(\bx)}$ when $l<q$. Thus, $\hat{f}_{l,j} (\bx)$ is not $(2\epsilon_l)$-chained to $\hat{f}_{l,j} (\bx)$ in $\{\hat{f}_{l,k} (\bx):k\in \cK_{l,u(\bx)}\}$, which gives
\begin{align*}
    \hat{f}_{l,i}\left(\bx\right)-\hat{f}_{l,j}\left(\bx\right)>2\epsilon_{l}.
\end{align*}

Under event $\overline{\mathcal{G}}_{q-1}\cap\overline{\mathcal{M}}_{q-1}$,
\begin{align*}
    f_i^*\left(\bx\right)-f_j^*\left(\bx\right)&= f_i^*\left(\bx\right)-\hat{f}_{l,i}\left(\bx\right)+\hat{f}_{l,i}\left(\bx\right)-\hat{f}_{l,j}\left(\bx\right)+\hat{f}_{l,j}\left(\bx\right)-f_j^*\left(\bx\right)\nonumber\\
    &>   -\frac{1}{2}\epsilon_l+2\epsilon_l - \frac{1}{2}\epsilon_l\nonumber\\
    &>0,
\end{align*} which validates the fairness condition \eqref{eq:defoffairness}. Thus, under event $\overline{\mathcal{G}}_{q-1} \cap \overline{\mathcal{M}}_{q-1}$, the fairness condition is satisfied for epoch $q$ by combining all three cases. 

Since the events are nested (i.e., $\overline{\mathcal{G}}_{q-1} \cap \overline{\mathcal{M}}_{q-1}$ implies $\overline{\mathcal{G}}_{o} \cap \overline{\mathcal{M}}_{o}$ for all $o\leq q-1$), it follows that the fairness condition holds for epochs 1 to $q$ by induction on earlier epochs. Then under event $\overline{\mathcal{G}}_{Q-1} \cap \overline{\mathcal{M}}_{Q-1}$, the fairness condition is satisfied for all epochs from $1$ to $Q$.

According to Proposition \ref{thm:tailproboftwoevents}, $\mathbb{P}\left(\overline{\mathcal{G}}_{Q-1} \cap \overline{\mathcal{M}}_{Q-1}\right) \geq 1-\frac{K(5+2M_{\beta}^2(Q-1))}{T}$. Hence, the fairness condition is satisfied with probability at least $1-\frac{K(5+2M_{\beta}^2(Q-1))}{T}\geq 1-\tilde{O}(1/T)$ since Lemma \ref{lem:epsilongreatthan1/2delta} gives $Q=O(\log T)$.

\hfill\Halmos

\subsubsection{Proof of Theorem \ref{thm:minimax}}

Without loss of generality, we assume $\max_{k\in\cK}\|f_k^*\|_{\infty}\leq 1 $. We decompose the expected cumulative regret into two components:
\begin{align}\label{eq_regret_fairsmoothdecomp}
    R_T :=  &\sum_{q=1}^Q\sum_{t\in\mathcal{T}_q}  \mathbb{E}\left(\max_{k\in\cK}f_k^*\left(\bx_t\right)-f_{\pi_t}^*\left(\bx_t\right)\right)\nonumber\\
    \leq& \sum_{q=1}^Q\sum_{t\in\mathcal{T}_q}  \mathbb{E}\left(\max_{k\in\cK}f_k^*\left(\bx_t\right)-f_{\pi_t}^*\left(\bx_t\right)\mid \overline{\mathcal{G}}_{q-1}\cap\overline{\mathcal{M}}_{q-1}\right)\nonumber\\
    +& \sum_{q=1}^Q\sum_{t\in\mathcal{T}_q}  \mathbb{E}\left(\max_{k\in\cK}f_k^*\left(\bx_t\right)-f_{\pi_t}^*\left(\bx_t\right)\mid \overline{\mathcal{G}}_{q-1}^C\cup\overline{\mathcal{M}}_{q-1}^C\right)\PP(\overline{\mathcal{G}}_{q-1}^C\cup\overline{\mathcal{M}}_{q-1}^C)\nonumber\\
    \leq & \sum_{q=1}^Q\sum_{t\in\mathcal{T}_q}  \mathbb{E}\left(\max_{k\in\cK}f_k^*\left(\bx_t\right)-f_{\pi_t}^*\left(\bx_t\right)\mid \overline{\mathcal{G}}_{q-1}\cap\overline{\mathcal{M}}_{q-1}\right) + 2\sum_{q=1}^Q\sum_{t\in\mathcal{T}_q}\PP(\overline{\mathcal{G}}_{q-1}^C\cup\overline{\mathcal{M}}_{q-1}^C),
\end{align}
where the first inequality is because $\PP(\overline{\mathcal{G}}_{q-1}\cap\overline{\mathcal{M}}_{q-1})\leq 1$, and the last inequality is because $\max_{k\in\cK}\|f_k^*\|_{\infty}\leq 1$.

By the selection rule, the sample support forms a nested sequence, ensuring $S_{q,k}\subseteq S_{q-1,k}$. Then for $\bx_t\in S_{q,\pi_t}\subseteq S_{q-1,\pi_t}$, we denote $i=\arg\max_{k\in\cK}f_k^*\left(\bx_t\right)$, and have under event $\overline{\mathcal{G}}_{q-1}\cap\overline{\mathcal{M}}_{q-1}$,
\begin{align}\label{eq:fistarbt-fpitbtleq}
    f_i^*\left(\bx_t\right)-f_{\pi_t}^*\left(\bx_t\right)&\leq f_i^*\left(\bx_t\right)-\hat{f}_{q-1,i}\left(\bx_t\right)+\hat{f}_{q-1,i}\left(\bx_t\right)-\hat{f}_{q-1,\pi_t}\left(\bx_t\right)+\hat{f}_{q-1,\pi_t}\left(\bx_t\right)-f_{\pi_t}^*\left(\bx_t\right)\nonumber\\
    &\leq   \frac{1}{2}\epsilon_{q-1}+2(K-1)\epsilon_{q-1}+ \frac{1}{2}\epsilon_{q-1}\nonumber\\
    &\leq 2K\epsilon_{q-1},
\end{align}
where the second inequality is by Lemma \ref{lem:RksubsetSqk}. Let $A_t=\{\bx_t:0<\max_{k}f_k^*\left(\bx_t\right)-f_{\pi_t}^*\left(\bx_t\right)<2K\epsilon_{q-1}\}$ and $B_t=\{\bx_t:\max_{k}f_k^*\left(\bx_t\right)-f_{\pi_t}^*\left(\bx_t\right)=0\}$. By \eqref{eq:fistarbt-fpitbtleq}, under event $\overline{\mathcal{G}}_{q-1}\cap\overline{\mathcal{M}}_{q-1}$, $\bx_t\in A_t\cup B_t$. Thus, the first term of regret in \eqref{eq_regret_fairsmoothdecomp} can be further decomposed into
\begin{align}\label{eq_regret_fairsmoothcomp1}
   &\sum_{q=1}^Q\sum_{t\in\mathcal{T}_q}  \mathbb{E}\left(\max_{k\in\cK}f_k^*\left(\bx_t\right)-f_{\pi_t}^*\left(\bx_t\right)\mid \overline{\mathcal{G}}_{q-1}\cap\overline{\mathcal{M}}_{q-1}\right)\nonumber\\
   =&\sum_{q=1}^{Q}\sum_{t\in \cT_q}\EE\left(\max_{k}f_k^*\left(\bx_t\right)-f_{\pi_t}^*\left(\bx_t\right)\mid \overline{\mathcal{G}}_{q-1}\cap\overline{\mathcal{M}}_{q-1},\bx_t\in A_t\right)\PP(\bx_t\in A_t\mid\overline{\mathcal{G}}_{q-1}\cap\overline{\mathcal{M}}_{q-1} )\nonumber\\
   +&\sum_{q=1}^{Q}\sum_{t\in \cT_q}\EE\left(\max_{k}f_k^*\left(\bx_t\right)-f_{\pi_t}^*\left(\bx_t\right)\mid \overline{\mathcal{G}}_{q-1}\cap\overline{\mathcal{M}}_{q-1},\bx_t\in B_t\right)\PP(\bx_t\in B_t\mid\overline{\mathcal{G}}_{q-1}\cap\overline{\mathcal{M}}_{q-1} )\nonumber\\
   =&\sum_{q=1}^{Q}\sum_{t\in \cT_q}\EE\left(\max_{k}f_k^*\left(\bx_t\right)-f_{\pi_t}^*\left(\bx_t\right)\mid \overline{\mathcal{G}}_{q-1}\cap\overline{\mathcal{M}}_{q-1},\bx_t\in A_t\right)\PP(\bx_t\in A_t\mid\overline{\mathcal{G}}_{q-1}\cap\overline{\mathcal{M}}_{q-1} )\nonumber\\
   \leq &\sum_{q=1}^{Q}\sum_{t\in \cT_q} 2K\epsilon_{q-1}\PP\left(\exists k,j \mbox{ such that } 0<\left|f_k^*\left(\bx_t\right) - f_j^*\left(\bx_t\right)\right|\leq 2K\epsilon_{q-1}\right)\nonumber\\
   \leq &\sum_{q=1}^{Q}\sum_{t\in \cT_q} 2K\epsilon_{q-1} K^2 (2K\epsilon_{q-1})^{\alpha}\nonumber\\
   \lesssim & \sum_{q=1}^{Q}|\cT_q|\epsilon_{q-1}^{\alpha+1}=\tilde{O}(T^{\frac{\beta+d-\alpha \beta}{2\beta+d}}),
\end{align}
where the second inequality is by margin condition (Assumption \ref{assum:margin}), and the last inequality follows from direct computation.

By Theorem \ref{thm:tailproboftwoevents}, the second component can be upper bounded by
\begin{align}\label{eq_regret_fairsmoothcomp2}
    2\sum_{q=1}^Q\sum_{t\in\mathcal{T}_q}\PP(\overline{\mathcal{G}}_{q-1}^C\cup\overline{\mathcal{M}}_{q-1}^C)\lesssim \sum_{q=1}^Q\sum_{t\in\mathcal{T}_q} \frac{(q-1)}{T}=\tilde{O}(1).
\end{align}
Plugging \eqref{eq_regret_fairsmoothcomp1} and \eqref{eq_regret_fairsmoothcomp2} into \eqref{eq_regret_fairsmoothdecomp}, we obtain
\begin{align*}
    R_T = \tilde{O}(T^{\frac{\beta+d-\alpha \beta}{2\beta+d}}).
\end{align*}
This convergence rate matches the optimal rate shown in Proposition \ref{thm:thm3inhusmooth}. 
Thus, the modified algorithm achieves the minimax-optimal regret rate when $\alpha\beta\leq d$. \hfill\Halmos

\subsection{Proofs in Section \ref{sec:effectiveattack}}
\subsubsection{Proof of Theorem \ref{thm:covert}}

Since we have assumed that $\operatorname{Leb}(\cR_i\cap \cR_j) > c$, there exists a region $D \subset \cR_i\cap \cR_j$ satisfying $\operatorname{Leb}(D) > c' > 0$.
The adversary can construct a smooth, perturbed reward function $\tilde{f}_i^*$ for arm $i$ such that:
\begin{itemize}
    \item  $\tilde{f}_i^*$ satisfies the same smoothness assumptions as the original function $f_i^*$.
    \item There exists a region $D' \subset D$ with $\operatorname{Leb}(D') > c'' > 0$ where $\tilde{f}_i^*(\bx)$ is significantly lower than $f_i^*(\bx)$, specifically:
   \begin{align}\label{eq:fk*-fitilde>=1}
   f_i^*(\bx) - \tilde{f}_i^*(\bx) \geq 1 \quad \text{for all } \bx \in D'.
   \end{align}
   \item Outside of $D$, $\tilde{f}_i^*(\bx) = f_i^*(\bx)$.
\end{itemize}

Such a function $\tilde{f}_i^*$ can be easily constructed. Let the adversary set the corruption $c_t$ as $c_t = \tilde{f}_i^*(\bx_t) - f_i^*(\bx_t)$ if the played arm is $\pi_t = i$ and $t \leq t_0$. Consequently, the algorithm observes a corrupted reward:
\begin{align*}
\tilde{y}_t = f_i^*(\bx_t) + c_t + \varepsilon_{i,t} = \tilde{f}_i^*(\bx_t) + \varepsilon_{i,t}, \quad \text{for } \pi_t = i, t \leq t_0.
\end{align*}

Thus, during the initial $t_0$ rounds, the algorithm effectively perceives $\tilde{f}_i^*$ as the reward function for arm $i$. Let $t_0= |\cT_1|=\left\lceil\frac{2K}{p^*}\left(\frac{4\log \left(T \delta_A^{-d}\right)}{C_{K}}\right)^{\frac{2 \beta+d}{2 \beta}}\left(\log T\right)^{\frac{2\beta+d}{\beta^{\prime}-1}-\frac{2\beta+d}{2\beta}}+\frac{K^2}{2 p^{*2}} \log T\right\rceil$. Then the budget is $C\leq |\cT_1|(\max_{k\in\cK}2\|f_k^*\|_{\infty}+1)\lesssim (\log T)^{\frac{2\beta+d}{\beta^{\prime}-1}}=\tilde{O}(1)$. 

The estimators trained by samples collected from epoch 1 exhibit inherently good properties guaranteed by Assumption \ref{assump:iiddensity}. Following the same steps in bounding the tail probability of event $\mathcal{G}_1$ in Proposition \ref{thm:tailproboftwoevents}, we have with probability at least $1-\frac{K\left(5+2 M_\beta^2\right) }{T}$, for any $\bx_t\in D'$,
\begin{align*}
   \hat{f}_{1,j}(\bx_t)-\hat{f}_{1,i}(\bx_t)&= (\hat{f}_{1,j}(\bx_t)-f^*_{j}(\bx_t))+f^*_{j}(\bx_t)-\tilde{f}^*_i(\bx_t)-(\hat{f}_{1,i}(\bx_t)-\tilde{f}^*_i(\bx_t)) \nonumber\\
   &\geq -1/2\epsilon_{1}+1-1/2\epsilon_{1}\nonumber\\
    &\geq 2\epsilon_{1},
\end{align*}
where the first inequality is because samples are collected from the masked function $\tilde{f}_i^*(\bx)$ during epoch 1 and \eqref{eq:fk*-fitilde>=1}, and the last inequality is because $\epsilon_1\leq \frac{1}{2}$ as long as $T$ is large. It follows that with probability at least $1-\frac{K\left(5+2 M_\beta^2\right) }{T}$,  $i$ would not be selected when $\bx_t\in D'$ for all $q>1$. Since $\operatorname{Leb}(D') > c'' > 0$, we have there exists some constant $\kappa>0$ such that $\PP(\bx_t\in D')>\kappa$. Recall that $\cU_t$ denotes the event of unfairness at time $t$, i.e., \eqref{eq:defofstrongperunfair} holds, then $\cU_t$ happens when $\pi_t\neq i$ and $\bx_t\in D'$. Thus, for all $t>|\cT_1|,$
\begin{align*}
    \PP(\cU_t)\geq \kappa\left(1-\frac{K\left(5+2 M_\beta^2\right) }{T}\right) \geq \frac{\kappa}{2},
\end{align*}
which renders Algorithm \ref{alg:fairsmoothalg} persistently unfair.

The above discussion gives that under event $\mathcal{G}_1$, $i$ would not be selected for all $t>T/2>|\cT_1|$ and all $\bx_t\in D'$. Then it holds that under event $\mathcal{G}_1$, $\{\bx_t\in D^{\prime}_i\}\subseteq \cU_t$. Since $\mathbb{I}(\bx_t\in D'_i)$ is i.i.d. Bernoulli random variable and independent of $\mathcal{G}_1$ when $t>T/2>|\cT_1|$, it follows that
\begin{align*}
   \PP(\sum_{t=T/2}^T \mathbb{I}(\cU_t)\geq \frac{\kappa}{4} T\mid \mathcal{G}_1)&\geq  \PP(\sum_{t=T/2}^T \mathbb{I}(\bx_t\in D^{\prime}_i)\geq \frac{\kappa}{4} T\mid \mathcal{G}_1)\nonumber\\
   &=\PP(\sum_{t=T/2}^T \mathbb{I}(\bx_t\in D^{\prime}_i)\geq \frac{\kappa}{4} T)\\
   &\geq \PP(\sum_{t=T/2}^T \mathbb{I}(\bx_t\in D^{\prime}_i)-\EE(\sum_{t=T/2}^T \mathbb{I}(\bx_t\in D^{\prime}_i))\geq -\frac{\kappa}{4} T)\nonumber\\
   &\geq 1-e^{-\frac{\kappa^{2}}{4}T}.
\end{align*}

As discussed above,  $\PP(\mathcal{G}_1)\geq 1-\frac{K\left(5+2 M_\beta^2\right) }{T}$, then by taking union bound we have
\begin{align*}
   \PP(\sum_{t=1}^T \mathbb{I}(\cU_t)\geq \frac{\kappa}{4} T)&\geq \PP(\sum_{t=T/2}^T \mathbb{I}(\cU_t)\geq \frac{\kappa}{4} T)\nonumber\\
   &\geq 1-e^{-\frac{\kappa^{2}}{4}T}-\frac{K\left(5+2 M_\beta^2\right) }{T}\geq 1-\frac{K\left(5+2 M_\beta^2\right) +1}{T}.
\end{align*}

Now we can check the regret after corruption. Let the corrupted estimators be $\hat{f}^C_{q,k}(\bx)$ for all $k\in\cK$. For reference, we denote the ideal estimators trained from the exact non-corrupted data (although not accessible) as $\hat{f}_{q,k}(\bx)$. Note that although the attacker only corrupt data points collected from $D\subset \cR_i\cap \cR_j$ for arm $i$, the inaccurate estimation for arm $i$ in turn also influences the data collection process for all other arms. However, based on the properties of local polynomial estimators, the bandwidth $H_{q,k}$ determines the sample radius for the point being estimated, so samples outside this region have no influence on the estimate at that point. We choose a fixed region $E$ that strictly covers $D$ such that $D\subset E\subset \cR_i\cap \cR_j$ with $\PP\left(X\in (\cR_i\cap \cR_j)\setminus E\right)>\kappa'$. Thus, if $\max_{k\in\{i,j\}} H_{q,k}\rightarrow 0$, the corrupted estimators satisfy, for all $k\in\cK=\{i,j\}$ and $q\geq 1$,
\begin{align*}
    \hat{f}^C_{q,k}(\bx)=\hat{f}_{q,k}(\bx) \text{ for all }\bx \notin E,
\end{align*}
which implies, if $\cK^C_{q-1,u(\bx)}=\cK_{q-1,u(\bx)} \text{ for all }\bx \notin E$, then
\begin{align*}
    \cK^C_{q,u(\bx)}=\cK_{q,u(\bx)} \text{ for all }\bx \notin E,
\end{align*}
and since the algorithm choose $\pi_t \in \cK_{q,u(\bx_t)}$ with equal probability, we have
\begin{align*}
    \pi_t^C|\bx_t \stackrel{d}{=} \pi_t|\bx_t \text{ when }\bx_t \notin E \text{ and }t\in\cT_q.
\end{align*}

Assume $\max_{q\geq 1}\max_{k\in\{i,j\}} H_{q,k}\rightarrow 0$. Then by induction, since  $\cK^C_{0,u(\bx)}=\cK_{0,u(\bx)}=\{i,j\}$, it follows that for all $t$, $\pi_t^C|\bx_t \stackrel{d}{=} \pi_t|\bx_t \text{ when }\bx_t \notin E.$ Denote the regret after corruption as $R_T^C$ and uncorrupted regret as $R_T$, we have that
\begin{align*}
     R_T^C &= \mathbb{E} \biggl[\sum_{t=1}^T  \left(\max_k f^*_k (\bx_t)  -  f^*_{\pi_t^C}(\bx_t)\right)\biggr]\nonumber\\
     &=\mathbb{E} \biggl[\sum_{t=1}^T \left(\max_k f^*_k (\bx_t)  -  f^*_{\pi_t^C}(\bx_t)\right)\mathbb{I}(\bx_t\in E) \biggr]+\mathbb{E} \biggl[\sum_{t=1}^T \left(\max_k f^*_k (\bx_t)  -  f^*_{\pi_t^C}(\bx_t)\right)\mathbb{I}(\bx_t\notin E) \biggr]\nonumber\\
     &=\mathbb{E} \biggl[\sum_{t=1}^T \left(\max_k f^*_k (\bx_t)  -  f^*_{\pi_t^C}(\bx_t)\right)\mathbb{I}(\bx_t\notin E) \biggr]\nonumber\\
     &=\mathbb{E} \biggl[\sum_{t=1}^T \left(\max_k f^*_k (\bx_t)  -  f^*_{\pi_t}(\bx_t)\right)\mathbb{I}(\bx_t\notin E) \biggr]\leq R_T,
\end{align*}
where the third equality is because when $\bx_t\in E$, $\max_k f^*_k (\bx_t)  -  f^*_{\pi_t^C}(\bx_t)=0$ no matter of $\pi_t^C$. It remains to bound the probability of $\max_{q\geq 1}\max_{k\in\{i,j\}} H_{q,k}\rightarrow 0$, which is equivalent to $\min_{q\geq 1}\min_{k\in\{i,j\}} N_{q,k}\rightarrow \infty$. This can be regarded as measuring the sample size collected from $\bx_t\notin E$ when no corruption exists. Following the same proof of Proposition \ref{thm:tailproboftwoevents}, as long as $\PP\left(X\in (\cR_i\cap \cR_j)\setminus E\right)>\kappa'$, it can be concluded that $\PP( \lim_{T\to\infty} \max_{q\geq 1}\max_{k\in\{i,j\}} H_{q,k}= 0)\geq 1-\tilde{O}(\frac{1}{T})$. Thus, by law of total expectation,
\begin{align*}
    R_T^C\leq R_T+\tilde{O}(1).
\end{align*}
This finishes the proof.\hfill\Halmos

\subsubsection{Proof of Theorem \ref{thm:constantbudgetunfairness}}\label{ec:lowbudgetunfairness}

For a persistently unfair algorithm, the expected cumulative regret can be written as
\begin{align}\label{eq:RTEsumt=1Tmaxk}
    R_T &= \mathbb{E} \biggl[\sum_{t=1}^T  \left(\max_k f^*_k (\bx_t)  -  f^*_{\pi_t}(\bx_t)\right)\biggr]\geq \sum_{t=N}^T\mathbb{E}\biggl[  \left(\max_k f^*_k (\bx_t)  -  f^*_{\pi_t}(\bx_t)\right)\biggr]\nonumber\\
    &\geq\sum_{t=N}^T\mathbb{E}\biggl[  \left(\max_k f^*_k (\bx_t)  -  f^*_{\pi_t}(\bx_t)\right)\mid \mathcal{U}_t\biggr]\PP(\mathcal{U}_t)\nonumber\\
    &\geq c_2\sum_{t=N}^T\mathbb{E}\biggl[  \left(\max_k f^*_k (\bx_t)  -  f^*_{\pi_t}(\bx_t)\right)\mid \mathcal{U}_t\biggr]\nonumber\\
     &\geq c_2\sum_{t=N}^T\mathbb{E}\biggl[  \left(\max_k f^*_k (\bx_t)  -  f^*_{\pi_t}(\bx_t)\right)\mid \mathcal{U}_t,\pi_t=j\biggr]\PP(\pi_t=j|\mathcal{U}_t)\nonumber\\
     &\geq c_1c_2\sum_{t=N}^T\mathbb{E}\biggl[  \left(\max_k f^*_k (\bx_t)  -  f^*_{\pi_t}(\bx_t)\right)\mid \mathcal{U}_t,\pi_t=j\biggr].
\end{align}

Suppose the adversary executes an attack by choosing an arm $i\in \cK$ and setting $c_t=-\max_{k\in\cK}2\|f_k^*\|_{\infty}-1$ if the played arm is $\pi_t=i$, for all $t\leq t_0$. As a result, the algorithm only observes a masked reward function from arm $i$ during this period. Specifically, the player observes $y_{i,t}=f^*_i(\bx_t)-\max_{k\in\cK}2\|f_k^*\|_{\infty}-1+\varepsilon_{i,t}=\tilde{f}_i^*(\bx_t)+\varepsilon_{i,t}$ for all $t\leq t_0$, where $\tilde{f}_i^*$ represents the masked reward function. Thus, for any $k\neq i$,
\begin{align}\label{eq:fkstar-tildefistar}
    f_k^*(\bx_t)-\tilde{f}_i^*(\bx_t)&\geq-\max_{k\in\cK}\|f_k^*\|_{\infty}-f^*_i(\bx_t)+\max_{k\in\cK}2\|f_k^*\|_{\infty}+1\nonumber\\
    &\geq -\max_{k\in\cK}\|f_k^*\|_{\infty}-\max_{k\in\cK}\|f_k^*\|_{\infty}+\max_{k\in\cK}2\|f_k^*\|_{\infty}+1\nonumber\\
    &=1.
\end{align}

(i) Consider Algorithm \ref{alg:fairols}. Let $|\cT_0| = t_0=C_a\log T$. As a result, the budget is upper bounded by $C\leq |\cT_0|(\max_{k\in\cK}2\|f_k^*\|_{\infty}+1)\lesssim \log T=\tilde{O}(1)$. Then we need to prove that, with high probability, arm $i$ would not be selected any more for $t>|\cT_0|$. Note that the initial estimation $\hat{\beta}_{k,0}$ is trained only based on observations from the corrupted function $\tilde{f}_i^*$, thus in the following, we use $\tilde{\beta}_i$ to replace ${\beta}_i$. Without loss of generality, we let $h<1$ (since if Assumption \ref{assum_excite} holds for a larger $h$, it will automatically hold for smaller $h$). In the first epoch, because of the nature of random exploration, Proposition \ref{prop:randomtailbound} that controls the tail probability of event $\mathcal{B}$ is still valid. Under event $\mathcal{B}$, we have
\begin{align*}
  \max_{\bx\in \cX}|(\hat{\beta_i}(\mathcal{I}_{i,0})-\tilde{\beta_i})^{\mathrm{T}}\bz|> -\frac{h}{4}
\end{align*}
and for any $l \in \cK\setminus\{i\}$,
\begin{align*}
   \max_{\bx\in \cX}|(\hat{\beta_l}(\mathcal{I}_{l,0})-\beta_l)^{\mathrm{T}}\bz|>- \frac{h}{4},
\end{align*}
where the first inequality is because $y_{i,t}=\tilde{f}_i^*(\bx_t)+\varepsilon_{i,t}=\bz_t^{\mathrm{T}}\tilde{\beta}_i+\varepsilon_{i,t}$ for $t\in \mathcal{I}_{i,0}$.
Thus, for any $l \in \cK\setminus\{i\}$ and $t\geq t_0$, 
\begin{align*}
&\bz_t^{\mathrm{T}}\hat{\beta}_{l,0}-\bz_t^{\mathrm{T}}\hat{\beta}_{i,0}\nonumber\\
  =& \bz_t^{\mathrm{T}}\hat{\beta}_{l,0}- \bz_t^{\mathrm{T}}\beta_{l}+\bz_t^{\mathrm{T}}\beta_{l}-\bz_t^{\mathrm{T}}\tilde{\beta}_{i}+\bz_t^{\mathrm{T}}\tilde{\beta}_{i}-\bz_t^{\mathrm{T}}\hat{\beta}_{i,0}\nonumber\\
  \geq & -\frac{h}{4}+1-\frac{h}{4}>\frac{h}{2},
\end{align*}
which rules out the chance that $i$ is chosen into $\hat{\cK}_{\bx_t}$. By Assumption \ref{assum_excite}, $\bz_t\in Q_i$ implies $f^*_i(\bx_t) \geq f^*_j(\bx_t)+h$, and $\PP(\bz_t\in Q_i)\geq \tilde{p}$. Moreover, by the selection rule, together with Proposition \ref{prop:randomtailbound}, for all $t>|\cT_0|$, $\PP(\pi_t = i \mid \cF_{t-1}^+)=0$ happens with probability at least $1-4KT^{-4}$. It can be further concluded that there exists an arm $j\in \hat{\cK}$ such that $\PP(\pi_t = i \mid \cF_{t-1}^+)=0<\PP(\pi_t = j \mid \cF_{t-1}^+)-\frac{1}{K}$ happens with probability at least $1-4KT^{-4}$. Recall that $\cU_t$ denotes the event of unfairness at time $t$, i.e., \eqref{eq:defofstrongperunfair} holds. Combining the above, we have for all $t>|\cT_0|$, 
\begin{align*}
    \PP(\cU_t\cap \{f^*_i(\bx_t) \geq f^*_j(\bx_t)+h\})\geq \tilde{p}+(1-4KT^{-4})-1\geq \frac{\tilde{p}}{2},
\end{align*}
which renders Algorithm \ref{alg:fairols} persistently unfair, and by \eqref{eq:RTEsumt=1Tmaxk}, we have
\begin{align*}
    R_T&\gtrsim \sum_{t=N}^T\mathbb{E}\biggl[  \left(\max_k f^*_k (\bx_t)  -  f^*_{\pi_t}(\bx_t)\right)\mid \mathcal{U}_t,\pi_t=j\biggr]\nonumber\\
    &\geq \sum_{t=N}^T\mathbb{E}\biggl[  \left( f^*_i (\bx_t)  -  f^*_{\pi_t}(\bx_t)\right)\mid \mathcal{U}_t,\pi_t=j\biggr]\nonumber\\
    &\geq \sum_{t=N}^T \frac{\tilde{p}}{2}h\gtrsim T.
\end{align*}

The above discussion gives that under event $\cB$, $\PP(\pi_t = i \mid \cF_{t-1}^+)=0$ for all $t>T/2>|\cT_0|$. Then it holds that under event $\cB$, $\{\bz_t\in Q_i\}\subseteq \cU_t$. Since $\mathbb{I}(\bz_t\in Q_i)$ is i.i.d. Bernoulli random variable and independent of $\mathcal{B}$ when $t>T/2>|\cT_0|$, it follows that
\begin{align*}
   \PP(\sum_{t=T/2}^T \mathbb{I}(\cU_t)\geq \frac{\tilde{p}}{4} T\mid \cB)&\geq  \PP(\sum_{t=T/2}^T \mathbb{I}(\bz_t\in Q_i)\geq \frac{\tilde{p}}{4} T\mid \cB)\nonumber\\
   &=\PP(\sum_{t=T/2}^T \mathbb{I}(\bz_t\in Q_i)\geq \frac{\tilde{p}}{4} T)\nonumber\\
   &\geq \PP(\sum_{t=T/2}^T \mathbb{I}(\bz_t\in Q_i)-\EE(\sum_{t=T/2}^T \mathbb{I}(\bz_t\in Q_i))\geq -\frac{\tilde{p}}{4} T)\nonumber\\
   &\geq 1-e^{-\frac{\tilde{p}^2}{4}T}.
\end{align*}

As discussed,  $\PP(\cB)\geq 1-4KT^{-4}$, then by taking union bound we have
\begin{align*}
   \PP(\sum_{t=1}^T \mathbb{I}(\cU_t)\geq \frac{\tilde{p}}{4} T)\geq \PP(\sum_{t=T/2}^T \mathbb{I}(\cU_t)\geq \frac{\tilde{p}}{4} T)\geq 1-e^{-\frac{\tilde{p}^2}{4}T}-4KT^{-4}\geq 1-\frac{4K +1}{T^4}.
\end{align*}

(ii) Consider Algorithm \ref{alg:fairsmoothalg}. Let $t_0= |\cT_1|=\left\lceil\frac{2K}{p^*}\left(\frac{4\log \left(T \delta_A^{-d}\right)}{C_{K}}\right)^{\frac{2 \beta+d}{2 \beta}}\left(\log T\right)^{\frac{2\beta+d}{\beta^{\prime}-1}-\frac{2\beta+d}{2\beta}}+\frac{K^2}{2 p^{*2}} \log T\right\rceil$. Thus, the budget is $C\leq |\cT_1|(\max_{k\in\cK}2\|f_k^*\|_{\infty}+1)\lesssim (\log T)^{\frac{2\beta+d}{\beta^{\prime}-1}}=\tilde{O}(1)$. 

The estimators trained by samples collected from epoch 1 exhibit inherently good properties guaranteed by Assumption \ref{assump:iiddensity}. Following the same steps in bounding the tail probability of event $\mathcal{G}_1$ in Proposition \ref{thm:tailproboftwoevents}, we have with probability at least $1-\frac{K\left(5+2 M_\beta^2\right) }{T}$, for any $l \in \cK\setminus\{i\}$,
\begin{align*}
   \hat{f}_{1,l}(\bx_t)-\hat{f}_{1,i}(\bx_t)&= (\hat{f}_{1,l}(\bx_t)-f^*_{l}(\bx_t))+f^*_{l}(\bx_t)-\tilde{f}^*_i(\bx_t)-(\hat{f}_{1,i}(\bx_t)-\tilde{f}^*_i(\bx_t)) \nonumber\\
   &\geq -1/2\epsilon_{1}+1-1/2\epsilon_{1}\nonumber\\
    &\geq 2\epsilon_{1},
\end{align*}
where the second inequality is because samples are collected from the masked function $\tilde{f}_i^*(\bx)$ during epoch 1, and the last inequality is because $\epsilon_1\leq \frac{1}{2}$ as long as $T$ is large. It follows that with probability at least $1-\frac{K\left(5+2 M_\beta^2\right) }{T}$, $i\notin \cK_{1,j}$ for all $j\in \{1, \ldots,|\mathcal{C}|\}$ and thus $i$ would not be selected for all $q>1$. 

By the assumption that there exists an arm $k\in\cK$ satisfying $\max_{\bx\in\cX}\left(f^*_k(\bx)-\max_{j\neq k}f^*_j(\bx)\right)>c^{\prime\prime}$ for some positive constant $c^{\prime\prime}$, let the adversary choose $i=k$. Define $Q^{\prime}_i=\{\bx: f^*_i(\bx)-\max_{j\neq i}f^*_j(\bx))>\frac{1}{2}c^{\prime\prime}\}$. Since $f^*_i(\bx)-\max_{j\neq i}f^*_j(\bx)$ is a continuous function, and the probability density of $\bx$ is bounded away from zero by Assumption \ref{assump:iiddensity}, there exists some constant $\kappa>0$ such that $\PP(\bx\in Q^{\prime}_i)>\kappa$.

Obviously, $\cU_t$ happens when $\pi_t\neq i$ and $\bx_t\in Q^{\prime}_i$. Thus, for all $t>|\cT_1|,$
\begin{align*}
    \PP(\cU_t\cap \{f^*_i(\bx)-\max_{j\neq i}f^*_j(\bx))>\frac{1}{2}c^{\prime\prime}\})\geq \kappa+1-\frac{K\left(5+2 M_\beta^2\right) }{T}-1 \geq \frac{\kappa^*}{2},
\end{align*}
which renders Algorithm \ref{alg:fairsmoothalg} persistently unfair, and by \eqref{eq:RTEsumt=1Tmaxk}, we have
\begin{align*}
    R_T&\gtrsim \sum_{t=N}^T\mathbb{E}\biggl[  \left(\max_k f^*_k (\bx_t)  -  f^*_{\pi_t}(\bx_t)\right)\mid \mathcal{U}_t,\pi_t=j\biggr]\nonumber\\
    &\geq \sum_{t=N}^T\mathbb{E}\biggl[  \left( f^*_i (\bx_t)  -  f^*_{\pi_t}(\bx_t)\right)\mid \mathcal{U}_t,\pi_t=j\biggr]\nonumber\\
    &\geq \sum_{t=N}^T \frac{\kappa^*}{2}\frac{c''}{2}\gtrsim T.
\end{align*}

The above discussion gives that under event $\mathcal{G}_1$, $i$ would not be selected for all $t>T/2>|\cT_1|$ and all $\bx_t\in\cX$. Then it holds that under event $\mathcal{G}_1$, $\{\bx_t\in Q^{\prime}_i\}\subseteq \cU_t$. Since $\mathbb{I}(\bx_t\in Q'_i)$ is i.i.d. Bernoulli random variable and independent of $\mathcal{G}_1$ when $t>T/2>|\cT_1|$, it follows that
\begin{align*}
   \PP(\sum_{t=T/2}^T \mathbb{I}(\cU_t)\geq \frac{\kappa^*}{4} T\mid \mathcal{G}_1)&\geq  \PP(\sum_{t=T/2}^T \mathbb{I}(\bx_t\in Q^{\prime}_i)\geq \frac{\kappa^*}{4} T\mid \mathcal{G}_1)\nonumber\\
   &=\PP(\sum_{t=T/2}^T \mathbb{I}(\bx_t\in Q^{\prime}_i)\geq \frac{\kappa^*}{4} T)\\
   &\geq \PP(\sum_{t=T/2}^T \mathbb{I}(\bx_t\in Q^{\prime}_i)-\EE(\sum_{t=T/2}^T \mathbb{I}(\bx_t\in Q^{\prime}_i))\geq -\frac{\kappa^*}{4} T)\nonumber\\
   &\geq 1-e^{-\frac{\kappa^{*2}}{4}T}.
\end{align*}

As discussed above,  $\PP(\mathcal{G}_1)\geq 1-\frac{K\left(5+2 M_\beta^2\right) }{T}$, then by taking union bound we have
\begin{align*}
   \PP(\sum_{t=1}^T \mathbb{I}(\cU_t)\geq \frac{\kappa^*}{4} T)&\geq \PP(\sum_{t=T/2}^T \mathbb{I}(\cU_t)\geq \frac{\kappa^*}{4} T)\nonumber\\
   &\geq 1-e^{-\frac{\kappa^{*2}}{4}T}-\frac{K\left(5+2 M_\beta^2\right) }{T}\geq 1-\frac{K\left(5+2 M_\beta^2\right) +1}{T}.
\end{align*}\hfill\Halmos

\subsection{Proofs in Section \ref{sec:fairrobustalg}}\label{ec_pf_robust}

\subsubsection{Supporting Lemmas and Theorems}

We first list several lemmas that will be used in this section. Note that for all $1\leq q\leq Q^{RS}$, we set $$\epsilon_q'=(2^{-q}\wedge C^{-\frac{\beta^{\prime}}{2\beta^{\prime}-1}})(\log T)^{\frac{\beta^{\prime}-1-2\beta}{2\beta^{\prime}-2}}\vee T^{-\frac{\beta}{2\beta+d}}.$$ In addition, $\lambda_0$ and $M_\beta$ are as in Assumption \ref{assump:Qxliplower}.

\begin{lemma}\label{lem:robustepsilongreatthan1/2delta}   
    When $T>e^{C_K}\vee e^{4(1+L_1\sqrt{d})^2}$, $Q^{RS} \leq\lceil \frac{\beta}{(2\beta+d)\log2}\log(\frac{Tp^*}{2K}\left(\log T\right)^{-\frac{2\beta+d}{\beta^{\prime}-1}+\frac{2\beta+d}{2\beta}}) \rceil\wedge \lceil \frac{Tp^*}{2K}\left(\log T\right)^{-\frac{2\beta+d}{\beta^{\prime}-1}+\frac{2\beta+d}{2\beta}}C^{-\frac{2\beta+d}{2\beta^{\prime}-1}\frac{\beta^{\prime}}{\beta}}\rceil$, and for all $1\leq q\leq Q^{RS}$, $\epsilon_q' \geq (1+L_1\sqrt{d}) \delta_A.$ 
\end{lemma}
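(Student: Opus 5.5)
The statement to prove is Lemma \ref{lem:robustepsilongreatthan1/2delta}. It has two parts: an upper bound on the number of epochs $Q^{RS}$, and a uniform lower bound $\epsilon_q'\ge(1+L_1\sqrt d)\delta_A$. I would handle them separately, following the template of Lemma \ref{lem:epsilongreatthan1/2delta} for the uncorrupted algorithm.

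\textbf{Bound on $Q^{RS}$.} This is pure bookkeeping with the epoch lengths $|\cT_q^{RS}|$. Since the epochs partition $[T]$, the last epoch must start before $T$, so $|\cT_{Q^{RS}-1}^{RS}|\le T$ (or the sum of lengths up to $Q^{RS}-1$ is at most $T$). I would lower bound $|\cT_q^{RS}|$ in two ways, using $\gamma_q=C^{2\beta'/(2\beta'-1)}\vee 4^q$, $\log(T\delta_A^{-d})\ge C_K$ (guaranteed by $T>e^{C_K}$), and dropping the $\log T$ summand:
\begin{align*}
|\cT_q^{RS}|\ \ge\ \frac{2K}{p^*}\,\gamma_q^{\frac{2\beta+d}{2\beta}}(\log T)^{\frac{2\beta+d}{\beta'-1}-\frac{2\beta+d}{2\beta}} .
\end{align*}
Using $\gamma_q\ge 4^q$ gives $2^{q\frac{2\beta+d}{\beta}}\le \frac{Tp^*}{2K}(\log T)^{-\frac{2\beta+d}{\beta'-1}+\frac{2\beta+d}{2\beta}}$. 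Taking logarithms yields the first ceiling expression. Using $\gamma_q\ge C^{2\beta'/(2\beta'-1)}$ instead, every epoch has length at least $\frac{2K}{p^*}C^{\frac{2\beta'}{2\beta'-1}\frac{2\beta+d}{2\beta}}(\log T)^{\cdots}$. Then $Q^{RS}$ times this length is at most $T$, which gives the second expression with exponent $\frac{2\beta+d}{2\beta'-1}\frac{\beta'}{\beta}$. Taking the minimum of the two bounds finishes this part. The only care needed is the off-by-one in the ceilings, which the ceilings absorb.

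\textbf{Lower bound on $\epsilon_q'$.} Because of the "$\vee T^{-\frac{\beta}{2\beta+d}}$" in its definition, $\epsilon_q'\ge T^{-\frac{\beta}{2\beta+d}}$ holds trivially. Since $\delta_A=T^{-\frac{\beta}{2\beta+d}}(\log T)^{-1}$, it suffices to check $\log T\ge 1+L_1\sqrt d$. This follows from $T>e^{4(1+L_1\sqrt d)^2}$, because $\log T> 4(1+L_1\sqrt d)^2\ge 1+L_1\sqrt d$. So this part is immediate, and the extra condition on $T$ is exactly what is needed.

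\textbf{Main obstacle.} This part depends on the precise reading of the precedence in $\epsilon_q'$. If the maximum with $T^{-\beta/(2\beta+d)}$ binds only the second factor, I would fall back on the first argument. By the bound on $Q^{RS}$, $2^{-q}\ge 2^{-Q^{RS}}\gtrsim T^{-\frac{\beta}{2\beta+d}}(\log T)^{\frac{\beta}{\beta'-1}-\frac12}$. Multiplying by $(\log T)^{\frac{\beta'-1-2\beta}{2\beta'-2}}$, the log exponents combine to $\frac{\beta}{\beta'-1}-\frac12-\frac{2\beta-\beta'+1}{2(\beta'-1)}=0$, so $\epsilon_q'\gtrsim T^{-\beta/(2\beta+d)}$, which again beats $(1+L_1\sqrt d)\delta_A$ by a factor of $\log T$. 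The $C^{-\beta'/(2\beta'-1)}$ branch is controlled the same way via the second bound on $Q^{RS}$.
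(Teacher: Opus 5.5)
Your proposal is correct and matches the paper's proof. The same lower bound on $|\cT_q^{RS}|$, via $\gamma_q\ge 4^q$ or $\gamma_q\ge C^{2\beta'/(2\beta'-1)}$ together with $\log(T\delta_A^{-d})\ge\log T\ge C_K$, controls $Q^{RS}$. The second claim is then just $\epsilon_q'\ge T^{-\beta/(2\beta+d)}$, read from the maximum applied to the whole product (the paper's reading, so your fallback paragraph is unnecessary), combined with $\log T\ge 1+L_1\sqrt d$.

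Two small points. Use the strict inequality $\sum_{q<Q^{RS}}|\cT_q^{RS}|<T$, which follows from minimality of $Q^{RS}$, so the ceilings genuinely absorb the off-by-one. For the $C$-branch, your count of $Q^{RS}-1$ full epochs is actually tidier than the paper's displayed chain, which keeps only the last summand.
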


\begin{lemma}\label{lem:robusttoosmallallsupport}
    When $\max_{i,j\in\cK}\max_{\bx\in\cX}\Delta_{i,j}(\bx)\leq T^{-\frac{\beta}{2\beta+d}}+\frac{\sqrt{M_\beta}}{\lambda_0}CT^{-\frac{2\beta}{2\beta+d}}$, Under event  $\overline{\mathcal{G}}_{q-1}^{RS}\cap\overline{\mathcal{M}}_q^{RS}$, $S_{q,k}^{RS}=\cX$ for all $k\in\cK$. 
\end{lemma}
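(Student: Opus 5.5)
The plan is to argue by induction on the epoch index that no arm is ever removed from any active set. Then $\mathcal{K}^{RS}_{q,j}=\mathcal{K}$ for every hypercube $j$, and hence $S^{RS}_{q,k}=\mathcal{X}$ for every $k$. This mirrors Lemma \ref{lem:toosmallallsupport} in the uncorrupted case. The only new ingredient is checking that the inflated threshold $\epsilon^{RS}_{l}$ dominates the corruption-augmented gap bound in the hypothesis.

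\emph{Base case and inductive step.} For $q=1$, and at initialization, all active sets equal $\mathcal{K}$, so $S^{RS}_{1,k}=\mathcal{X}$. Suppose $S^{RS}_{l,k}=\mathcal{X}$ for all $k$ and all $l\le q-1$, and suppose, for contradiction, that some arm $j$ is removed at epoch $q$ from some cube $\mathrm{Cube}_u$. By the $2\epsilon^{RS}_{q-1}$-chaining rule, the estimates $\hat f^{RS}_{q-1,\cdot}(G_u)$ over the active arms then have a gap exceeding $2\epsilon^{RS}_{q-1}$ separating $\hat f^{RS}_{q-1,j}(G_u)$ from the maximum. Consequently some active arm $i$ satisfies
\[
\hat f^{RS}_{q-1,i}(G_u)-\hat f^{RS}_{q-1,j}(G_u)>2\epsilon^{RS}_{q-1}.
\]
Under $\overline{\mathcal{G}}^{RS}_{q-1}$, part (ii) gives $|\hat f^{RS}_{q-1,k}-f^*_k|\le \epsilon^{RS}_{q-1}/2$ on $S^{RS}_{q-1,k}=\mathcal{X}$ for $k=i,j$.

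\emph{Evaluating at a point of $\mathcal{X}$.} The estimator is evaluated at the grid point $g(\bx)$, which need not lie in $\mathcal{X}$. To handle this, pick any $\bx\in\mathrm{Cube}_u\cap\mathcal{X}$. Since $\hat f^{RS}_{q-1,k}(\bx)=\hat f^{LP}(g(\bx);\cdot)$ and $\|\bx-G_u\|\le \sqrt d\,\delta_A$, the $(1,L_1)$-H\"older property (Assumption \ref{assump:betaholder}) costs at most an additive $L_1\sqrt d\,\delta_A$ per arm, up to the constant factor from comparing two arms. Combining this with the estimation bounds above yields
\[
f^*_i(\bx)-f^*_j(\bx)> \epsilon^{RS}_{q-1}-O\!\left(L_1\sqrt d\,\delta_A\right).
\]
By Lemma \ref{lem:robustepsilongreatthan1/2delta}, $\epsilon'_{q-1}\ge(1+L_1\sqrt d)\delta_A$. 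Since $\epsilon^{RS}_{q-1}=\epsilon'_{q-1}+\frac{2\sqrt{M_\beta}}{\lambda_0}C\bigl(\frac{p^*}{4K}|\mathcal{T}^{RS}_{q-1}|\bigr)^{-\frac{2\beta}{2\beta+d}}$, this Lipschitz loss is absorbed. The target is the strict inequality
\[
f^*_i(\bx)-f^*_j(\bx)>T^{-\frac{\beta}{2\beta+d}}+\frac{\sqrt{M_\beta}}{\lambda_0}CT^{-\frac{2\beta}{2\beta+d}}.
\]

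\emph{Comparing with the threshold (main obstacle).} This is the step I expect to be delicate, because the constants must line up exactly. First, $\epsilon'_{q-1}\ge T^{-\frac{\beta}{2\beta+d}}$ holds by definition through the "$\vee$". Second, $|\mathcal{T}^{RS}_{q-1}|\le T$ and $p^*/(4K)<1$ give
\[
\Bigl(\tfrac{p^*}{4K}|\mathcal{T}^{RS}_{q-1}|\Bigr)^{-\frac{2\beta}{2\beta+d}}\ge T^{-\frac{2\beta}{2\beta+d}}.
\]
The factor $2$ in front of the corruption term leaves a slack of $\frac{\sqrt{M_\beta}}{\lambda_0}CT^{-\frac{2\beta}{2\beta+d}}$. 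Together with the part of $\epsilon'_{q-1}$ left over after absorbing $\delta_A$, this slack should cover the Lipschitz remainder. If the bookkeeping is tight, I would instead use half of $\epsilon'_{q-1}$ against $\delta_A$, relying on the factor $(1+L_1\sqrt d)$ in Lemma \ref{lem:robustepsilongreatthan1/2delta}.

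The resulting strict inequality contradicts the hypothesis $\max_{i,j}\max_{\bx}\Delta_{i,j}(\bx)\le T^{-\frac{\beta}{2\beta+d}}+\frac{\sqrt{M_\beta}}{\lambda_0}CT^{-\frac{2\beta}{2\beta+d}}$. Hence no arm is eliminated at epoch $q$, so $S^{RS}_{q,k}=\mathcal{X}$ for all $k$, which closes the induction. The event $\overline{\mathcal{M}}^{RS}_q$ is used only to guarantee that the estimators are well defined; it plays no further role.
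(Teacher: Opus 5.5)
Your overall route is the paper's, run in contrapositive form. The paper shows directly that, under $\overline{\mathcal{G}}^{RS}_{q-1}$, any two arms' estimates differ by at most $2\epsilon^{RS}_{q-1}$, so nothing is removed. It uses exactly your lower bound $\epsilon^{RS}_{q-1}\ge T^{-\frac{\beta}{2\beta+d}}+\frac{\sqrt{M_\beta}}{\lambda_0}CT^{-\frac{2\beta}{2\beta+d}}$, obtained from the ``$\vee$'' and from $\frac{p^*}{4K}|\mathcal{T}^{RS}_{q-1}|\le T$. Your explicit induction on $S^{RS}_{l,k}=\mathcal{X}$ is what the paper leaves implicit. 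The one step that does not hold up is the Lipschitz correction you identify as the main obstacle: it should not be there, and your way of absorbing it is not justified.

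\emph{Why the step is unnecessary.} By construction, $\hat f^{RS}_{q-1,k}(\bx)=\hat f^{LP}(g(\bx);\cdot)$ is constant on each cube. On $\mathrm{Cube}_u$ it equals $\hat f^{RS}_{q-1,k}(G_u)$, since $g(G_u)=G_u$. Part (ii) of $\mathcal{G}^{RS}_{q-1}$ is asserted at every $\bx\in S^{RS}_{q-1,k}=\mathcal{X}$, not only at grid points. So for any $\bx\in\mathrm{Cube}_u\cap\mathcal{X}$ you get $|\hat f^{RS}_{q-1,k}(G_u)-f^*_k(\bx)|\le\epsilon^{RS}_{q-1}/2$ directly. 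Hence $f^*_i(\bx)-f^*_j(\bx)>\epsilon^{RS}_{q-1}$ with no $\delta_A$ term, which already contradicts the hypothesis. The discretization cost $\frac12 L_1\sqrt d\,\delta_A$ was already paid when establishing (ii) off the grid, in \eqref{eq_tailproblemesterror_lips} within the proof of Lemma \ref{lem:robustA.6}. That is exactly where the factor $(1+L_1\sqrt d)$ of Lemma \ref{lem:robustepsilongreatthan1/2delta} is used up.

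\emph{Why your absorption fails.} Invoking that lemma again double-counts $\epsilon'_{q-1}$. All of the guaranteed bound $\epsilon'_{q-1}\ge T^{-\frac{\beta}{2\beta+d}}$ is needed against the first term of the hypothesis. The only guaranteed slack therefore comes from the corruption term; it is proportional to $C$ and vanishes when $C=0$. In that case the available bounds do not give $\epsilon^{RS}_{q-1}-O(L_1\sqrt d\,\delta_A)\ge T^{-\frac{\beta}{2\beta+d}}$, so no contradiction follows. Your fallback of spending half of $\epsilon'_{q-1}$ on $\delta_A$ is worse: it leaves only $\epsilon'_{q-1}/2$ to beat $T^{-\frac{\beta}{2\beta+d}}$. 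Drop the Lipschitz step and the argument closes as in the paper.
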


\begin{lemma}\label{lem:robustA.6} 
For any $1 \leq q \leq Q-1$, and integers $n_{q, k}$ that satisfy $n_{q, k} \geq\left(\frac{12(1+L_1\sqrt{d}) \sqrt{M_\beta} L v_d p_{\max }}{p^* \lambda_0 \epsilon_q'}\right)^{\frac{2 \beta+d}{\beta}}$, under Assumptions \ref{assump:iiddensity}-\ref{assump:c0r0regular}, assume that $S_{q,k}^{RS}$ is weakly  $(\frac{c_0}{2^d},H_{q,k}^{RS})$-regular at all $\bx\in S_{q,k}^{RS}\cap G$ , then for sufficiently large $T$, the estimator $\hat{f}_{q,k}^{RS}$ based on samples in the $q$-th epoch satisfies that
\begin{align*}
    & \mathbb{P}\left(\sup_{k \in \cK}\sup _{\bx \in S_{q,k}^{RS}}\left|\hat{f}_{q,k}^{RS}(\bx)-f^*_k(\bx)\right| \geq \frac{\epsilon_q'}{2}+\frac{\sqrt{M_\beta}}{\lambda_0 }n_{q, k} ^{-\frac{2 \beta}{2 \beta+d}} C \bigg| \overline{\mathcal{G}}_{q-1}^{RS}, \overline{\mathcal{M}}_{q-1}^{RS}, N_{ q, k}^{RS}=n_{ q, k}\right) \\
\leq & K\delta_A^{-d}\left(4+2 M_\beta^2\right) \exp \left(-C_K n_{q, k}^{\frac{2 \beta}{2 \beta+d}} \epsilon_q'^2\right).
\end{align*} 
\end{lemma}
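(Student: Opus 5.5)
The statement to prove is Lemma \ref{lem:robustA.6}. I will follow the proof of Lemma \ref{lem:A.6} and add a deterministic bias term that comes from the corruption.

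\textbf{Step 1: split the estimate.} Since the local polynomial estimator is linear in the responses, write $\tilde y_t = f_k^*(\bx_t)+\varepsilon_{k,t}+c_t$. For each grid point $g(\bx)$ this gives
\[
\hat f^{RS}_{q,k}(\bx)=\hat f^{\mathrm{clean}}_{q,k}(\bx)+e_1^{\mathrm T}\bar B^{-1}\frac{1}{n_{q,k}H^{d}}\sum_{t\in\mathcal{T}^{RS}_{q,k}} U\Big(\frac{\bx_t-g(\bx)}{H}\Big)c_t\,\mathbb{I}(\bx_t\in\mathcal B(g(\bx),H)).
\]
Here $\hat f^{\mathrm{clean}}$ is the estimator built from the uncorrupted rewards, $U$ is the vector of monomials, $H=H^{RS}_{q,k}=n_{q,k}^{-1/(2\beta+d)}$, and $\bar B$ is the normalized local Gram matrix. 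The two pieces are handled separately.

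\textbf{Step 2: the clean part.} Conditional on $\overline{\mathcal G}^{RS}_{q-1},\overline{\mathcal M}^{RS}_{q-1}$ and $N^{RS}_{q,k}=n_{q,k}$, the epoch-$q$ samples of arm $k$ are i.i.d. This is because the active sets are frozen from the previous epoch, and it holds whatever the adversary does, since $c_t$ enters only additively. Weak regularity of $S^{RS}_{q,k}$ is assumed, so the argument of Lemma \ref{lem:A.6} applies verbatim with $\epsilon_q$ replaced by $\epsilon'_q$. That argument has three parts: the Gram-matrix eigenvalue bound $\lambda_{\min}(\bar B)\ge\lambda_0$ via a matrix concentration step, Hölder bias control, and a union bound over the $\delta_A^{-d}$ grid points. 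It yields
\[
\sup_{k\in\cK}\sup_{\bx\in S^{RS}_{q,k}}\big|\hat f^{\mathrm{clean}}_{q,k}(\bx)-f^*_k(\bx)\big|<\tfrac12\epsilon'_q
\]
outside an event of probability at most $K\delta_A^{-d}(4+2M_\beta^2)\exp(-C_Kn_{q,k}^{2\beta/(2\beta+d)}\epsilon_q'^2)$. Two adjustments are needed. First, the extra factor $(1+L_1\sqrt d)$ in the required lower bound on $n_{q,k}$ absorbs the gridding error $|f^*_k(\bx)-f^*_k(g(\bx))|\le L_1\sqrt d\,\delta_A$. Second, this error is at most a fraction of $\epsilon'_q$ by Lemma \ref{lem:robustepsilongreatthan1/2delta}.

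\textbf{Step 3: the corruption part.} On the same good event, where $\lambda_{\min}(\bar B)\ge\lambda_0$ and $\|U\|\le\sqrt{M_\beta}$ on the unit ball, the corruption term is deterministically bounded by
\[
\frac{\sqrt{M_\beta}}{\lambda_0}\,\frac{1}{n_{q,k}H^d}\sum_t|c_t|\le\frac{\sqrt{M_\beta}}{\lambda_0}\,n_{q,k}^{-1}H^{-d}C=\frac{\sqrt{M_\beta}}{\lambda_0}\,n_{q,k}^{-\frac{2\beta}{2\beta+d}}C,
\]
using $\sum_t|c_t|\le C$. Because this bound is worst-case in $c_t$, adaptivity of the adversary (it sees $\bx_t$ and $\pi_t$) does not matter. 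Adding Steps 2 and 3 gives the stated threshold and probability.

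\textbf{Main obstacle.} The delicate point is the normalization in Step 3. The bound needs the inverse Gram matrix to be controlled with the $(n H^d)^{-1}$ scaling, and this has to be the same event already counted in Step 2 so that no extra probability is lost. I would therefore define the good event as the event of Step 2 intersected with the eigenvalue event, which is already one of the $2M_\beta^2$ terms in that bound, and verify that the normalized Gram matrix there satisfies $\lambda_{\min}\ge\lambda_0$ using the regularity constant $c_0/2^d$.
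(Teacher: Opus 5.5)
Your proposal is correct and follows essentially the same route as the paper. The paper also splits the noise term $\Gamma_1$ into a clean part $\Gamma_{1,1}$ and a corruption part $\Gamma_{1,2}$. The clean part, together with the H\"older bias $\Gamma_2$, is controlled at threshold $\epsilon_q'/(2(1+L_1\sqrt d))$ on the eigenvalue event $\lambda_{\min}(\hat{\mathcal A}_{q,k})\ge\lambda_0$ taken from \cite{hu2022smooth}. The corruption part is bounded deterministically by $\frac{\sqrt{M_\beta}}{\lambda_0}\frac{C}{n_{q,k}h_{q,k}^d}$. The paper then takes a union bound over grid points and arms and extends off the grid using the Lipschitz bound and Lemma \ref{lem:robustepsilongreatthan1/2delta}.

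One small correction: the gridding error is $\tfrac12 L_1\sqrt d\,\delta_A$, not $L_1\sqrt d\,\delta_A$, since the nearest grid point lies within half a cell in each coordinate. With the factor $\tfrac12$, the bound $\epsilon_q'\ge(1+L_1\sqrt d)\delta_A$ exactly suffices to absorb it.
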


\subsubsection{Proof of Proposition \ref{prop:robustrandomtailbound}}

First, we decompose $(\hat{\beta}^{RL}_k(\mathcal{I}_{k,0})-\beta_k)^{\mathrm{T}}\bz_t$ by
\begin{align}\label{eq:tildebetakminusbetak}
   \left|(\hat{\beta}^{RL}_k(\mathcal{I}_{k,0})-\beta_k)^{\mathrm{T}}\bz_t\right|&=\left|\left((Z^\mathrm{T}Z)^{-1}Z^\mathrm{T}\tilde{Y}-\beta_k\right)^{\mathrm{T}}\bz_t\right|\nonumber\\
   &=\left|\left((Z^\mathrm{T}Z)^{-1}Z^\mathrm{T}(Z\beta_k+\bc+\bvarepsilon)-\beta_k\right)^{\mathrm{T}}\bz_t\right|\nonumber\\
   &\leq\left|\left((Z^\mathrm{T}Z)^{-1}Z^\mathrm{T}(Z\beta_k+\bvarepsilon)-\beta_k\right)^{\mathrm{T}}\bz_t\right|+\left|\left((Z^\mathrm{T}Z)^{-1}Z^\mathrm{T}\bc\right)^{\mathrm{T}}\bz_t\right|
\end{align}
where $\bc=[c_j]_{j\in \mathcal{I}_{k,0}}^\mathrm{T}$, and $\bvarepsilon=[\bvarepsilon]_{j\in \mathcal{I}_{k,0}}^\mathrm{T}$.

Apparently, the first term is $\left|\left((Z^\mathrm{T}Z)^{-1}Z^\mathrm{T}(Z\beta_k+\bvarepsilon)-\beta_k\right)^{\mathrm{T}}\bz_t\right|=\left|(\hat{\beta}_k(\mathcal{I}_{k,0})-\beta_k)^{\mathrm{T}}\bz_t\right|$. Applying the same steps in the proof of Proposition \ref{prop:randomtailbound} except for letting $\chi=\frac{h}{8}$, with the assumptions on $C_a$, it can be obtained that
\begin{align}\label{eq:probmaxmaxbetahatzth8}
    \PP(\max_{\bx\in \cX}\max_{k\in\mathcal{K}}\left|(\hat{\beta}_k(\mathcal{I}_{k,0})-\beta_k)^{\mathrm{T}}\bz\right|\geq \frac{h}{8})\leq 4KT^{-4}.
\end{align}

Then we focus on the second term in \eqref{eq:tildebetakminusbetak}. By the Cauchy-Schwarz inequality, it holds that
\begin{align}\label{eq:ztzinverztcinverzt}
\max_{\bx\in \cX}\left|\left((Z^\mathrm{T}Z)^{-1}Z^\mathrm{T}\bc\right)^{\mathrm{T}}\bz\right|&\leq \|(Z^\mathrm{T}Z)^{-1}Z^\mathrm{T}\bc\|_2\max_{\bx\in \cX}\|\bz\|_2\nonumber\\
&\leq \|(Z^\mathrm{T}Z)^{-1}Z^\mathrm{T}\bc\|_2\sqrt{dr^2}\nonumber\\
&= |\mathcal{I}_{k,0}|^{-1} \|\hat{\Sigma}(\mathcal{I}_{k,0})^{-1}Z^\mathrm{T}\bc\|_2\sqrt{dr^2}\nonumber\\
&\leq |\mathcal{I}_{k,0}|^{-1} \lambda_{\min }^{-1}(\hat{\Sigma}(\mathcal{I}_{k,0})) \| \sum_{j\in \mathcal{J}}\bz_{j}c_j\|_2\sqrt{dr^2}\nonumber\\
&\leq  |\mathcal{I}_{k,0}|^{-1} \lambda_{\min }^{-1}(\hat{\Sigma}(\mathcal{I}_{k,0}))\sqrt{dr^2}\sum_{j\in \mathcal{J}}|c_j|\|\bz_{j}\|_2\nonumber\\
&\leq  |\mathcal{I}_{k,0}|^{-1} \lambda_{\min }^{-1}(\hat{\Sigma}(\mathcal{I}_{k,0})){dr^2}C,
\end{align}
where the last inequality is because $\|\bz_{j}\|_2\leq \sqrt{dr^2}$ and $\sum_{j\in \mathcal{J}}|c_j|\leq C$. Define the event $\mathcal{A}=\{|\mathcal{I}_{k,0}|\geq \frac{1}{2K}|\mathcal{T}_{0}^{RL}| \}$. By \eqref{eq:sigmahateigen}, under event $\mathcal{A}$,
    \begin{align}\label{eq:lambdaminsigmahatik0leq}
    \PP\left[\lambda_{\min }(\hat{\Sigma}(\mathcal{I}_{k,0})) \leq \frac{\lambda^{*} \tilde{p}}{4K}\right]\leq 2T^{-4},
    \end{align}
and \eqref{eq:probeventA} implies that
\begin{align*}
 \PP\left[\mathcal{A}\right]\geq 1-\frac{1}{T^4}.
\end{align*}
Therefore, conditioned on event $\mathcal{A}$, \eqref{eq:ztzinverztcinverzt} and \eqref{eq:lambdaminsigmahatik0leq} gives that
\begin{align*}
        & \PP\left(\max_{\bx\in \cX}\left|\left((Z^\mathrm{T}Z)^{-1}Z^\mathrm{T}\bc\right)^{\mathrm{T}}\bz\right|\leq \frac{2K}{|\cT_0^{RL}|}\frac{4K}{\lambda^{*} \tilde{p}}dr^2C\right)\nonumber\\
   \geq   &  \PP\left(|\mathcal{I}_{k,0}|^{-1} \lambda_{\min }^{-1}(\hat{\Sigma}(\mathcal{I}_{k,0})){dr^2}C\leq \frac{2K}{|\cT_0^{RL}|}\frac{4K}{\lambda^{*} \tilde{p}}dr^2C\right)\nonumber\\
\geq  &  1-2T^{-4}.
\end{align*}
Consequently, without conditioning on $\mathcal{A}$, we obtain the unconditional probability bound,
\begin{align}\label{eq:ztranszinverztranscgreaterh8}
    \PP\left(\max_{\bx\in \cX}\left|\left((Z^\mathrm{T}Z)^{-1}Z^\mathrm{T}\bc\right)^{\mathrm{T}}\bz\right|\leq \frac{2K}{|\cT_0^{RL}|}\frac{4K}{\lambda^{*} \tilde{p}}dr^2C\right) \geq 1-3T^{-4},
\end{align}
which further implies
\begin{align*}
\PP\left(\max_{\bx\in \cX}\left|\left((Z^\mathrm{T}Z)^{-1}Z^\mathrm{T}\bc\right)^{\mathrm{T}}\bz\right|\geq \frac{h}{8}\right)\leq 3T^{-4},
\end{align*}
since $|\cT_0^{RL}|>C_a\log T+\frac{64K^2dr^2}{h\lambda^*\tilde{p}}C$ gives $\frac{2K}{|\cT_0^{RL}|}\frac{4K}{\lambda^{*} \tilde{p}}dr^2C\leq \frac{h}{8}$.

Taking union bounds over $k\in\cK$, we have
\begin{align}\label{eq:probmaxmaxczth8}
    \PP\left(\max_{\bx\in \cX}\max_{k\in\mathcal{K}}\left|\left((Z^\mathrm{T}Z)^{-1}Z^\mathrm{T}\bc\right)^{\mathrm{T}}\bz\right|\geq \frac{h}{8}\right) \leq 3KT^{-4}.
\end{align}
Combining \eqref{eq:tildebetakminusbetak}, \eqref{eq:probmaxmaxbetahatzth8} and \eqref{eq:probmaxmaxczth8}, we have
\begin{align*}
    & \PP\left(\max_{\bx\in \cX}\max_{k\in\mathcal{K}}\left|(\hat{\beta}^{RL}_k(\mathcal{I}_{k,0})-\beta_k)^{\mathrm{T}}\bz\right|\geq \frac{h}{4}\right)\nonumber\\
     \leq &\PP\left(\max_{\bx\in \cX}\max_{k\in\mathcal{K}}\left|(\hat{\beta}_k(\mathcal{I}_{k,0})-\beta_k)^{\mathrm{T}}\bz\right|\geq \frac{h}{8}\right)+\PP\left(\max_{\bx\in \cX}\max_{k\in\mathcal{K}}\left|\left((Z^\mathrm{T}Z)^{-1}Z^\mathrm{T}\bc\right)^{\mathrm{T}}\bz\right|\geq \frac{h}{8}\right)\leq 7KT^{-4},
\end{align*}
which finishes the proof. \hfill\Halmos

\subsubsection{Proof of Proposition \ref{prop:robustallsampletailbound}}

Similar to \eqref{eq:tildebetakminusbetak} and \eqref{eq:ztzinverztcinverzt}, we can decompose $\left|(\hat{\beta}^{RL}_k(\mathcal{I}_{k,t})-\beta_k)^{\mathrm{T}}\bz\right|$ by
\begin{align*}
   \left|(\tilde{\beta}_k(\mathcal{I}_{k,t})-\beta_k)^{\mathrm{T}}\bz\right|&=\left|\left((Z^\mathrm{T}Z)^{-1}Z^\mathrm{T}\tilde{Y}-\beta_k\right)^{\mathrm{T}}\bz\right|\nonumber\\
   &\leq \left|(\hat{\beta}_k(\mathcal{I}_{k,t})-\beta_k)^{\mathrm{T}}\bz+\left((Z^\mathrm{T}Z)^{-1}Z^\mathrm{T}\bc\right)^{\mathrm{T}}\bz\right|\nonumber\\
   &\leq \left|(\hat{\beta}_k(\mathcal{I}_{k,t})-\beta_k)^{\mathrm{T}}\bz\right|+|\mathcal{I}_{k,t}|^{-1} \lambda_{\min }^{-1}(\hat{\Sigma}(\mathcal{I}_{k,t})){dr^2}C.
\end{align*}

When $|\mathcal{T}_0^{RL}|<t\leq 2|\mathcal{T}_0^{RL}|$, since $\mathcal{I}_{k,0}\subseteq \mathcal{I}_{k,t}$, it is obvious that $|\mathcal{I}_{k,0}|\leq |\mathcal{I}_{k,t}|$ and $|\mathcal{I}_{k,0}|\lambda_{\min}(\hat{\Sigma}(\mathcal{I}_{k,0}))\leq |\mathcal{I}_{k,t}|\lambda_{\min} (\hat{\Sigma}(\mathcal{I}_{k,t}))$, which leads to
\begin{align*}
   |\mathcal{I}_{k,t}|^{-1} \lambda_{\min }^{-1}(\hat{\Sigma}(\mathcal{I}_{k,t})){dr^2}C\leq |\mathcal{I}_{k,0}|^{-1} \lambda_{\min }^{-1}(\hat{\Sigma}(\mathcal{I}_{k,0})){dr^2}C.
\end{align*}

The RHS of the above inequality has been bounded by \eqref{eq:ztranszinverztranscgreaterh8}, then we have
\begin{align*}
   \PP(|\mathcal{I}_{k,t}|^{-1} \lambda_{\min }^{-1}(\hat{\Sigma}(\mathcal{I}_{k,t})){dr^2}C<\frac{h}{8})\geq  \PP(|\mathcal{I}_{k,0}|^{-1} \lambda_{\min }^{-1}(\hat{\Sigma}(\mathcal{I}_{k,0})){dr^2}C\leq \frac{h}{8})\geq 1-3T^{-4}
\end{align*}

Since $t\leq 2|\mathcal{T}_0^{RL}|$ implies $\frac{\epsilon_{t+1}}{4}=\frac{C_b}{4}\sqrt{\frac{\log T}{t+1}}+\frac{1}{4}(\frac{192K^2dr^2}{\lambda^*\tilde{p}}\vee \frac{32dr^2}{\lambda^*\tilde{p}^2})\frac{C}{t+1} \geq \frac{h}{8}$ , we further have
\begin{align*}
   \PP(|\mathcal{I}_{k,t}|^{-1} \lambda_{\min }^{-1}(\hat{\Sigma}(\mathcal{I}_{k,t})){dr^2}C\leq \frac{C_b}{4}\sqrt{\frac{\log T}{t+1}}+\frac{1}{4}(\frac{192K^2dr^2}{\lambda^*\tilde{p}}\vee \frac{32dr^2}{\lambda^*\tilde{p}^2})\frac{C}{t+1})\geq  1-3T^{-4}.
\end{align*}
Therefore, when $|\mathcal{T}_0^{RL}|<t\leq 2|\mathcal{T}_0^{RL}|$,
\begin{align*}
   & \PP (\max_{\bx\in\cX}\max_{k\in {\mathcal{K}}}|(\hat{\beta}^{RL}_k(\mathcal{I}_{k,t})-\beta_k)^{\mathrm{T}}\bz|\geq \frac{\epsilon^{RL}_{t+1}}{2})\nonumber\\
    \leq & \PP(\max_{\bx\in\cX}\max_{k\in {\mathcal{K}}}|(\hat{\beta}_k(\mathcal{I}_{k,t})-\beta_k)^{\mathrm{T}}\bz|\geq \frac{\epsilon^{RL}_{t+1}}{4})\nonumber\\
    + &\PP(\max_{k\in {\mathcal{K}}}|\mathcal{I}_{k,t}|^{-1} \lambda_{\min }^{-1}(\hat{\Sigma}(\mathcal{I}_{k,t})){dr^2}C\geq \frac{\epsilon^{RL}_{t+1}}{4})\nonumber\\
    \leq & 7KT^{-4},
\end{align*}
where the last inequality is given by \eqref{eq:probmaxmaxbetahatzth8}.

When $t> 2|\mathcal{T}_0^{RL}|$, i.e., $t-|\mathcal{T}_0^{RL}| \geq \frac{t+1}{2}$ , under event $\mathcal{B}$, by \eqref{eq:Ikt-Ik0samplesize} and \eqref{eq:plambdaminsigmahatiktleq}, we have
\begin{align*}
 &\PP\left[\left|\mathcal{I}_{k,t}\right|\leq \frac{\tilde{p}}{4}(t+1)\right]\leq \PP\left[\left|\mathcal{I}_{k,t}\right|\leq \frac{\tilde{p}}{2}(t-|\mathcal{T}_0^{RL}|)\right]\leq \PP\left[\left|\mathcal{I}_{k,t}^{\prime}\right|\leq \frac{\tilde{p}}{2}(t-|\mathcal{T}_0^{RL}|)\right]\nonumber\\
 \leq& e^{-\frac{\tilde{p}^2}{2}(t-|\mathcal{T}_0^{RL}|)}\leq e^{-\frac{\tilde{p}^2}{2}(C_a\log T+\frac{64K^2dr^2}{h\lambda^*\tilde{p}}C)}\leq T^{-4},
\end{align*}
and
    \begin{align*}
    \PP\left[\lambda_{\min }(\hat{\Sigma}(\mathcal{I}_{k,t})) \leq \frac{\lambda^{*} \tilde{p}}{8}\right]
    &\leq \exp \left[-\frac{\tilde{p}^2 }{8}\tilde{C}\left(\sqrt{\lambda^*}\right)t+\log d\right]+\exp\left[-\frac{\tilde{p}^2}{4}t\right]\nonumber\\
    &\leq 2T^{-4}.
    \end{align*}

Thus, under event $\mathcal{B}$,
\begin{align*}
 & \PP( |\mathcal{I}_{k,t}|^{-1} \lambda_{\min }^{-1}(\hat{\Sigma}(\mathcal{I}_{k,t})){dr^2}C\geq \frac{16dr^2}{\lambda^*\tilde{p}^2}\frac{C}{t+1})\nonumber\\
  \leq &\PP\left[\left|\mathcal{I}_{k,t}\right|\leq \frac{\tilde{p}}{4}(t+1)\right]+\PP\left[\lambda_{\min }(\hat{\Sigma}(\mathcal{I}_{k,t})) \leq \frac{\lambda^{*} \tilde{p}}{8}\right]\nonumber\\
  \leq & 3T^{-4}.
\end{align*}
Taking union bound with respect to all arms $k\in {\cK}$ and combining with the fact that $\PP(\mathcal{B})\geq 1-7KT^{-4}$ given in Proposition \ref{prop:robustrandomtailbound}, it further yields that
\begin{align*}
  \PP( \max_{k\in {\mathcal{K}}}|\mathcal{I}_{k,t}|^{-1} \lambda_{\min }^{-1}(\hat{\Sigma}(\mathcal{I}_{k,t})){dr^2}C\geq \frac{16dr^2}{\lambda^*\tilde{p}^2}\frac{C}{t+1})
  \leq  10KT^{-4}.
\end{align*}
By Proposition \ref{prop:allsampletailbound}, 
\begin{align*}
    &\PP(\max_{\bx\in\cX}\max_{k\in {\mathcal{K}}}|(\hat{\beta}_k(\mathcal{I}_{k,t})-\beta_k)^{\mathrm{T}}\bz|\geq \frac{C_b}{2}\sqrt{\frac{\log T}{t}})\leq 8KT^{-4}.
\end{align*}
Putting the results together, we can conclude that when $t> 2|\mathcal{T}_0^{RL}|$,
\begin{align*}
   & \PP (\max_{\bx\in\cX}\max_{k\in {\mathcal{K}}}|(\hat{\beta}^{RL}_k(\mathcal{I}_{k,t})-\beta_k)^{\mathrm{T}}\bz|\geq \frac{\epsilon^{RL}_{t+1}}{2})\nonumber\\
    \leq & \PP(\max_{\bx\in\cX}\max_{k\in {\mathcal{K}}}|(\hat{\beta}_k(\mathcal{I}_{k,t})-\beta_k)^{\mathrm{T}}\bz|\geq \frac{C_b}{2}\sqrt{\frac{\log T}{t+1}})\nonumber\\
    + &\PP(\max_{k\in {\mathcal{K}}}|\mathcal{I}_{k,t}|^{-1} \lambda_{\min }^{-1}(\hat{\Sigma}(\mathcal{I}_{k,t})){dr^2}C\geq \frac{32dr^2}{\lambda^*\tilde{p}^2}\frac{C}{2(t+1)})\nonumber\\
    \leq & 15KT^{-4}.
\end{align*}
In conclusion, for all $t>|\mathcal{T}_0^{RL}|$, $\PP (\max_{\bx\in\cX}\max_{k\in {\mathcal{K}}}|(\hat{\beta}^{RL}_k(\mathcal{I}_{k,t})-\beta_k)^{\mathrm{T}}\bz|\geq \frac{\epsilon^{RL}_{t+1}}{2})\leq 15KT^{-4}$.\hfill\Halmos

\subsubsection{Proof of Theorem \ref{thm:robustlinearfairness}}
The reasoning parallels that of proof of Theorem \ref{thm:linearfairness}. Thus, it suffices to prove that $$\PP(\max_{\bx\in \cX}\max_{k\in\mathcal{K}}|(\hat{\beta}^{RL}_k(\mathcal{I}_{k,0})-\beta_k)^{\mathrm{T}}\bz|< \frac{h}{4})\geq 1-15KT^{-4}$$ and $$\PP(\max_{\bx\in \cX}\max_{k\in\mathcal{K}}|(\hat{\beta}^{RL}_k(\mathcal{I}_{k,t-1})-\beta_k)^{\mathrm{T}}\bz|< \frac{\epsilon^{RL}_{t}}{2})\geq 1-15KT^{-4},$$ which are guaranteed by Proposition \ref{prop:robustrandomtailbound} and Proposition \ref{prop:robustallsampletailbound}.\hfill\Halmos

\subsubsection{Proof of Theorem \ref{thm:robustminimaxoptimallinear}}

Denote the event $\mathcal{W}_t^{RL}=\{\max_{k\in \cK}|(\hat{\beta}^{RL}(\mathcal{I}_{k,t-1})-\beta_k)^{\mathrm{T}}\bz_{t}|< \frac{\epsilon^{RL}_{q}}{2}\}$. As in the proof of Theorem \ref{thm:minimaxoptimallinear}, we decompose the cumulative regret into three parts:
\begin{align}\label{eq_pf_robustlinearregdec}
    R_T :=  &\sum_{t=1}^T  \mathbb{E}\left(\max_k \bz_t^{\mathrm{T}}\beta_{k}  -  \bz_t^{\mathrm{T}}\beta_{\pi_t}\right)\nonumber\\
    =&\sum_{t=1}^{|\mathcal{T}_0^{RL}|}  \mathbb{E}\left(\max_k \bz_t^{\mathrm{T}}\beta_{k}  -  \bz_t^{\mathrm{T}}\beta_{\pi_t}\right)+\sum_{t=|\mathcal{T}_0^{RL}|+1}^T  \mathbb{E}\left((\max_k \bz_t^{\mathrm{T}}\beta_{k}  -  \bz_t^{\mathrm{T}}\beta_{\pi_t})\mathbb{I}(\mathcal{W}_t^{RL})\right)\nonumber\\
    +&\sum_{t=|\mathcal{T}_0^{RL}|+1}^T  \mathbb{E}\left((\max_k \bz_t^{\mathrm{T}}\beta_{k}  -  \bz_t^{\mathrm{T}}\beta_{\pi_t})\mathbb{I}((\mathcal{W}_t^{RL})^C)\right)\nonumber\\
    := & R_1+R_2+R_3.
\end{align}
First, since $\max_k \bz_t^{\mathrm{T}}\beta_{k}  -  \bz_t^{\mathrm{T}}\beta_{\pi_t}=O(1)$ because all terms are bounded, it is obvious that
\begin{align}\label{eq_pf_robustlinearregR1}
    R_1\lesssim |\mathcal{T}_0^{RL}|\lesssim \log T+C.
\end{align}
Moreover, we have
\begin{align}\label{eq_pf_robustlinearregR3}
    R_3:=&\sum_{t=|\mathcal{T}_0^{RL}|+1}^T  \mathbb{E}\left((\max_k \bz_t^{\mathrm{T}}\beta_{k}  -  \bz_t^{\mathrm{T}}\beta_{\pi_t})\mathbb{I}((\mathcal{W}_t^{RL})^C)\right)\nonumber\\
    \lesssim & \sum_{t=|\mathcal{T}_0^{RL}|+1}^T \PP(\left(\mathcal{W}_t^{RL})^C\right)
    \leq  T(15KT^{-4})
    = O(1).
\end{align}
It remains to bound $R_2$. If $\bz_t^{\mathrm{T}}\hat{\beta}^{RL}_{k,0}\geq\max\limits_{l \in \cK\setminus\{k\}}\bz_t^{\mathrm{T}}\hat{\beta}^{RL}_{l,0}+h/2$, we have proved that $\PP(\pi_t=\arg\max_{l\in\mathcal{K}}\bz_t^{\mathrm{T}}\beta_{l})\geq 1-7KT^{-4}$, which leads to $\mathbb{E}\left((\max_k \bz_t^{\mathrm{T}}\beta_{k}  -  \bz_t^{\mathrm{T}}\beta_{\pi_t})\mathbb{I}(\mathcal{W}_t^{RL})\right)\lesssim 7KT^{-4}$. Otherwise, following the same reasoning in the proof of Theorem \ref{thm:minimaxoptimallinear}, we have
\begin{align*}
    \mathbb{E}\left((\max_k \bz_t^{\mathrm{T}}\beta_{k}  -  \bz_t^{\mathrm{T}}\beta_{\pi_t})\mathbb{I}(\mathcal{W}_t^{RL})\right)\leq & K\epsilon^{RL}_t\PP(0<(\max_k \bz_t^{\mathrm{T}}\beta_{k}  -  \bz_t^{\mathrm{T}}\beta_{\pi_t})\leq K\epsilon^{RL}_t)\nonumber\\
    \lesssim & K^4(\epsilon^{RL}_t)^2\nonumber\\
    \lesssim & \frac{\log T}{t}+\frac{C^2}{t^2},
\end{align*}
where the second inequality is by taking union bound over all pair of arms on the margin condition, which implies $\PP(0<(\max_k \bz_t^{\mathrm{T}}\beta_{k}  -  \bz_t^{\mathrm{T}}\beta_{\pi_t})\leq K\epsilon^{RL}_t)\lesssim K^3 \epsilon^{RL}_t$, and the last inequality is by the Cauchy-Schwarz inequality.  
In conclusion, under both cases, $$\mathbb{E}\left((\max_k \bz_t^{\mathrm{T}}\beta_{k}  -  \bz_t^{\mathrm{T}}\beta_{\pi_t})\mathbb{I}(\mathcal{W}_t^{RL})\right)\lesssim \frac{\log T}{t}+\frac{C^2}{t^2},$$ 
which implies
\begin{align}\label{eq_pf_robustlinearregR2}
    R_2  = &\sum_{t=|\mathcal{T}_0^{RL}|+1}^T  \mathbb{E}\left((\max_k \bz_t^{\mathrm{T}}\beta_{k}  -  \bz_t^{\mathrm{T}}\beta_{\pi_t})\mathbb{I}(\mathcal{W}_t^{RL})\right)\nonumber\\
        \lesssim & \sum_{t=|\mathcal{T}_0^{RL}| + 1}^T \left(\frac{\log T}{t}+\frac{C^2}{t^2}\right)\nonumber\\
        \lesssim & \sum_{t=1}^T \frac{\log T}{t}+\sum_{t=\log T+C}^T \frac{C^2}{t^2}\nonumber\\
        \lesssim & \log^2 T + \frac{C^2}{\log T + C}\lesssim \log^2T+C.
\end{align} 
By plugging the bounds in \eqref{eq_pf_robustlinearregR1}, \eqref{eq_pf_robustlinearregR2} and \eqref{eq_pf_robustlinearregR3} to \eqref{eq_pf_robustlinearregdec}, we obtain $R_T=O(\log^2T+C)$, which finishes the proof. \hfill\Halmos

\subsubsection{Proof of Theorem \ref{thm:lowerboundlinear}}
The lower bound on the cumulative regret incurred by any admissible policy for the problem class defined by Assumptions \ref{assum_parabound} through \ref{assum_excite} under corrupted observation model with budget $C$ is established as follows.

For simplicity, we ignore the intercept term and noise term (treated as noiseless), and let the expected reward model be $f_k(x)=\beta_kx$ with $x\in[-1,1]$ uniformly i.i.d distributed. Consider an instance with two arms denoted as 1 and 2.  Apparently, Assumptions \ref{assum_parabound} to \ref{assum_excite} are satisfied. Let $\beta_1=0$ and $\beta_2=a$, where $a$ is a random variable satisfying $\PP(a=1)=\frac{1}{2}$ and $\PP(a=-1)=\frac{1}{2}$. Suppose that the adversary shifts every observed reward to zero until its budget burns out, which would last $T_C=\Omega(C)$ rounds. Therefore, any admissible policy would make decisions independent on the value of $a$, which implies that $a$ and $\pi_t$ is independent when $t<T_c$. Consider the problem set as $\cP = \{(\beta_1 = 0, \beta_2 = 1), (\beta_1 = 0, \beta_2 = -1)\}$. Hence, for any admissible policy, we have
\begin{align}\label{eq_pf_lbrobustlinear1}
    & \sup_{(\beta_1,\beta_2)\in \cP}\EE\left(\sum_{t=1}^T \left(\max_{k=1,2} \beta_kx_t  -  \beta_{\pi_t}x_t\right)\right)\nonumber\\
    \geq & \EE_a\EE\left(\sum_{t=1}^T \left(\max_{k=1,2} \beta_kx_t  -  \beta_{\pi_t}x_t\right)\right)\nonumber\\
    \geq & \sum_{t=1}^{T_C} \EE\left(\left(\max_{k=1,2} \beta_kx_t  -  \beta_{\pi_t}x_t\right)\bigg|a=1,\pi_t = 1,x_t>\frac{1}{2}\right)\PP(a=1,\pi_t = 1,x_t>1/2)\nonumber\\
    & + \sum_{t=1}^{T_C}\EE\left(\left(\max_{k=1,2} \beta_kx_t  -  \beta_{\pi_t}x_t\right)\bigg|a=-1,\pi_t = 2,x_t>\frac{1}{2}\right)\PP(a=-1,\pi_t = 2,x_t>1/2).
\end{align}
For $t\leq T_C$, if $a=1,\pi_t = 1,x_t>\frac{1}{2}$, the optimal arm should be 2, while the player chooses arm 1, with the regret $\beta_2\bx_t-0=a\bx_t>1/2$. Similarly, for $t\leq T_C$, if $a=-1,\pi_t = 2,x_t>1/2$, the optimal arm should be 1, while the player chooses arm 2, with $0-\beta_2\bx_t=-a\bx_t>1/2$. Therefore, \eqref{eq_pf_lbrobustlinear1} implies that
\begin{align}\label{eq_pf_lbrobustlinear2}
    & \sup_{(\beta_1,\beta_2)\in \cP}\EE\left(\sum_{t=1}^T \left(\max_{k=1,2} \beta_kx_t  -  \beta_{\pi_t}x_t\right)\right)\nonumber\\
    \geq & \frac{1}{2}\sum_{t=1}^{T_C} \left(\PP(a=1,\pi_t = 1,x_t>1/2) +\PP(a=-1,\pi_t = 2,x_t>1/2)\right)\nonumber\\
    = & \frac{1}{2}\sum_{t=1}^{T_C} \left(\PP(a=1)\PP(\pi_t = 1,x_t>1/2) +\PP(a=-1)\PP(\pi_t = 2,x_t>1/2)\right)\nonumber\\
    = & \frac{1}{4}\sum_{t=1}^{T_C} \PP(x_t>1/2) = \frac{1}{16}T_C = \Omega(C),
\end{align}
where the first equality is by the independence of $a$ and $\pi_t$. It can be seen from \eqref{eq_pf_lbrobustlinear2} that $R_T=\Omega(C)$. Moreover, by Theorem \ref{thm:price}, we have that any admissible policy will incur a cumulative regret of at least $R_T=\Omega(\log^2 T)$. Combining this with the preceding analysis, the lower bound is established as $R_T=\Omega(\log^2 T)\vee \Omega(C)=\Omega(\log^2 T+C)$. This finishes the proof. \hfill\Halmos

\subsubsection{Proof of Proposition \ref{prop:robustsqkregular}}

Since Algorithm \ref{alg:robustfairsmoothalg} differs from Algorithm \ref{alg:fairsmoothalg} only in its setting of epoch length and error threshold, in this proof, we only state the necessarily changed steps in the proof of Proposition \ref{prop:sqkregular}. Specifically, it suffices to prove that 
\begin{align}
    K\epsilon_{q-1}^{RS}\leq & c^{\prime\prime},\label{pf_prop_robust_regular1}\\
    \frac{(\epsilon_{q-1}^{RS})^{\frac{\beta^{\prime}}{\beta}}}{H_{q,k}^{RS}}\leq & 1-(1-\frac{c_0}{2^{d}})^{\frac{1}{d}},\label{pf_prop_robust_regular2}
\end{align}
for $q>1$. Note that when $q=1$, Algorithm \ref{alg:robustfairsmoothalg} defaults to uniform exploration throughout the entire horizon, which implies that the statement in Proposition \ref{prop:robustsqkregular} automatically holds, as all arms receive equal sampling probability.

\noindent\textit{Proof of \eqref{pf_prop_robust_regular1}.} Note that $$C|\cT_{q-1}^{RS}|^{-\frac{2\beta}{2\beta+d}}\leq C\left(\frac{C_{K}}{(C^{\frac{2\beta^{\prime}}{2\beta^{\prime}-1}}\vee4^q)\log \left(T \delta_A^{-d}\right)}\right)=o(1),$$ which implies that $$\epsilon_{q-1}^{RS}=(2^{-q+1}\wedge C^{-\frac{\beta^{\prime}}{2\beta^{\prime}-1}})(\log T)^{\frac{\beta^{\prime}-1-2\beta}{2\beta^{\prime}-2}}\vee T^{-\frac{\beta}{2\beta+d}}+C|\cT_{q-1}^{RS}|^{-\frac{2\beta}{2\beta+d}}=o(1).$$ 
Hence, for sufficiently large $T$, \eqref{pf_prop_robust_regular1} holds. 

\noindent\textit{Proof of \eqref{pf_prop_robust_regular2}.}
Apparently, $f_i^*(\bx_1)-f_k^*(\bx_1)\lesssim \epsilon_{1}^{RS}\lesssim C^{-\frac{1}{2\beta'-1}}$.
Since $H_{q,k}^{RS}=(N_{q,k}^{RS})^{-1 /(2 \beta+d)}\geq |\cT_{q}^{RS}|^{-1 /(2 \beta+d)}$ and according to the definition of $\epsilon_{q}^{RS}$ and $H_{q,k}^{RS}$, we have
\begin{align*}
    \frac{(\epsilon_{q-1}^{RS})^{\frac{\beta^{\prime}}{\beta}}}{H_{q,k}^{RS}}
    &\lesssim  \frac{(2^{-q\frac{\beta^{\prime}}{\beta}}\wedge C^{-\frac{\beta^{\prime}}{2\beta^{\prime}-1}\frac{\beta^{\prime}}{\beta}})(\log T)^{\frac{\beta^{\prime}-1-2\beta}{2\beta^{\prime}-2}\frac{\beta^{\prime}}{\beta}}}{\left(\frac{2K}{p^*}\left(\frac{(C^{\frac{2\beta^{\prime}}{2\beta^{\prime}-1}}\vee4^q)\log \left(T \delta_A^{-d}\right)}{C_{K}}\right)^{\frac{2 \beta+d}{2 \beta}}\left(\log T\right)^{\frac{2\beta+d}{\beta^{\prime}-1}-\frac{2\beta+d}{2\beta}}+\frac{K^2}{2 p^{*2}} \log T\right)^{-1 /(2 \beta+d)}}\nonumber\\
    &+\frac{T^{-\frac{\beta^{\prime}}{2\beta+d}}}{|\cT_{q}^{RS}|^{-1 /(2 \beta+d)}}+\frac{C^\frac{\beta^{\prime}}{\beta}|\cT_q^{RS}|^{-\frac{2\beta^{\prime}}{2\beta+d}}}{|\cT_{q}^{RS}|^{-1 /(2 \beta+d)}}\nonumber\\
    &\lesssim (\log T)^{-\frac{2\beta-\beta^{\prime}}{2\beta}}(2^{q}\vee C^{\frac{\beta^{\prime}}{2\beta^{\prime}-1}})
    ^{(\frac{1-\beta^{\prime}}{\beta})}+T^{-\frac{\beta^{\prime}-1}{2\beta+d}}+(\log T)^{-\frac{2\beta^{\prime}-1}{\beta^{\prime}-1}},
\end{align*}
where the third term in the second inequality is because $|\cT_{q}^{RS}|\gtrsim (\log T)^{\frac{2\beta+d}{\beta'-1}}C^{\frac{2\beta+d}{2\beta}\frac{2\beta'}{2\beta'-1}}$, then $\frac{C^\frac{\beta^{\prime}}{\beta}|\cT_q^{RS}|^{-\frac{2\beta^{\prime}}{2\beta+d}}}{|\cT_{q}^{RS}|^{-1 /(2 \beta+d)}}=C^\frac{\beta^{\prime}}{\beta}|\cT_q^{RS}|^{-\frac{2\beta^{\prime}-1}{2\beta+d}}\lesssim (\log T)^{-\frac{2\beta^{\prime}-1}{\beta^{\prime}-1}}$.
This implies that \eqref{pf_prop_robust_regular2} holds when $T$ is large enough.

By leveraging  \eqref{pf_prop_robust_regular1} and  \eqref{pf_prop_robust_regular2}, and taking the similar steps as in the proof of Proposition \ref{prop:sqkregular}, it follows that for all $k\in\cK$, $S_{q,k}^{RS}$ is weakly $(\frac{c_0}{2^d},H_{q,k}^{RS})$-regular at all $\bx\in S_{q,k}^{RS}\cap G$.\hfill\Halmos

\subsubsection{Proof of Proposition  \ref{thm:robusttailproboftwoevents}}

Recall that \begin{align*}
&\mathcal{M}^{RS}_q  =\left\{\min _{k \in \mathcal{K}} N_{q, k}^{RS} \geq\left(\frac{(C^{\frac{2\beta^{\prime}}{2\beta^{\prime}-1}}\vee4^q)\log(T\delta_A^{-d})}{C_{K}}\right)^{\frac{2 \beta+d}{2 \beta}}\left(\log T\right)^{(2\beta+d)\frac{2\beta-\beta^{\prime}+1}{2\beta(\beta^{\prime}-1)}}\right\}, \\
&\mathcal{G}^{RS}_q = \left\{
\begin{aligned}
& \text{(i) }  \text{ for all }k\in\cK, S_{q,k}^{RS} \text{ is weakly }  (\frac{c_0}{2^d},H_{q,k}^{RS})\text{-regular at all }\bx\in S_{q,k}^{RS}\cap G \\
& \text{(ii) } \left|\hat{f}_{q, k}^{RS}(\bx)-f^*_{ k}(\bx)\right| \leq \epsilon_q^{RS} / 2 \text { for all } \bx \in S_{q,k}^{RS} \text{ and } k\in\cK 
\end{aligned}
\right\}.
\end{align*}
We first show that 
\begin{align}
&\mathbb{P}\left((\mathcal{G}_q^{RS})^C \mid \overline{\mathcal{G}}^{RS}_{q-1}, \overline{\mathcal{M}}^{RS}_q\right) \leq \frac{\left(4+2 M_\beta^2\right)K}{T},\label{eq:robustfirsteq}\\ &\mathbb{P}\left((\mathcal{M}^{RS}_q)^C \mid \overline{\mathcal{G}}^{RS}_{q-1}, \overline{\mathcal{M}}^{RS}_{q-1}\right) \leq \frac{K}{T}.\label{eq:robustsecondeq}
\end{align}

\noindent\textit{Proof of \eqref{eq:robustfirsteq}.}
Recall that we let $\epsilon_q'=(2^{-q}\wedge C^{-\frac{\beta^{\prime}}{2\beta^{\prime}-1}})(\log T)^{\frac{\beta^{\prime}-1-2\beta}{2\beta^{\prime}-2}}\vee T^{-\frac{\beta}{2\beta+d}}$. When $T$ is sufficiently large, by direct computation, we have that under event $\mathcal{M}_q^{RS}$, $$\min _{k \in \mathcal{K}} N_{q, k}^{RS} \geq\left(\frac{(C^{\frac{2\beta^{\prime}}{2\beta^{\prime}-1}}\vee4^q)\log(T\delta_A^{-d})}{C_{K}}\right)^{\frac{2 \beta+d}{2 \beta}}\left(\log T\right)^{(2\beta+d)\frac{2\beta-\beta^{\prime}+1}{2\beta(\beta^{\prime}-1)}}\geq\left(\frac{12(1+L_1\sqrt{d}) \sqrt{M_\beta} L v_d p_{\max }}{p^* \lambda_0 \epsilon_q'}\right)^{\frac{2 \beta+d}{\beta}}.$$ 
By Proposition \ref{prop:robustsqkregular}, under event  $\overline{\mathcal{G}}^{RS}_{q-1}\cap\overline{\mathcal{M}}^{RS}_q$, for all $k\in\cK$, $S_{q,k}^{RS}$ is weakly $(\frac{c_0}{2^d},H_{q,k}^{RS})$-regular at all $\bx\in S_{q,k}^{RS}\cap G$, which proves (i) of event $\mathcal{G}_{q}^{RS}$. Then the conditions in Lemma \ref{lem:robustA.6} are satisfied. 

Since
\begin{align*}
    \left(\frac{(C^{\frac{2\beta^{\prime}}{2\beta^{\prime}-1}}\vee4^q)\log(T\delta_A^{-d})}{C_{K}}\right)^{\frac{2 \beta+d}{2 \beta}}\left(\log T\right)^{(2\beta+d)\frac{2\beta-\beta^{\prime}+1}{2\beta(\beta^{\prime}-1)}}\geq \frac{p^*}{4K}|\cT_{q}^{RS}|,
\end{align*}
the tail probability in Lemma \ref{lem:robustA.6} now can be bounded as
\begin{align*}
    &K\delta_A^{-d}\left(4+2 M_\beta^2\right) \exp \left(-C_K n_{q, k}^{\frac{2 \beta}{2 \beta+d}} \epsilon_q'^2\right)\nonumber\\
    \leq &K\delta_A^{-d}\left(4+2 M_\beta^2\right) \exp \left(-(C^{\frac{2\beta^{\prime}}{2\beta^{\prime}-1}}\vee4^q)\log(T\delta_A^{-d})\left(\log T\right)^{\frac{2\beta-\beta^{\prime}+1}{\beta^{\prime}-1}}(4^{-q}\wedge C^{-\frac{2\beta^{\prime}}{2\beta^{\prime}-1}})(\log T)^{\frac{\beta^{\prime}-1-2\beta}{\beta^{\prime}-1}}\right)\nonumber\\
    = &\frac{\left(4+2 M_\beta^2\right)K}{T},
\end{align*}
since $n_{q,k}$ is lower bounded by $\left(\frac{(C^{\frac{2\beta^{\prime}}{2\beta^{\prime}-1}}\vee4^q)\log(T\delta_A^{-d})}{C_{K}}\right)^{\frac{2 \beta+d}{2 \beta}}\left(\log T\right)^{(2\beta+d)\frac{2\beta-\beta^{\prime}+1}{2\beta(\beta^{\prime}-1)}}$.

Noting that the error term 
\begin{align*}
    & \frac{\epsilon_q'}{2}+\frac{\sqrt{M_\beta}}{\lambda_0 }n_{q, k} ^{-\frac{2 \beta}{2 \beta+d}} C \nonumber\\
    = & \frac{1}{2}\left((2^{-q}\wedge C^{-\frac{\beta^{\prime}}{2\beta^{\prime}-1}})(\log T)^{\frac{\beta^{\prime}-1-2\beta}{2\beta^{\prime}-2}}\vee T^{-\frac{\beta}{2\beta+d}}\right)+\frac{\sqrt{M_\beta}}{\lambda_0 }n_{q, k} ^{-\frac{2 \beta}{2 \beta+d}} C\\
    \leq & \frac{1}{2}\left((2^{-q}\wedge C^{-\frac{\beta^{\prime}}{2\beta^{\prime}-1}})(\log T)^{\frac{\beta^{\prime}-1-2\beta}{2\beta^{\prime}-2}}\vee T^{-\frac{\beta}{2\beta+d}}+\frac{2\sqrt{M_\beta}}{\lambda_0}C(\frac{p^*}{4K}|\cT_q^{RS}|)^{-\frac{2\beta}{2\beta+d}}\right)=\epsilon_q^{RS}/2,    
\end{align*}
we now complete the proof.

\noindent\textit{Proof of \eqref{eq:robustsecondeq}.} Note that the proof of Lemma \ref{lem:RksubsetSqk} is also valid for the adversarial corruption case, and thus $\mathcal{R}_k\subseteq S_{q,k}^{RS}$ for all $k\in\cK$. Together with Assumptions \ref{assump:iiddensity} and \ref{assump:c0r0regular}, we have
\begin{align*}
    \mathbb{E}\left(N_{q, k}^{RS} \mid \overline{\mathcal{G}}_{q-1}^{RS}, \overline{\mathcal{M}}_{q-1}^{RS}\right) & =\mathbb{E}\left(\sum_{t \in \mathcal{T}_q^{RS}} \mathbb{I}\left\{\pi_t =k \right\}\mid \overline{\mathcal{G}}^{RS}_{q-1}, \overline{\mathcal{M}}^{RS}_{q-1}\right) \\
    & = \sum_{t \in \mathcal{T}_q^{RS}} \mathbb{P}\left(\pi_t =k  \mid \overline{\mathcal{G}}^{RS}_{q-1}, \overline{\mathcal{M}}^{RS}_{q-1}\right) \nonumber\\
    & \geq \sum_{t \in \mathcal{T}_q^{RS}} \mathbb{P}\left(\pi_t =k, \bx_t\in S_{q,k}^{RS} \mid \overline{\mathcal{G}}^{RS}_{q-1}, \overline{\mathcal{M}}^{RS}_{q-1}\right) \nonumber\\
    & \geq \frac{1}{K} \sum_{t \in \mathcal{T}_q^{RS}}  \mathbb{P}\left(\bx_t \in \mathcal{R}_k \mid \overline{\mathcal{G}}^{RS}_{q-1}, \overline{\mathcal{M}}^{RS}_{q-1}\right) \geq \frac{p^*}{K} |\mathcal{T}_q^{RS}|.
\end{align*}
When $q>1$, the decisions in the epoch $q$ are only dependent on the history samples, thus it is obvious that $\cT_{q,k}^{RS}$ are i.i.d. conditional on $\{\bigcup_{k\in\cK}\cT_{h,k}^{RS}\}_{h=1}^{q-1}$. Together with Hoeffding's inequality, we have 
\begin{align}\label{eq_pf_robustsecondeq_prob11}
    & \mathbb{P}\left(N_{q, k}^{RS}<\left(\frac{(C^{\frac{2\beta^{\prime}}{2\beta^{\prime}-1}}\vee4^q)\log \left(T \delta_A^{-d}\right)}{C_K }\right)^{\frac{2 \beta+d}{2 \beta}}\left(\log T\right)^{(2\beta+d)\frac{2\beta-\beta^{\prime}+1}{2\beta(\beta^{\prime}-1)}}\bigg| \overline{\mathcal{G}}_{q-1}^{RS}, \overline{\mathcal{M}}_{q-1}^{RS},\left\{\bigcup_{k\in\cK}\cT_{h,k}^{RS}\right\}_{h=1}^{q-1}\right) \nonumber\\
    \leq & \mathbb{P}\bigg(\mathbb{E}\left(N_{q, k}^{RS} \mid \overline{\mathcal{G}}_{q-1}^{RS}, \overline{\mathcal{M}}_{q-1}^{RS}\right)-N_{q, k}^{RS}>\frac{p^*}{K} |\mathcal{T}_q^{RS}| \nonumber\\
    &\quad -\left(\frac{(C^{\frac{2\beta^{\prime}}{2\beta^{\prime}-1}}\vee4^q)\log \left(T \delta_A^{-d}\right)}{C_K }\right)^{\frac{2 \beta+d}{2 \beta}}\left(\log T\right)^{(2\beta+d)\frac{2\beta-\beta^{\prime}+1}{2\beta(\beta^{\prime}-1)}} \bigg| \overline{\mathcal{G}}_{q-1}^{RS}, \overline{\mathcal{M}}_{q-1}^{RS}, \left\{\bigcup_{k\in\cK}\cT_{h,k}^{RS}\right\}_{h=1}^{q-1}\bigg)\nonumber\\
    \leq & \exp \left(-\frac{2}{|\mathcal{T}_q^{RS}|}\left[\frac{p^*}{K} |\mathcal{T}_q^{RS}|-\left(\frac{(C^{\frac{2\beta^{\prime}}{2\beta^{\prime}-1}}\vee4^q)\log \left(T \delta_A^{-d}\right)}{C_K}\right)^{\frac{2 \beta+d}{2 \beta}}\left(\log T\right)^{(2\beta+d)\frac{2\beta-\beta^{\prime}+1}{2\beta(\beta^{\prime}-1)}}\right]^2\right)  \nonumber\\
    \leq &  \exp \left(-\frac{2}{|\mathcal{T}_q^{RS}|}\left(\frac{p^{*2}}{K^2}|\mathcal{T}_q^{RS}|^2-\frac{2p^*}{K} |\mathcal{T}_q^{RS}|\left(\frac{(C^{\frac{2\beta^{\prime}}{2\beta^{\prime}-1}}\vee4^q)\log \left(T \delta_A^{-d}\right)}{C_K}\right)^{\frac{2 \beta+d}{2 \beta}}\left(\log T\right)^{(2\beta+d)\frac{2\beta-\beta^{\prime}+1}{2\beta(\beta^{\prime}-1)}}\right)\right)\nonumber\\
    =& \exp \left(-\frac{2p^{*2}}{K^2}|\mathcal{T}_q^{RS}|+\frac{4p^*}{K} \left(\frac{(C^{\frac{2\beta^{\prime}}{2\beta^{\prime}-1}}\vee4^q)\log \left(T \delta_A^{-d}\right)}{C_K}\right)^{\frac{2 \beta+d}{2 \beta}}\left(\log T\right)^{(2\beta+d)\frac{2\beta-\beta^{\prime}+1}{2\beta(\beta^{\prime}-1)}}\right).
\end{align}
Since 
\begin{align*}
   |\cT_q^{RS}|&=\left\lceil\frac{2K}{p^*}\left(\frac{(C^{\frac{2\beta^{\prime}}{2\beta^{\prime}-1}}\vee4^q)\log \left(T \delta_A^{-d}\right)}{C_{K}}\right)^{\frac{2 \beta+d}{2 \beta}}\left(\log T\right)^{\frac{2\beta+d}{\beta^{\prime}-1}-\frac{2\beta+d}{2\beta}}+\frac{K^2}{2 p^{*2}} \log T\right\rceil\nonumber\\
   &>\frac{2K}{p^*}\left(\frac{(C^{\frac{2\beta^{\prime}}{2\beta^{\prime}-1}}\vee4^q)\log \left(T \delta_A^{-d}\right)}{C_{K}}\right)^{\frac{2 \beta+d}{2 \beta}}\left(\log T\right)^{(2\beta+d)\frac{2\beta-\beta^{\prime}+1}{2\beta(\beta^{\prime}-1)}}+\frac{K^2}{2 p^{*2}} \log T ,
\end{align*}
we have
\begin{align*}
    \frac{2p^{*2}}{K^2}|\mathcal{T}_q^{RS}|-\frac{4p^*}{K} \left(\frac{(C^{\frac{2\beta^{\prime}}{2\beta^{\prime}-1}}\vee4^q)\log \left(T \delta_A^{-d}\right)}{C_K}\right)^{\frac{2 \beta+d}{2 \beta}}\left(\log T\right)^{(2\beta+d)\frac{2\beta-\beta^{\prime}+1}{2\beta(\beta^{\prime}-1)}}\geq \log T,
\end{align*}
which, together with \eqref{eq_pf_robustsecondeq_prob11}, implies
\begin{align*}
    \mathbb{P}\left(N_{q, k}^{RS}<\left(\frac{(C^{\frac{2\beta^{\prime}}{2\beta^{\prime}-1}}\vee4^q)\log \left(T \delta_A^{-d}\right)}{C_K }\right)^{\frac{2 \beta+d}{2 \beta}}\left(\log T\right)^{(2\beta+d)\frac{2\beta-\beta^{\prime}+1}{2\beta(\beta^{\prime}-1)}}\mid \overline{\mathcal{G}}_{q-1}^{RS}, \overline{\mathcal{M}}_{q-1}^{RS},\left\{\bigcup_{k\in\cK}\cT_{h,k}^{RS}\right\}_{h=1}^{q-1}\right)\leq \frac{1}{T}.
\end{align*}
In that way, when marginalizing over $\{\bigcup_{k\in\cK}\cT_{h,k}^{RS}\}_{h=1}^{q-1}$ and taking the union bound, we have
\begin{align*}
    \mathbb{P}\left(\min_{k\in \cK}N_{q, k}^{RS}<\left(\frac{(C^{\frac{2\beta^{\prime}}{2\beta^{\prime}-1}}\vee4^q)\log \left(T \delta_A^{-d}\right)}{C_K }\right)^{\frac{2 \beta+d}{2 \beta}}\left(\log T\right)^{(2\beta+d)\frac{2\beta-\beta^{\prime}+1}{2\beta(\beta^{\prime}-1)}}\bigg| \overline{\mathcal{G}}_{q-1}^{RS}, \overline{\mathcal{M}}_{q-1}^{RS}\right)\leq \frac{K}{T}.
\end{align*}
\noindent\textit{Finishing the proof of Proposition  \ref{thm:robusttailproboftwoevents}.} This can be done by repeating the proof of \eqref{eq:thirdeq} in Proposition \ref{thm:tailproboftwoevents}, thus we omit it for simplicity. \hfill\Halmos

\subsubsection{Proof of Theorem \ref{thm:robustsmoothbanditfairness}}

When $|\cT_{Q^{RS}-1}^{RS}|\leq T$, i.e., $Q^{RS}>1$, the reasoning parallels that of proof of Theorem \ref{thm:smoothbanditfairness}, which is not affected by the exact value of error threshold. Otherwise, $Q^{RS}=1$ also implies that Algorithm \ref{alg:robustfairsmoothalg} is fair for sure. \hfill\Halmos

\subsubsection{Proof of Theorem \ref{thm:robustupperbound}}

When $Q^{RS}=1$, i.e., $|\cT_{1}^{RS}|\geq T$, we have $R_T=O(T)$. When $Q^{RS}>1$, $|\cT_{1}^{RS}|< T$ implies $C^{\frac{2\beta^{\prime}}{2\beta^{\prime}-1}\frac{2 \beta+d}{2 \beta}}\lesssim T$. Then similar to the proof of Theorem \ref{thm:minimax}, we can obtain that
\begin{align*}
R_T&\lesssim \sum_{q=1}^{Q^{RS}}|\cT_q^{RS}|(\epsilon_{q-1}^{RS})^{\alpha+1}+\sum_{q=1}^{Q^{RS}}\sum_{t\in\mathcal{T}_q^{RS}} \frac{(q-1)}{T}\nonumber\\
&:= J_1+J_2.
\end{align*}
Since $Q^{RS} \lesssim \log T$ and $\sum_q |\cT_q^{RS}| = T$, we have
\begin{align*}
    J_2\lesssim \log T.
\end{align*}

For the first sum $J_1$, note that $\epsilon_{q-1}^{RS} \lesssim |\cT_q^{RS}|^{-\frac{\beta}{2\beta+d}} + C |\cT_q^{RS}|^{-\frac{2\beta}{2\beta+d}}$ (up to $\log T$ factors). Then $(\epsilon_{q-1}^{RS})^{\alpha+1} \lesssim |\cT_q^{RS}|^{-\frac{\beta}{2\beta+d}(\alpha+1)} + C^{\alpha+1} |\cT_q^{RS}|^{-\frac{2\beta}{2\beta+d}(\alpha+1)}$.
Thus we have
\begin{align*}
    J_1&\lesssim \sum_q |\cT_q^{RS}|^{1 - \frac{\beta}{2\beta+d}(\alpha+1)} + C^{\alpha+1} \sum_q |\cT_q^{RS}|^{1 - \frac{2\beta}{2\beta+d}(\alpha+1)}\nonumber\\
    &=J_{1,1}+J_{1,2}.
\end{align*}

Case 1: $\alpha\beta\leq d/2$. Ignoring logs for simplicity gives a geometric series $|\cT_q^{RS}| \asymp (\gamma^{q})^{\frac{2\beta+d}{2\beta}}$. Then direct computation shows $J_{1,1}\lesssim T^{1 - \frac{\beta}{2\beta+d}(\alpha+1)}$ and $J_{1,2}\lesssim C^{1+\alpha}T^{1 - \frac{2\beta}{2\beta+d}(\alpha+1)}$.

Case 2:  $\alpha\beta > d/2$. Then direct computation shows $J_{1,1}\lesssim T^{1 - \frac{\beta}{2\beta+d}(\alpha+1)}$ and
\begin{align*}
    J_{1,2}\lesssim C^{-\frac{\alpha+1}{2\beta'-1}+\frac{2\beta'}{2\beta'-1}\frac{2\beta+d}{2\beta}}=C^{\frac{2\beta+d}{2\beta}+[\frac{d}{2\beta}-\alpha]\frac{1}{2\beta'-1}}.
\end{align*}
\hfill\Halmos

\subsubsection{Proof of Theorem \ref{thm:robustlowerbound}}

The lower bound on the cumulative regret incurred by any admissible $(1-\delta)$-fair policy for the problem class defined by Assumptions \ref{assump:iiddensity}-\ref{assump:Qxliplower} under corruption with budget $C$ is established as follows. We modify the construction presented in the proof of Theorem 3 from \cite{hu2022smooth}, which also aligns with the approach in \cite{audibert2007fast}. Our proof technique and the specific instance we consider differ to better align with the robust smooth contextual bandit problem.
Let $\delta_0 \in (0, \frac{1}{2})$ be fixed. We set the instance parameters as  
\begin{align*}
   q = \left\lceil \left(4C_\phi \frac{T}{C} \right)^{\frac{1}{\beta+d}} \right\rceil, \quad  
    m = \lceil q^{d-\alpha\beta} \rceil, \quad  
    \omega = q^{-d},
\end{align*}
where $T$ is assumed to be sufficiently large. 

Let $G_q = \left\{ \left( \frac{2j_1+1}{2q}, \dots, \frac{2j_d+1}{2q} \right) : j_i \in \{0,\dots,q-1\}, \, i=1,\dots,d \right\}$ be  the $d$-dimensional grid, with its $q^d$ points enumerated as $\{\bx_i\}_{i=1}^{q^d}$. For any $\bx \in [0,1]^d$, define $g_q(\bx)$ as the unique point in $G_q$ minimizing $\|\bx - \bx'\|$ (resolving ties by selecting the point closest to the origin). Let $\mathcal{X}_i=\mathrm{Cube}_q(\bx_i) := \{\bx' \in \mathcal{X} : g_q(\bx') = \bx_i\}$ be hypercubes of edge length $1/q$ centered at $\bx_i$, and region $\mathcal{X}_0 = [0,1]^d \setminus \bigcup_{i=1}^m \mathcal{X}_i$.

The density $ \mu_X$ of the covariate distribution $ \mathbb{P}_X $ is defined as
\[
\mu_X(\bx) = 
\begin{cases} 
\frac{\omega}{\operatorname{Leb}(\mathcal{B}(0, \frac{1}{4q}))} & \text{if } \bx \in \bigcup_{i=1}^m \mathcal{B}(\bx_i, \frac{1}{4q}) \\
\frac{1 - m\omega}{\operatorname{Leb}(\mathcal{X}_0)} & \text{if } \bx \in \mathcal{X}_0 \\
0 & \text{otherwise}.
\end{cases}
\]
Thus, $\cX=\mathcal{X}_0 \cup \bigcup_{i=1}^m \mathcal{B}(\bx_i, \frac{1}{4q})$. 
Following \cite{hu2022smooth}, we utilize an infinitely differentiable bump function constructed via:  
\[
u_1(x) = \begin{cases}  
\exp\left( -\frac{1}{(\frac{1}{2}-\bx)(\bx-\frac{1}{4})} \right), & \bx \in (\frac{1}{4}, \frac{1}{2}) \\  
0, & \text{otherwise}  
\end{cases}
\]  
and define $u: \mathbb{R}_+ \to \mathbb{R}_+$ through the normalization:  
\[
u(\bx) = \left( \int_{1/4}^{1/2} u_1(t) \, dt \right)^{-1} \int_{\bx}^\infty u_1(t) \, dt.
\] 
Let $ C_\phi \in (0, \delta_0] $ and function $ \phi: \mathbb{R}^d \to \mathbb{R}_+ $ be defined as  
\[
\phi(\bx) = C_\phi u(\|\bx\|).
\] 
Let $\varphi_j(\bx) = q^{-\beta} \phi\left(q(\bx - \bx_j)\right) \mathbb{I}_{\mathcal{X}_j}(\bx)$ for $j=1,\ldots,m$.
We define the expected reward function for arm 2 as $\eta_2 \equiv \frac{1}{2}$, while $\eta_{1,i}: [0,1]^d \to \mathbb{R} $ represents the expected reward function for arm 1 with
\[
\eta_{1,i}(\bx) = \frac{1}{2} + \sum_{j=1,j\neq i}^m \varphi_j(\bx).
\]  
Hence, there are $m$ possible configurations of arm 1. The constraint $ C_\phi \leq \delta_0 $ ensures $ \eta_{1,i}(\bx) \in [\frac{1}{2}, \frac{1}{2} + \delta_0] \subset [0,1]$. 

When $K=2$, we need to check that the constructed instances satisfy Assumptions \ref{assump:iiddensity}-\ref{assump:Qxliplower}. First, Assumptions \ref{assump:iiddensity}-\ref{assump:betaholder} can be verified directly following the original steps in \cite{hu2022smooth}, thus we omit it for simplicity. Assumption \ref{assump:c0r0regular} can be also verified similarly as in \cite{hu2022smooth} by noting that $\operatorname{Leb}[\mathcal{X}_0]=(1-O(q^{-\alpha\beta}))\operatorname{Leb}([0,1]^d)=(1-o(1))\operatorname{Leb}([0,1]^d)$ for sufficiently large $T$.
\begin{itemize}    
    \item Assumption \ref{assum:margin}. Define the reference point as $\bx_0=\left(\frac{1}{2q}, \ldots, \frac{1}{2q}\right)$. The margin probability can be expressed as
\begin{align*}
&\mathbb{P}\left(0<\left|\eta_{1,i}(X)-\frac{1}{2}\right| \leq t\right)\\ &= (m-1) \mathbb{P}_\sigma\left(0<\phi\left[q\left(X-\bx_0\right)\right] \leq t q^\beta\right) \nonumber\\
&= (m-1) \int_{\mathcal{B}\left(\bx_0, \frac{1}{4q}\right)} \mathbb{I}\left\{0<\phi\left[q\left(\bx-\bx_0\right)\right] \leq t q^\beta\right\} \frac{\omega}{\operatorname{Leb}\left[\mathcal{B}\left(0, \frac{1}{4q}\right)\right]} d\bx \nonumber\\
&= \frac{(m-1) \omega}{\operatorname{Leb}\left[\mathcal{B}\left(0, \frac{1}{4}\right)\right]} \int_{\mathcal{B}\left(0, \frac{1}{4}\right)} \mathbb{I}\left\{\phi(y) \leq t q^\beta\right\} dy \nonumber\\
&= (m-1) \omega \cdot \mathbb{I}\left\{t \geq C_\phi q^{-\beta}\right\}.
\end{align*}
Since $(m-1) \omega\leq 2q^{-\alpha \beta}$,  Assumption \ref{assum:margin} is satisfied with $C_0=2 C_\phi^{-\alpha}$.
\item Assumption \ref{assump:Qxliplower}. 
Since $\max_{i,j\in\cK}\max_{\bx\in\cX}\Delta_{i,j}(\bx)\leq C_\phi \left(4C_\phi \frac{T}{C} \right)^{-\frac{\beta}{\beta+d}}\leq T^{-\frac{\beta}{2 \beta+d}}+\frac{\sqrt{M_\beta}}{\lambda_0}CT^{-\frac{2\beta}{2\beta+d}}$ holds by choosing a sufficiently small $C_\phi$,  Assumption \ref{assump:Qxliplower} automatically holds.
\end{itemize}

According to Definition \ref{def:fairness}, since all candidate algorithms are $(1-\tilde{O}(\frac{1}{T}))$-fair algorithms and $\eta_{1,l}(\bx)=\eta_2(\bx)=\frac{1}{2}$ for $\bx\in \mathcal{X}_l$, we have that, with probability at least $1-\tilde{O}(\frac{1}{T})$, for all $t\in[T]$ and $\bx\in \mathcal{X}_l$,
\begin{align}\label{eq:pitequal1isthesameas-1}
    \PP(\pi_t=1|\bx_t=\bx,\cF_{l,t-1}) = \PP(\pi_t=2|\bx_t=\bx,\cF_{l,t-1}),
\end{align}
where $\cF_{l,t-1}$ is the filtration at time $t-1$ generated by reward function $\eta_{1,l}(\bx)$.

At time $t$, the corruption term $c_t$ is defined as
\begin{align*}
    c_t & = \left(\eta_{1,i}(\bx_t) - \frac{1}{2} - \sum_{j=1}^m \varphi_j(\bx_t)\right)\mathbb{I}(\pi_t=1) = -\varphi_i(\bx_t)\mathbb{I}(\pi_t=1),
\end{align*}
with $i\in \{1,...,m\}$. It follows that
\begin{align*}
    |c_t|\leq |\varphi_i(\bx_t)| = q^{-\beta} \phi\left(q(\bx_t - \bx_i)\right) \mathbb{I}_{\mathcal{X}_i}(\bx_t)\leq q^{-\beta}C_\phi\mathbb{I}_{\mathcal{X}_i}(\bx_t).
\end{align*}

Under the event that $\sum_{t=1}^T\sum_{i=1}^m\mathbb{I}_{\mathcal{X}_i}(\bx_t)\leq 2Tm\omega$, the average total corruption consumption (ignoring the budget for a moment) among $m$ instances can be bounded by 
\begin{align*}
  \frac{1}{m}\sum_{i=1}^m \sum_{t=1}^T |\varphi_i(\bx_t)|\leq &
   \frac{1}{m}C_\phi q^{-\beta} \sum_{t=1}^T\sum_{i=1}^m\mathbb{I}_{\mathcal{X}_i}(\bx_t)\nonumber\\
   \leq &  \frac{1}{m}C_\phi q^{-\beta}2Tm\omega\nonumber\\
   \leq & 2C_\phi q^{-d-\beta} T\leq 2C_\phi \frac{1}{4C_\phi} \frac{C}{T} T=\frac{C}{2}.
\end{align*}
Thus, among all $m$ instances, for at most $\lceil\frac{m}{2}\rceil$ instances, the corruption consumption for the total horizon is greater than $C$, which implies that at least $\lfloor\frac{m}{2}\rfloor$ instances satisfy $\sum_{t=1}^T|c_t|\leq C$. Denote the set of indices of these instances as $\cI\subseteq \{1,\ldots,m\}$, which consists of at least $\lfloor\frac{m}{2}\rfloor$ elements. 

Under the event that $\sum_{t=1}^T\sum_{i=1}^m\mathbb{I}_{\mathcal{X}_i}(\bx_t)\leq 2Tm\omega$, for all $i\in \cI$, the corrupted expected reward function for arm 1 is $\tilde{\eta}_1(\bx) = \frac{1}{2} + \sum_{j=1}^m \varphi_j(\bx).$ Thus, any admissible algorithm can not distinguish among all $i\in \cI$ during the whole time period $t\in[T]$, i.e., at time $t$, the distribution of $\mathcal{F}_{i,t-1}$ is the same for all $i\in \cI$. Hence, for any $i,j \in \cI$,
\begin{align}\label{eq:pitequal1isthesameas1}
    \PP(\pi_t=1|\bx_t=\bx,\cF_{i,t-1})=\PP(\pi_t=1|\bx_t=\bx,\cF_{j,t-1})
\end{align}
and
\begin{align}\label{eq:eq:pitequal-11isthesameas-1}
    \PP(\pi_t = 2|\bx_t=\bx,\cF_{i,t-1})=\PP(\pi_t = 2|\bx_t=\bx,\cF_{j,t-1}).
\end{align}
Under the event $\sum_{t=1}^T\sum_{i=1}^m\mathbb{I}_{\mathcal{X}_i}(\bx_t)\leq 2Tm\omega$, in order to maintain the fairness, \eqref{eq:pitequal1isthesameas-1} holds for all $l\in\cI$, which, by combining \eqref{eq:pitequal1isthesameas1} and \eqref{eq:eq:pitequal-11isthesameas-1}, yields for all $t\in[T]$ and $\bx\in \cup_{i\in\cI}\mathcal{X}_i$.
\begin{align*}
    \PP(\pi_t=1|\bx_t=\bx,\cF_{l,t-1}) = \PP(\pi_t=2|\bx_t=\bx,\cF_{l,t-1}).
\end{align*}
Without loss of generality, we assume $1\in \cI$. For the ease of nationality, we denote event $\cA$ as $\sum_{t=1}^T\sum_{i=1}^m\mathbb{I}_{\mathcal{X}_i}(\bx_t)\leq 2Tm\omega$, $\cB$ as \eqref{eq:pitequal1isthesameas-1} holds for all $l\in\cI$, and $\cC$ as $\sum_{t=1}^T\sum_{i\in \cI\setminus 1}\mathbb{I}_{\mathcal{X}_i}(\bx_t)\geq \frac{1}{2}T(\frac{1}{2}m-1)\omega$. 

Next, we consider bounding the probability of $\cA^C$, $\cB^C$, and $\cC^C$. By the definition of $(1-\tilde{O}(\frac{1}{T}))$-fairness, 
\begin{align}\label{eq_lb_regsmooth_PB_C}
    \PP(\cB^C) = \tilde{O}\left(\frac{1}{T}\right).
\end{align}
Since $\bx_1,...,\bx_T$ are i.i.d. distributed, we have $\EE[\sum_{t=1}^T\sum_{i=1}^m\mathbb{I}_{\mathcal{X}_i}(\bx_t)]= Tm\omega$. By Cantelli's inequality, we have 
\begin{align}\label{eq_lb_regsmooth_PA_C}
    \PP(\cA^C) = & \PP\left(\sum_{t=1}^T\sum_{i=1}^m\mathbb{I}_{\mathcal{X}_i}(\bx_t)\geq 2Tm\omega\right)\nonumber\\
    = &\PP\left(\sum_{t=1}^T\sum_{i=1}^m\mathbb{I}_{\mathcal{X}_i}(\bx_t)-\EE[\sum_{t=1}^T\sum_{i=1}^m\mathbb{I}_{\mathcal{X}_i}(\bx_t)]\geq Tm\omega\right)\nonumber\\
    \leq & \frac{1-m\omega}{(1-m\omega)+ Tm\omega}\leq \frac{1}{1 + Tm\omega},
\end{align}
and
\begin{align}\label{eq_lb_regsmooth_PC_C1}
    \PP(\cC^C\cap\cA) = & \PP\left(\left\{\sum_{t=1}^T\sum_{i\in \cI}\mathbb{I}_{\mathcal{X}_i}(\bx_t)\leq \frac{1}{2}T(\frac{1}{2}m-1)\omega\right\}\bigcap\cA\right)\nonumber\\
    = & \PP\left(\left\{\sum_{t=1}^T\sum_{i\in \cI}\mathbb{I}_{\mathcal{X}_i}(\bx_t)\leq \frac{1}{2}T(\frac{1}{2}m-1)\omega\right\}\bigcap\cA\bigcap \left\{|\cI|\geq \frac{m}{2}-1\right\}\right)\nonumber\\
    \leq &\PP\left(\left\{\sum_{t=1}^T\sum_{i\in \cI}\mathbb{I}_{\mathcal{X}_i}(\bx_t)\leq \frac{1}{2}T(\frac{1}{2}m-1)\omega\right\}\bigcap \left\{|\cI|\geq \frac{m}{2}-1\right\}\right)\nonumber\\
    \leq & \frac{1}{1+\frac{1}{2}T(\frac{1}{2}m-1)\omega}\leq \frac{8}{Tm\omega + 8}.
\end{align}
By \eqref{eq_lb_regsmooth_PA_C} and \eqref{eq_lb_regsmooth_PC_C1}, we obtain
\begin{align}\label{eq_lb_regsmooth_PC_C}
    \PP(\cC^C) = \PP(\cC^C\cap\cA) + \PP(\cC^C\cap\cA^C)  \leq \frac{8}{Tm\omega + 8} + \PP(\cA^C) \leq \frac{9}{Tm\omega + 1}.
\end{align}
When $\pi_t = 2$ and $\bx_t\in \cup_{i\in\cI\setminus{1}}\mathcal{X}_i$, it holds that 
$$\max_k f^*_k (\bx_t)  -  f^*_{\pi_t}(\bx_t)=\eta_{1,1}(\bx_t)-\frac{1}{2} = \sum_{i\in\cI\setminus{1}}\varphi_i(\bx_t)\mathbb{I}_{\mathcal{X}_i}(\bx_t).$$ 
Conditional on $\bx_t\in \cX_i$, since $\bx_t$ is uniformly distributed in $\cB(x_i,\frac{1}{4q})$, we have 
\begin{align}\label{eq_lb_reg_expforonephi_1}
    \EE(\varphi_i(\bx_t)\mathbb{I}_{\mathcal{X}_i}(\bx_t)|\bx_t\in \cX_i) \geq & \int_{\mathcal{B}\left(x_0, \frac{1}{4q}\right)} q^{-\beta}\phi\left[q\left(x-x_0\right)\right] \frac{\omega}{\operatorname{Leb}\left[\mathcal{B}\left(0, \frac{1}{4q}\right)\right]} \left(\frac{\omega}{\operatorname{Leb}\left[\cX_0\cup\left(\bigcup_{i=1}^m \mathcal{X}_i\right)\right]}\right)^{-1} dx\nonumber\\
    \geq & \frac{\operatorname{Leb}\left[\cX_0\cup\left(\bigcup_{i=1}^m \mathcal{X}_i\right)\right]}{\operatorname{Leb}\left[\mathcal{B}\left(0, \frac{1}{4}\right)\right]} q^{-\beta}\int_{\mathcal{B}\left(0, \frac{1}{4}\right)} \phi(y) dy \nonumber\\
    = & c_\phi q^{-\beta},
\end{align}
where the last inequality is because $\operatorname{Leb}\left[\cX_0\cup\left(\bigcup_{i=1}^m \mathcal{X}_i\right)\right]$ is lower bounded by $\frac{1}{2}$ for sufficiently large $T$, and $\int_{\mathcal{B}\left(0, \frac{1}{4}\right)} \phi(y) dy$ is a constant. 
Note that
\begin{align}\label{eq_lb_reg_expforonephi_2}
    & \EE(\varphi_i(\bx_t)\mathbb{I}_{\mathcal{X}_i}(\bx_t)|\bx_t\in \cX_i)\nonumber\\
    = & \EE(\varphi_i(\bx_t)\mathbb{I}_{\mathcal{X}_i}(\bx_t)|\bx_t\in \cX_i,\cA,\cB,\cC)\PP(\cA,\cB,\cC)\nonumber\\
    & +  \EE(\varphi_i(\bx_t)\mathbb{I}_{\mathcal{X}_i}(\bx_t)|\bx_t\in \cX_i,\cA^C\cup\cB^C\cup\cC^C)\PP(\cA^C\cup\cB^C\cup\cC^C),
\end{align}
and
\begin{align}\label{eq_lb_reg_expforonephi_3}
    &\EE(\varphi_i(\bx_t)\mathbb{I}_{\mathcal{X}_i}(\bx_t)|\bx_t\in \cX_i,\cA^C\cup\cB^C\cup\cC^C)\PP(\cA^C\cup\cB^C\cup\cC^C) \nonumber\\
    \leq & C_\phi q^{-\beta} \left(\PP(\cA^C) + \PP(\cB^C) + \PP(\cC^C)\right)\nonumber\\
    \leq & C_\phi q^{-\beta} \left(\frac{10}{Tm\omega + 1} + \tilde{O}\left(\frac{m}{T}\right)\right)\nonumber\\
    \leq &  \frac{c_\phi}{2} q^{-\beta},
\end{align}
when $T$ is sufficiently large, where the first inequality is by the union bound, the second inequality is by \eqref{eq_lb_regsmooth_PB_C}, \eqref{eq_lb_regsmooth_PA_C}, and \eqref{eq_lb_regsmooth_PC_C}, and the last inequality is because $m=O(T)$ and $Tm\omega$ goes to infinity as $T$ goes to infinity.

Combining \eqref{eq_lb_reg_expforonephi_1}-\eqref{eq_lb_reg_expforonephi_3} leads to
\begin{align}\label{eq_lb_reg_expforonephi_4}
    \EE(\varphi_i(\bx_t)\mathbb{I}_{\mathcal{X}_i}(\bx_t)|\bx_t\in \cX_i,\cA,\cB,\cC)\PP(\cA,\cB,\cC) \geq \frac{c_\phi}{2} q^{-\beta}.
\end{align}
The event $\cA\cap\cB$ implies that for all $t\in[T]$, $\PP(\pi_t = 2|\bx_t\in \cup_{i\in\cI\setminus{1}}\mathcal{X}_i,\cF_{1,t-1})=\frac{1}{2}$.
Therefore, by \eqref{eq_lb_reg_expforonephi_1}, the regret can be lower bounded by
\begin{align*}
   R_T & = \mathbb{E} \biggl[\sum_{t=1}^T  \left(\max_k f^*_k (\bx_t)  -  f^*_{\pi_t}(\bx_t)\right)\biggr]\nonumber\\
   & \geq \mathbb{E} \biggl[\sum_{t=1}^T  \left(\max_k f^*_k (\bx_t)  -  f^*_{\pi_t}(\bx_t)\right)\mathbb{I}{(\pi_t = 2)}\sum_{i\in \cI\setminus 1}\mathbb{I}_{\mathcal{X}_i}(\bx_t)\bigg|\cA,\cB,\cC\biggr]\PP(\cA,\cB,\cC)\nonumber\\
   &= \sum_{t=1}^T\mathbb{E}\left[\sum_{i\in\cI\setminus{1}}\varphi_i(\bx_t)\mathbb{I}_{\mathcal{X}_i}(\bx_t)\mathbb{I}(\pi_t = 2)  \bigg| \cA,\cB,\cC \right]\PP(\cA,\cB,\cC)\nonumber\\
   & = \frac{1}{2}\sum_{t=1}^T\mathbb{E}\left[\sum_{i\in\cI\setminus{1}}\mathbb{E}\left[\varphi_i(\bx_t)\mathbb{I}_{\mathcal{X}_i}(\bx_t) \bigg| \bx_t\in \cX_i,\mathcal{A}, \mathcal{B}, \cC \right]\PP(\bx_t\in \cX_i|\mathcal{A}, \mathcal{B}, \cC)\bigg|\mathcal{A}, \mathcal{B}, \cC\right]\PP(\cA,\cB,\cC)\nonumber\\
   &\geq \frac{1}{2}c_\phi q^{-\beta}\mathbb{E}\biggl[\sum_{t=1}^T \sum_{i\in \cI\setminus 1}\mathbb{I}_{\mathcal{X}_i}(\bx_t)\bigg| \cA,\cB,\cC\biggr]\PP(\cA,\cB,\cC)\nonumber\\
   &\geq \frac{1}{4}c_\phi q^{-\beta}T(\frac{1}{2}m-1)\omega\PP(\cA,\cB,\cC)\nonumber\\
   & \geq \frac{1}{4}c_\phi q^{-\beta}T(\frac{1}{2}m-1)\left(1-(\PP(\cA^C) + \PP(\cB^C) + \PP(\cC^C))\right)\nonumber\\
   & \geq \frac{1}{8}c_\phi q^{-\beta}T(\frac{1}{2}m-1),
\end{align*}
where the last inequality holds for sufficiently large $T$ by the calculation in \eqref{eq_lb_reg_expforonephi_3}. Therefore, we have
\begin{align*}
    R_T =  {\Omega}( q^{-\beta}T(\frac{1}{2}m-1)\omega)={\Omega}( C^{\frac{\alpha\beta+\beta}{\beta+d}}T^{\frac{d-\alpha\beta}{\beta+d}}),
\end{align*}
which, together with Proposition \ref{thm:thm3inhusmooth}, implies 
\begin{align*}
     R_T =  {\Omega}\left( T^{\frac{\beta+d-\alpha \beta}{2 \beta+d}}+C^{\frac{\alpha\beta+\beta}{\beta+d}}T^{\frac{d-\alpha\beta}{\beta+d}}\right).
\end{align*}
This finishes the proof. \hfill\Halmos

\subsection{Proofs in Section \ref{EC_pf_fairbandit}}\label{subsec:ECproofinEC_pf_fairbandit}

The following lemmas will be used in this subsection. Lemma \ref{lem:olstail} directly follows from the Bernstein concentration inequality for martingale difference sequences, which also serves as an intermediate result in the proof of Proposition EC.1 in \cite{bastani2020online}. Lemma \ref{lem:minimaleigenvalue} is a consequence of combining Lemmas EC.22 and EC.23 in \cite{bastani2020online}. 
Lemma \ref{prop:betahatadapt} provides a uniform probability bound on the inner product between the estimation error and any vector within a given ball.

\begin{lemma}\label{lem:olstail}
Let ${Z_s : s \in \cJ}$ be a collection of random vectors and assume that each element in $Z_s$ lies in $[-r,r]$. 
Assume entries of the random noise vector $\bepsilon\in\RR^{|\mathcal{J}|}$ are independent $\sigma$-sub-Gaussian random variables. If $|\mathcal{J}|$ is a fixed positive integer, then for all constants $\tilde{\chi}>0$ and $\phi>0$, we have
\begin{align*}
    \PP(\|\hat{\beta}_k(\mathcal{J})-\beta_k\|_2\geq \tilde{\chi})\leq \exp{(-\tilde{C}|\mathcal{J}|\tilde{\chi}^2+\log 2d)}+\PP(\lambda_{\min}(\hat{\Sigma}(\mathcal{J}))\leq \phi^2),
\end{align*}
where $\tilde{C}=\frac{\phi^4}{2dr^2\sigma^2}$ and $\hat{\Sigma}(\mathcal{J})=\frac{1}{|\mathcal{J}|}\sum_{s\in\mathcal{J}}Z_sZ_s^{\mathrm{T}}$. 
\end{lemma}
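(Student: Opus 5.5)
The statement is Lemma \ref{lem:olstail}, a tail bound for the OLS estimator on a fixed index set $\mathcal{J}$. I would reduce everything to the event $\{\lambda_{\min}(\hat{\Sigma}(\mathcal{J}))>\phi^2\}$ and, on that event, control the noise term by a Bernstein/Hoeffding argument coordinatewise.

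\textbf{Step 1 (decomposition).} Write $\hat{\beta}_k(\mathcal{J})-\beta_k=(Z^{\mathrm{T}}Z)^{-1}Z^{\mathrm{T}}\bepsilon=\hat{\Sigma}(\mathcal{J})^{-1}\frac{1}{|\mathcal{J}|}Z^{\mathrm{T}}\bepsilon$. Then split
\[
\PP(\|\hat{\beta}_k(\mathcal{J})-\beta_k\|_2\geq\tilde{\chi})\leq \PP\bigl(\|\hat{\beta}_k(\mathcal{J})-\beta_k\|_2\geq\tilde{\chi},\,\lambda_{\min}(\hat{\Sigma}(\mathcal{J}))>\phi^2\bigr)+\PP(\lambda_{\min}(\hat{\Sigma}(\mathcal{J}))\leq\phi^2).
\]
The second term is kept as is. On the event $\lambda_{\min}>\phi^2$ we have $\|\hat{\beta}_k-\beta_k\|_2\leq\phi^{-2}\|\frac{1}{|\mathcal{J}|}Z^{\mathrm{T}}\bepsilon\|_2$. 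So the first term is bounded by $\PP(\|\frac{1}{|\mathcal{J}|}Z^{\mathrm{T}}\bepsilon\|_2\geq\phi^2\tilde{\chi})$. This in turn is bounded by $\sum_{j=1}^d\PP(|\frac{1}{|\mathcal{J}|}\sum_{s}Z_{s,j}\varepsilon_s|\geq\phi^2\tilde{\chi}/\sqrt{d})$.

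\textbf{Step 2 (concentration per coordinate).} For each coordinate $j$, the terms $Z_{s,j}\varepsilon_s$ form a martingale difference sequence. Order $\mathcal{J}$ in time; $Z_s$ is predictable and $\varepsilon_s$ is independent $\sigma$-sub-Gaussian and independent of the past. Each term is conditionally $r\sigma$-sub-Gaussian because $|Z_{s,j}|\leq r$. The Azuma--Hoeffding/Bernstein bound for sub-Gaussian martingale differences then gives
\[
\PP\Bigl(\Bigl|\sum_s Z_{s,j}\varepsilon_s\Bigr|\geq|\mathcal{J}|\phi^2\tilde{\chi}/\sqrt d\Bigr)\leq 2\exp\Bigl(-\frac{|\mathcal{J}|\phi^4\tilde{\chi}^2}{2dr^2\sigma^2}\Bigr).
\]

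\textbf{Step 3 (union bound).} Summing over the $d$ coordinates yields $2d\exp(-\tilde{C}|\mathcal{J}|\tilde{\chi}^2)=\exp(-\tilde{C}|\mathcal{J}|\tilde{\chi}^2+\log 2d)$ with $\tilde{C}=\phi^4/(2dr^2\sigma^2)$, as claimed.

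The main point of care is Step 2 when $\mathcal{J}$ is adaptively selected. The statement fixes $|\mathcal{J}|$, but membership may depend on the past. I would handle this as in \cite{bastani2020online}: index the sum by the full time sequence with indicator weights $\mathbb{I}(s\in\mathcal{J})Z_{s,j}$, which remain predictable and bounded by $r$. The martingale sub-Gaussian bound then applies with variance proxy at most $|\mathcal{J}|r^2\sigma^2$, since $|\mathcal{J}|$ is deterministic.
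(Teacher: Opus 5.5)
Your proposal is correct and follows essentially the route the paper relies on. The paper gives no separate proof and instead points to the sub-Gaussian (Bernstein) martingale bound inside the proof of Proposition EC.1 of \cite{bastani2020online}, which is exactly your chain: bound $\|\hat{\beta}_k(\mathcal{J})-\beta_k\|_2\le \|Z^{\mathrm{T}}\bepsilon\|_2/(|\mathcal{J}|\phi^2)$ on the eigenvalue event, split coordinatewise at level $|\mathcal{J}|\phi^2\tilde{\chi}/\sqrt{d}$, and apply the martingale tail with variance proxy $|\mathcal{J}|r^2\sigma^2$, yielding $\tilde{C}=\phi^4/(2dr^2\sigma^2)$ (your predictable-indicator treatment of adaptively chosen $\mathcal{J}$ with deterministic $|\mathcal{J}|$ is the right reading of the ``fixed $|\mathcal{J}|$'' hypothesis).
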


\begin{lemma}\label{lem:minimaleigenvalue}
Assume the conditions in Lemma \ref{lem:olstail} hold. For a subset $\mathcal{J}^{\prime}\subseteq\mathcal{J}$ such that $\{\bz_t|t\in\mathcal{J}^{\prime}\}$ is an i.i.d. sample drawn from a distribution $\mathsf{P}_Z$, with $\lambda_{\min }(E_{\bz\sim\mathsf{P}_Z}[\bz\bz^\mathcal{T}])\geq\phi_1^2$ and $|\mathcal{J}^{\prime}|/|\mathcal{J}|\geq p/2$ for positive constants $\phi_1$ and $p$, if $|\mathcal{J}|$ is a fixed positive integer, it holds that
    \begin{align*}
        \PP\left[\lambda_{\min }(\hat{\Sigma}(\mathcal{J})) \leq \frac{\phi_1^2 p}{4}\right] \leq \exp \left[-p \tilde{C}\left(\phi_1\right)|\mathcal{J}| / 2+\log d\right],
    \end{align*}
    where $\tilde{C}\left(\phi_1\right):=\min \left(\frac{1}{2}, \frac{\phi_1^2}{8 r^2}\right)$.
\end{lemma}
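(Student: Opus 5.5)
The plan is to reduce the claim to a lower-tail bound for the i.i.d. part $\mathcal{J}^{\prime}$ and then apply a matrix Chernoff inequality. The reduction uses positive semidefiniteness. Since every summand $Z_sZ_s^{\mathrm{T}}$ is PSD and $\mathcal{J}^{\prime}\subseteq\mathcal{J}$, we have
\begin{align*}
\hat{\Sigma}(\mathcal{J})=\frac{1}{|\mathcal{J}|}\sum_{s\in\mathcal{J}}Z_sZ_s^{\mathrm{T}}\succeq \frac{|\mathcal{J}^{\prime}|}{|\mathcal{J}|}\,\hat{\Sigma}(\mathcal{J}^{\prime})\succeq \frac{p}{2}\,\hat{\Sigma}(\mathcal{J}^{\prime}).
\end{align*}
Hence $\lambda_{\min}(\hat{\Sigma}(\mathcal{J}))\le \phi_1^2p/4$ forces $\lambda_{\min}(\hat{\Sigma}(\mathcal{J}^{\prime}))\le \phi_1^2/2$. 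The adaptively collected samples in $\mathcal{J}\setminus\mathcal{J}^{\prime}$ only help and never need to be controlled. It therefore suffices to show
\begin{align*}
\PP\left[\lambda_{\min}(\hat{\Sigma}(\mathcal{J}^{\prime}))\le \tfrac{\phi_1^2}{2}\right]\le d\exp\left(-\tilde{C}(\phi_1)|\mathcal{J}^{\prime}|\right),
\end{align*}
because $|\mathcal{J}^{\prime}|\ge p|\mathcal{J}|/2$ then gives the stated $\exp[-p\tilde{C}(\phi_1)|\mathcal{J}|/2+\log d]$.

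For this i.i.d. bound, I would write $\sum_{t\in\mathcal{J}^{\prime}}\bz_t\bz_t^{\mathrm{T}}$ as a sum of independent PSD matrices. Each summand has largest eigenvalue at most $R:=\max\|\bz\|_2^2$, which is bounded by $dr^2$ from the support assumption. The expected sum has $\mu_{\min}\ge|\mathcal{J}^{\prime}|\phi_1^2$. Tropp's matrix Chernoff lower-tail bound with deviation $\delta=1/2$ gives
\begin{align*}
\PP\left[\lambda_{\min}\Big(\sum_{t\in\mathcal{J}^{\prime}}\bz_t\bz_t^{\mathrm{T}}\Big)\le \tfrac12\mu_{\min}\right]\le d\left(\frac{e^{-1/2}}{(1/2)^{1/2}}\right)^{\mu_{\min}/R}\le d\exp\left(-c\,|\mathcal{J}^{\prime}|\phi_1^2/R\right),
\end{align*}
with $c\approx 0.15$. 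Normalizing by $|\mathcal{J}^{\prime}|$ yields the required event with a constant of the form $\min(\tfrac12,\phi_1^2/(8r^2))$ once $R$ is bounded appropriately. The $\tfrac12$ branch covers the regime where $\phi_1^2/R$ is large and the Chernoff exponent saturates. In that regime I would instead use a scalar Bernstein or Hoeffding bound on a net, which is how Lemmas EC.22–EC.23 of \cite{bastani2020online} obtain the $\min$ form. I would follow that structure directly.

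A subtlety is that $|\mathcal{J}|$ is fixed while $\mathcal{J}^{\prime}$ is a sub-collection. I would condition on the index set $\mathcal{J}^{\prime}$ (and on $|\mathcal{J}^{\prime}|$), under which the $\bz_t$, $t\in\mathcal{J}^{\prime}$, are i.i.d. from $\mathsf{P}_Z$ by hypothesis, apply the bound, and then integrate out. The ratio condition $|\mathcal{J}^{\prime}|/|\mathcal{J}|\ge p/2$ holds deterministically on the event considered, so nothing further is needed.

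The main obstacle is matching the exact constant $\tilde{C}(\phi_1)=\min(\tfrac12,\phi_1^2/(8r^2))$. A naive matrix Chernoff bound with $R=dr^2$ puts an extra factor of $d$ into the exponent. My first attempt would be to reproduce the argument of \cite{bastani2020online} (Lemma EC.22 for the i.i.d. eigenvalue concentration, Lemma EC.23 for the subset domination) and track their normalization of the covariates. If that fails, I would accept a $d$-dependent constant, which changes nothing downstream except the values of $D_2$ and $C_a$.
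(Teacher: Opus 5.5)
Your reduction $\hat{\Sigma}(\mathcal{J})\succeq \tfrac{p}{2}\hat{\Sigma}(\mathcal{J}^{\prime})$ is correct. It turns the event into $\lambda_{\min}(\hat{\Sigma}(\mathcal{J}^{\prime}))\le \phi_1^2/2$, and a matrix Chernoff lower-tail bound on the i.i.d.\ block (after conditioning on the index set) then finishes the argument. This is the natural content of the paper's proof, which consists only of a citation to Lemmas EC.22--EC.23 of \cite{bastani2020online}.

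The one thing you cannot do is reach the stated constant, and the obstruction lies in the statement, not in your method. Here $r$ is an \emph{entrywise} bound, as in Lemma \ref{lem:olstail}, and then the factor $d$ cannot be removed. Take $r=1$ and $\bz=(1,s_1,\ldots,s_{d-1})^{\mathrm{T}}$ with $s_i$ i.i.d.\ Rademacher. Then $\mathbb{E}[\bz\bz^{\mathrm{T}}]=I_d$, so $\phi_1=1$ and $\tilde{C}(\phi_1)=1/8$. Let $\mathcal{J}^{\prime}=\mathcal{J}$, $|\mathcal{J}|=d-1$, and $p=2$. Since $\hat{\Sigma}(\mathcal{J})$ has rank at most $d-1$, the left-hand side equals $1$. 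The right-hand side is $d\,e^{-(d-1)/8}$, which is $<1$ for every $d\ge 28$.

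What is true is the lemma with $r^2$ replaced by $dr^2$, the sharp bound on $\lambda_{\max}(\bz\bz^{\mathrm{T}})=\|\bz\|_2^2$, and your computation already delivers it. Use $e^{-\delta}(1-\delta)^{-(1-\delta)}\le e^{-\delta^2/2}$ at $\delta=1/2$. This gives $\PP[\lambda_{\min}(\hat{\Sigma}(\mathcal{J}^{\prime}))\le \phi_1^2/2]\le d\exp(-|\mathcal{J}^{\prime}|\phi_1^2/(8dr^2))$, i.e.\ the claim with $\tilde{C}(\phi_1)=\phi_1^2/(8dr^2)$. The stated $\phi_1^2/(8r^2)$ is what the same argument yields when $r$ bounds $\|\bz\|_2$. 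So re-deriving the cited lemmas will not remove the $d$ under this paper's normalization.

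Also drop the planned net/Bernstein argument for the ``$\tfrac12$ branch'', because it is never active. We have $\phi_1^2\le\lambda_{\min}(\mathbb{E}[\bz\bz^{\mathrm{T}}])\le \tfrac1d\mathbb{E}\|\bz\|_2^2\le R/d$, so $\phi_1^2/R\le 1/d$ is never large. Moreover, $\lambda_{\min}$ is at most any diagonal entry, so $\phi_1^2\le r^2$ and hence $\phi_1^2/(8r^2)\le 1/8<\tfrac12$.

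As you anticipated, the correction only changes $D_2$ to $\lambda^*/(8dr^2)$ and enlarges the required $C_a$; no rate in the paper is affected. Your fallback is therefore the correct form of the lemma, not a concession.
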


\begin{lemma}\label{prop:betahatadapt} 
Assume that each element in $Z$ lies in $[-r,r]$ and that each entry of the random noise vector $\bepsilon\in\RR^{|\mathcal{J}|}$ is independent $\sigma$-sub-Gaussian random variables. If $|\mathcal{J}|$ is a fixed positive integer, then for all constants $\chi>0$, $\phi>0$, we have
\begin{align*}
    \PP(\max_{\|a\|_2\leq \sqrt{dr^2}}|(\hat{\beta}_k(\mathcal{J})-\beta_k)^{\mathrm{T}}a|\geq \chi)\leq \exp{(-D_3|\mathcal{J}|\chi^2+\log 2d)}+\PP(\lambda_{\min}(\hat{\Sigma}(\mathcal{J}))\leq \phi^2),
\end{align*}
where $D_3=\frac{\phi^4}{2d^2r^4\sigma^2}$ and $\hat{\Sigma}(\mathcal{J})=\frac{1}{|\mathcal{J}|}\sum_{s\in\mathcal{J}}Z_sZ_s^{\mathrm{T}}$. 
\end{lemma}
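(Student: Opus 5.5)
The plan is to reduce the uniform statement over the ball to a bound on the Euclidean norm of the estimation error, and then invoke Lemma \ref{lem:olstail} directly.

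The first step is a Cauchy--Schwarz computation. For any vector $a$ with $\|a\|_2\leq \sqrt{dr^2}$,
\begin{align*}
|(\hat{\beta}_k(\mathcal{J})-\beta_k)^{\mathrm{T}}a|\leq \|\hat{\beta}_k(\mathcal{J})-\beta_k\|_2\,\|a\|_2\leq r\sqrt{d}\,\|\hat{\beta}_k(\mathcal{J})-\beta_k\|_2 .
\end{align*}
Equality is attained at $a$ parallel to the error vector. The maximum over the ball is therefore exactly $r\sqrt{d}\,\|\hat{\beta}_k(\mathcal{J})-\beta_k\|_2$, although only the upper bound is needed. It follows that the event $\{\max_{\|a\|_2\leq \sqrt{dr^2}}|(\hat{\beta}_k(\mathcal{J})-\beta_k)^{\mathrm{T}}a|\geq \chi\}$ is contained in the event $\{\|\hat{\beta}_k(\mathcal{J})-\beta_k\|_2\geq \chi/(r\sqrt{d})\}$. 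This step removes the supremum over $a$ entirely, so no covering or union-bound argument over the ball is needed.

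The second step applies Lemma \ref{lem:olstail} with $\tilde{\chi}=\chi/(r\sqrt{d})$ and the same $\phi$. The hypotheses match: the entries of $Z$ lie in $[-r,r]$, the noise is independent $\sigma$-sub-Gaussian, and $|\mathcal{J}|$ is fixed. The lemma then gives
\begin{align*}
\PP\Big(\|\hat{\beta}_k(\mathcal{J})-\beta_k\|_2\geq \tfrac{\chi}{r\sqrt{d}}\Big)\leq \exp\Big(-\tilde{C}|\mathcal{J}|\tfrac{\chi^2}{dr^2}+\log 2d\Big)+\PP(\lambda_{\min}(\hat{\Sigma}(\mathcal{J}))\leq \phi^2),
\end{align*}
with $\tilde{C}=\phi^4/(2dr^2\sigma^2)$.

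The last step is to check the constant: $\tilde{C}/(dr^2)=\phi^4/(2d^2r^4\sigma^2)=D_3$, which is exactly the exponent in the statement. There is no real obstacle here. The only point needing care is the factor $r\sqrt{d}$ from the radius of the ball, which must produce the $d^2r^4$ rather than $dr^2$ in the denominator of $D_3$.
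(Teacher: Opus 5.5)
Your proposal is correct and follows essentially the same route as the paper: Cauchy--Schwarz reduces the maximum over the ball of radius $\sqrt{dr^2}$ to the event $\{\|\hat{\beta}_k(\mathcal{J})-\beta_k\|_2\geq \chi/\sqrt{dr^2}\}$. Lemma~\ref{lem:olstail} is then applied, and the constant is identified via $\tilde{C}/(dr^2)=D_3$.
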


\proof{Proof of Lemma \ref{prop:betahatadapt}.}
By the Cauchy-Schwarz inequality, for any $a\in \RR^d$, it holds that
\begin{align*}
|(\hat{\beta}_k(\mathcal{J})-\beta_k)^{\mathrm{T}}a|\leq \|\hat{\beta}_k(\mathcal{J})-\beta_k\|_2\|a\|_2, 
\end{align*}
which leads to
\begin{align}\label{eq:A.3}
   \PP(\max_{\|a\|_2\leq \sqrt{dr^2}}|(\hat{\beta}_k(\mathcal{J})-\beta_k)^{\mathrm{T}}a|\geq \chi)&\leq   \PP(\|\hat{\beta}_k(\mathcal{J})-\beta_k\|_2\max_{\|a\|_2\leq \sqrt{dr^2}}\|a\|_2\geq \chi)\nonumber\\
   &=\PP(\|\hat{\beta}_k(\mathcal{J})-\beta_k\|_2\geq \chi/\max_{\|a\|_2\leq \sqrt{dr^2}}\|a\|_2)\nonumber\\
   &\leq \PP(\|\hat{\beta}_k(\mathcal{J})-\beta_k\|_2\geq \chi/\sqrt{dr^2}).
\end{align}

According to Lemma \ref{lem:olstail}, we can conclude that
\begin{align*}
    \PP(\|\hat{\beta}_k(\mathcal{J})-\beta_k\|_2\geq \chi/\sqrt{dr^2})\leq \exp{\left(-\frac{\tilde{C}}{dr^2}|\mathcal{J}|{\chi}^2+\log 2d\right)}+\PP(\lambda_{\min}(\hat{\Sigma}(\mathcal{J}))\leq \phi^2).
\end{align*}
Therefore, by using \eqref{eq:A.3} and replacing $\frac{\tilde{C}}{dr^2}$ with $D_3$, we can write
\begin{align*}
\PP(\max_{\|a\|_2\leq \sqrt{dr^2} }|(\hat{\beta}_k(\mathcal{J})-\beta_k)^{\mathrm{T}}a|\geq \chi)&\leq \exp{(-D_3|\mathcal{J}|{\chi}^2+\log 2d)}+\PP(\lambda_{\min}(\hat{\Sigma}(\mathcal{J}))\leq \phi^2),
\end{align*} 
which finishes the proof.\hfill\Halmos

\subsubsection{Proof of Proposition \ref{prop:randomtailbound}} 

Choose an arm from the arm set ${\cK}$, and without loss of generality denote it as arm $1$ in this proof. Consider the index set $\mathcal{I}_{k,0}^{\prime}=\{t:\bz_t\in Q_1, t\in\mathcal{I}_{k,0}\}$, where $Q_1$ is as in Assumption \ref{assum_excite}. First, it is obvious that the random variables $\{\bz_t:t\in\mathcal{I}_{k,0}^{\prime}\}$ are i.i.d. distributed sample drawn from $\PP_{Z|Z\in Q_1}$ with length
\begin{align*}
    |\mathcal{I}_{k,0}^{\prime}|=\sum_{t\in\mathcal{T}_{0}}\mathbb{I}(\bz_t\in Q_1)\mathbb{I}(t\in\mathcal{I}_{k,0}).
\end{align*}
By Assumption \ref{assum_excite} and the random exploration execution of Algorithm \ref{alg:fairols} before $|\mathcal{T}_{0}|$, it holds that $\PP(\bz_t\in Q_1,t\in\mathcal{I}_{k,0})\geq \tilde{p}/K$ for all $t\in \mathcal{T}_{0}$. Then we can obtain that $\mathbb{E}(|\mathcal{I}_{k,0}^{\prime}|)\geq |\mathcal{T}_{0}|\tilde{p}/K$. Using Hoeffding's inequality, the size of $\mathcal{I}_{k,0}^{\prime}$ satisfies
\begin{align*}
 \PP\left[\left|\mathcal{I}_{k, 0}^{\prime}\right|\leq \frac{\tilde{p}}{2K}\left|\mathcal{T}_{0}\right|\right]\leq e^{-\frac{\tilde{p}^2}{2K^2}\left|\mathcal{T}_{0}\right|}.
\end{align*}
Since $\mathcal{I}_{k,0}\subseteq \mathcal{T}_0$, we can infer that
\begin{align*}
 \PP\left[\left|\mathcal{I}_{k, 0}^{\prime}\right|\leq \frac{\tilde{p}}{2K}\left|\mathcal{I}_{k,0}\right|\right]\leq  \PP\left[\left|\mathcal{I}_{k, 0}^{\prime}\right|\leq \frac{\tilde{p}}{2K}\left|\mathcal{T}_{0}\right|\right]\leq e^{-\frac{\tilde{p}^2}{2K^2}\left|\mathcal{T}_{0}\right|}.
\end{align*}

As stated in Assumption \ref{assum_excite}, we have that $\min_{i\in\mathcal{K}}\lambda_{\min}\{\mathbb{E}(\bz_t\bz_t^{\mathrm{T}}|\bz_t\in Q_i)\}\geq\lambda^*$, which implies $\lambda_{\min}\{\mathbb{E}_{\bz\sim\PP_{Z|Z\in Q_1}}(\bz\bz^{\mathrm{T}})\}\geq\lambda^*$. Define the event $\mathcal{A}=\{|\mathcal{I}_{k,0}|\geq \frac{1}{2K}|\mathcal{T}_{0}| \}$. Then according to Lemma \ref{lem:minimaleigenvalue}, when letting $\mathcal{J}=\mathcal{I}_{k,0}$, $\mathcal{J}^{\prime}=\mathcal{I}_{k,0}^{\prime}$, $p=\tilde{p}/K$ and $\phi_1^2=\lambda^*$, 
it follows that
    \begin{align}\label{eq:sigmahateigen}
    &\PP\left[\lambda_{\min }(\hat{\Sigma}(\mathcal{I}_{k,0})) \leq \frac{\lambda^{*} \tilde{p}}{4K}\mid \cA\right]\nonumber\\
    =&\PP\left[\lambda_{\min }(\hat{\Sigma}(\mathcal{I}_{k,0})) \leq \frac{\lambda^{*} \tilde{p}}{4K}\mid |\mathcal{I}_{k,0}|=n,n\geq  \frac{1}{2K}|\mathcal{T}_{0}|\right]\nonumber\\
    \leq &\exp \left[-\frac{\tilde{p} }{2K^2}\tilde{C}\left(\sqrt{\lambda^*}\right)|\mathcal{T}_{0}| / 2+\log d\right]+ \exp \left[{-\frac{\tilde{p}^2}{2K^2}\left|\mathcal{T}_{0}\right|}\right] \nonumber\\
    \leq &2T^{-4},
    \end{align}
where the last inequality is by the assumptions on the parameters stated in Proposition \ref{prop:randomtailbound}.

 Under event $\mathcal{A}$, by plugging \eqref{eq:sigmahateigen} into Lemma \ref{prop:betahatadapt}, we have that,
\begin{align}\label{eq:maxbetakleqexp}
&\PP(\max_{\bx\in\cX}|(\hat{\beta}_k(\mathcal{I}_{k,0})-\beta_k)^{\mathrm{T}}\bz|\geq \chi)\nonumber\\\leq &\exp{(- \frac{D_1}{2K}|\mathcal{T}_{0}|{\chi}^2+\log 2d)}+\PP(\lambda_{\min}(\hat{\Sigma}(\mathcal{I}_{k,0}))\leq \frac{\lambda^{*} \tilde{p}}{4K})\nonumber\\
\leq & \exp{(- \frac{D_1}{2K}C_a\log T{\chi}^2+\log 2d)}+2T^{-4}
\end{align}
where $D_1=\frac{\lambda^{*2}\tilde{p}^{2}}{32d^2r^4\sigma^2K^2}$. 
By letting $\chi=\frac{h}{4}$, it follows that, under event $\mathcal{A}$, with the assumptions on $C_a$, it can be verified that 
\begin{align*}
    \PP(\max_{\bx\in\cX}|(\hat{\beta}_k(\mathcal{I}_{k,0})-\beta_k)^{\mathrm{T}}\bz|\geq \frac{h}{4})\leq 3T^{-4}.
\end{align*}

Recall that for all $t\in\mathcal{T}_0$, the algorithm randomly pull arms. Thus, using the 
Hoeffding's inequality again,
\begin{align}\label{eq:probeventA}
 \PP\left[\mathcal{A}\right]&\geq 1-e^{-\frac{1}{2K^2}\left|\mathcal{T}_{0}\right|} \geq 1-\frac{1}{T^4}.
\end{align}
By the union bound, we have for all $k\in\mathcal{K}$,
\begin{align*}
    \PP(\max_{\bx\in\cX}|(\hat{\beta}_k(\mathcal{I}_{k,0})-\beta_k)^{\mathrm{T}}\bz|\geq \frac{h}{4})\leq 4T^{-4},
\end{align*}
which further implies that,
\begin{align*}
    \PP(\max_{\bx\in\cX}\max_{k\in\mathcal{K}}|(\hat{\beta}_k(\mathcal{I}_{k,0})-\beta_k)^{\mathrm{T}}\bz|\geq \frac{h}{4})\leq 4KT^{-4}.
\end{align*}
This finishes the proof. \hfill \Halmos

\subsubsection{Proof of Proposition \ref{prop:allsampletailbound}}

Define the index set $\mathcal{I}_{k,t}^{\prime}=\{u: \bz_u\in Q_k,|\mathcal{T}_0| < u\leq t\}$, then we have the random variables $\{\bz_u: u\in\mathcal{I}_{k,t}^{\prime}\}$ consists of i.i.d. random variables with distribution $\PP_{Z|Z\in Q_k}$. It follows that
\begin{align*}
|\mathcal{I}_{k,t}^{\prime}|=\sum_{u=\mathcal{T}_0+1}^{t}\mathbb{I}(\bz_u\in Q_k).
\end{align*}
 Since $\PP(\bz_t\in Q_k)\geq \tilde{p}$, by Hoeffding's inequality,
\begin{align}\label{eq:Iktprimelessthan}
 \PP\left[\left|\mathcal{I}_{k,t}^{\prime}\right|\leq \frac{\tilde{p}}{2}(t-|\mathcal{T}_0|)\right]\leq e^{-\frac{\tilde{p}^{2}}{2}(t-|\mathcal{T}_0|)},
\end{align}
 which, together with the fact that $\mathcal{I}_{k,t}\setminus \mathcal{I}_{k,0}\subseteq [t]\setminus \mathcal{T}_0$ , implies that
\begin{align*}
 \PP\left[\left|\mathcal{I}_{k,t}^{\prime}\right|\leq \frac{\tilde{p}}{2}(\left|\mathcal{I}_{k,t}\right|-\left|\mathcal{I}_{k,0}\right|)\right]\leq \PP\left[\left|\mathcal{I}_{k,t}^{\prime}\right|<\frac{\tilde{p}}{2}(t-|\mathcal{T}_0|)\right]\leq e^{-\frac{\tilde{p}^{2}}{2}(t-|\mathcal{T}_0|)}.
\end{align*}

 Define event $\mathcal{B}=\{\max_{\bx\in\cX}\max_{k\in\mathcal{K}}|(\hat{\beta}_k(\mathcal{I}_{k,0})-\beta_k)^{\mathrm{T}}\bz|< \frac{h}{4}\}$. Denote $l=\arg\max_{j \in \cK\setminus\{k\}}\bz_t^{\mathrm{T}}\hat{\beta}_{j,0}$. Then under event $\mathcal{B}$, if $\bz_t\in Q_k$, it holds that
\begin{align*}
    &\bz_t^{\mathrm{T}}\hat{\beta}_{k,0}-\bz_t^{\mathrm{T}}\hat{\beta}_{l,0}\nonumber\\
    =&\bz_t^{\mathrm{T}}\hat{\beta}_{k,0}-\bz_t^{\mathrm{T}}{\beta}_{k}+\bz_t^{\mathrm{T}}{\beta}_{k}-\bz_t^{\mathrm{T}}\beta_{l}+\bz_t^{\mathrm{T}}\beta_{l}-\bz_t^{\mathrm{T}}\hat{\beta}_{l,0}\nonumber\\
    \geq & -\frac{h}{4}+h-\frac{h}{4}\nonumber\\
    =& \frac{h}{2}.
\end{align*}
Therefore, under event $\mathcal{B}$, for all $t>\mathcal{T}_0$, the algorithm would pull arm $k$ if $\bz_t\in Q_k$, which implies
$\mathcal{I}_{k,t}^{\prime}\subseteq \mathcal{I}_{k,t}\setminus \mathcal{I}_{k,0}$. Hence, by \eqref{eq:Iktprimelessthan}, we can obtain that under event $\mathcal{B}$,
\begin{align}\label{eq:Ikt-Ik0samplesize}
 \PP\left[\left|\mathcal{I}_{k,t}\right|-\left|\mathcal{I}_{k,0}\right|\leq \frac{\tilde{p}}{2}(t-|\mathcal{T}_0|)\right]\leq \PP\left[\left|\mathcal{I}_{k,t}^{\prime}\right|\leq \frac{\tilde{p}}{2}(t-|\mathcal{T}_0|)\right]\leq e^{-\frac{\tilde{p}^2}{2}(t-|\mathcal{T}_0|)}.
\end{align}

Assume $t\geq 2|\mathcal{T}_0|+1$, which gives $t-|\mathcal{T}_0| \geq \frac{t+1}{2}$. Thus, we have 
\begin{align*}
\PP\left[\left|\mathcal{I}_{k,t}^{\prime}\right|\leq \frac{\tilde{p}}{4}(t+1)\right]\leq \PP\left[\left|\mathcal{I}_{k,t}^{\prime}\right|\leq \frac{\tilde{p}}{2}(t-|\mathcal{T}_0|)\right]\leq e^{-\frac{\tilde{p}^2}{2}(t-|\mathcal{T}_0|)}\leq e^{-\frac{\tilde{p}^2}{4}(t+1)},
\end{align*}
which also implies
\begin{align*}
\PP\left[\left|\mathcal{I}_{k,t}^{\prime}\right|/|\mathcal{I}_{k,t}|\leq \frac{\tilde{p}}{4}\right]\leq e^{-\frac{\tilde{p}^2}{4}(t+1)}.
\end{align*}

According to Lemma \ref{lem:minimaleigenvalue}, when letting $\mathcal{J}=\mathcal{I}_{k,t}$, $\mathcal{J}^{\prime}=\mathcal{I}_{k,t}^{\prime}$, $p=\tilde{p}/2$ and $\phi_1^2=\lambda^*$, it follows that, 
\begin{align}\label{eq:plambdaminsigmahatiktleq}
   \PP\left[\lambda_{\min }(\hat{\Sigma}(\mathcal{I}_{k,t})) \leq \frac{\lambda^{*} \tilde{p}}{8}\mid  \mathcal{B}\right]
    \leq \exp \left[-\frac{\tilde{p}^2 }{8}\tilde{C}\left(\sqrt{\lambda^*}\right)t+\log d\right]+\exp\left[-\frac{\tilde{p}^2}{4}t\right],
\end{align}
where the first inequality is because $\mathcal{I}_{k,t}\setminus \mathcal{I}_{k,0}\subseteq \mathcal{I}_{k,t}$. 

By Lemma \ref{prop:betahatadapt}, together with \eqref{eq:Ikt-Ik0samplesize}, letting $\tilde{\chi}=\frac{C_b}{2}\sqrt{\frac{\log T}{t+1}}$ and $D_4=\frac{\lambda^{*2} \tilde{p}^2}{512d^2r^4\sigma^2}$, we have that under event $\mathcal{B}$,
\begin{align*}
   & \PP(\max_{\bx\in\cX}|(\hat{\beta}_k(\mathcal{I}_{k,t})-\beta_k)^{\mathrm{T}}\bz|\geq \frac{C_b}{2}\sqrt{\frac{\log T}{t+1}})\nonumber\\
    \leq &\exp{(-\frac{1}{2}C_b^2D_4\tilde{p}\frac{t+1}{t+1}\log T+\log 2d)}\nonumber\\+&\exp \left[-\frac{\tilde{p}^2 }{8}\tilde{C}\left(\sqrt{\lambda^*}\right)t+\log d\right] + 2\exp\left[{-\frac{\tilde{p}^2}{4}t}\right].
\end{align*}

Then taking union bound with respect to all arms $k\in {\cK}$ and combining with the fact that $\PP(\mathcal{B})\geq 1-4KT^{-4}$ by Proposition \ref{prop:randomtailbound}, it further yields that
\begin{align}\label{eq:tgreaterthan2T0}
    &\PP(\max_{\bx\in\cX}\max_{k\in \tilde{\mathcal{K}}}|(\hat{\beta}_k(\mathcal{I}_{k,t})-\beta_k)^{\mathrm{T}}\bz|\geq \frac{\epsilon_{t+1}}{2})\nonumber\\
    \leq &\exp{(-\frac{1}{2}C_b^2D_4\tilde{p}\log T+\log 2d+\log K)}+\exp \left[-\frac{\tilde{p}^2 }{8}\tilde{C}\left(\sqrt{\lambda^*}\right)t+\log d+\log K\right]\nonumber\\+&\exp\left[{-\frac{\tilde{p}^2}{4}t}+\log 2K\right]+4KT^{-4}\nonumber\\
    \leq & \exp \left[-\frac{\tilde{p}^2 }{8}\tilde{C}\left(\sqrt{\lambda^*}\right)t+\log d+\log K\right]+\exp\left[{-\frac{\tilde{p}^2}{4}t}+\log 2K\right]+5KT^{-4}\nonumber\\
    \leq & 8KT^{-4},
\end{align}
where the second inequality is obtained by the assumption $T>2d$ and $C_b>\sqrt{\frac{10}{D_4\tilde{p}}}$, and the third inequality is verified by the assumption $C_a>\frac{20}{\tilde{p}D_2}\vee \frac{8}{\tilde{p}^{2}}$.

When $|\mathcal{T}_0|<t\leq 2|\mathcal{T}_0|$, since $\mathcal{I}_{k,0}\subseteq \mathcal{I}_{k,t}$, it is obvious that $|\mathcal{I}_{k,0}|\leq |\mathcal{I}_{k,t}|$ and $|\mathcal{I}_{k,0}|\lambda_{\min}(\hat{\Sigma}(\mathcal{I}_{k,0}))\leq |\mathcal{I}_{k,t}|\lambda_{\min} (\hat{\Sigma}(\mathcal{I}_{k,t}))$, which leads to
\begin{align*}
    \|\hat{\beta}_k(\mathcal{I}_{k,t})-\beta_k\|_2&\leq \frac{1}{|\mathcal{I}_{k,t}|\lambda_{\min} (\hat{\Sigma}(\mathcal{I}_{k,t}))}\|Z(\mathcal{I}_{k,t})^T\bepsilon(\mathcal{I}_{k,t})\|_2\nonumber\\
    &\leq \frac{1}{|\mathcal{I}_{k,0}|\lambda_{\min} (\hat{\Sigma}(\mathcal{I}_{k,0}))}\|Z(\mathcal{I}_{k,0})^T\bepsilon(\mathcal{I}_{k,0})\|_2.
\end{align*}
Then as shown in the proof of Lemma \ref{lem:olstail}, we have that $\|\hat{\beta}_k(\mathcal{I}_{k,t})-\beta_k\|_2$ shares the same tail bound as of $\|\hat{\beta}_k(\mathcal{I}_{k,0})-\beta_k\|_2$ given in Lemma \ref{lem:olstail}. Furthermore, by taking similar steps as in the proof of Proposition \ref{prop:randomtailbound}, which is omitted for simplicity, we can guarantee
\begin{align}\label{eq:betakhatiktmiusbetak}
    \PP(\max_{\bx\in\cX}\max_{k\in \cK}|(\hat{\beta}_k(\mathcal{I}_{k,t})-\beta_k)^{\mathrm{T}}\bz|\geq \frac{h}{4})\leq 4KT^{-4}.
\end{align}

Therefore, when $|\mathcal{T}_0|<t\leq 2|\mathcal{T}_0|$, it follows that
\begin{align}\label{eq:tlessthan2T0}
    \PP(\max_{\bx\in\cX}\max_{k\in {\mathcal{K}}}|(\hat{\beta}_k(\mathcal{I}_{k,t})-\beta_k)^{\mathrm{T}}\bz|\geq \frac{\epsilon_{t+1}}{2})&= \PP(\max_{\bx\in\cX}\max_{k\in {\mathcal{K}}}|(\hat{\beta}_k(\mathcal{I}_{k,t})-\beta_k)^{\mathrm{T}}\bz|\geq  \frac{C_b}{2}\sqrt{\frac{\log T}{t+1}})\nonumber\\
    &\leq \PP(\max_{\bx\in\cX}\max_{k\in {\mathcal{K}}}|(\hat{\beta}_k(\mathcal{I}_{k,t})-\beta_k)^{\mathrm{T}}\bz|\geq \frac{C_b}{2\sqrt{2C_a+1}})\nonumber\\
    &\leq \PP(\max_{\bx\in\cX}\max_{k\in {\mathcal{K}}}|(\hat{\beta}_k(\mathcal{I}_{k,t})-\beta_k)^{\mathrm{T}}\bz|\geq \frac{h}{4})\nonumber\\
    &\leq 4KT^{-4},
\end{align}
where the first inequality is because $t\leq 2|\mathcal{T}_0|$ and the second inequality is because $C_b\geq \frac{h}{2}\sqrt{2C_a+1}$, and the last inequality is given by \eqref{eq:betakhatiktmiusbetak}.

Combining \eqref{eq:tgreaterthan2T0} and \eqref{eq:tlessthan2T0}, we have that, for all $t>|\mathcal{T}_0|$,
\begin{align*}
    &\PP(\max_{\bx\in\cX}\max_{k\in {\mathcal{K}}}|(\hat{\beta}_k(\mathcal{I}_{k,t})-\beta_k)^{\mathrm{T}}\bz|\geq \frac{\epsilon_{t+1}}{2})\leq 8KT^{-4}. 
\end{align*}
\hfill\Halmos

\subsection{Proofs in Section \ref{ec_pf_nonpara}}\label{ec_pf_lemmasinfair_nonpara}

\subsubsection{Proof of Lemma \ref{lem_prev7}}
Consider an arm $k\in \cK$. Let $h(\bx) = \max_{j\in\cK}f_j^*(\bx)-f_k^*(\bx)$, where we suppress the dependency on $k$ for the ease of notation. By the continuity of max function, we have $h(\bx)$ is continuous.
We prove Lemma \ref{lem_prev7} by contradiction. 

Suppose Lemma \ref{lem_prev7} does not hold, that is, for any $\epsilon>0$, there exists $\bs\in\cX\setminus (B(\cX_0^{(k)})\cup \mathcal{R}_k)$, s.t.  $0<h(\bs)\leq \epsilon$. Let $\epsilon_n=1/n$, $n=1,2,...$. Then for every $\epsilon_n$, there exists $\bx_n\in\cX\setminus (B(\cX_0^{(k)})\cup \mathcal{R}_k)$ such that $0<h(\bx_n)\leq \epsilon_n$. By the Bolzano–Weierstrass theorem, since $X_n=\{\bx_1,\bx_2,...\}$ are on a bounded set, there exists a convergent subsequence $X_{n}^{\prime}=\{\bx_{n_1},\bx_{n_2},...\}$. Let $$\lim_{l\rightarrow\infty}\bx_{n_l}=\bx_0.$$
Hence, by the continuity of $h(\bx)$, we have $$0\leq h(\bx_0) = \lim_{l\rightarrow\infty} h(\bx_{n_l})\leq \lim_{l\rightarrow\infty} \epsilon_{n_l} = 0,$$ which implies $\bx_0\in \cX_0^{(k)}\cup \mathcal{R}_k$. 
Since $\lim_{l\rightarrow\infty}\bx_{n_l}=\bx_0$, we have $\|\bx_{n_g}-\bx_0\|_2\leq \mathfrak{r}$ for some large enough $n_g$, which contradicts the fact that $\bx_{n_g}\in\cX\setminus (B(\cX_0^{(k)})\cup \mathcal{R}_k)$. Thus, there exists a constant $c^{\prime\prime}>0$ such that for all $\bx\in\cX\setminus (B(\cX_0^{(k)})\cup \mathcal{R}_k)$,  $h(\bx)> c^{\prime\prime}$, which concludes the proof.  \hfill\Halmos

\subsubsection{Proof of Lemma \ref{lem:RksubsetSqk}}

By the definition of $\mathcal{R}_k$, if $\bx\in \mathcal{R}_k$, we have for any $j\in \cK$,
\begin{align*}
    f^*_k(\bx)-f_j^*(\bx)\geq 0.
\end{align*}
 We prove that $k\in \cK_{q,u(\bx)}$, which implies ${\mathcal{R}}_k\subseteq S_{k,q}$, by induction. First, obviously $k\in \cK_{1,u(\bx)}$. For $h\leq q-1$, assume $k\in \cK_{h,u(\bx)}$, which implies that $\bx\in S_{h,k}$. Then denote $b_h= \arg\max_{j\in\cK_{h,u(\bx)}}\hat{f}_{h,j}(\bx)$, we have that
\begin{align*}
   \hat{f}_{h,b_h}(\bx)-\hat{f}_{h,k}(\bx)&= (\hat{f}_{h,b_h}(\bx)-f^*_{b_h}(\bx))+f^*_{b_h}(\bx)-f^*_k(\bx)-(\hat{f}_{h,k}(\bx)-f^*_k(\bx)) \nonumber\\
   & \leq \left|\hat{f}_{h,b_h}(\bx)-f^*_{b_h}(\bx)\right|+f^*_{b_h}(\bx)-f^*_k(\bx)+\left|\hat{f}_{h,k}(\bx)-f^*_k(\bx)\right| \nonumber\\
   &\leq \frac{1}{2}\epsilon_{h}+\frac{1}{2}\epsilon_{h}\nonumber\\
    & =  \epsilon_{h},
\end{align*}
where the second inequality is obtained by the definition of ${\mathcal{R}}_k$ and event $\mathcal{G}_{h}$. Thus, $\hat{f}_{h,k} (G_{u(\bx)})$ and $\max\limits_{l \in \cK_{h,u(\bx)}}\hat{f}_{h,l}(G_{u(\bx)})$ are $(2\epsilon_{h})$-chained in $\{\hat{f}_{h,k} (G_{u(\bx)}):k\in \cK_{h,u(\bx)}\}$, which implies that $k \in \cK_{h+1,u(\bx)}$. Therefore, with the induction assumption we have $k\in \cK_{h+1,u(\bx)}$, from which we can conclude that $k\in \cK_{q,u(\bx)}$. \hfill\Halmos

\subsubsection{Proof of Lemma \ref{lem:epsilongreatthan1/2delta}}

Note that $Q$ is the smallest integer such that $\sum_{q=1}^Q|\cT_q|\geq T$. By the definition of $|\cT_q|$, for any positive integer $Q_0$,
\begin{align*}
    \sum_{q=1}^{Q_0}|\cT_q|&\geq \sum_{q=1}^{Q_0}\frac{2K}{p^*}\left(\frac{4^q\log \left(T \delta_A^{-d}\right)}{C_{K}}\right)^{\frac{2 \beta+d}{2 \beta}}\left(\log T\right)^{\frac{2\beta+d}{\beta^{\prime}-1}-\frac{2\beta+d}{2\beta}}\nonumber\\&\geq \sum_{q=1}^{Q_0}\frac{2K}{p^*}\left(\frac{\log T}{C_{K} }\right)^{\frac{2 \beta+d}{2 \beta}}\left(\log T\right)^{\frac{2\beta+d}{\beta^{\prime}-1}-\frac{2\beta+d}{2\beta}}2^{\frac{2 \beta+d}{ \beta}q}\nonumber\\&\geq \frac{2K}{p^*}(2^{\frac{2 \beta+d}{\beta}Q_0})\left(\log T\right)^{\frac{2\beta+d}{\beta^{\prime}-1}-\frac{2\beta+d}{2\beta}}.
\end{align*}
Let $Q_0=\lceil \frac{\beta}{(2\beta+d)\log2}\log(\frac{Tp^*}{2K}\left(\log T\right)^{-\frac{2\beta+d}{\beta^{\prime}-1}+\frac{2\beta+d}{2\beta}}) \rceil$, we have $ \sum_{q=1}^{Q_0}|\cT_q|\geq T$, which leads to that $Q\leq Q_0 \leq \lceil \frac{\beta}{(2\beta+d)\log2}\log(T\left(\log T\right)^{-\frac{2\beta+d}{\beta^{\prime}-1}+\frac{2\beta+d}{2\beta}})\rceil$. It follows that
\begin{align*}
    \epsilon_q\geq \epsilon_Q=2^{-Q}(\log T)^{\frac{\beta^{\prime}-1-2\beta}{2\beta^{\prime}-2}}\geq \frac{1}{2}T^{-\frac{\beta}{2\beta+d}}\geq \frac{1}{2}\delta_A.
\end{align*}
\hfill\Halmos

\subsubsection{Proof of Lemma \ref{lem:toosmallallsupport}}

By Lemma \ref{lem:epsilongreatthan1/2delta}, we have that for all $q\leq Q$, $\epsilon_{q-1}\geq \frac{1}{2}\epsilon_Q\geq T^{-\frac{\beta}{2\beta+d}}$. Thus, for any arm $i\in\cK$ and $\bx\in\cX$, let $j=\arg\max_{k\in\cK}f^*_k(\bx)$, we have
\begin{align*}
       \hat{f}_{j,q-1}(\bx)-\hat{f}_{i,q-1}(\bx)&= (\hat{f}_{j,q-1}(\bx)-f^*_{j}(\bx))+f^*_{j}(\bx)-f^*_i(\bx)-(\hat{f}_{i,q-1}(\bx)-f^*_i(\bx)) \nonumber\\
   &\leq 1/2\epsilon_{q-1}+\epsilon_{q-1}+1/2\epsilon_{q-1}\nonumber\\
    &\leq 2\epsilon_{q-1},
\end{align*}
which implies that $\bx\in S_{q,k}$. Thus, $S_{q,k}=\cX$ for all $k\in\cK$.\hfill\Halmos

\subsubsection{Proof of Lemma \ref{lem:A.6}}

The proof of this lemma follows a similar structure to Lemma 16 in \cite{hu2022smooth}, except that we need to prove the inequality $\epsilon_q \geq \frac{1}{2} \delta_A$, which is given in Lemma \ref{lem:epsilongreatthan1/2delta}. Then we can directly revise the proof steps from Lemma 16 of \cite{hu2022smooth} to obtain our final conclusion.

\hfill\Halmos

\subsubsection{Proof of Proposition \ref{thm:thm3inhusmooth}}

Since Assumptions \ref{assump:iiddensity}-\ref{assum:margin} are exactly the same to those in \cite{hu2022smooth}, we have that the problem instance given in Theorem 3 in \cite{hu2022smooth} satisfies Assumptions \ref{assump:iiddensity}-\ref{assum:margin}. It remains to check that the problem instance satisfies Assumption \ref{assump:Qxliplower}. We describe the features of problem class following the notations used in \cite{hu2022smooth}. There are two arms called arm 1 and -1, and fix $\delta_0\in (0,\frac{1}{2})$ such that $\max_{i,j\in\cK}\max_{\bx\in\cX}\Delta_{i,j}(\bx)\leq C_\phi \left(\frac{T}{4 e (\frac{1}{4}-\delta_0^2)}\right)^{-\frac{\beta}{2 \beta+d}}\leq T^{-\frac{\beta}{2 \beta+d}}$. Therefore, Assumption \ref{assump:Qxliplower} automatically holds.\hfill\Halmos

\subsection{Proofs in Section \ref{ec_pf_robust}}\label{ec_pf_lemmasinrobust}

\subsubsection{Proof of Lemma \ref{lem:robustepsilongreatthan1/2delta}}
Note that $Q^{RS}$ is the smallest integer such that $\sum_{q=1}^{Q^{RS}}|\cT_q^{RS}|\geq T$. By the definition of $|\cT_q^{RS}|$,
\begin{align*}
    \sum_{q=1}^{Q_0}|\cT_q^{RS}|&\geq \sum_{q=1}^{Q_0}\left(\frac{2K}{p^*}\left(\frac{(C^{\frac{2\beta^{\prime}}{2\beta^{\prime}-1}}\vee4^q)\log \left(T \delta_A^{-d}\right)}{C_{K}}\right)^{\frac{2 \beta+d}{2 \beta}}\left(\log T\right)^{\frac{2\beta+d}{\beta^{\prime}-1}-\frac{2\beta+d}{2\beta}}\right)\nonumber\\
    &\geq \sum_{q=1}^{Q_0}\left(\frac{2K}{p^*}\left(\frac{\log T}{C_{K} }\right)^{\frac{2 \beta+d}{2 \beta}}\left(\log T\right)^{\frac{2\beta+d}{\beta^{\prime}-1}-\frac{2\beta+d}{2\beta}}(2^{\frac{2 \beta+d}{ \beta}q}\vee C^{\frac{2\beta+d}{2\beta^{\prime}-1}\frac{\beta^{\prime}}{\beta}})\right)\nonumber\\&\geq \frac{2K}{p^*}(2^{\frac{2 \beta+d}{\beta}Q_0}\vee (C^{\frac{2\beta+d}{2\beta^{\prime}-1}\frac{\beta^{\prime}}{\beta}}))\left(\log T\right)^{\frac{2\beta+d}{\beta^{\prime}-1}-\frac{2\beta+d}{2\beta}}.
\end{align*}

Let $Q_0=\lceil \frac{\beta}{(2\beta+d)\log2}\log(\frac{Tp^*}{2K}\left(\log T\right)^{-\frac{2\beta+d}{\beta^{\prime}-1}+\frac{2\beta+d}{2\beta}}) \rceil\wedge \lceil \frac{Tp^*}{2K}\left(\log T\right)^{-\frac{2\beta+d}{\beta^{\prime}-1}+\frac{2\beta+d}{2\beta}}C^{-\frac{2\beta+d}{2\beta^{\prime}-1}\frac{\beta^{\prime}}{\beta}}\rceil$, and thus $\sum_{q=1}^{Q_0}|\cT_q^{RS}|\geq T$, which leads to that $Q^{RS}\leq Q_0 \leq \lceil \frac{\beta}{(2\beta+d)\log2}\log(\frac{Tp^*}{2K}\left(\log T\right)^{-\frac{2\beta+d}{\beta^{\prime}-1}+\frac{2\beta+d}{2\beta}}) \rceil\wedge \lceil \frac{Tp^*}{2K}\left(\log T\right)^{-\frac{2\beta+d}{\beta^{\prime}-1}+\frac{2\beta+d}{2\beta}}C^{-\frac{2\beta+d}{2\beta^{\prime}-1}\frac{\beta^{\prime}}{\beta}}\rceil$. It follows that
\begin{align*}
\epsilon_q'\geq\epsilon'_{Q^{RS}}&=(2^{-Q^{RS}}\wedge C^{-\frac{\beta^{\prime}}{2\beta^{\prime}-1}})(\log T)^{\frac{\beta^{\prime}-1-2\beta}{2\beta^{\prime}-2}}\vee T^{-\frac{\beta}{2\beta+d}}\\&\geq T^{-\frac{\beta}{2\beta+d}}\geq(1+L_1\sqrt{d})\delta_A,
\end{align*}
where the last inequality is because $\frac{1}{1+L_1\sqrt{d}}\geq \frac{1}{\log T}$.\hfill\Halmos

\subsubsection{Proof of Lemma \ref{lem:robusttoosmallallsupport}}
By direct computation, we have that for all $q\leq Q^{RS}$, 
\begin{align*}
    \epsilon_{q-1}^{RS}&\geq \epsilon_{Q^{RS}}^{RS} =\left(2^{-Q^{RS}}\wedge C^{-\frac{\beta^{\prime}}{2\beta^{\prime}-1}})(\log T)^{\frac{\beta^{\prime}-1-2\beta}{2\beta^{\prime}-2}}\vee T^{-\frac{\beta}{2\beta+d}}\right)+\frac{2\sqrt{M_\beta}}{\lambda_0}C(\frac{p^*}{4K}|\cT_{Q^{RS}}^{RS}|)^{-\frac{2\beta}{2\beta+d}}\nonumber\\
   & \geq T^{-\frac{\beta}{2\beta+d}}+\frac{\sqrt{M_\beta}}{\lambda_0}CT^{-\frac{2\beta}{2\beta+d}},
\end{align*}
where the last inequality is by $\frac{p^*}{4K}|\cT_{Q^{RS}}^{RS}|\leq T$.
Thus, for any arm $i\in\cK$ and $\bx\in\cX$, let $j=\arg\max_{k\in\cK}f^*_k(\bx)$, we have
\begin{align*}
       \hat{f}^{RS}_{j,q-1}(\bx)-\hat{f}^{RS}_{i,q-1}(\bx)&= (\hat{f}^{RS}_{j,q-1}(\bx)-f^*_{j}(\bx))+f^*_{j}(\bx)-f^*_i(\bx)-(\hat{f}^{RS}_{i,q-1}(\bx)-f^*_i(\bx)) \nonumber\\
   &\leq \frac{1}{2}\epsilon_{q-1}^{RS} + \epsilon_{q-1}^{RS} + \frac{1}{2}\epsilon_{q-1}^{RS}\nonumber\\
    &= 2\epsilon_{q-1}^{RS},
\end{align*}
which implies that $\bx\in S_{q,k}^{RS}$. Thus, $S_{q,k}^{RS}=\cX$ for all $k\in\cK$.\hfill\Halmos

\subsubsection{Proof of Lemma \ref{lem:robustA.6} }
Recall that the estimator $\hat{f}_{q,k}^{RS}$ is trained based on corrupted samples $\mathcal{T}_{q,k} = \left\{ \left( \bx_t, \tilde{y}_t \right) : t \in \mathcal{T}_q, \, \mathcal{\pi}_t = k \right\}$, with $\tilde{y}_t=y_t+c_t$. 
Define $h_{q,k}=n_{q,k}^{-1 /(2 \beta+d)}$. Let $$\hat{\mathcal{A}}_{q, k}\left(\bx_0\right)=\frac{1}{n_{q, k} h_{q, k}^d} \sum_{t \in \mathcal{T}_{q, k}} W\left(\frac{\bx_t-\bx_0}{h_{q, k}}\right) U\left(\frac{\bx_t-\bx_0}{h_{q, k}}\right) U^T\left(\frac{\bx_t-\bx_0}{h_{q, k}}\right),$$ where $W(\bx) = \mathbb{I}(\|\bx\|_2 \leq 1)$, $U(u)=\left(u^r\right)_{|r| \leq \beta} $  is a vector-valued function from $\mathbb{R}^d$ to $\mathbb{R}^{M_\beta}$. Denote $f_{k, \mathfrak{b}(\beta)}\left(\bx ; \bx_0\right)=\sum_{|r| \leq \mathfrak{b}(\beta)} \frac{\left(\bx-\bx_0\right)^r}{r!} D^r f_k^*\left(\bx_0\right)$. Then fix $\bx_0\in G\cap S_{q,k}^{RS}$, and by the estimation error decomposition given in Step I in the proof of Lemma 16 in \cite{hu2022smooth}, we have
\begin{align}\label{eq:fhatqkRSx0errorbound}
    \left|\hat{f}_{q,k}^{RS}(\bx_0)-f^*_k(\bx_0)\right| \leq \frac{\sqrt{M_\beta}}{\lambda_{\min }\left(\hat{\mathcal{A}}_{q, k}\left(\bx_0\right)\right)}\left(\Gamma_1+\Gamma_2\right),
\end{align}
where
\begin{align*}
\Gamma_1  &=\frac{1}{n_{q, k} h_{q, k}^d} \sum_{t \in \mathcal{T}_{q, k}}\left(\tilde{y}_t-f_k^*\left(\bx_t\right)\right) W\left(\frac{\bx_t-\bx_0}{h_{q, k}}\right), \\
\Gamma_2  &=\frac{1}{n_{q, k} h_{q, k}^d} \sum_{t \in \mathcal{T}_{q, k}}\left(f_k^*\left(\bx_t\right)-f_{k, \mathfrak{b}(\beta)}\left(\bx_t ; \bx_0\right)\right) W\left(\frac{\bx_t-\bx_0}{h_{q, k}}\right).
\end{align*}

We decompose $\Gamma_1$ into the following two components:
\begin{align*}
    \Gamma_1  &=\frac{1}{n_{q, k} h_{q, k}^d} \sum_{t \in \mathcal{T}_{q, k}}\left(\tilde{y}_t-f_k^*\left(\bx_t\right)\right) W\left(\frac{\bx_t-\bx_0}{h_{q, k}}\right)\nonumber\\
    &=\frac{1}{n_{q, k} h_{q, k}^d}\sum_{t \in \mathcal{T}_{q, k}}(y_t-f_k^*\left(\bx_t\right))W\left(\frac{\bx_t-\bx_0}{h_{q, k}}\right)+\frac{1}{n_{q, k} h_{q, k}^d}\sum_{t \in \mathcal{T}_{q, k}}c_tW\left(\frac{\bx_t-\bx_0}{h_{q, k}}\right)\nonumber\\
    &=\Gamma_{1,1}+\Gamma_{1,2}.
\end{align*}

Following the proof of Theorem 5 in \cite{hu2022smooth}, denote event $\mathcal{E}_{q,k}(\bx_0):=\{\overline{\mathcal{G}}_{q-1}^{RS}, \overline{\mathcal{M}}_{q-1}^{RS}, N_{ q, k}^{RS}=n_{ q, k},\lambda_{\min }\left(\hat{\mathcal{A}}_{q, k}\left(\bx_0\right)\right) \geq \lambda_0\}$, then
it can be established that
\begin{align}\label{eq:probsqrtMbetalambdamin}
    &\mathbb{P}\left(\frac{\sqrt{M_\beta}}{\lambda_{\min }\left(\hat{\mathcal{A}}_{q, k}\left(\bx_0\right)\right)}(\Gamma_{1,1}+\Gamma_{2}) \geq \frac{\epsilon_q'}{2(1+L_1\sqrt{d})} \bigg| \mathcal{E}_{q,k}(\bx_0)\right) \nonumber\\
    \leq&4 \exp \left(-C_K n_{q, k}^{\frac{2 \beta}{2 \beta+d}} \epsilon_q'^2\right),
\end{align}
where $\lambda_0$ is as in Lemma \ref{lem:A.6}.

Then we focus on the term $\frac{\sqrt{M_\beta}}{\lambda_{\min }\left(\hat{\mathcal{A}}_{q, k}\left(\bx_0\right)\right)}\Gamma_{1,2}$. Direct computation shows that 
\begin{align*}
    \Gamma_{1,2}=\frac{1}{n_{q, k} h_{q, k}^d}\sum_{t \in \mathcal{T}_{q, k}}c_tW\left(\frac{\bx_t-\bx_0}{h_{q, k}}\right)\leq \frac{1}{n_{q, k} h_{q, k}^d}\sum_{t \in \mathcal{T}_{q, k}}c_t\leq \frac{1}{n_{q, k} h_{q, k}^d}C,
\end{align*}
where the first inequality is because $W\left(\frac{\bx_t-\bx_0}{h_{q, k}}\right)\leq 1$, and the last inequality is because the total corruption budget is upper bounded by $C$. Consequently, under the event that $\lambda_{\min }\left(\hat{\mathcal{A}}_{q, k}\left(\bx_0\right)\right) \geq \lambda_0$, it holds that 
\begin{align}\label{eq:sqrtMbetaC}
    \frac{\sqrt{M_\beta}}{\lambda_{\min }\left(\hat{\mathcal{A}}_{q, k}\left(\bx_0\right)\right)}\Gamma_{1,2}\leq\frac{\sqrt{M_\beta}}{\lambda_0 } \frac{1}{n_{q, k} h_{q, k}^d}C.
\end{align}
Combining \eqref{eq:probsqrtMbetalambdamin} and \eqref{eq:sqrtMbetaC}, we have
\begin{align*}
&\mathbb{P}\bigg(\frac{\sqrt{M_\beta}(\Gamma_{1}+\Gamma_{2})}{\lambda_{\min}\left(\hat{\mathcal{A}}_{q, k}\left(\bx_0\right)\right)} 
\geq \frac{\epsilon_q'}{2(1+L_1\sqrt{d})} + \frac{\sqrt{M_\beta}}{\lambda_0n_{q, k} h_{q, k}^d } C \;\bigg| \;
\overline{\mathcal{G}}_{q-1}^{RS}, \overline{\mathcal{M}}_{q-1}^{RS}, \\
&\quad N_{ q, k}^{RS}=n_{ q, k},\; \lambda_{\min}\left(\hat{\mathcal{A}}_{q, k}\left(\bx_0\right)\right) \geq \lambda_0\bigg) 
\leq 4 \exp \left(-C_K n_{q, k}^{\frac{2 \beta}{2 \beta+d}} \epsilon_q'^2\right).
\end{align*}

According to the Lemma 10 in \cite{hu2022smooth}, under event $\overline{\mathcal{G}}_{q-1}^{RS}, \overline{\mathcal{M}}_{q-1}^{RS}$ and $N_{ q, k}^{RS}=n_{ q, k}$, with probability at least $1-2 M_\beta^2 \exp \left(-C_K n_{q, k}^{\frac{2 \beta}{2 \beta+d}}\right)$, the minimum eigenvalue of $\hat{\mathcal{A}}_{q, k}\left(\bx_0\right)$ satisfies $\lambda_{\min }\left(\hat{\mathcal{A}}_{q, k}\left(\bx_0\right)\right) \geq \lambda_0$. Therefore, 
\begin{align*}
    &\mathbb{P}\left(\frac{\sqrt{M_\beta}}{\lambda_{\min }\left(\hat{\mathcal{A}}_{q, k}\left(\bx_0\right)\right)}(\Gamma_{1}+\Gamma_{2}) \geq \frac{\epsilon_q'}{2(1+L_1\sqrt{d})}+\frac{\sqrt{M_\beta}}{\lambda_0n_{q, k} h_{q, k}^d } C \bigg| \overline{\mathcal{G}}_{q-1}^{RS}, \overline{\mathcal{M}}_{q-1}^{RS}, N_{ q, k}^{RS}=n_{ q, k}\right) \nonumber\\
    \leq&(4+2 M_\beta^2) \exp \left(-C_K n_{q, k}^{\frac{2 \beta}{2 \beta+d}} \epsilon_q'^2\right),
\end{align*}
which, by \eqref{eq:fhatqkRSx0errorbound}, implies that
\begin{align*}
 &\mathbb{P}\left(\left|\hat{f}_{q,k}^{RS}(\bx_0)-f^*_k(\bx_0)\right| \geq \frac{\epsilon_q'}{2(1+L_1\sqrt{d})}+\frac{\sqrt{M_\beta}}{\lambda_0n_{q, k} h_{q, k}^d } C \bigg| \overline{\mathcal{G}}_{q-1}^{RS}, \overline{\mathcal{M}}_{q-1}^{RS}, N_{ q, k}^{RS}=n_{ q, k}\right) \nonumber\\
    \leq&(4+2 M_\beta^2) \exp \left(-C_K n_{q, k}^{\frac{2 \beta}{2 \beta+d}} \epsilon_q'^2\right).
\end{align*}
Taking union bound over all $\bx\in S_{q,k}^{RS}\cap G$ gives that
\begin{align}\label{eq_tailproblemesterror_all}
 &\mathbb{P}\left(\sup_{\bx\in S_{q,k}^{RS}\cap G}\left|\hat{f}_{q,k}^{RS}(\bx)-f^*_k(\bx)\right| \geq \frac{\epsilon_q'}{2(1+L_1\sqrt{d})}+\frac{\sqrt{M_\beta}}{\lambda_0n_{q, k} h_{q, k}^d } C \bigg| \overline{\mathcal{G}}_{q-1}^{RS}, \overline{\mathcal{M}}_{q-1}^{RS}, N_{ q, k}^{RS}=n_{ q, k}\right) \nonumber\\
    \leq&\delta_A^{-d}(4+2 M_\beta^2) \exp \left(-C_K n_{q, k}^{\frac{2 \beta}{2 \beta+d}} \epsilon_q'^2\right).
\end{align}
The Lipschitz condition of $f^*_k(\bx)$ allows us to conclude that 
\begin{align}\label{eq_tailproblemesterror_lips}
      |\hat{f}_{q,k}^{RS}(\bx)-f^*_k(\bx)|& =\left|\hat{f}_{q,k}^{RS}(g(\bx))-f^*_k(\bx)\right| \nonumber\\
    & \leq\left|\hat{f}_{q,k}^{RS}(g(\bx))-f^*_k(g(\bx))\right|+\left|f^*_k(g(\bx))-f^*_k(\bx)\right| \nonumber\\
    & \leq\left|\hat{f}_{q,k}^{RS}(g(\bx))-f^*_k(g(\bx))\right|+L_1\|g(\bx)-\bx\| \nonumber\\
    & \leq\left|\hat{f}_{q,k}^{RS}(g(\bx))-f^*_k(g(\bx))\right|+\frac{1}{2} L_1 \sqrt{d} \delta_A .  
\end{align}
Combining \eqref{eq_tailproblemesterror_all} and \eqref{eq_tailproblemesterror_lips}, we obtain
\begin{align*}
 &\mathbb{P}\left(\sup_{\bx\in S_{q,k}^{RS}}\left|\hat{f}_{q,k}^{RS}(\bx)-f^*_k(\bx)\right| \geq \frac{\epsilon_q'}{2(1+L_1\sqrt{d})}(1+L_1\sqrt{d})+\frac{\sqrt{M_\beta}}{\lambda_0n_{q, k} h_{q, k}^d } C \bigg| \overline{\mathcal{G}}_{q-1}^{RS}, \overline{\mathcal{M}}_{q-1}^{RS}, N_{ q, k}^{RS}=n_{ q, k}\right) \nonumber\\
 \leq &\mathbb{P}\bigg(\sup_{\bx\in S_{q,k}^{RS}\cap G}\left|\hat{f}_{q,k}^{RS}(\bx)-f^*_k(\bx)\right| \geq \frac{\epsilon_q'}{2(1+L_1\sqrt{d})}(1+L_1\sqrt{d})-\frac{1}{2}L_1\sqrt{d}\delta_A+\frac{\sqrt{M_\beta}}{\lambda_0n_{q, k} h_{q, k}^d } C \nonumber\\
 &\qquad \qquad \bigg| \overline{\mathcal{G}}_{q-1}^{RS}, \overline{\mathcal{M}}_{q-1}^{RS}, N_{ q, k}^{RS}=n_{ q, k}\bigg)\nonumber\\
 \leq & \mathbb{P}\left(\sup_{\bx\in S_{q,k}^{RS}\cap G}\left|\hat{f}_{q,k}^{RS}(\bx)-f^*_k(\bx)\right| \geq \frac{\epsilon_q'}{2(1+L_1\sqrt{d})}+\frac{\sqrt{M_\beta}}{\lambda_0n_{q, k} h_{q, k}^d } C \bigg| \overline{\mathcal{G}}_{q-1}^{RS}, \overline{\mathcal{M}}_{q-1}^{RS}, N_{ q, k}^{RS}=n_{ q, k}\right)\nonumber\\
 \leq&\delta_A^{-d}(4+2 M_\beta^2) \exp \left(-C_K n_{q, k}^{\frac{2 \beta}{2 \beta+d}} \epsilon_q'^2\right),
\end{align*}

where the second inequality is because $\frac{\epsilon_q'}{2(1+L_1\sqrt{d})}(1+L_1\sqrt{d})-\frac{1}{2}L_1\sqrt{d}\delta_A\geq \frac{\epsilon_q'}{2(1+L_1\sqrt{d})}(1+L_1\sqrt{d})-\frac{1}{2}L_1\sqrt{d}\frac{\epsilon_q'}{1+L_1\sqrt{d}}=\frac{\epsilon_q'(1+L_1\sqrt{d})-\epsilon_q'L_1\sqrt{d}}{2(1+L_1\sqrt{d})}=\frac{\epsilon_q'}{2(1+L_1\sqrt{d})}$ implied by the fact that $\epsilon_q'\geq (1+L_1\sqrt{d})\delta_A$ given in Lemma \ref{lem:robustepsilongreatthan1/2delta}. Then taking union bound over all $k\in\cK$ we can obtain that
\begin{align*}
 &\mathbb{P}\left(\sup_{k\in \cK}\sup_{\bx\in S_{q,k}^{RS}}\left|\hat{f}_{q,k}^{RS}(\bx)-f^*_k(\bx)\right| \geq \frac{\epsilon_q'}{2}+\frac{\sqrt{M_\beta}}{\lambda_0n_{q, k} h_{q, k}^d } C \bigg| \overline{\mathcal{G}}_{q-1}^{RS}, \overline{\mathcal{M}}_{q-1}^{RS}, N_{ q, k}^{RS}=n_{ q, k}\right) \nonumber\\
    \leq & K\delta_A^{-d}(4+2 M_\beta^2) \exp \left(-C_K n_{q, k}^{\frac{2 \beta}{2 \beta+d}} \epsilon_q'^2\right),
\end{align*}
which finishes the proof since $h_{q,k}=n_{q,k}^{-1 /(2 \beta+d)}$. \hfill\Halmos

\section{Experimental Details}\label{ecsec:expdetail}

\subsection{Algorithm Inputs}
Due to the absence of historical data for parameter tuning in our simulated realistic environment, all algorithm configurations were set using one of two approaches: ad hoc choices or the hyperparameters reported in their original publications. Specific values are enumerated below. This practice is standard in online learning experiments where prior tuning is infeasible, which has also been applied by \cite{bastani2020online,cohen2025dynamic}.

\subsubsection{Synthetic Dataset}
The design parameters for linear contextual bandit algorithms are configured as follows:
\begin{itemize}
    \item For the Fair OLS Bandit algorithm, we set \(C_a = 20\), \(C_b = 1\), and \(h = 1.2\).
    \item For the Robust Fair OLS Bandit algorithm, we set $\gamma_{\text{lin}}=4$, and $\kappa=2$, with other parameters same as the Fair OLS Bandit algorithm.
    \item For the OLS Bandit algorithm, we use \(q = 2\) and \(h = 1.2\). These parameters were chosen to provide a moderate forced sampling rate while maintaining a suitable margin for this problem instance.
    \item For the UCB Bandit algorithm, the confidence bound for the mean reward of arm \(k\) at time \(t\) is computed as:
    \[
    \text{UCB}_k(t) = \hat{\mu}_k(t) +\sqrt{x_t^{\mathrm{T}} (0.01 I+\sum_{s\in N_k(t) } x_s x_s^{\mathrm{T}})^{-1} x_t} \left(0.05 \sqrt{d \log\left(T(1 + 4t / 0.01)\right)} + 0.2\right),
    \]
    where \(\hat{\mu}_k(t)\) is the estimated mean reward, $I$ is the identity matrix, and \(N_k(t)\) is the index of times arm \(k\) has been pulled up to time \(t\). The hyperparameters are derived from theoretical analysis under the assumption that problem parameters are known.

\end{itemize}

The design parameters for smooth contextual bandit algorithms are configured as follows:
\begin{itemize}
    \item For the Fair Smooth Bandit algorithm, we use $|\cT_q|=\left\lceil0.2\times 4^{q\frac{2\beta+d}{2\beta}}\left(\log T\right)^{\frac{2\beta+d}{\beta^{\prime}-1}}+ \log T\right\rceil$. 
      \item For the Robust Fair Smooth Bandit algorithm, we use $$|\cT_q|^{RS}=\left\lceil0.2\times \left(0.03\times C^{\frac{2\beta'}{2\beta'-1}} \vee 4^q \right) ^{\frac{2\beta+d}{2\beta}}\left(\log T\right)^{\frac{2\beta+d}{\beta^{\prime}-1}}+ \log T\right\rceil$$ and $\epsilon_q^{RS}=(2^{-q}\wedge C^{-\frac{\beta^{\prime}}{2\beta^{\prime}-1}})(\log T)^{\frac{\beta^{\prime}-1-2\beta}{2\beta^{\prime}-2}}\vee T^{-\frac{\beta}{2\beta+d}}+0.3\times C|\cT_q^{RS}|^{-\frac{2\beta}{2\beta+d}}$.
        \item For the Simplified Smooth Bandit algorithm, we set \(c_1 = 0.5\), \(c_2 = 1\), as recommended.
\end{itemize}

\subsubsection{Real-world Dataset}
The design parameters for linear contextual bandit algorithms are configured as follows:
\begin{itemize}
    \item For the Fair OLS Bandit algorithm, we set \(C_a = 50\), \(C_b = 5\), and \(h = 0.8\).
    \item For the Robust Fair OLS Bandit algorithm, we set $\gamma_{\text{lin}}=2$, and $\kappa=0.01$, with other parameters same as the Fair OLS Bandit algorithm.
    \item For the OLS Bandit algorithm, we use \(q = 2\) and \(h = 0.8\). These parameters were chosen to provide a moderate forced sampling rate while maintaining a suitable margin for this problem instance.
    \item For the UCB Bandit algorithm, the confidence bound for the mean reward of arm \(k\) at time \(t\) is computed as:
    \[
    \text{UCB}_k(t) = \hat{\mu}_k(t) +\sqrt{x_t^{\mathrm{T}} (0.01 I+\sum_{s\in N_k(t) } x_s x_s^{\mathrm{T}})^{-1} x_t} \left(0.05 \sqrt{d \log\left(T(1 + 4t / 0.01)\right)} + 0.2\right),
    \]
    where \(\hat{\mu}_k(t)\) is the estimated mean reward, $I$ is the identity matrix, and \(N_k(t)\) is the index of times arm \(k\) has been pulled up to time \(t\). The hyperparameters are kept the same with synthetic dataset.

\end{itemize}

The design parameters for smooth contextual bandit algorithms are configured as follows:
\begin{itemize}

    \item For the Fair Smooth Bandit algorithm, we use $|\cT_q|=\left\lceil0.15\times 4^{q\frac{2\beta+d}{2\beta}}\left(\log T\right)^{\frac{2\beta+d}{\beta^{\prime}-1}}+ \log T\right\rceil$.
      \item For the Robust Fair Smooth Bandit algorithm, we use $$|\cT_q|^{RS}=\left\lceil0.15\times \left(0.008\times C^{\frac{2\beta'}{2\beta'-1}} \vee 4^q \right) ^{\frac{2\beta+d}{2\beta}}\left(\log T\right)^{\frac{2\beta+d}{\beta^{\prime}-1}}+ \log T\right\rceil$$ and $\epsilon_q^{RS}=(2^{-q}\wedge C^{-\frac{\beta^{\prime}}{2\beta^{\prime}-1}})(\log T)^{\frac{\beta^{\prime}-1-2\beta}{2\beta^{\prime}-2}}\vee T^{-\frac{\beta}{2\beta+d}}+0.05\times C|\cT_q^{RS}|^{-\frac{2\beta}{2\beta+d}}$.
          \item For the Simplified Smooth Bandit algorithm, we set \(c_1 = 0.5\), \(c_2 = 1\), as recommended.
\end{itemize}

\subsection{Comprehensive Experimental Results}
To complement the figures in Section \ref{sec_numerical}, we present detailed experimental results in Table~\ref{tab:data_summary}. This table provides comprehensive summary statistics for all algorithms across the four experimental settings, showing the mean and standard deviation of two key evaluation metrics: cumulative regret and unfair decisions. 
\begin{table}
\TABLE
\small
{Summary of Mean and Standard Deviation for Regret and Unfairness across Four Experiments\label{tab:data_summary}. Note: ``Benign'' refers to the stochastic setting without adversarial corruption; ``Attack'' denotes the presence of $C$-column corruption. ``Linear'' and ``Smooth'' settings correspond to different reward functions and applicable algorithms. All metrics are calculated over 10 independent runs.}
{\begin{tabular}{@{}l@{\quad}c@{\quad}c@{\quad}c@{\quad}c@{}}
\hline\up 
Algorithm & Regret (Mean) & Regret (SD) & Unfairness (Mean) & Unfairness (SD) \\ \hline\up 
\textbf{Exp 1: Synthetic Data Benign} & & & & \\
\textit{--- Linear Setting} & & & & \\
\hspace{1em}Fair OLS (Ours)          & 425.28 & 19.40 & 12.00   & 20.41  \\ 
\hspace{1em}OLS Bandit               & 853.25 & 93.32 & 1083.20 & 111.54 \\ 
\hspace{1em}Greedy                   & 625.25 & 194.08 & 785.20  & 218.79 \\ 
\hspace{1em}UCB                      & 318.52 & 21.20 & 352.10  & 23.09  \\ 
\hspace{1em}Random                   & 11413.19 & 46.58 & 0.00    & 0.00   \\ 
\textit{--- Smooth Setting} & & & & \\
\hspace{1em}Fair Smooth (Ours)       & 344.82 & 6.61  & 0.10    & 0.30   \\ 
\hspace{1em}Smooth Bandit            & 412.78 & 12.28  & 939.40 & 10.32  \\ 
\hspace{1em}Random                   & 2082.66 & 24.59 & 0.00    & 0.00   \\ \hline\up 
\textbf{Exp 2: Synthetic Data Attack} & & & & \\
\textit{--- Linear Setting} & & & & \\
\hspace{1em}Robust Fair OLS (Ours)   & 2770.77 & 75.02 & 446.10  & 174.63 \\ 
\hspace{1em}Fair OLS (Ours)          & 7069.32 & 2120.75 & 4752.10 & 590.15 \\ 
\hspace{1em}OLS Bandit               & 6673.63 & 1133.66 & 5575.20 & 256.50 \\ 
\hspace{1em}Greedy                   & 5192.26 & 76.94  & 5053.90 & 75.53  \\ 
\hspace{1em}UCB                      & 3525.15 & 744.49 & 3902.40 & 523.55 \\ 
\hspace{1em}Random                   & 22859.78 & 131.63 & 0.00    & 0.00   \\ 
\textit{--- Smooth Setting} & & & & \\
\hspace{1em}Robust Fair Smooth (Ours)& 1382.96 & 47.95  & 147.70  & 62.55  \\ 
\hspace{1em}Fair Smooth (Ours)       & 1933.40 & 107.22 & 3043.20 & 250.21 \\ 
\hspace{1em}Smooth Bandit            & 1634.61 & 40.50  & 2994.70 & 99.57  \\ 
\hspace{1em}Random                   & 4161.99 & 26.69  & 0.00    & 0.00   \\ \hline\up 

\textbf{Exp 3: Real Data Benign} & & & & \\
\textit{--- Linear Setting} & & & & \\
\hspace{1em}Fair OLS (Ours)          & 1076.82 & 27.02  & 167.50  & 9.84   \\ 
\hspace{1em}OLS Bandit               & 914.56  & 238.24 & 955.90  & 263.57 \\ 
\hspace{1em}Greedy                   & 943.63  & 473.61 & 1029.10 & 617.07 \\ 
\hspace{1em}UCB                      & 666.74  & 9.13   & 692.60  & 12.02  \\ 
\hspace{1em}Random                   & 2147.15 & 26.74  & 0.00    & 0.00   \\ 
\textit{--- Smooth Setting} & & & & \\
\hspace{1em}Fair Smooth (Ours)       & 763.67  & 22.95  & 135.30  & 15.93  \\ 
\hspace{1em}Smooth Bandit            & 1001.31 & 14.62  & 911.70  & 18.49  \\ 
\hspace{1em}Random                   & 2154.09 & 17.73  & 0.00    & 0.00   \\ \hline\up 

\textbf{Exp 4: Real Data Attack} & & & & \\
\textit{--- Linear Setting} & & & & \\
\hspace{1em}Robust Fair OLS (Ours)   & 1774.66 & 47.27  & 251.40  & 76.08  \\ 
\hspace{1em}Fair OLS (Ours)          & 2736.99 & 5.70   & 3408.70 & 10.39  \\ 
\hspace{1em}OLS Bandit               & 2779.30 & 1.60   & 3653.90 & 2.02   \\ 
\hspace{1em}Greedy                   & 2782.42 & 0.41   & 3660.30 & 0.46   \\ 
\hspace{1em}UCB                      & 2769.76 & 2.71   & 3640.40 & 1.96   \\ 
\hspace{1em}Random                   & 2147.15 & 26.74  & 0.00    & 0.00   \\ 
\textit{--- Smooth Setting} & & & & \\
\hspace{1em}Robust Fair Smooth (Ours)& 1620.39 & 45.30  & 239.20  & 68.63  \\ 
\hspace{1em}Fair Smooth (Ours)       & 2606.14 & 44.01  & 2895.30 & 83.48  \\ 
\hspace{1em}Smooth Bandit            & 1979.19 & 66.93  & 2155.80 & 131.98 \\ 
\hspace{1em}Random                   & 2150.66 & 34.26  & 0.00    & 0.00   \down \\ \hline
\end{tabular}}
{}
\end{table}

\end{document}